\def\eqref#1{equation~\ref{#1}}
\def\1{\bm{1}}
\def\eps{{\epsilon}}
\DeclareMathAlphabet{\mathsfit}{\encodingdefault}{\sfdefault}{m}{sl}
\SetMathAlphabet{\mathsfit}{bold}{\encodingdefault}{\sfdefault}{bx}{n}
\newcommand{\E}{\mathbb{E}}
\newcommand{\R}{\mathbb{R}}
\DeclareMathOperator{\Tr}{Tr}
\title{Robust High-Dimensional Mean Estimation With Low Data Size\update{, an Empirical Study}}
\author{\name Cullen Anderson \email cyanderson@umass.edu \\
      \addr University of Massachusetts Amherst
      \AND
      \name Jeff M. Phillips \email jeffp@cs.utah.edu \\
      \addr University of Utah
}
\newtheorem{theorem}{Theorem}
\newtheorem{corollary}{Corollary}[theorem]
\newcommand{\N}{\mathcal{N}}
\renewcommand{\eps}{\varepsilon}
\newcommand{\update}[1]{#1}
\newcommand{\estname}[1]{{\text{\sffamily #1}}\xspace}
\newcommand{\sample}{\estname{sample\_mean}}
\newcommand{\gsample}{\estname{good\_sample\_mean}}
\newcommand{\coordmed}{\estname{coord\_median}}
\newcommand{\coordprune}{\estname{coord\_trimmed\_mean}}
\newcommand{\medmean}{\estname{median\_of\_means}}
\newcommand{\geomed}{\estname{geometric\_median}}
\newcommand{\lvog}{\estname{lee\_valiant}}
\newcommand{\lvsim}{\estname{lee\_valiant\_simple}}
\newcommand{\lrv}{\estname{LRV}}
\newcommand{\ev}{\estname{ev\_filtering}}
\newcommand{\evln}{\estname{ev\_filtering\_low\_n}}
\newcommand{\pgd}{\estname{PGD}}
\newcommand{\que}{\estname{QUE}}
\newcommand{\queln}{\estname{QUE\_low\_n}}
\newcommand{\lmin}
{\estname{$\ell_p$\_min}}
\newcommand{\lminln}{\estname{$\ell_p$\_min\_low\_n}}
\newcommand{\etal}{\emph{et.al. }}
\begin{document}

\maketitle

\begin{abstract}

Robust statistics aims to compute quantities to represent data where a fraction of it may be arbitrarily corrupted.   The most essential statistic is the mean, and in recent years, there has been a flurry of theoretical advancement for efficiently estimating the mean in high dimensions on corrupted data.  While several algorithms have been proposed that achieve near-optimal error, they all rely on large data size requirements as a function of dimension. In this paper, we perform an extensive experimentation over various mean estimation techniques where data size might not meet this requirement due to the high-dimensional setting.  

For data with inliers generated from a Gaussian with known covariance, we find experimentally that several robust mean estimation techniques can practically improve upon the sample mean, with the \emph{quantum entropy scaling} approach from Dong \etal (NeurIPS 2019) performing consistently the best.  However, this consistent improvement is conditioned on a couple of simple modifications to how the steps to prune outliers work in the high-dimension low-data setting, and when the inliers deviate significantly from Gaussianity. In fact, with these modifications, they are typically able to achieve roughly the same error as taking the sample mean of the uncorrupted inlier data, even with very low data size. In addition to controlled experiments on synthetic data, we also explore these methods on large language models, deep pretrained image models, and non-contextual word embedding models that do not necessarily have an inherent Gaussian distribution.  Yet, in these settings, a mean point of a set of embedded objects is a desirable quantity to learn, and the data exhibits the high-dimension low-data setting studied in this paper.  We show both the challenges of achieving this goal, and that our updated robust mean estimation methods can provide significant improvement over using just the sample mean. We additionally publish a library of Python implementations of robust mean estimation algorithms, allowing practitioners and researchers to apply these techniques and to perform further experimentation.
\end{abstract}

\section{Introduction}
\label{sec:intro}

Given samples from an unknown distribution, mean estimation is perhaps the most-fundamental and oldest problems in data analysis.  And it is even more relevant in modern analysis for learning and AI tasks where data \update{is very high dimensional}, and there is little else one can reliably compute -- at least not without first grappling with the mean.  

In the past several years, there has been a flurry of theoretical advancement on this topic, including improved asymptotic bounds~\citep{lee2022optimal,gupta2023finite, catoni2011challengingempiricalmeanempirical, gupta2024catoni, lugosi2017subgaussianestimatorsmeanrandom}, 
and the development of more robust methods for dealing with adversarially corrupted data distributions~\citep{lai2016agnostic,diakonikolas2017being,diakonikolas2019robust,cheng2019fast,dong2019quantumentropyscoring,deshmukh2022robustmean}.  

This paper supports this development in two key ways:
\begin{enumerate}
\item We provide a large experimental study of many new methods, which had not been thoroughly compared.  In the non-corrupted case, with moderate data size we do not see substantial improvement over the classic sample mean approach.  However, in the corrupted setting, we find that some \update{methods can significantly improve upon the sample mean.  In some cases consistent improvement on the sample mean requires adjustments that we develop.} \update{As a summary,} the quantum entropy scaling approach of \cite{dong2019quantumentropyscoring} \update{(using an adjustment we describe)} consistently performs the best \update{as long as inliers are reasonable similar to Gaussian}, and often basically matches the mean of the (unknown) inlier data.  

\item We bring to the fore the $d > n$ setting, where there are more dimensions $d$ than data points $n$, or at least we do not have $n$ as substantially larger than $d$.  This setting is becoming more common as dimensionality grows, but has not typically been considered because the theoretical advancements did not provide exciting new bounds here.  In this setting, we revisit some algorithmic derivations and empirically explore what is possible.  
\update{In particular, we revise a common and critical outlier pruning step, and the key adjustment is ultimately simple: a $\sqrt{d/n}$ term, which vanishes when $n \gg d$, needs to be included in a key threshold.  This is detailed in Section \ref{sec:new-algo}. }

\end{enumerate}

Our experimental study considers mean estimation in a variety of settings, focusing on when $n < d$ or $n$ is not much larger than $d$. 
While other experimental studies have been done, many like in \cite{diakonikolas2017being} provided a comparison in the $n \gg d$ case.  And while \cite{deshmukh2022robustmean} has some experiments with $n$ not much larger than $d$, these are not nearly as comprehensive as our study.  
First, we consider standard Gaussian data with known covariance, and no corruption.  
Then we extend this to the setting with various types of adversarial corruption.  
\update{We also consider some limited cases with unknown covariance.  However, because straight-forward adaptations of mean-estimation approaches towards estimating covariance (mapping to a ${d \choose 2}$-dimensional problem) further stresses the need for data size $n$ as a function of $d$, we defer a thorough exploration of this challenge to future work.}  
Finally, we consider real world data scenarios where data is generated via embeddings resulting from large language models, deep pretrained image models, and word embedding models; here we do not have direct enforcement of Gaussianity of the data, but desire a high-dimensional mean nonetheless.  
In all cases, we consider a wide variety of efficient mean estimation approaches, including both classical ones and modern ones with stronger guarantees in the large $n$ setting.  
We provide an anonymous link to our code for easy reproducability here:
\url{https://github.com/cullena20/RobustMeanEstimation}.

\section{Background}
\label{sec:background}

\begin{wrapfigure}{r}{0.255\linewidth}
\vspace{-4mm}
\begin{tabular}{cl}
    \toprule
    & key notation \\ \midrule 
    $n$ & \# data samples \\
    $d$ & \# dimensions \\
    $\eps$ & error bound \\
    $\eta$ & true corruption \\ 
    $\tau$ & expected corruption \\
    \bottomrule
\end{tabular}
\vspace{-4mm}
\end{wrapfigure}

We consider as input a set $X \subset \R^d$ of $n$ samples from an unknown distribution, and the goal is to estimate the mean of that distribution.  Consider first the case where the distribution is the Gaussian $\N_d(0,I)$ where $I$ is the identity matrix representing an \update{isotropic} covariance. 
\update{For $x \sim \N_d(0,I)$ we have $\E[\|x\|^2] = d$.  For the sample mean $\bar x \in \R^d$ from $n$ points drawn iid from $\N_d(0,I)$ we have $\E[\|\bar x\|^2] = d/n$ and more importantly it strongly concentrates as $\Pr[| \|\bar x\|^2 - d/n | > t] \leq 2 \exp(- C t^2)$ for a constant $C$~\citep{vershynin2011randommatrices}. 
This implies for $n = \Omega(d /\eps^2)$ we have $\|\bar x\| < \eps$ with high probability; but for $d > n$ we do not get useful concentration results.  
The Gaussian is the most studied and used distribution for many reasons including that it has Normal marginals for any dimension, is easy to sample from, models an $\ell_2$ loss, and is the limiting distribution of the central limit theorem.  As such, it is our main object of study.  
However, we note that other distributions have distinct behavior for the large $d$ setting.  For instance, for $n$ samples from a distribution with mean $\mu$ and covariance $\Sigma$, the expected squared deviation from the mean in $d$ dimensions can be bounded by $\Tr(\Sigma)/n$ (c.f., \citep{lee2022optimal}).  This implies for instance if $X$ is drawn uniformly from a unit ball (so $\Tr(\Sigma) = 1$) or other distributions with bounded $\Tr(\Sigma)$, then the behavior for $d > n$ can still be well-concentrated. 
  On the other hand, other unbounded and heavy-tailed distributions where, like Gaussians, $\Tr(\Sigma) = \Theta(d)$}
\footnote{\update{We use standard asymptotic notation so for some constants $C_1, C_2, C_3$ and functions $f,g$ then 
$g(x) = O(f(x))$ implies 
$\forall x > C_3$ then $g(x) \leq C_1 f(x) + C_2$; 
$g(x) = \Omega(f(x))$ implies 
$\forall x > C_3$ then $g(x) \geq C_1 f(x) + C_2$, with possibly different constants; and 
$g(x) = \Theta(f(x))$ implies $g(x) = O(f)$ and $g(x) = \Omega(f(x))$.}}\update{, present similar challenges in the $d > n$ setting. }

\paragraph{Corrupted data models.}
Another setting considers some fraction $\eta \in (0, \frac{1}{2})$ of the data to be adversarially corrupted from $X$~\citep{huber1964robust,diakonikolas2023algorithmic}.  Under the \emph{Huber model}, we draw data $X \sim (1-\eta) P + \eta Q$ where $P$ is the set of inliers with mean $\mu$ (we consider $P = \mathcal{N}_d(\mu, I)$ as identity covariance Gaussian data), and $Q$ is any adversarial outlier distribution.  The stronger \emph{total variation} corruption model first draws $X' \sim P$ (with mean $\mu$), and then creates $X$ by adversarially changing any $\eta$-fraction of $X'$ to a new location. That is, it can also adversarially subtract data from the inlier data in addition to adding outliers.  
How accurately can we recover the mean $\mu$ under these settings?  We mostly focus on the Huber model, and observe that subtractive corruption (a component of the stronger total variation model) can induce a consistent and hard to avoid error, and does not seem to expose significant differences between approaches.

As the mean minimizes the sum of squared deviations, the sample mean is very susceptible to outliers.  A single point of corruption can arbitrarily affect the sample mean.  On the other hand, such corruption can be easily detected by filtering out the furthest points from the sample mean, and recomputing the sample mean on the remainder of the data.  A more challenging setting relocates points to roughly $\sqrt{d}$ from the mean, where the inliers are, but all in a tight cluster; then no individual points can be so easily filtered, but the sample mean can be given a non-trivial bias of as much as $\Omega(\eta\sqrt{d})$.  We will empirically consider a variety of challenging $\eta$-corruption situations.

For many years\update{, when dealing with} high dimensions, practitioners were faced with either potentially large error (e.g., on order of $\eta \sqrt{d}$) in using the sample mean or other generalizations of the median~\citep{small1990survey}, or one could spend time exponential in $d$ and return an estimator that is guaranteed to be close to the true mean~\citep{tukey1975mathematics} \update{(or c.f., }\citep{chen2015robustcovariance,zhu2020does}). Around 2016, two papers broke this barrier \citep{lai2016agnostic} and \citep{diakonikolas2019robust}.  They considered $X \sim \N_d(\mu,I)$, and allowed an $\eta$ fraction of the data to be corrupted and return an estimate of the mean $\hat \mu$ so that $\|\mu - \hat \mu\| \leq O(\eta \sqrt{\log 1/\eta})$ or $\leq O(\eta \sqrt{\log d})$.  These works however assume $n = \Omega(d/\eta^2)$; otherwise one runs into the roadblock that even the sample mean of the inliers (the uncorrupted points) has more than $\eta$ error.  
Since then, much follow-up work has furthered our understanding.  Some work~\citep{dong2019quantumentropyscoring, cheng2019fast, depersin2019nearlylinear} improved the time complexity of robust mean estimation algorithms, and our understanding of the problem's hardness~\citep{diakonikolas2017statisticalquerylowerbounds, hopkins2019hardrobustmeanestimation}.  Others provide formulations where gradient descent can be used despite non-convexity~\citep{cheng2020graddescent, zhu2020generalizedquasigradients}.  There has also been effort to improve other robust statistics tasks such as covariance estimation \citep{chen2015robustcovariance, chen2017robustcovariance, cheng2019fastrobustcovariance}, sparse estimation \citep{balakrishnan2017robustsparse, diakonikolas2019sparseestimation, cheng2022outlierrobustsparseestimationnonconvex, diakonikolas2022sparseestimation, diakonikolas2024robustsparse}, list decodable learning \citep{charikar2017learninguntrusteddata, diakonikolas2017listdecodable}, robustly learning mixtures of Gaussians \citep{bakshi2022robustgausmix}, robust optimization \citep{diakonikolas2019sever, prasad2018robustgradient}, robust regression \citep{diakonikolas2018robustregression, klivans2020robustregression}, or in the context of adversarial machine learning~\citep{tran2018backdoorattacks}.  \update{Importantly, robust statistics are more amenable to differential privacy, in particular to privacy through noise addition, and privacy mechanisms are naturally robust \citep{dwork2009differential, liu2021robustdifferentiallyprivate, hopkins2023robustnessprivacy, asi2023robustness}.}  Recent work has also expanded methods for different corruptions models \citep{liu2021coordcorr, zhu2020resilience}. 
For a more thorough review see the recent textbook by \cite{diakonikolas2023algorithmic}.

There has also been significant complementary work in mean estimation under heavy-tailed distributions~\citep{lugosi2021robust, lugosi2022mean, gupta2024catoni, catoni2011challengingempiricalmeanempirical, lugosi2017subgaussianestimatorsmeanrandom, devroye2015subgaussianmeanestimators, lee2022optimal}; see the recent survey by \cite{lugosi2019heavytailsurvey}. Recent work has also developed connections between optimality under heavy-tailed distributions, and optimality in the Huber corruption setting \citep{prasad2019unifiedrobustheavy}.

\section{Mean Estimation Algorithms}
\label{sec:algos}

Here we will document the mean estimation algorithms considered in this paper.  Some are classic, and we also include several ones from the recent literature designed to be potentially practical and algorithmically efficient. 
Some include asymptotic theoretical bounds which use astronomical constants; we make a best effort to replace them with reasonable values so they remain practical.  
Some use \update{an expected corruption} parameter $\tau$, meant to be an upper bound true corruption, $\eta$.
The ones we consider are as follows:

\textbf{\sample}: The \emph{sample mean} simply returns $\hat \mu = \frac{1}{|X|} \sum_{x \in X} x$.  

\textbf{\coordmed}: The \emph{coordinate-wise median} computes the median of each coordinate individually so $\hat \mu_j = \mathsf{median}(\{x_{i,j} \mid x_i \in X \})$.  

\textbf{\coordprune}: First compute a \emph{trimmed mean estimator} for each coordinate individually, parameterized by a value $\tau \in (0,1)$.  That is, in one dimension, it sorts the data, and removes $\tau |X|$ points which have the smallest values, and also removes $\tau |X|$ with largest values.  Then it computes the mean of the remaining $(1-2\tau)|X|$ points.  The \emph{coordinate-wise trimmed mean} applies this estimator separately for each coordinate; which points are removed in coordinate $j$ have no bearing on which points are removed from coordinate $j'$~\citep{lugosi2021robust}.

\textbf{\medmean}: Split the data into $k$ chunks, find the mean of each chunk, take the coordinate wise median of these $k$ means~\citep{lugosi2019heavytailsurvey, minsker2023ustatisticsgrowingordersubgaussian, minsker2023efficientmedianmeansestimator}. As a default, we set $k=10$; this hyperparameter is explored in Appendix \ref{app:hp_tuning}.

\textbf{\geomed}: The \emph{geometric median} is the point which minimizes the sum of distances to all sample points. This is iteratively approximated using the Weiszfeld algorithm~\citep{small1990survey,vardi2001modified}.

\textbf{\lvog}: (\cite{lee2022optimal}) The Lee and Valiant algorithm first estimates the mean $\mu'$ on a $\gamma$ percentage of data points $X_\gamma$ using a mean estimator.  It then centers all points to $X' = \{x' = x - \mu' \mid x \in X\}$.  Let $X_t$ be the $t$ points in $X$ so their corresponding $x'$ have the largest norm.  Let $X'_*$ be the subset consisting of $x' \in X'$ with their corresponding points \emph{not} in $X_\gamma$ or in $X_t$.  
Then return $\mu' + \frac{1}{|X|} \sum_{x' \in X'_*} x'$. 
Rather than the extremely large constants in the original paper, we set $\gamma = 0.5$ and $t=\tau |X|$.  As default, we use $\medmean_k$ estimator with $k=10$ to obtain the initial mean estimator $\mu'$.

\textbf{\lrv}: (\cite{lai2016agnostic})
The \lrv method recursively reduces the dimension by half, until $1$ or $2$ dimensions remain.  Following the original author's code\footnote{\url{https://github.com/kevinalai/AgnosticMeanAndCovarianceCode}}, in the $(\leq 2)$-dimensional base case, it returns \coordmed. The recursive step has three components.  
First, it calculates a weight $w_i$ for each point $x_i$ as $w_i = \exp(-\|x_i - a\|^2/(C s^2))$ where $s^2$ is a robust sample estimate of the trace of the true covariance matrix, $a$ is a rough estimator of the mean chosen as \coordmed, and $C$ is a hyperparameter. We use $C=1$; this hyper parameter is explored in Appendix \ref{app:hp_tuning}. 
Second, it computes $\mu_{w} = \frac{1}{|X|}\sum_{x_i \in X} w_i x_i$, which is the weighted mean of the input, and $\Sigma_{w} = \frac{1}{|X|} \sum_{x_i \in X} w_i (x_i - \mu_{w}) (x_i - \mu_{w})^T$, which is the weighted covariance of the input.  Let $V$ by the span of the top $\lfloor d/2 \rfloor$ singular vectors of $\Sigma_w$; let $V_\perp$ be the span of the bottom $\lceil d/2 \rceil$ singular vectors of $\Sigma_w$.  
Third, recurse on data projected onto $V$, and return an estimate $\mu_1$.  We also build an estimator $\mu_2$ of the data projected onto the $\lceil d/2 \rceil$-dimensional remainder space $V_\perp$ using the weighted sample mean projected onto $V_\perp$: that is $\mu_2 = \frac{1}{|X_\perp|}\sum_{x_i^\perp \in X_\perp} w_i x_i^\perp$ where $X_\perp$ is the data projected onto $V_\perp$. Finally return $\mu_1 + \mu_2$.

\textbf{\ev}: (\cite{diakonikolas2019robust,diakonikolas2019sever}) 
This method observes that when inliers are from a standard Gaussian, then a set of corrupted data which substantially affects the mean estimate must result in a sufficiently large top eigenvalue after centering (i.e., of the sample covariance matrix), and this can be remedied by pruning points which are far along the top eigenvector.  
In this method, if after centering by the sample mean $\hat \mu$, the top eigenvalue exceeds $O(\tau \log {1/\tau})$ (\cite{diakonikolas2017being}\footnote{\url{https://github.com/hoonose/robust-filter}} implements this as $1 + 3 \tau \log(1/\tau)$), then this data is considered additively corrupted along the direction of the top eigenvector. We call this the corruption detection step. Then they consider all points projected onto the associated top eigenvector and sorted $P = \langle p_1, \ldots, p_n\rangle$; and then a set of points furthest from the median $\mathsf{med}(P)$ are pruned.  We call this the pruning step.  The determination of which points to prune is based on those which exceed a Gaussian concentration inequality. Specifically, it finds the smallest index $i$ so $T_i = p_i - \mathsf{med}(P) - 2\tau$ satisfies $\frac{n-i}{n} > \gamma (\mathsf{erfc}(T_i /\sqrt{2})/2 + \tau/(d \log(d \tau / 0.1))$, where $\mathsf{erfc}$ is the complementary error function ($1 -$ the cdf of the Normal) and prunes all points $i$ or larger.  Intuitively, the centered projected data is expected to be a standard Normal distribution, and this bound compares the true percentage of points that exceed a threshold, $T_i$, with the probability that points will exceed that threshold, given by $\mathsf{erfc}$ with some slack terms added. Then the algorithm is recursively called with all points not-yet pruned until the top eigenvalue threshold is not violated. This algorithm critically assumes identity covariance and $n = \Omega(d/\tau^2) \gg d$.

\textbf{\que}:
(\cite{dong2019quantumentropyscoring})
Quantum Entropy Scoring, \que for short, scores outliers based on quantum entropy regularization, and returns a mean using the same structure as \ev, but with a modified pruning procedure. Rather than pruning points based on their projection onto the top eigenvalue, points are given outlier scores relevant to all directions. First, calculate the normalized matrix exponential $U=\mathsf{exp}(\alpha \Sigma) / \mathsf{tr}(\mathsf{exp}(\alpha \Sigma))$ where $\alpha \geq 0$ is a hyperparamater and $\Sigma$ is the sample covariance. Then, calculate a vector of quantum entropy scores, $w$, with $w_i$ = $(x_i - \mu')^T U (x_i - \mu')$, where $x_i$ is the $i$th data point and $\mu'$ is the sample mean. This is implemented efficiently using a Chebyshev expansion of the matrix exponential and Johnson-Lindenstrauss approximations. Points with the largest scores are pruned, and the algorithm continues recursively with the remaining points until the top eigenvalue threshold is not violated. Following the original author's code \citep{dong2019quantumentropyscoring}\footnote{\url{https://github.com/twistedcubic/que-outlier-detection}}, we prune $\tau/2$ percentage of points during every iteration. Additionally, while the author's provide a theoretical threshold on the top eigenvalue, the constants are not given. Rather than tuning this threshold, we implement it using the same threshold as \ev; that is $1 + 3 \tau \log{1 / \tau}$. Because of this threshold, the algorithm critically assumes identity covariance and $n = \Omega(d/\tau^2) \gg d$.  We set $\alpha=4$ as in the author code; simple experiments show little variation with $\alpha$ between $0.5$ and $200$.

\textbf{\pgd}:
(\cite{cheng2020graddescent})
Projected Gradient Descent, \pgd for short, frames robust mean estimation as a non-convex optimization problem, and despite non-convexity, directly solves this using gradient descent. \pgd finds a vector, $w$, of outlier scores, which can then be used to return a mean estimate $\mu' = \frac{1}{|X|}\sum_{x_i \in X} w_i x_i$. $w$ is found to minimize the spectral norm of the standard weighted covariance matrix, $\Sigma_w$, subject to the constraint that the weights represent at least a $(1-\tau)$-density fractional subset of the dataset. The vector $w$ is found as an approximate stationary point to this objective by first performing gradient descent on  the spectral norm of the weighted covariance matrix, and then projecting onto the simplex of feasible weight vectors. First, define a function $F(u, w) = u^T \Sigma_w u$. Then, repeat the following for $\gamma$ iterations, where $\gamma$, following the conventions of a code implementation by the same author as the original paper \citep{cheng2021robustlearningfixedstructurebayesian}\footnote{\url{https://github.com/chycharlie/robust-bn-faster}}, is a hyperparameter. Calculate the top eigenvector, $u_t$, of $\Sigma_w$, which corresponds to finding the unit vector $u_t$ such that $F(w, u_t) \geq (1-\tau) \mathsf{max}_{u}F(w, u)$. 
Then, update $w$ as $w = P(w - \alpha \nabla_w F(w, u_t))$ where $P$ projects onto $\Delta_{n, 2\tau} = \{w \in \mathbb{R}^n : \|w\|_1=1$ and $0 \leq w_i \leq \frac{1}{(1-2\tau)n}\}$, and $\nabla_w F(w, u_t)) = X u_t \odot  X u_t - 2 (w^T X u_t) X u_t$ where $\odot$ indicates element-wise multiplication, and $\alpha$ is the learning rate, initialized as $1/n$ and updated dynamically through learning. We set the number of iterations $\gamma=15$; this hyperparameter is explored in Appendix \ref{app:hp_tuning}.

\textbf{\lmin}:
(\cite{deshmukh2022robustmean})
This method frames robust mean estimation as a semi-definite program (SDP). Similar to \pgd, a vector, $w$, of outlier scores is found, and the weighted mean $\mu' = \frac{1}{|X|}\sum_{x_i \in X} w_i x_i$ is returned. The $\ell_p$ norm for hyperparameter $0 \leq p \leq 1$ is maximized with respect to $w$, under the constraint that the top eigenvalue of the weighted covariance matrix is less than a constant. The weight vector $w$ is iteratively updated by solving a SDP until the number of iterations is less than a bound determined by $\tau$, in which case $\hat{\mu}$ defined above is returned. Update $w$ by approximately solving an SDP to maximize $w$ in $\|w\|_1$ over $\Delta_{n, \tau}$. Each step of the optimization problem is convex and can be solved as the following packing SDP:
\[
\mathsf{max}_w \quad \text{s.t.} \quad w_i \geq 0 \; \forall i, \quad \sum_{i=1}^{n} w_i 
\begin{bmatrix}
e_i e_i^T &  \\
 & (x_i - \mu_w)(x_i - \mu_w)^T
\end{bmatrix}
\preceq
\begin{bmatrix}
I_{n \times n} &  \\
 & c_\tau n I_{d \times d}
\end{bmatrix},
\]

where, $c_\tau$ is a function of $\tau$.  
This analysis of this algorithm critically assumes identity covariance and $n = \Omega(d/\tau^2) \gg d$.

\subsection{New Algorithms and Variants}
\label{sec:new-algo}
We also consider a few new methods, with subtle but important extensions of these existing ones.  

The primary insight needed to adapt methods to the $d \geq n$ case is found by revisiting how we identify outliers with respect to a $d$-dimensional Gaussian distribution.  The bounds used in the $n \gg d$ case have enough data in each direction $d$ to concentrate, whereas in the $d \geq n$ case we need to account for this additional variance.  The key result leverages a theorem of \cite{vershynin2011randommatrices} to understand the concentration of the top eigenvalue of the sample covariance matrix.

\begin{theorem}
\label{thm:Sigma2-bound-main}
Let $X$ be a $n \times d$ matrix whose entries are independently drawn from $\mathcal{N}(\mu, I)$. Let $\Sigma = \frac{1}{n}(X-\bar{\mu})^T(X-\bar{\mu})$ be the sample covariance matrix of $X$, where $\bar{\mu} = \frac{1}{n} \sum_i X_i$ and $X_i$ is the $i$th row of $X$. Then for every $t > 0$, with probability of at least $1 - 3 \exp(-t^2/2)$, one has 
\[
\  \|\Sigma\|_2 \leq \left(1 + \sqrt{d/n} + t/\sqrt{n} + \frac{\sqrt{d + \sqrt{2d}t + t^2}}{n} \right)^2.  
\]
\end{theorem}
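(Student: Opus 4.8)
The plan is to reduce the spectral norm of $\Sigma$ to the squared largest singular value of the centered data matrix and then invoke the cited concentration bound of Vershynin for Gaussian matrices. Writing $M = X - \bar{\mu}$ for the matrix whose rows are centered by the sample mean, we have $\Sigma = \frac{1}{n} M^\top M$, so $\|\Sigma\|_2 = \frac{1}{n}\|M\|_2^2$, where $\|M\|_2$ denotes the top singular value (operator norm). The obstacle is that $M$ is \emph{not} a matrix with independent Gaussian entries: subtracting the shared sample mean $\bar{\mu}$ couples the rows, so Vershynin's theorem does not apply to $M$ directly. I would resolve this by peeling off a genuinely i.i.d.\ Gaussian matrix. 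Let $G = X - \mathbf{1}\mu^\top$ be the matrix obtained by subtracting the \emph{population} mean from every row; its entries are i.i.d.\ $\mathcal{N}(0,1)$. Setting $\delta = \bar{\mu} - \mu$, we get the decomposition $M = G - \mathbf{1}\delta^\top$, a clean Gaussian matrix plus a rank-one correction.

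Next I would control the two pieces separately. For the Gaussian part, the cited theorem of Vershynin gives $\|G\|_2 \leq \sqrt{n} + \sqrt{d} + t$ with probability at least $1 - 2\exp(-t^2/2)$. For the rank-one part, $\|\mathbf{1}\delta^\top\|_2 = \|\mathbf{1}\|\,\|\delta\| = \sqrt{n}\,\|\delta\|$, and since $\sqrt{n}\,\delta \sim \mathcal{N}(0, I_d)$ (being $\sqrt{n}$ times the sample mean of $n$ i.i.d.\ standard Gaussians), the quantity $n\|\delta\|^2$ is distributed as $\chi^2_d$. The expression $d + \sqrt{2d}\,t + t^2$ in the statement is exactly the Laurent--Massart upper tail for a $\chi^2_d$ variable with deviation parameter $t^2/2$: one has $\Pr[\,n\|\delta\|^2 \geq d + \sqrt{2d}\,t + t^2\,] \leq \exp(-t^2/2)$, equivalently $\sqrt{n}\,\|\delta\| \leq \sqrt{d + \sqrt{2d}\,t + t^2}$ off a set of probability $\exp(-t^2/2)$.

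Finally I would assemble the estimate. Both events hold simultaneously with probability at least $1 - 3\exp(-t^2/2)$, and the factor $3 = 2 + 1$ is precisely this union bound of the Vershynin event against the chi-squared event. On that event the triangle inequality for the operator norm gives $\|M\|_2 \leq \|G\|_2 + \|\mathbf{1}\delta^\top\|_2 \leq (\sqrt{n} + \sqrt{d} + t) + \sqrt{d + \sqrt{2d}\,t + t^2}$, and dividing by $\sqrt{n}$ and squaring recovers the stated bound on $\|\Sigma\|_2 = \frac{1}{n}\|M\|_2^2$, with the final summand originating from the $\sqrt{n}\,\|\delta\|$ term. I expect the main obstacle to be the conceptual step of cleanly separating the i.i.d.\ Gaussian matrix $G$ from the mean-deviation term $\mathbf{1}\delta^\top$; once the decomposition $M = G - \mathbf{1}\delta^\top$ is in hand, each factor is governed by a standard concentration inequality and the remainder is routine algebra. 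A secondary subtlety is calibrating the chi-squared deviation parameter so that its failure probability matches the $\exp(-t^2/2)$ scale of the Vershynin bound, which is exactly what forces the particular combination $d + \sqrt{2d}\,t + t^2$.
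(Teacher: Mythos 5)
Your proposal reproduces the paper's proof strategy exactly in its ingredients: the same decomposition into a population-mean-centered Gaussian matrix plus a rank-one sample-mean deviation, Vershynin's Theorem 5.35 for the Gaussian part, the Laurent--Massart $\chi^2_d$ tail (with the same calibration $s = t^2/2$ producing $d + \sqrt{2d}\,t + t^2$) for the deviation, and the union bound giving the factor $3$. But your final assembly does not produce the stated bound, and this is a genuine gap. Your triangle inequality, correctly carried out, gives $\|M\|_2 \leq (\sqrt{n} + \sqrt{d} + t) + \sqrt{n}\,\|\delta\| \leq (\sqrt{n} + \sqrt{d} + t) + \sqrt{d + \sqrt{2d}\,t + t^2}$, so after dividing by $\sqrt{n}$ and squaring the fourth term is $\sqrt{d + \sqrt{2d}\,t + t^2}\,/\sqrt{n}$, whereas the theorem asserts $\sqrt{d + \sqrt{2d}\,t + t^2}\,/\,n$ --- smaller by a factor of $\sqrt{n}$. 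What you have proved is strictly weaker than the statement; the claim in your last paragraph that the algebra ``recovers the stated bound'' is false. (Ironically, your bookkeeping is more careful than the paper's own write-up, which bounds $\|Z\|_2 \leq \|\bar{X}\|_2 + \|\mu - \bar{\mu}\|$, silently assigning the rank-one perturbation spectral norm $\|\delta\|$ instead of $\sqrt{n}\,\|\delta\|$; that omission is exactly where the $1/n$ in the stated fourth term comes from, so the triangle-inequality route as literally written in the paper does not justify the statement either.)

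The theorem is nevertheless true, and the clean way to close your gap is to discard the triangle inequality entirely. Since $\delta = \bar{\mu} - \mu = \frac{1}{n}\bar{X}^\top \mathbf{1}$ where $\bar{X} = X - \mathbf{1}\mu^\top$, your correction term is not an arbitrary rank-one matrix but satisfies $\mathbf{1}\delta^\top = \frac{1}{n}\mathbf{1}\mathbf{1}^\top \bar{X}$, so
\[
M \;=\; \bar{X} - \mathbf{1}\delta^\top \;=\; \Bigl(I - \tfrac{1}{n}\mathbf{1}\mathbf{1}^\top\Bigr)\bar{X},
\]
i.e., centering by the sample mean is left multiplication of the i.i.d.\ Gaussian matrix by an orthogonal projection. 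Hence $\|M\|_2 \leq \|\bar{X}\|_2 \leq \sqrt{n} + \sqrt{d} + t$ with probability at least $1 - 2\exp(-t^2/2)$, which yields $\|\Sigma\|_2 \leq \bigl(1 + \sqrt{d/n} + t/\sqrt{n}\bigr)^2$ --- stronger than the theorem as stated, with the fourth term superfluous and the chi-squared event unnecessary. This observation both repairs your argument and makes Corollary \ref{cor:prune-2t-main} immediate without any of the absorption algebra.
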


The proof is deferred to Appendix \ref{app:ev_theory}.  A more convenient form shows that the fourth term is lower-order and can be absorbed into the probability of failure.  

\begin{corollary}
Under the same setting as Theorem \ref{thm:Sigma2-bound-main}, if one assumes $d/n \leq 16, n \geq 16, t \geq 5$, then with probability of at least $1 - 3 \exp(-t^2/8)$, one has 
\[
\  \|\Sigma\|_2 \leq \left(1 +  \sqrt{d/n} + t/\sqrt{n} \right)^2.  
\]
\label{cor:prune-2t-main}
\end{corollary}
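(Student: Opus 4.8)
The plan is to use Theorem \ref{thm:Sigma2-bound-main} as a black box and obtain the corollary by a \emph{parameter shift}: apply the theorem not at the target deviation $t$ but at a smaller auxiliary value $s = s(t)$, and spend the gap between the two exponents $t^2/2$ and $t^2/8$ to pay for discarding the fourth summand. Two things then have to line up. First (the value bound), the theorem's four-term expression evaluated at $s$ must be dominated by the corollary's three-term expression evaluated at $t$; since all summands are nonnegative, after cancelling the common $1 + \sqrt{d/n}$ this reduces to $s + \tfrac{1}{\sqrt{n}}\sqrt{d + \sqrt{2d}\,s + s^2} \le t$. Second (the probability bound), $3\exp(-s^2/2) \le 3\exp(-t^2/8)$, which holds exactly when $s \ge t/2$.

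For the value bound I would first note $d + \sqrt{2d}\,s + s^2 \le (\sqrt{d} + s)^2$ (because $2\sqrt{d} \ge \sqrt{2d}$), so the fourth summand is at most $\tfrac{\sqrt{d}+s}{n}$. Under the hypotheses this is genuinely lower order: $\tfrac{\sqrt{d}}{n} = \tfrac{\sqrt{d/n}}{\sqrt{n}} \le \tfrac{4}{\sqrt{n}}$ using $d/n \le 16$, and $\tfrac{s}{n} \le \tfrac{s}{4\sqrt{n}}$ using $n \ge 16$. Folding this back, the theorem's bound at $s$ is at most $\big(1 + \sqrt{d/n} + \tfrac{1}{\sqrt{n}}(\tfrac{5}{4}s + 4)\big)^2$, so it suffices to choose $s$ with $\tfrac{5}{4}s + 4 \le t$, i.e. $s \le \tfrac{4}{5}(t-4)$.

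The whole argument therefore collapses to a purely arithmetic feasibility check: the two requirements $\tfrac{t}{2} \le s \le \tfrac{4}{5}(t-4)$ must admit a common $s$, and this is precisely where the explicit constants $d/n \le 16$, $n \ge 16$, $t \ge 5$ and the relaxed exponent $1/8$ (versus the theorem's $1/2$) are spent --- the difference between $1/8$ and $1/2$ is exactly the probabilistic budget that buys the deletion of the fourth term. I expect this reconciliation to be the main obstacle: the crude constants above leave the window for $s$ tight near the boundary $d/n \approx 16$, so one either sharpens the bound on the fourth summand or, equivalently, re-enters the proof of Theorem \ref{thm:Sigma2-bound-main} and relaxes only the tail controlling the centering correction $\bar{\mu} - \mu$ (the $\chi^2_d$ term that produces the fourth summand), trading its $\exp(-s^2/2)$ tail for $\exp(-t^2/8)$ there while leaving the Vershynin term at the full value $t$.
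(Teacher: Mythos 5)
Your plan is the paper's proof in all essentials: apply Theorem \ref{thm:Sigma2-bound-main} at a shifted parameter $s$, dominate the four-term bound at $s$ by the three-term bound at $t$, and pay for the discarded fourth summand in the exponent, with $3\exp(-s^2/2)\le 3\exp(-t^2/8)$ forcing $s\ge t/2$. Your reduction of the value condition to $s+\frac{1}{\sqrt n}\sqrt{d+\sqrt{2d}\,s+s^2}\le t$ and your probability accounting are both correct. Where you diverge is the absorption of the fourth summand, and there your constants genuinely fail to cover the stated range: splitting $\sqrt{d+\sqrt{2d}\,s+s^2}\le\sqrt d+s$ and then bounding $\sqrt d/n\le 4/\sqrt n$ and $s/n\le s/(4\sqrt n)$ produces the requirement $\tfrac54 s+4\le t$, and the window $t/2\le s\le\tfrac45(t-4)$ is nonempty only when $t\ge 32/3$. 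So, as you yourself suspect, the additive handling does not reach down toward $t=5$, and no choice of $s$ rescues it without sharpening the estimate.

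The sharpening you anticipate is exactly what the paper does: rather than splitting, it keeps the entire fourth summand as a multiple of $s/\sqrt n$ under a single square root,
\[
\frac{\sqrt{d+\sqrt{2d}\,s+s^2}}{n}
=\frac{s}{\sqrt n}\,\sqrt{\frac{d/n}{s^2}+\frac{\sqrt2\,\sqrt{d/n}}{s\sqrt n}+\frac1n}
\le \frac{s}{\sqrt n}\,\sqrt{\frac{16}{25}+\frac{\sqrt{32}}{20}+\frac{1}{16}}
<\frac{s}{\sqrt n},
\]
using $d/n\le16$, $n\ge16$, $s\ge5$ (the radicand is about $0.985$). Hence the third and fourth summands together are strictly below $2s/\sqrt n$, and the boundary choice $s=t/2$ closes with essentially no slack---the joint square root preserves the cross-term cancellation that your triangle-inequality split throws away. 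One caveat cuts in your favor: the multiplicative bound itself needs the pre-shift parameter $s\ge5$, so the paper's argument as written literally establishes the corollary only for $t=2s\ge10$; the hypothesis $t\ge5$ in the statement is not covered by either route, and your fallback suggestion---re-entering the theorem's proof and relaxing only the $\chi^2_d$ tail controlling $\|\mu-\bar\mu\|$ to exponent $t^2/8$ while leaving the Vershynin term at full strength---is a legitimate way to repair that small-$t$ range, at the cost of no longer using the theorem as a black box.
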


\textbf{\evln}:  The \ev algorithm assumes the sample size is $n = \Omega(d/\tau^2)$.  This assumption is used in several parts of the analysis, and it allows the filtering bound to be simplified to $1 + \tau \log(1/\tau)$; however, when $n = o(d/\tau^2)$, this simplification does not hold, and the filtering bound needs to depend on $d$.  
We instead filter points if the top eigenvalue $\lambda_{\max} > (1 + \sqrt{d/n} + t/\sqrt{n})^2$ using Corollary \ref{cor:prune-2t-main}.  We set $t=10$ to achieve almost $100\%$ ($\approx 0.999$) 
success. All other steps of the algorithm remain the same.

\textbf{\queln}: The \queln algorithm extends the same filtering bound as \ev. Although their paper mentions a $O(\sqrt{d/n})$ factor in the error, the code seems to assume $n = \Omega(d/\eps^2)$ and is implemented very similar to \ev.  As this approach does not work under low data size, in our newly proposed variant, we instead filter points if the top eigenvalue $\lambda_{\max} > (1 + \sqrt{d/n} + t/\sqrt{n})^2$ using Corollary \ref{cor:prune-2t-main}.

\textbf{\lminln}: The \lmin algorithm uses the condition that 
the top eigenvalue of the weighted covariance matrix is bounded by $c_\tau n$ where $c_\tau$ is a hyperparameter suggested to be set at  $1 + \tau \log(1/\tau)$.  As previously observed, this threshold does not hold when $n = o(d/\tau^2)$ and to account for this, in our newly proposed variant we set $c_\tau = (1 + \sqrt{d/n} + t/\sqrt{n})^2$, using Corollary \ref{cor:prune-2t-main}.

\textbf{\lvsim}: We use a simplified version of the Lee and Valiant algorithm~\citep{lee2022optimal}, which aligns with an informal description in their abstract.  It completely removes the $\tau$ percentage of points classified as outliers rather than simply downweighting them. That is, it returns $\hat \mu =  \frac{1}{|X'_*|} \sum_{x' \in X'_*} x$; the average of all points $X'_*$ which were not in the original estimate, nor from the pruned set furthest from $\mu'$.

\section{Experiments}
\label{sec:expers}

We generally evaluate the performance of these mean estimation algorithms as data size $n$, dimension $d$, and corruption $\eta$ are varied. Error is measured as the Euclidean distance $\|\mu - \hat \mu\|$ between the true mean $\mu$ and the estimate $\hat \mu$ returned by a mean estimation algorithm. We set the default values as $n=500$, $d=500$, and $\eta=0.1$. We examine the performance as we fix one of these variables and vary the others under various distributions for both the uncorrupted and corrupted data. We first examine uncorrupted standard normal Gaussian data, demonstrating that nothing really improves upon \sample, and observing the robustness of mean estimation techniques when applied to uncorrupted data. We then examine corrupted Gaussian data over various covariances and noise distributions.  The example distributions are chosen among challenging examples in the literature meant to distinguish various models. Experiments were run on a 2022 Macbook Air with Apple M2 Chip, 16GB memory, running MacOS 12. 

At one point in our experiment, the values of $n$, $d$, and $\eta$ are used to generate data according to a supplied data generation function and noise scheme (both of which will vary depending on the experiment). A mean estimate is made on this data using each of the mean estimators being tested. For each mean estimator, error is then stored as the Euclidean distance between the true mean of the data and the returned mean estimate. These errors are accumulated over 5 runs and averaged. We additionally plot the error incurred by the sample mean of the original uncorrupted data, which we call the \gsample error.  This serves as a valuable baseline for comparison. In practice, we can not expect to achieve error better than the sample mean of the inliers. Therefore, a reasonable goal for a robust estimator is to closely match the performance of \gsample, thereby removing the effects of corrupted data points. Could a robust mean estimator somehow improve upon this?  We do not observe this; but we will observe methods that basically match \gsample, even without $n = \Omega(d/\eps^2)$.

\paragraph{Fraction of corrupted data.}
Some algorithms are designed for data where a $\eta$-fraction of the data has been corrupted.  And in some cases, this fraction is taken as a parameter $\tau$ used within the algorithms (\coordprune, \lvsim, \lvog, \ev, \evln, \que, \queln, \pgd, \lmin, \lminln).

In theory, these algorithms work best using their parameter $\tau$ set to the true fraction of corrupted data $\eta$, and may even result in arbitrary error if the parameter $\tau$ is not set to at least an upper bound for the true fraction.  However, increasing the value of $\tau$ in the algorithms also theoretically increases the error incurred by algorithms. 
Recent work by \cite{jain2022robustestimationalgorithmsdont} showed a meta algorithm that allows robust estimation algorithms to perform asymptotically optimal without knowing true corruption $\eta$. We investigate robustness to expected corruption, $\tau$, empirically, in Appendix \ref{app:expected_corruption}, as we fix $\tau$ and vary true corruption $\eta$. We observe that the best algorithms do not show a strong dependence on this relationship, so long as $\tau$ is an upper bound on $\eta$. Hence, for all other experiments, we simply set the parameter $\tau$ according to the true corrupted fraction $\eta$ or to $\tau=0.1$ if the data is not corrupted.

\paragraph{Selecting algorithmic variants.}
There are many algorithms to be considered, and plots can become cluttered.  To reduce this, we perform some comparison among variants. We summarize key findings here, with further details deferred to Section \ref{sec:variants}.  

\begin{table}[h!]
    \centering
    \begin{tabular}{lcccc}
    \hline
    \multirow{2}{*}{\textbf{Algorithm}} & \multicolumn{2}{c}{\textbf{$n=500$, $d=500$}} & \multicolumn{2}{c}{\textbf{$n=200$, $d=500$}} \\
    \cline{2-5}
     & \textbf{Error} & \textbf{Time (s)} & \textbf{Error} & \textbf{Time (s)} \\
    \hline
        \sample  & 2.47 ± 0.04     & 0.00019 ± 0.000002 & 2.74 ± 0.05    & 0.00019 ± 0.000001 \\
        \lrv     & 1.14 ± 0.04     & 0.81 ± 0.12        & 1.76 ± 0.10    & 0.64 ± 0.03 \\
        \pgd     & 1.08 ± 0.02     & 82.4 ± 8.8         & 1.68 ± 0.05    & 72.5 ± 3.2 \\
        \evln    & 1.07 ± 0.02     & 0.20 ± 0.02        & 1.69 ± 0.04    & 0.08 ± 0.03 \\
        \ev      & 13.49 ± 3.56    & 0.48 ± 0.15        & 17.06 ± 5.92   & 0.05 ± 0.02 \\
        \queln   & 1.04 ± 0.03     & 0.71 ± 0.05        & 1.70 ± 0.049    & 0.35 ± 0.03 \\
        \que     & 20.81 ± 0.40    & 2.70 ± 0.08        & 20.88 ± 0.38   & 1.99 ± 0.04 \\
        \lminln  & 1.17 ± 0.04     & 1182.6 ± 35.3      & 1.67 ± 0.03   & 265.9 ± 15.2 \\
        \lmin    & 1.62 ± 0.04    & 1076.8 ± 43.9      &   5.62 ± 0.40  & 250.07 ± 17.2 \\
        \hline
    \end{tabular}
    \caption{Error and Runtime Across Simple Corrupted Identity Covariance Gaussian}
    \label{tab:time}
\end{table}

First, we do not consider \lmin and \lminln in our plots due their exceptionally large runtimes. We report runtimes and errors (defined as the Euclidean distance from the estimated mean to the true mean) of selected algorithms under $n=500$ and $d=500$  and under $n=200$ and $d=500$ over a simple corrupted Gaussian distribution in Table \ref{tab:time}. We report results as the mean $\pm$ the standard deviation, averaged over 5 runs. With the notable exceptions of \lminln, \lmin, and \pgd, most estimators are efficient and took under 3 seconds to run with $n=500$ and $d=500$ for a simple corrupted Gaussian distribution. \lmin and \lminln rely on an SDP solver, which we implement with the cvxpy \citep{diamond2016cvxpy, agrawal2018rewriting} package and the mosek solver. Although this is theoretically efficient, it is slow in practice for the data scale and dimesionality we consider in this paper.  For $n=500$ and $d=500$, both algorithms took about 1100 seconds, or about 18 minutes, to return a mean estimate over a simple corrupted data scheme. 
For that reason, and since we run many trials of each input size and error level, we do not consider these algorithms in our plots. However, we note that employing Corollary \ref{cor:prune-2t-main} for \lminln achieves a noticeable performance increase over \lmin; showing gains from $1.62$ error to $1.17$ error in the $n=500, d=500$ case and from $5.62$ to $1.67$ error in the $n=200, d=500$ case. \pgd is also much slower than other robust estimators, taking approximately 80 seconds to run with $n=500$ and $d=500$. While this significant slow down is relevant when considering a practical algorithm, it is not as prohibitive as \lmin. As a result, we include it in all of our plots.

Second, we observe that when the data does not satisfy that $n \gg d$, then both \ev and \que can have catastrophic failure.  Our variants \evln and \queln avoid this issue in the $d > n$ and $d \approx n$ settings, while basically matching the effectiveness of their original versions when they do not have catastrophic failure.  This result is highlighted in Table \ref{tab:time}, where we observe that both \que and \ev achieve significantly worse error than \queln and \evln respectively. As a result, we use \evln and \queln in all comparisons.

Thirdly, we find that \lvsim performs slightly better than the original \lvog; however the difference is fairly small.  We also do not notice any meaningful advantages from using \lvsim or \lvog with different choices of initial mean estimators. As such, we only use \lvsim in all comparisons.

\subsection{Uncorrupted Gaussian Data with Identity Covariance}
\label{sec:uncorr}

We first evaluate the performance of mean estimation algorithms over uncorrupted Gaussian data with identity covariance. In particular, we draw uncorrupted data $X \sim \mathcal{N}_d(\mu, I)$, where $\mu$ is an arbitrary mean and $I$ is identity covariance. For these experiments, we set $\mu$ to be the all-fives vector, but did not find performance to depend on $\mu$. For algorithms that utilize $\tau$, expected corruption, as input, we use the default value of $\tau=0.1$.

\begin{figure}[h]
\centering
\includegraphics[width=\linewidth]{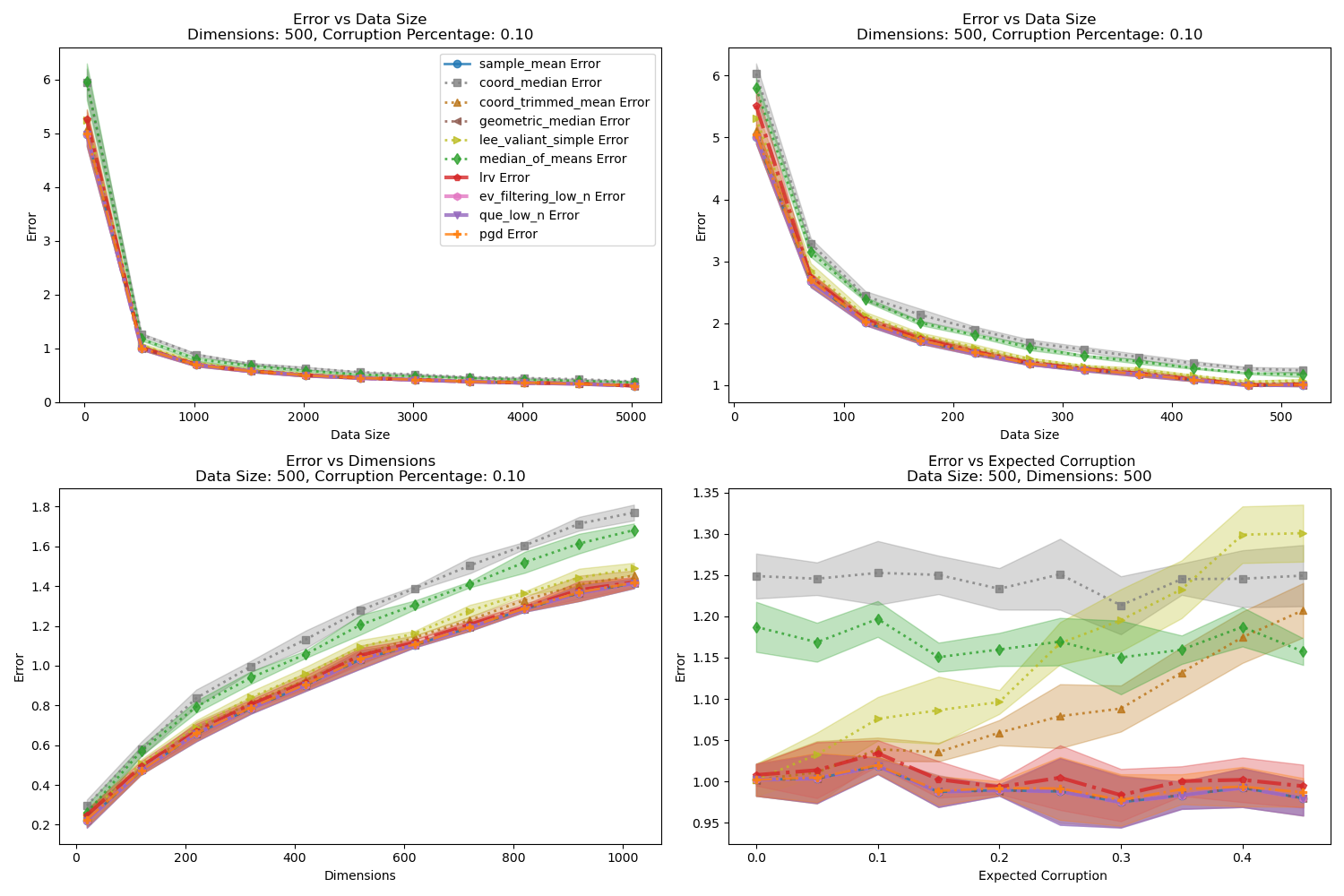} 
\caption{Uncorrupted Gaussian Identity Covariance}
\label{fig:uncorr}
\end{figure}

We provide our first experimental plots in Figure \ref{fig:uncorr}; most further experiments will follow this same set-up, consisting of a set of $4$ charts, each measuring the Error $\|\mu - \hat \mu\|$ on the $y$-axis.  The top two charts vary the data size $n$ along the $x$-axis, but on different scales.  The top left shows a large scale from $n = 20$ to $n=5020$, focusing on the $n > d = 500$ paradigm.  The top right shows $n=20$ to $n=520$, focusing on the $n < d$ paradigm.  
The bottom left plot show the effect of varying the dimension from $d=20$ to $d=1020$ while fixing $n=500$.  
The bottom right shows varying the algorithm's parameter, $\tau$, for the expected noise from $0$ to $0.45$ with fixed $n=500$, $d=500$.  
Each algorithm is shown as a curve, with the average error of $5$ independent data generations at regular intervals on the $x$-axis.  A shaded area is shown at a radius of $1$ standard deviation from that average error value. 

The plots are a bit cluttered because most algorithms perform about the same, including \sample.  No algorithm can be seen to noticeably outperform \sample,
\update{which, as the MLE for this data, and by the Gauss-Markov theorem, is not surprising.    
Methods } \lrv, \evln, \queln, \pgd,  \coordprune, \geomed, and \lvsim have about the same error in most cases.  However, \medmean, and \coordmed perform slightly worse, with the gap becoming more apparent in high dimensions.  Moreover, \lvsim and \coordprune do significantly worse with a higher expected corruption parameter $\tau$. This is a result of expected corruption, $\tau$, being a hyperparameter that directly controls the percentage of points to prune.
Finally, as predicted by basic theory, with $n$ fixed as the dimension $d$ increases, the measured error increases at a rate roughly $\sqrt{d}$.

\begin{figure}[h]
\centering
\includegraphics[width=\linewidth]{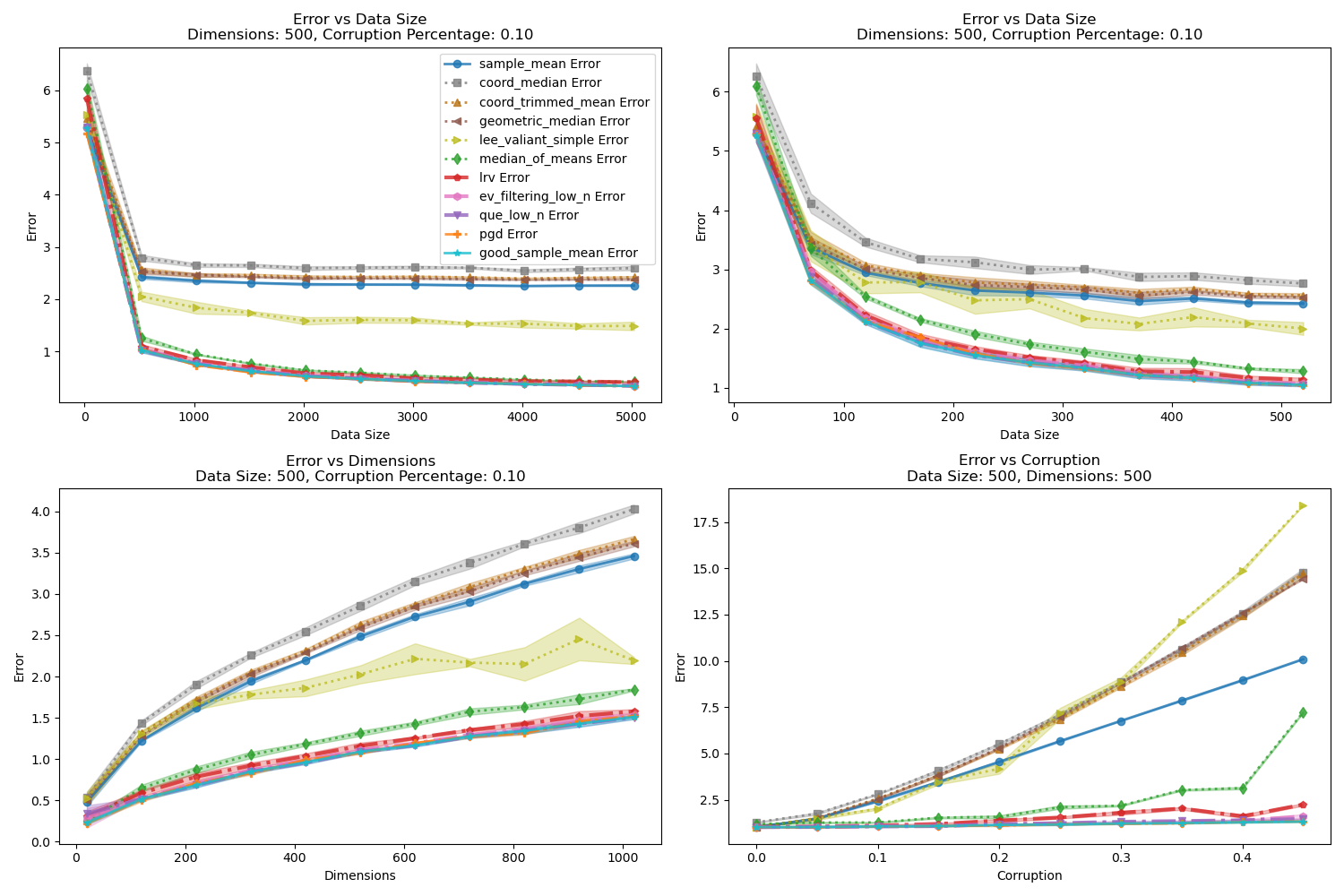} 
\caption{Corrupted Gaussian Identity Covariance: Additive Variance Shell Noise}
\label{fig:sq-d-corrupt}
\end{figure}

\subsection{Corrupted Gaussian Data with Identity Covariance}
\label{sec:corrid}

We evaluate corrupting noise added to Gaussian data with identity covariance.  In particular, we draw $X \sim (1-\eta) P + \eta Q$ where $P = \mathcal{N}_d(\mu,I)$ and $Q$ describes the corrupted data distribution. This is equivalent to the more general case where any covariance $\Sigma$ is known, as we could simply scale the data to have identity covariance, apply these methods, and scale the mean estimate back. We provide a wrapper in our implementation to perform this operation.

\paragraph{Gaussian noise shifted to variance shell.}
We first consider corrupted data distribution 
$Q = \mathcal{N}_d(\mu', \frac{1}{10} I)$ so $\|\mu - \mu'\| = \sqrt{d}$.  
Since $\E_{x \sim P} [\|x - \mu\|^2] = d$, corrupted data from $Q$ is not easily identified.  The location of this cluster is determined by a random rotation at every generation to ensure that no coordinate-axis specific bias is introduced.
This is shown in Figure \ref{fig:sq-d-corrupt} in the same 4 experiments as with uncorrupted data, except now the bottom right figure varies $\eta$, the fraction of corrupted data from $Q$, along the $x$-axis. We set the expected corruption hyperparameter equal to true corruption, that is $\tau = \eta$.  
\update{In Appendix \ref{app:expected_corruption} we explore the relation between expected corruption $\tau$ versus actual corruption $\eta$; for the most part  as long as $\tau > \eta$.}

There is now more clear separation between the algorithms designed for adversarial corruption, and those not. 
Here \evln, \queln, and \pgd do the best among all settings, with \lrv  \update{, perhaps doing the best, even appearing better than \gsample for large dimensions, although within 1 standard deviation error margin}.  Due to the high dimensionality, $d$, \gsample, the sample mean of points from the uncorrupted part of the distribution $P$, does not have error approaching $0$ until $n$ is very large. \evln, \queln, and \pgd work so well that they are nearly overlapping this best possible standard.  
Also, perhaps surprisingly, \medmean also does nearly as well, especially under larger $n$, though it degrades much worse with larger $\eta$. 

In contrast, \coordmed, \sample, \coordprune, \geomed, and \lvsim all do considerably worse, even with large data size.  With large corruption levels, these all even seem to do worse than just \sample, indicating that the algorithms prune the wrong data points or face some other similar issue.

\begin{figure}[h]
\centering
\includegraphics[width=\linewidth]{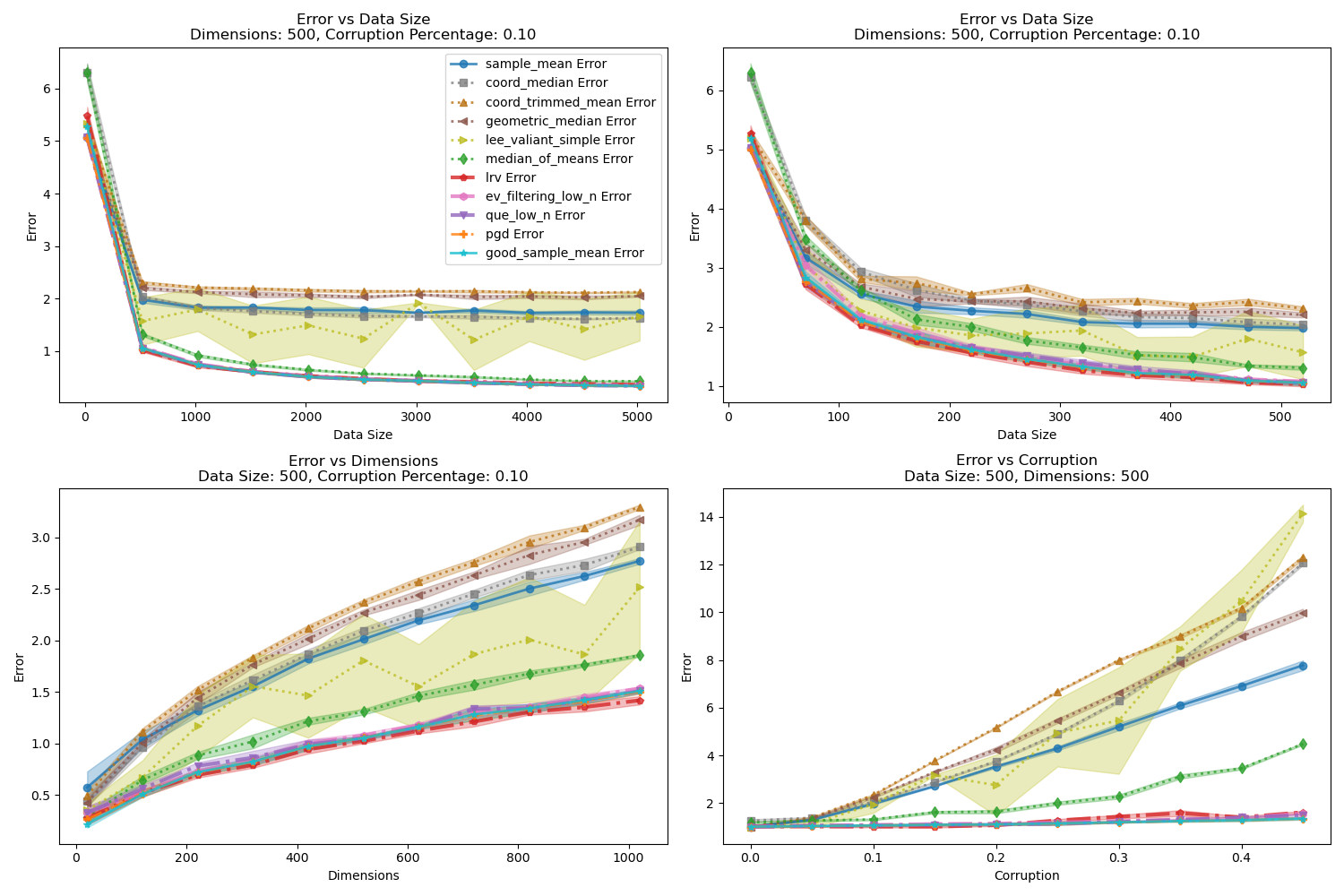} 
\caption{Corrupted Gaussian Identity Covariance: DKK Noise}
\label{fig:DKK-noise}
\end{figure}

\paragraph{Large + Subtle outliers: DKK Noise.} 

We now recreate the noise distribution from \cite{diakonikolas2017being}, which utilizes a more sophisticated corruption scheme that includes both easier and harder to detect outliers. Half of the noise is drawn from the product distribution over the hypercube where every coordinate is -1 or 0 away from the true mean at that coordinate with equal probability. The other half is drawn from the product distribution where the first coordinate is either 11 or -1 away from the true mean at that coordinate with equal probability, the second coordinate is -3 or -1 away from the corresponding true mean coordinate with equal probability, and all remaining coordinates are -1 away from the true mean.
We call this corruption scheme DKK Noise. 
This is shown in Figure \ref{fig:DKK-noise}, with similar results.  \evln, \queln, \pgd, and \lrv achieve performance nearly matching \gsample, with \medmean also doing almost as well -- at least while the dimension $d$ and rate of corruption $\tau$ are on the smaller side.  
Other than \medmean, all classic methods perform noticeably worse than \gsample and achieve similar error to \sample. The only difference of note here is that \lvsim exhibits far larger error bars, suggesting that its performance may vary significantly depending on random initializations made within the algorithm.  Also, \lrv may even outperform \gsample for very large dimensions.

\paragraph{Subtractive noise.}
We additionally consider subtractive noise in Figure \ref{fig:subtractive}. Here, an adversary is able to remove a $\eta$ percentage of points from the data distribution. We implement this by removing the $\eta$-percentage of points which are most extreme in some direction.  Unlike in the additive corruption case, there is a strict upper bound on the error under subtractive corruption from a standard Gaussian distribution; the error induced is bounded as $O(\eta)$ even using \sample, and clustering the subtracted points as most extreme in some direction ensures their effect is $\Omega(\eta)$ under \sample. In general, we wish to consider noise distributions that may add outliers and also remove inliers through such subtractive noise.  However, we do not find any surprising capabilities among methods in this scenario.  As a result, for the remainder of this paper, we focus on additive corruption.

\begin{figure}[h]
\centering
\includegraphics[width=\linewidth]{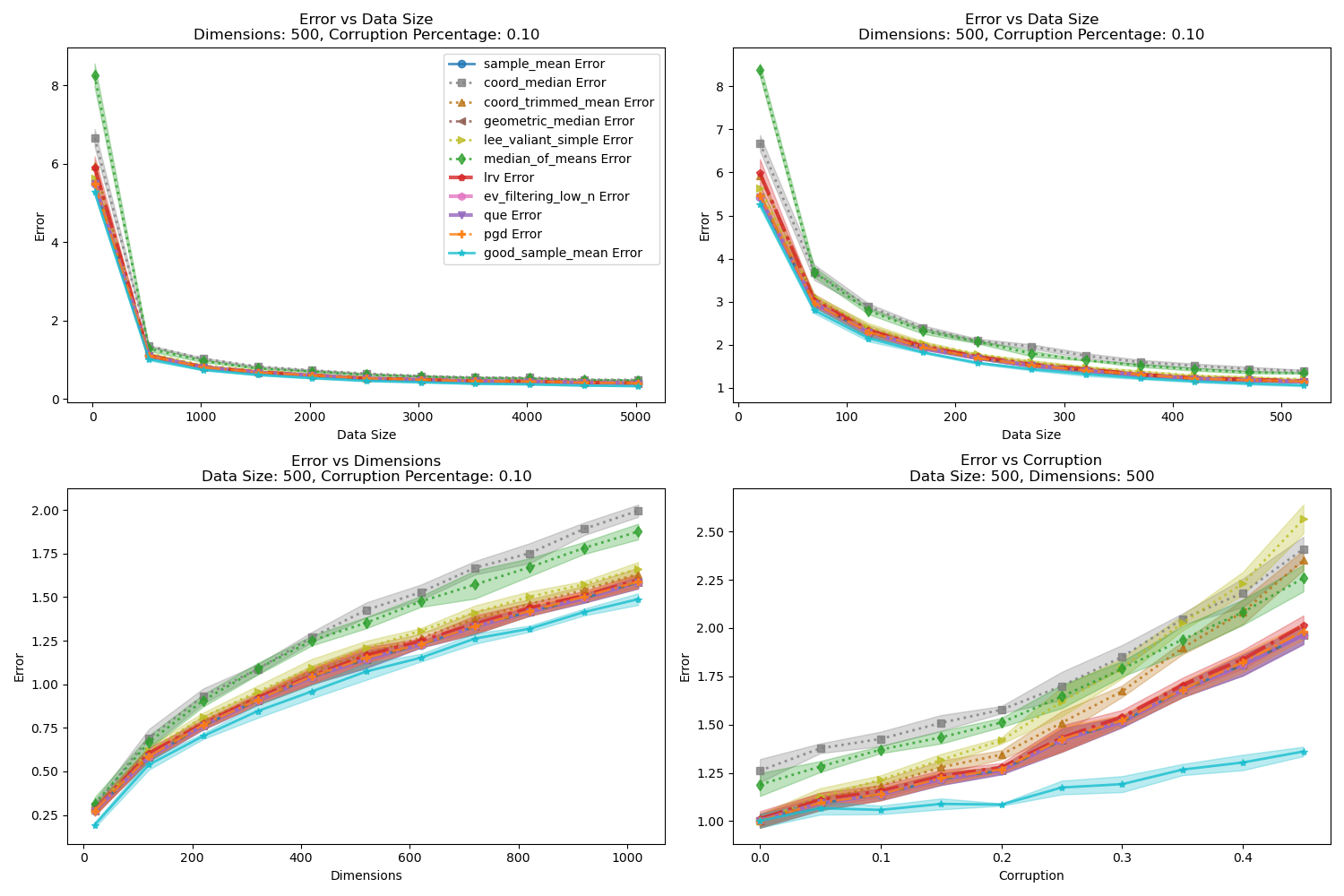} 
\caption{Corrupted Gaussian Identity Covariance: Subtractive Noise}
\label{fig:subtractive}
\end{figure}

Under subtractive corruption, nothing outperforms \sample; and now nothing can match \gsample in error in most settings. However, \evln, \queln, \pgd, and \lrv all nearly match the performance of \sample. Unlike in the previous additive corruption schemes, \medmean performs significantly worse under subtractive corruption, always achieving error notably worse than \sample. 
Among other estimators, \geomed nearly matches \sample error across all settings, \lvsim and \coordprune perform similarly but degrade much more under larger corruption, while \coordmed performs significantly worse.

We find similar results across several other noise distributions. In addition to the hard-to-detect distributions, we also show that \evln, \queln, \pgd, and \lrv are generally robust to arbitrary outliers. These details are deferred to Appendix \ref{app:idcov_morenoise}.

\subsection{Corrupted Gaussian Data with Unknown Covariance}
\label{sec:corrsph}

We now evaluate corrupted Gaussian data for general unknown covariance. Since \evln and \queln rely on the identity covariance assumption, we employ a simple heuristic to adapt these algorithms to the unknown covariance case. We estimate the trace of the covariance as $\Tr(\hat{\Sigma}) = \frac{1}{n-1} \sum_{i = 1}^n ||x_i - \hat{\mu}||^2$, where $\hat{\mu}$ is the sample mean and $\hat{\Sigma}$ is the sample covariance. We rescale the data to $X’ = \{x’_i = x_i/\sqrt{\frac{\Tr(\hat{\Sigma})}{d}} \mid x_i \in X\}$. We then estimate the mean of $X'$, rescale this estimate by $\sqrt{\frac{\Tr(\hat{\Sigma})}{d}}$, and report the results. This heuristic is used for \evln and \queln across all unknown covariance experiments, and not used for any  other algorithms.  The other standard algorithms are either invariant to this linear rescaling, or themselves account for it; this was supported by our own observations.

Another possible method is to utilize a robust covariance estimate instead of a sample trace estimate, as discussed in \cite{diakonikolas2023algorithmic}. We do not evaluate such methods, as this would involve a thorough study into robust covariance estimation methods over low data size, which goes beyond the scope of this work. Naively treating robust covariance estimation as robust mean estimation in $d^2$ dimensions further exasperates issues related to low data size. We also choose to use a simple sample trace estimate rather than the robust approach proposed by \cite{lai2016agnostic}. We find that the approach proposed often results in significant underestimates across difficult noise distributions, causing \evln and \queln to fail catastrophically. We note that these underestimates are potentially more harmful than overestimates as through them, even the inlier data may not pass the threshold, causing continuous pruning. While a sample trace estimate approach is more prone to overestimates, this can be remedied by naively pruning large outliers. \cite{diakonikolas2017being} also provides an algorithm for unknown covariance mean estimation similar to \ev, but the corruption detection threshold is not easily adapted to the low data size case.

\subsubsection{Unknown Spherical Covariance}

We evaluate corrupting noise added to Gaussian data with spherical covariance. We draw $X \sim (1-\eta) P + \eta Q$ where $P = \mathcal{N}_d(\mu,\sigma^2I)$ and $Q$ describes the additive corrupted data distribution. We consider $\mu$ to be the all-fives vector and $\sigma=5$.

\begin{figure}[h]
\centering
\includegraphics[width=\linewidth]{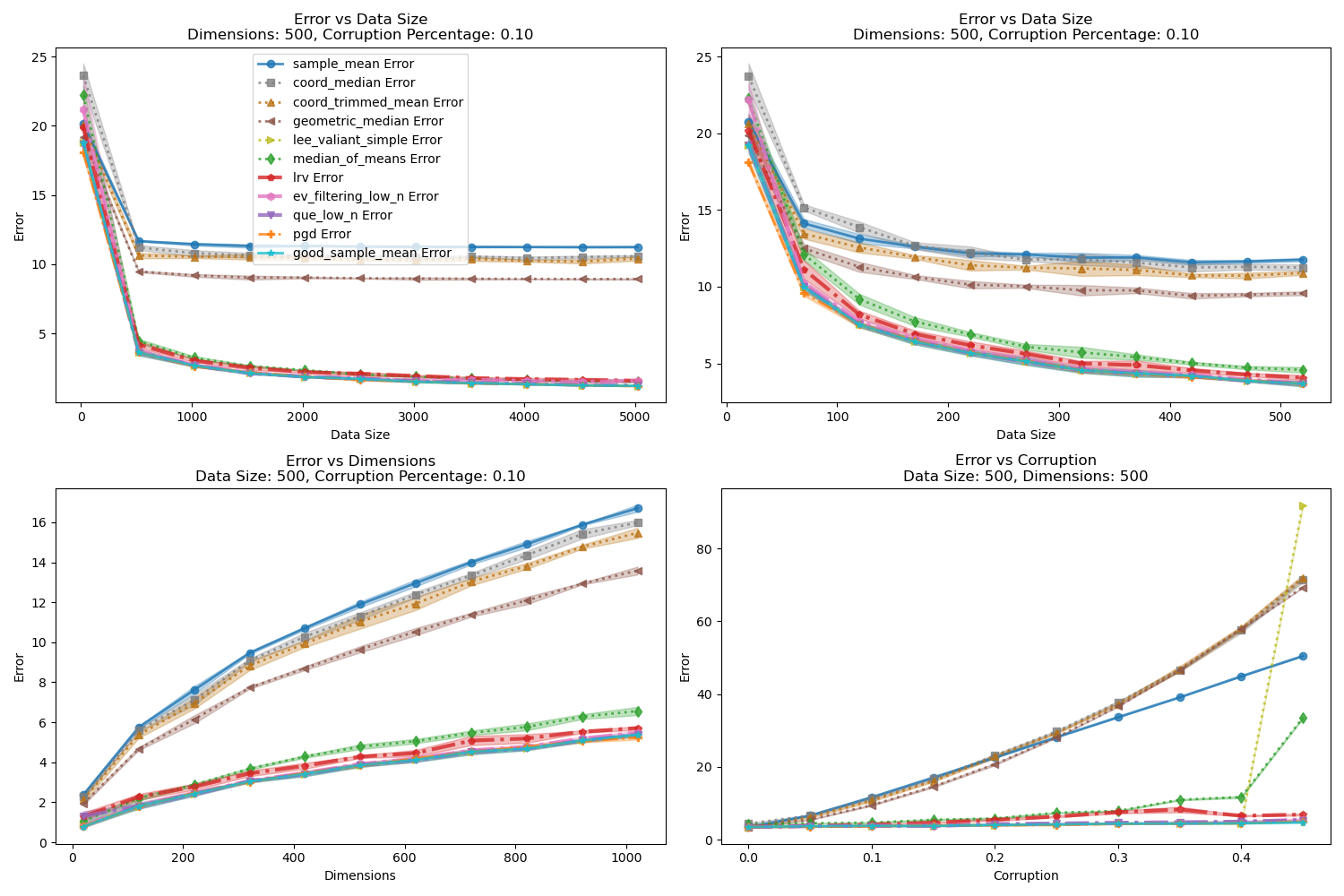} 
\caption{Corrupted Gaussian Large Spherical Covariance: Additive Variance Shell Noise}
\label{fig:largesp_varshell}
\end{figure}

\paragraph{Gaussian noise shifted to scaled variance shell}
We adapt the identity covariance noise distribution models by appropriately scaling coordinates by $\sigma$. We first consider the corrupted data distribution 
$Q = \mathcal{N}(\mu', \frac{1}{10} I)$ so $\|\mu - \mu'\| = \sigma \sqrt{d}$.  
With $P$ now having covariance $\sigma^2I$, $\E_{x \sim P} [\|x - \mu\|^2] = \sigma^2 d$, and corrupted data from $Q$ is not easily identified. Results are show in Figure \ref{fig:largesp_varshell}. 

While the overall error here is higher, matching the theory that even for uncorrupted data, the sample mean is expected to have error of $O(\sigma \sqrt{d/n})$, the relative performance of algorithms is nearly identical to the identity covariance case. \evln, \queln, and \pgd nearly match \gsample error throughout, with \lrv performing only slightly worse. \medmean lags behind both estimators but still performs noticeably better than \sample. However, \lvsim performs much better in this scenario, nearly exactly matching \gsample except with large enough corruption \update{-- where with $\eta = 0.45$, it and \medmean probably confuse which points are inliers and have much worse error}. Other methods perform similarly to \sample or worse.

As in the identity covariance case, we find similar results across noise distributions. The only notable exception is for \evln, which sometimes performs slightly worse and doesn't always converge to \gsample error as $n$ increases, probably due to instabilities in the trace scaling heuristic. We additionally show that relative performance of algorithms is mostly independent of the choice of $\sigma$. The exception to this is \lrv, which notably outperforms all other methods, including \gsample, with large enough $\sigma$ across noise distributions. These details are deferred to Appendix \ref{app:unknownsp_morenoise}.

\paragraph{Unknown Non-Spherical Covariance}
In Appendix \ref{app:unknown_non_sp} we also explore the unknown, non-spherical covariance case.  This is even more sensitive to the covariance estimate, and so is further outside the primary scope of this study. Nonetheless, we continue to observe that the best robust estimators, including \queln, continue to perform well. 

\section{Large Language Model Experiment}
\label{sec:realworld}

To evaluate whether robust mean estimation methods are overly sensitive to distributional assumptions, we evaluate performance over real world data. We first study the problem of estimating the mean of \update{vectors from language models}. \update{Such ``word embeddings'' have had an enormous impact on natural language processing, starting from simple sparse term-frequency vectors~\citep{robertson2009probabilistic}.  Second generation word embeddings (e.g., GloVE~\citep{pennington-etal-2014-glove} and word2vec~\citep{mikolov2013distributed}) made the advancement of creating a ``low" dimensional vector (about 300 dimensions) for each word, where Euclidean (and cosine) distances could be used as a proxy for the similarity between words based on how they are used.  As a serendipitous side-effect, structure emerged where dot-products, means, and linear classifiers made sense in this embedding space~\citep{bolukbasi2016man,dev2019attenuating}.  Third generation embeddings created representations for each word in the context of the nearby words; that is, each use of a word had a different embedding.  These were a first main use of transformer architectures, and implicitly capture more meaning and context with progressive layers of a neural network.  As is most common, we use the last layer of the embedding network as the representation of a word. } \update{Our first study, shown next, uses these third generation word embeddings.}  
Further experiments over deep pretrained image model embeddings and \update{second generation} word embeddings are deferred to Appendix \ref{app:image_experiments} and Appendix \ref{app:word_experiments},  respectively. 

We first examine performance over \update{third generation} embeddings of a \update{homonym word where all instances} correspond to the same \update{meaning}. We then examine performance where embeddings corresponding to one \update{meaning} of a word are corrupted by embeddings corresponding to another \update{meaning} of the same word. Effectively calculating this mean may be important to many downstream tasks (e.g., topic modeling~\citep{griffiths2004finding,blei2009topic}, bias estimation and attenuation~\citep{bolukbasi2016man,dev2019attenuating}).   This models a realistic form of corruption that may arise within LLM embedded data.

We build a dataset of 400 sentences that use the word "field" corresponding to the following definition: "an area of open land, especially one planted with crops or pasture, typically bounded by hedges or fences". We build another dataset of 400 sentences that use the word "field" corresponding to the following, alternate definition: "a particular branch of study or sphere of activity or interest". From now on, we refer to these as "fields of land" and "fields of study". We generated these sentences using ChatGPT-4o.  For more details on how we generate this dataset, and the exact sentences used, see Appendix \ref{app:llm_dataset}.  We embed these sentences and extract the in-context embeddings for the word "field" using 4 LLMs of varying embedding dimensions: MiniLM~\citep{wang2020minilm}, T5~\citep{raffel2023t5}, BERT~\citep{devlin2019bert}, and ALBERT~\citep{lan2020albert}. MiniLM has an embedding dimension of 384, T5 has one of 512, BERT and ALBERT have embedding dimensions of 768. We choose these 4 LLMs to sample a variety of models across different dimensionalities.

\subsection{Common Definition Embeddings}

\begin{figure}[t]
    \begin{subfigure}{0.5\linewidth}
        \centering
        \includegraphics[width=\linewidth]{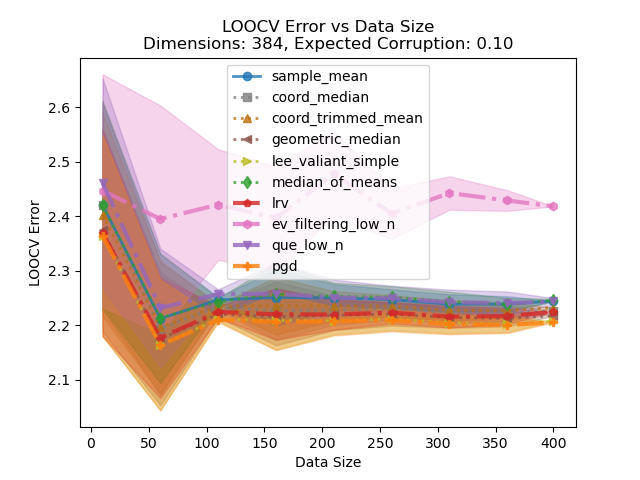}
        \caption{MiniLM}
    \end{subfigure}
        \begin{subfigure}{0.5\linewidth}
        \centering
        \includegraphics[width=\linewidth]{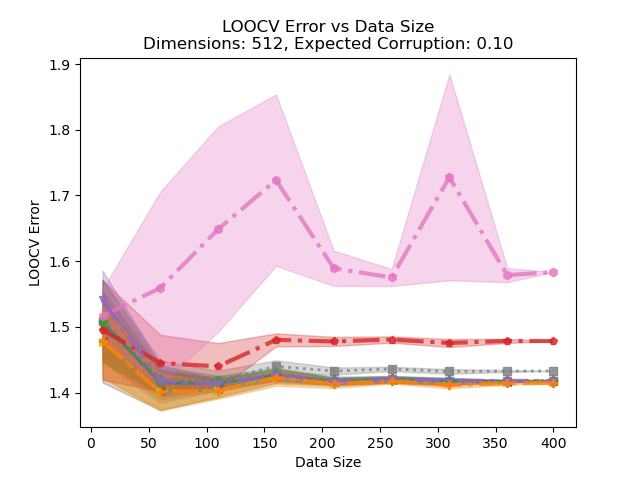}
        \caption{T5}
    \end{subfigure}
    \\
    \begin{subfigure}{0.5\linewidth}
        \centering
        \includegraphics[width=\linewidth]{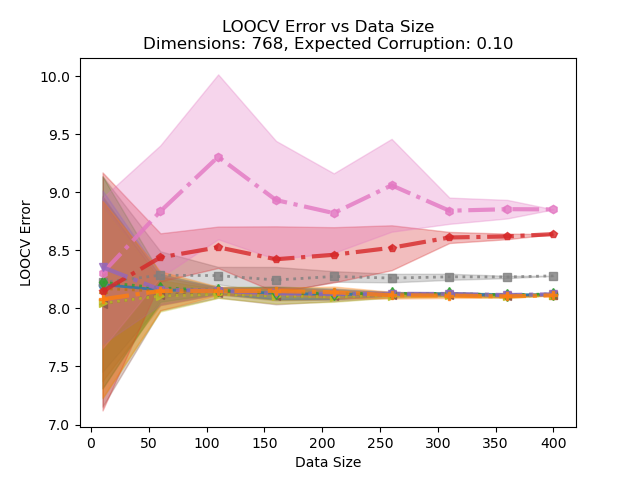}
        \caption{BERT}
    \end{subfigure}
    \begin{subfigure}{0.5\linewidth}
        \centering
        \includegraphics[width=\linewidth]{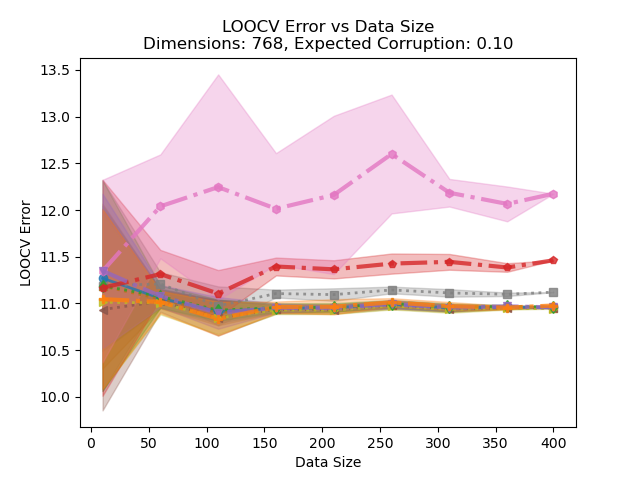}
        \caption{ALBERT}
    \end{subfigure}
    \caption{LOOCV Error on "Field of Land" Embeddings}
    \label{fig:loocv_field_land}
\end{figure}

We first consider performance over embeddings corresponding to the same definition. This is analogous to the uncorrupted data case.  As an error metric, we use Leave One Out Cross Validaton (LOOCV).  We only average over the bottom 90\% of errors to account for potential bias introduced by words that less clearly belong to a specific category. 
LOOCV error is defined here as $\frac{1}{n'} \sum_{i=1}^{n'} \|\mathsf{estimator}(X_{-i}) - x_i\|$ where $n' = 0.9 n$ is $90\%$ of the number of data points in the dataset $X$ (those with smallest errors), $x_i$ is the $i$th data point in $X$, and $X_{-i}$ is $X$ excluding $x_i$. LOOCV under the sample mean represents a valuable baseline for comparison as it demonstrates the minimum error to be expected across this data set \update{under the 10\% expected corruption ($\tau = 0.1$) modeled by the algorithms}. We take the dataset of 400 sentences corresponding to the "field of land" definition. We vary data size from $n=10$ to $n=400$, and, as in prior experiments, average results over $5$ runs and report shaded regions to denote 1 standard deviation of error.  For algorithms that utilize $\tau$, expected corruption, as input, we use the default value of $\tau=0.1$.  We employ the sample trace scaling heuristic for \evln and \queln. We additionally halt \queln whenever more than $2\tau$ percentage of the data has been pruned, regardless of whether or not the threshold is passed. We note that the early halting heuristic is necessary for \queln to perform well under this setting and that it does not meaningfully improve \evln; this is further explored in Section \ref{subsec:early_halt}. We show results in Figure \ref{fig:loocv_field_land}.

Our results do not match our synthetic experiments, suggesting that some robust mean estimation algorithms are sensitive to (Gaussian) distributional assumptions, at least under small data size.  Across LLMs, no algorithm significantly beats the error of \sample. Moreover, \evln performs significantly worse than \sample over all embeddings. This is unsurprising due to the sensitivity of \evln to knowledge of the true covariance. Despite having a similar dependency to knowledge of the true covariance, \queln achieves performance nearly matching \sample error throughout. We also find, that \lrv performs meaningfully worse than \sample across all LLMs except MiniLM, though not quite as catostrophically as \evln.  Aside from \coordmed, which performs noticeably worse than \sample over all LLMs besides MiniLM, all other estimators perform similarly to \sample.

\subsection{Corrupted Embeddings}

\begin{figure}[t]
    \begin{subfigure}{0.5\linewidth}
        \centering
        \includegraphics[width=\linewidth]{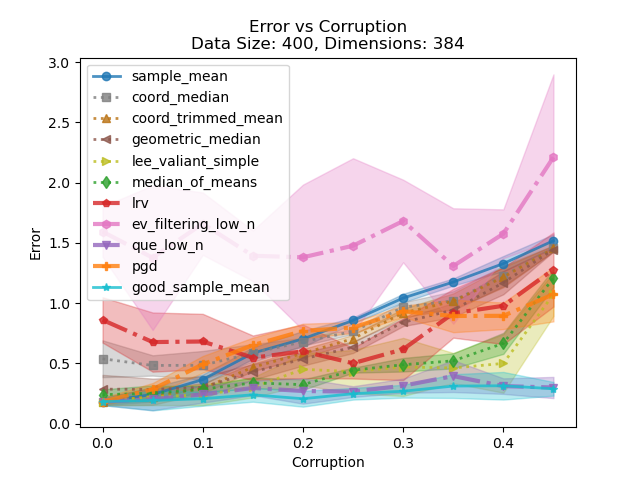}
        \caption{MiniLM}
    \end{subfigure}
        \begin{subfigure}{0.5\linewidth}
        \centering
        \includegraphics[width=\linewidth]{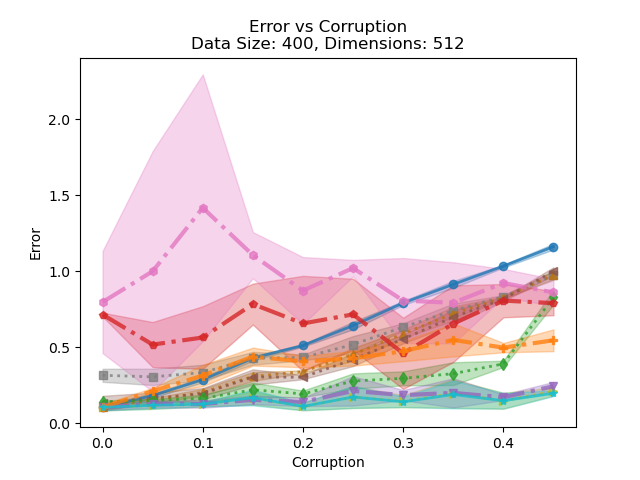}
        \caption{T5}
    \end{subfigure}
    \\
    \begin{subfigure}{0.5\linewidth}
        \centering
        \includegraphics[width=\linewidth]{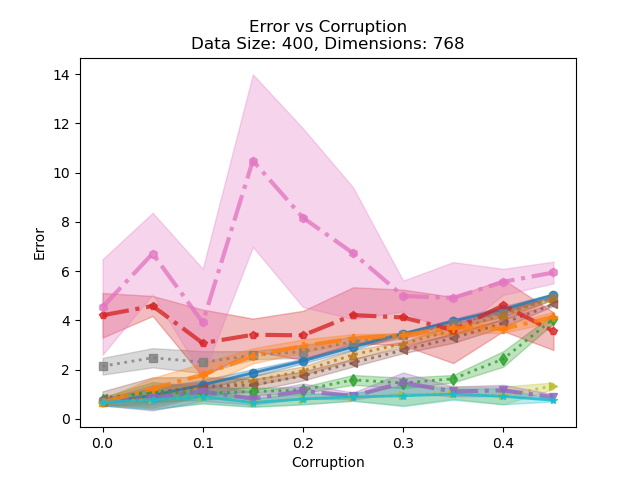}
        \caption{BERT}
    \end{subfigure}
    \begin{subfigure}{0.5\linewidth}
        \centering
        \includegraphics[width=\linewidth]{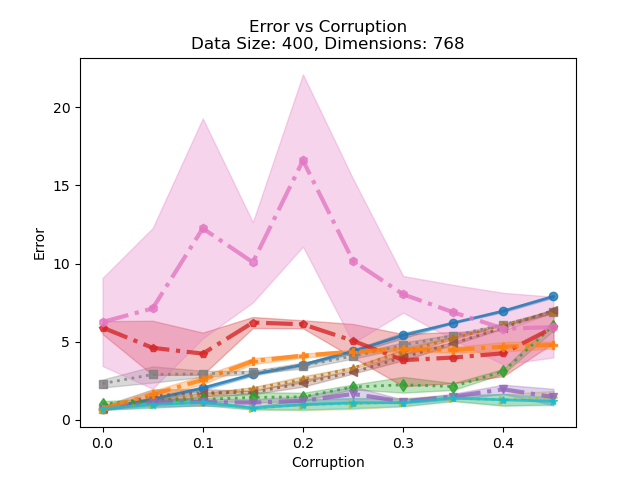}
        \caption{ALBERT}
    \end{subfigure}
    \caption{Error on "Field of Land" Embeddings Corrupted with "Field of Study" Embeddings}
    \label{fig:loocv_corrupted}
\end{figure}

We examine performance over corrupted embeddings of the word "field". We draw corrupted data $X \sim (1 - \eta) P + \eta Q$, where the inlier distribution, $P$, consists of embeddings of the word "field" corresponding to the "field of land" definition, and the outlier distribution, $Q$, consists of embeddings of the word "field" corresponding to the "field of study" definition. As with previous experiments, we measure the Error $\|\mu - \hat{\mu}\|$ on the $y$-axis, taking $\mu$ as the mean of all 400 "field of land" embeddings, and $\hat{\mu}$ as the estimate returned by a mean estimation algorithm. We measure Error vs $\eta$, vary $\eta$ from $0$ to $0.45$, and always have $n = 400$. We average results over $5$ runs and report shaded regions to represent 1 standard deviation of error.  \gsample is plotted to represent the mean of the data before corruption. These results are shown in Figure \ref{fig:loocv_corrupted}. 

We find that mean estimation algorithms can indeed significantly improve performance on this real-world task, but do not observe the same trends as in our synthetic data experiments. \queln and \lvsim are the best estimators, with both significantly outperforming \sample. In fact, \queln performs nearly identical to \gsample across all LLMs, and \lvsim performs similarly, except on MiniLM, where it degrades worse with larger $\eta$ but still outperforms \sample. The performance of \lvsim supports the observation that it is a more effective naive pruning method, which happens to work among the best in these experiments. 
Moreover, \medmean performs very effectively here, always significantly outperforming \sample.  However, neither \evln nor \lrv perform effectively. \lrv outperforms \sample with large enough $\eta$, but these results are not consistent across LLMs suggesting sensitivity to distributional assumptions. Additionally, it almost never outperforms \medmean, \lvsim, or \queln and achieves much worse error with lower $\eta$. This may suggest that \lrv finds irregularities in the uncorrupted data, causing it to return a mean significantly different from \gsample. While this could be beneficial, suggesting that \gsample isn't the best error metric, this is not supported by the "uncorrupted" LOOCV error results, where \lrv also generally results in degraded LOOCV error.
\evln fails catastrophically across LLMs. This matches the "uncorrupted" LOOCV error results, supporting the observation that \evln may fail catostrophically without sufficient knowledge of the true covariance matrix. As in the "uncorrupted" LOOCV error results, there is the somewhat surprising observation that despite seemingly having a similar dependency on knowledge of the true covariance matrix to \evln, \queln performs nearly optimally here.
This suggests the superiority of the quantum entropy based scoring method over naively ranking outliers based on the top eigenvalue of the sample covariance. 

\subsection{Effect Of Early Halting and Ablation}
\label{subsec:early_halt}

\begin{figure}[t]
    \begin{subfigure}{0.5\linewidth}
        \centering
        \includegraphics[width=\linewidth]{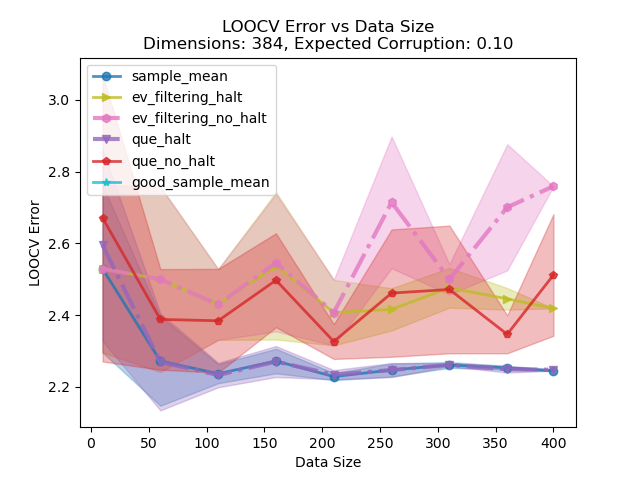}
        \caption{LOOCV - MiniLM}
    \end{subfigure}
    \begin{subfigure}{0.5\linewidth}
        \centering
        \includegraphics[width=\linewidth]{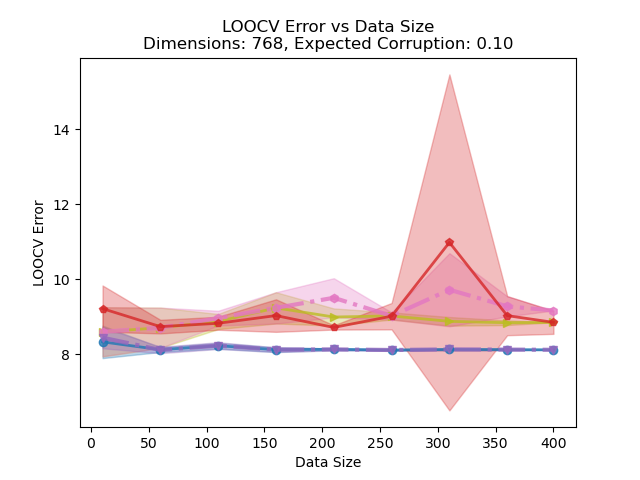}
        \caption{LOOCV - BERT}
    \end{subfigure}
    \\
    \begin{subfigure}{0.5\linewidth}
        \centering
        \includegraphics[width=\linewidth]{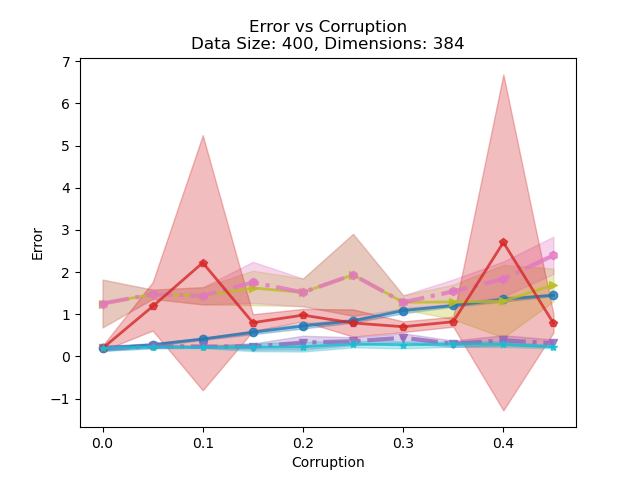}
        \caption{Corruption - MiniLM}
    \end{subfigure}
        \begin{subfigure}{0.5\linewidth}
        \centering
        \includegraphics[width=\linewidth]{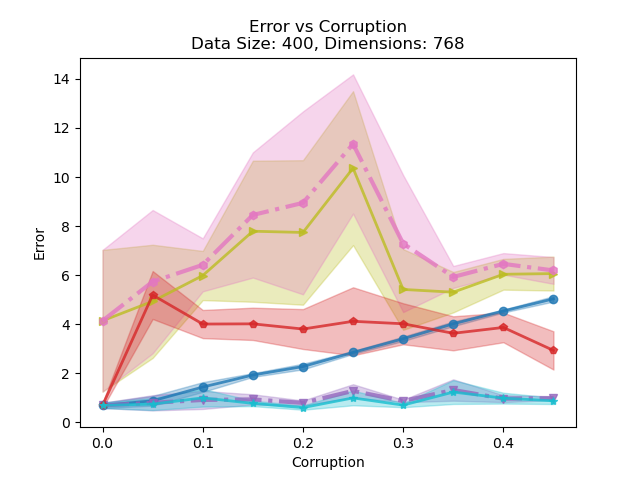}
        \caption{Corruption - BERT}
    \end{subfigure}
    \caption{LLM Comparison - With and without \emph{early halting}}
    \label{fig:llm_no_halt}
\end{figure}

\emph{Early halting} is the following strategy with respect to a given threshold $\tau$:  If more than $2\tau$ points have been pruned by an algorithm, then this halts the pruning process (independent of other criteria) and returns the sample mean of remaining data.  

Here we examine the effect of early halting on \queln and \evln in the context of these real world data where inliers are not generated directly from a prescribed Gaussian distribution.  In other settings explored in this paper, this strategy is almost never invoked, so has no visible effect.  
We compare the performance of both of these algorithm with and without enforcing early halting. We examine LOOCV and Corruption Error over MiniLM and BERT embeddings. Results are shown in Figure \ref{fig:llm_no_halt}. Across all 4 experiments the performance of \queln shows significant degradation without early halting, going from nearly matching \sample in LOOCV error and nearly matching \gsample in corrupted error, to yielding error significantly worse than \sample and \gsample without early halting. Additionally, \queln without halting yields far larger variance results. Meanwhile, \evln performs only slightly better with early halting, and still fails catastrophically across experiments.

 We perform further ablations on these LLM experiments in Appendix \ref{app:llm_ablations}. We explore the effect of different pruning methods on \evln and different weighting methods on \lrv. In both cases, we find that LOOCV error can be improved using non-Gaussian pruning and weighting methods, whereas corrupted error is not meaningfully improved. The failure of \evln across pruning methods suggests the fundamental sensitivity of the outlier scoring method used in \evln to distributional assumptions, which is not seen in \queln (with early halting). We additionally examine performance over these same experiments with the roles of "field of study" and "field of land" embeddings flipped, finding nearly identical results despite differences in distribution. 

\begin{figure}[h]
\centering
\begin{subfigure}{0.48\linewidth}
    \centering
    \includegraphics[width=\linewidth]{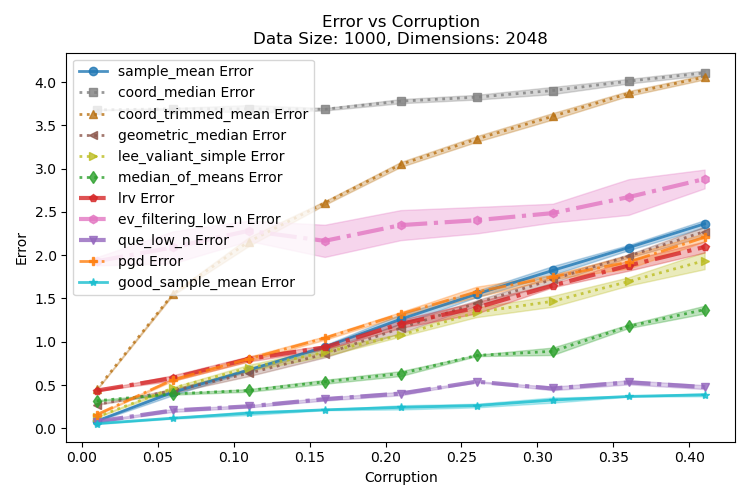}
    \caption{ResNet-50 2048 Dimensional Image Embeddings: \\
    Cat Images Corrupted With Dog Images}
\end{subfigure}
\begin{subfigure}{0.48\linewidth}
    \centering
    \includegraphics[width=\linewidth]{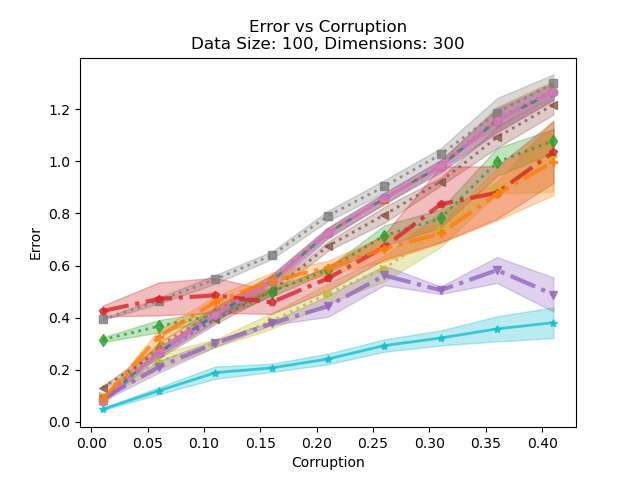}
    \caption{GloVe 300 Dimensional  Word Embeddings: \\
    Pleasant Words Corrupted With Unpleasant Words}
\end{subfigure}
\caption{Additional Real World Corrupted Experiments}
\label{fig:more_real_world}
\end{figure}

\subsection{Additional Real World Experiments}

We additionally examine the performance of robust mean estimation algorithms on corrupted embeddings from deep pretrained image models and non-contextual word embedding models. For the image embedding experiment, we utilize a set of images of cats and dogs from the CIFAR10 dataset \citep{Krizhevsky2009LearningML} with 2048 dimensional embeddings generated from a pretrained ResNet-50 model \citep{he2015deepresiduallearningimage}. For the word embedding experiment, we utilize a dataset of pleasant and unpleasant words from \citep{aboagye2023interpretable} with 300 dimensional embeddings generated from a pretrained GloVe model \citep{pennington-etal-2014-glove}. Experiments are run analogously to the LLM experiments with identical settings for the mean estimators. For the image embedding experiment, inlier data is defined as embeddings of cat images, outlier data is defined as embeddings of dog images, and data size is fixed at $n=1000$. For the word embedding experiment, inlier data is defined as embeddings of "pleasant" words, outlier data is defined as embeddings of "unpleasant" words, and data size is fixed at $n=100$. Results are shown in Figure \ref{fig:more_real_world}.

We find similar results to the LLM experiments, with \queln using early halting noticeably outperforming \sample across both settings, and nearly matching \gsample over image embeddings. Other robust estimators tend to perform similarly or worse to \sample, with \evln again demonstrating significant degradation
 without knowledge of distributional assumptions. Additionally, \lvsim, which performs strongly in the LLM experiments, does not perform as well, demonstrating its sensitivity to distributional assumptions. Similarly, \medmean, does not perform as well across word embeddings as it does across LLM and image embeddings, no longer noticeably outperforming other estimators.
 
 We find similar results across varying dimensionalities of image embedding and GloVe models. We examine additional image embeddings models of varying dimensionalities under LOOCV and corrupted error in Appendix \ref{app:image_experiments}. We additionally recreate the corrupted data experiment, but vary data size instead of corruption, finding that even with $n \gg d$, only \queln and \medmean significantly outperform \sample error, with both estimators nearly converging to \gsample error with corruption $\eta = 0.1$.  We also examine GloVe models of varying dimensionalities under LOOCV and corrupted error in Appendix \ref{app:word_experiments}. 

\update{
\section{Non-Gaussian Synthetic Data Experiments}
\label{app:nongauss}
We examine the performance of robust mean estimators across a few non-Gaussian synthetic data experiments. As before, we draw $X \sim (1 - \eta)P + \eta Q$, where $P$ is an inlier data distribution and $Q$ is the corrupted data distribution. Similar to real world experiments, we employ the trace scaling heuristic on \evln and \queln and enforce early halting on \queln if more than a $2\tau$ percentage of the data has been pruned.

\begin{figure}[h]
    \centering
    \includegraphics[width=\linewidth]{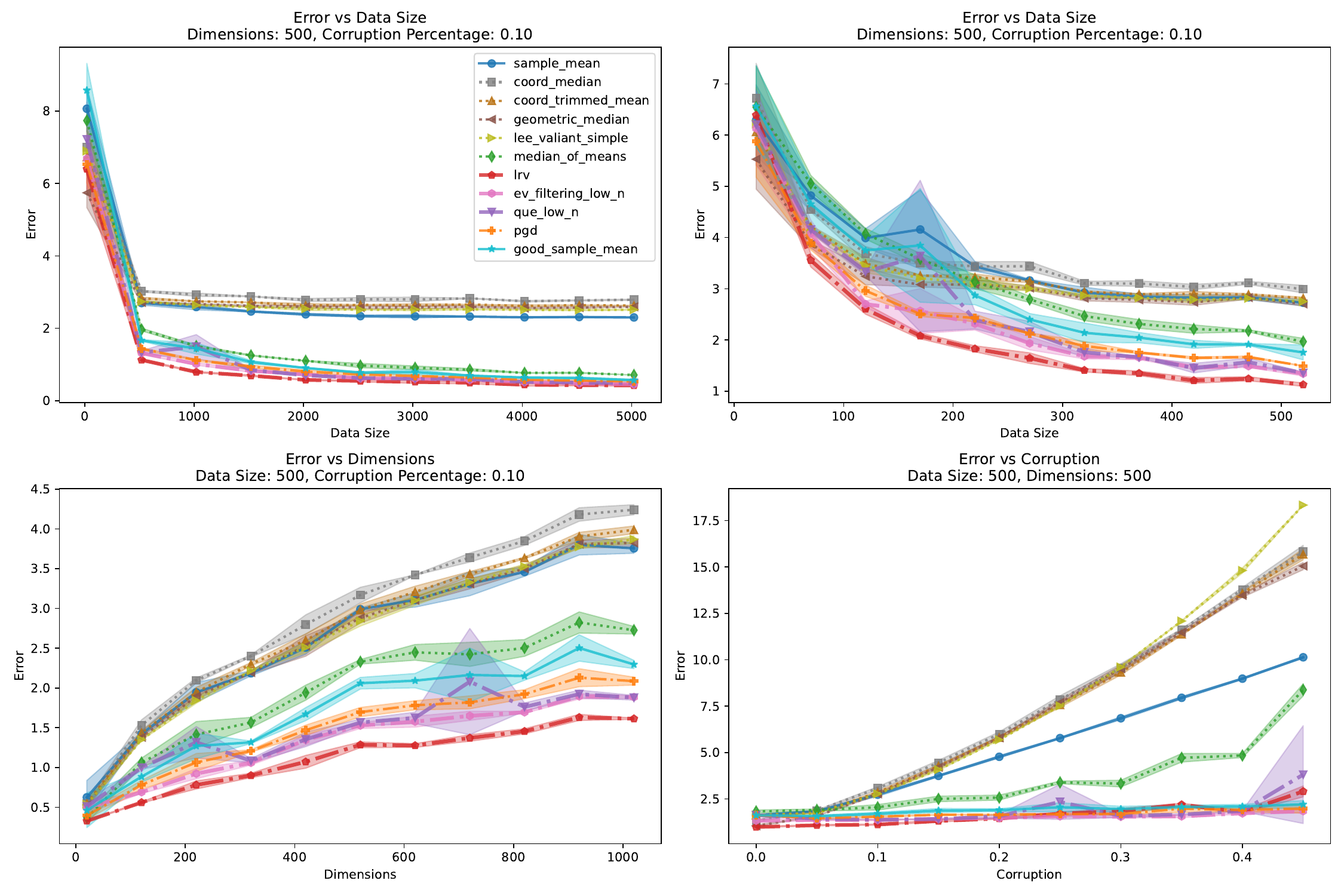}
    \caption{Corrupted Multivariate t-distribution}
    \label{fig:multi_t}
\end{figure}

\paragraph{Multivariate t-distribution} 
Define $P$ as the multivariate t-distribution parametrized by $\mu$ as the all-fives vector, $\Sigma$ as the identity matrix, and degrees of freedom $\nu=3$. Observe that the covariance is $\frac{\nu}{\nu - 2}\Sigma$, not $\Sigma$. This is a heavy-tailed distribution with polynomial tail decay. As in other experiments, consider corrupted data distribution $Q = \mathcal{N}_d(\mu', \frac{1}{10}I)$ where $\|\mu - \mu'\| = \sqrt{d}$. Results are shown in Figure \ref{fig:multi_t}. As with the other experiments, \queln, \evln, \pgd, \lrv, and \medmean all notably outperform \sample here. However here, \queln, \evln, \pgd, and \lrv all consistently outperform \gsample. This trend is particularly notable under lower data size and high dimensions. This is explainable given that the sample mean is known to be a sub-optimal estimator for heavy tailed distributions. These results suggest that robust estimators designed for the Huber contamination model have practical application in the heavy-tailed distribution setting. We leave further investigation into this connection to future work, building on \cite{prasad2019unifiedrobustheavy}. We also note that there is a more notable separation between the best performing methods in this settings than in the inlier Gaussian data scenario. In particular, \evln and \lrv consistently outperform \queln, and also \queln exhibits areas of high variance in error.

\paragraph{Laplace Distribution}
Define $P$ as the product distribution of $d$ independent Laplace distributions, each with mean $5$ and scale $1$. Then, the true mean $\mu$ is defined as the all-fives vector. Again, consider corrupted data distribution $Q = \mathcal{N}_d(\mu', \frac{1}{10}I)$ where $\|\mu - \mu'\| = \sqrt{d}$. Results are shown in Figure \ref{fig:laplace}. Results among the best estimators are nearly identical to the Gaussian inlier data scenarios, with \queln, \pgd, \evln all nearly matching \gsample error and with \lrv performing marginally worse. We note that although the Laplacian distribution has a heavier tail than the Gaussian, we do not observe the same trends as in the multivariate t-distribution. This can be explained by the fact that Laplacian tails still decay exponentially, whereas the tails in the multivariate t-distribution decay polynomially.

\begin{figure}[h]
    \centering
    \includegraphics[width=\linewidth]{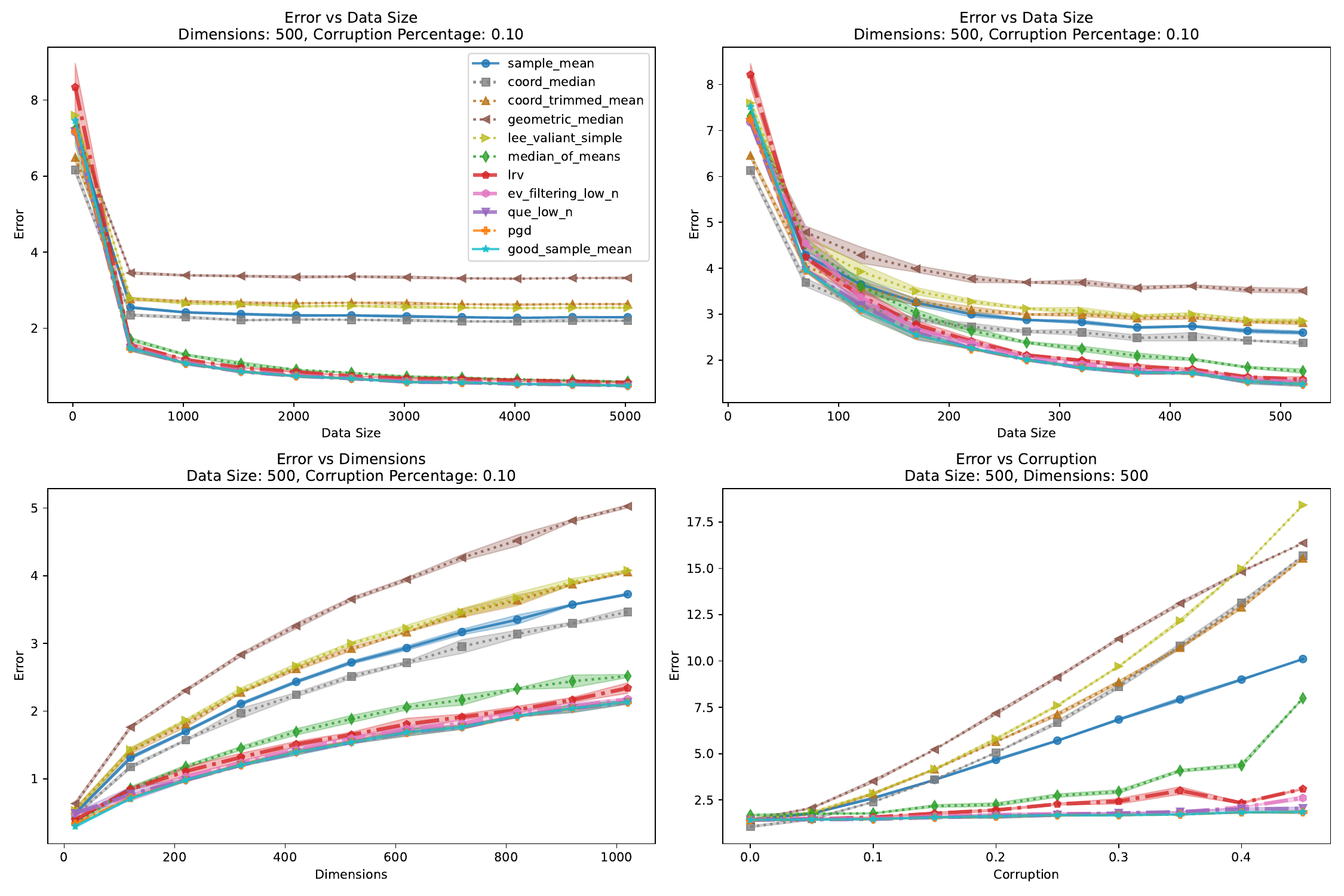}
    \caption{Corrupted Laplace Distribution}
    \label{fig:laplace}
\end{figure}

\paragraph{Poisson Distribution}

Define $P$ as the product distribution of $d$ independent Poisson distributions, each with mean $5$. Then, the true mean $\mu$ is defined as the all-fives vector. Again consider corrupted data distribution $Q = \mathcal{N}_d(\mu', \frac{1}{10}I)$ where $\|\mu - \mu'\| = \sqrt{d}$. Results are shown in Figure \ref{fig:poisson}. Results among the best estimators are again nearly identical to the Gaussian inlier data scenarios.

\begin{figure}[h]
    \centering
    \includegraphics[width=\linewidth]{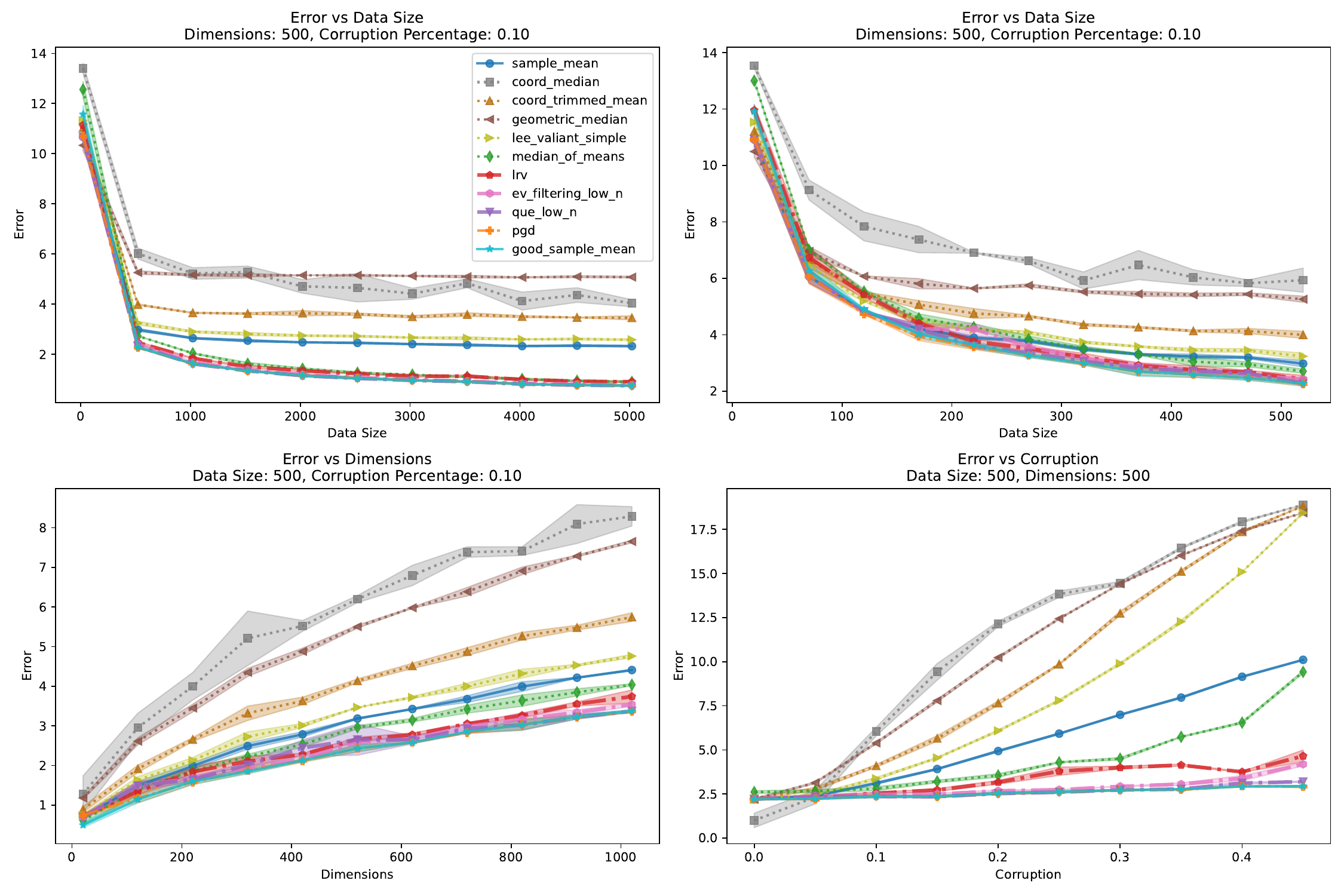}
    \caption{Corrupted Poisson Distribution}
    \label{fig:poisson}
\end{figure}

\paragraph{Mixture of Gaussians}

Define $P$ as a mixture of Gaussians with three components, each equally weighted. The components have means $\mu_1 = \vec{1}$, $\mu_2 = \vec{0}$, and $\mu_3 = -\vec{1}$, where $\vec{1}$ and $\vec{0}$ are the $d$-dimensional vectors of all ones and all zeros, respectively. Each component has identity covariance. Define $Q$ as $\mathcal{N}({\vec{2}, \frac{1}{10}I})$ where $\vec{2}$ is the all-twos vector. Results are shown in Figure \ref{fig:mix_gaus}. While robust methods tend to outperform \sample, relative performance here differs from the Gaussian inlier data setting. Firstly, note that \queln does not perform the best and shows irregular areas of high error under lower dimensionality and moderate corruption levels.  The observation that \queln, \evln, and \pgd do not perform as well is likely because they rely significantly on the Gaussianity assumption for the inliers.  Methods such as \medmean, on the other hand which do not rely as directly on this model are not as effected. Furthermore, \evln, \lrv, and \pgd all show significant degradation as corruption levels increase, which is not observed in the Gaussian inlier settings.  This may be occurring if they completely filter one of the three ``inlier" modes as outliers.  
Interestingly, \lrv notably outperforms all other estimators and even \gsample except with corruption $\tau > 0.2$ and especially for $n < d$. The strong performance of \lrv in this setting is interesting, and may be a consequence of the three inlier distributions means lying on a 1-dimensional subspace.  

\begin{figure}[h]
    \centering
    \includegraphics[width=\linewidth]{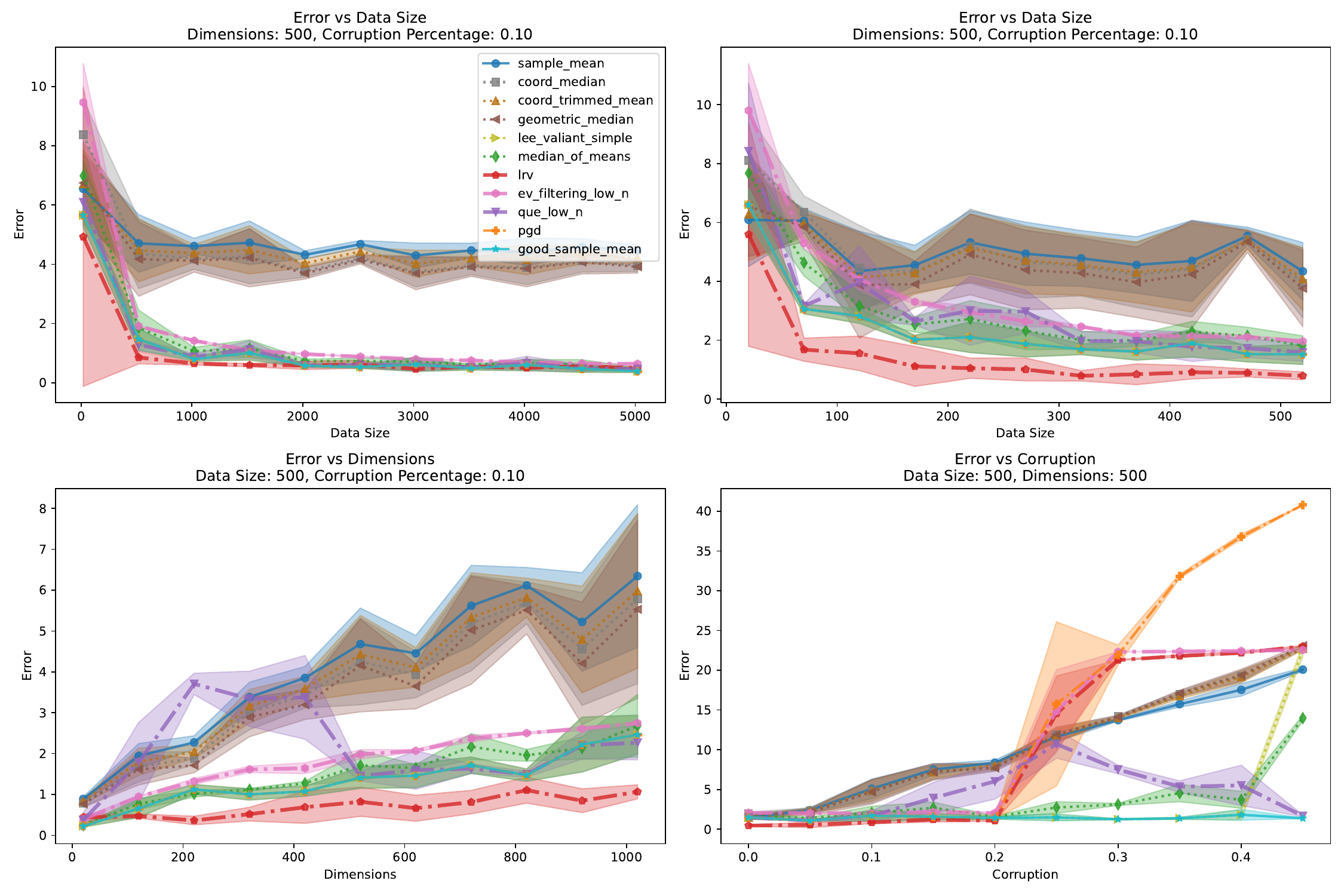}
    \caption{Corrupted Mixture Of Gaussians}
    \label{fig:mix_gaus}
\end{figure}
}

\section{Comparing Algorithm Variants and Ablation}
\label{sec:variants}

In this section, we justify and explore our adaptations to \ev, \que, and \lvog. We use these adaptations for the remainder of our experiments.

\paragraph{Eigenvalue-based Threshold}

Here we compare \ev and \evln, along with \que and \queln. We observe that when we do not have $n$ very large compared to $d$, then \ev and \que can \update{dramatically shift from low error to abysmal error rates}.

We recreate the experiment over corrupted Gaussian data with identity covariance and DKK Noise from Section \ref{sec:corrid}. This is shown in Figure \ref{subfig:evpruning}. We find that \ev and \que fail catastrophically with insufficient data, performing far worse than \sample. However, \evln and \queln never perform worse than \sample and achieve near optimal performance regardless of data size. With sufficient data, \ev and \que abruptly begin to work, and achieve near identical performance to their adjusted threshold counterparts.

\begin{figure}[h]
\centering
\begin{subfigure}{0.48\linewidth}
    \centering
    \includegraphics[width=\linewidth]{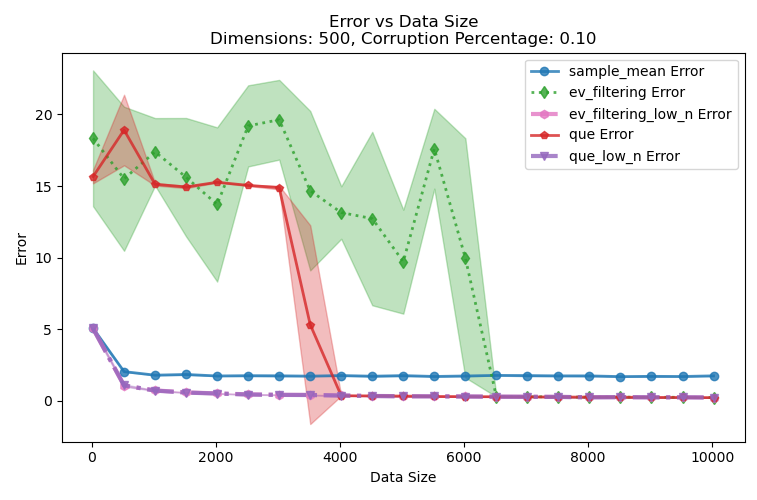}
    \caption{Corrupted Identity Covariance - DKK Noise}
    \label{subfig:evpruning}
\end{subfigure}
\begin{subfigure}{0.48\linewidth}
    \centering
    \includegraphics[width=\linewidth]{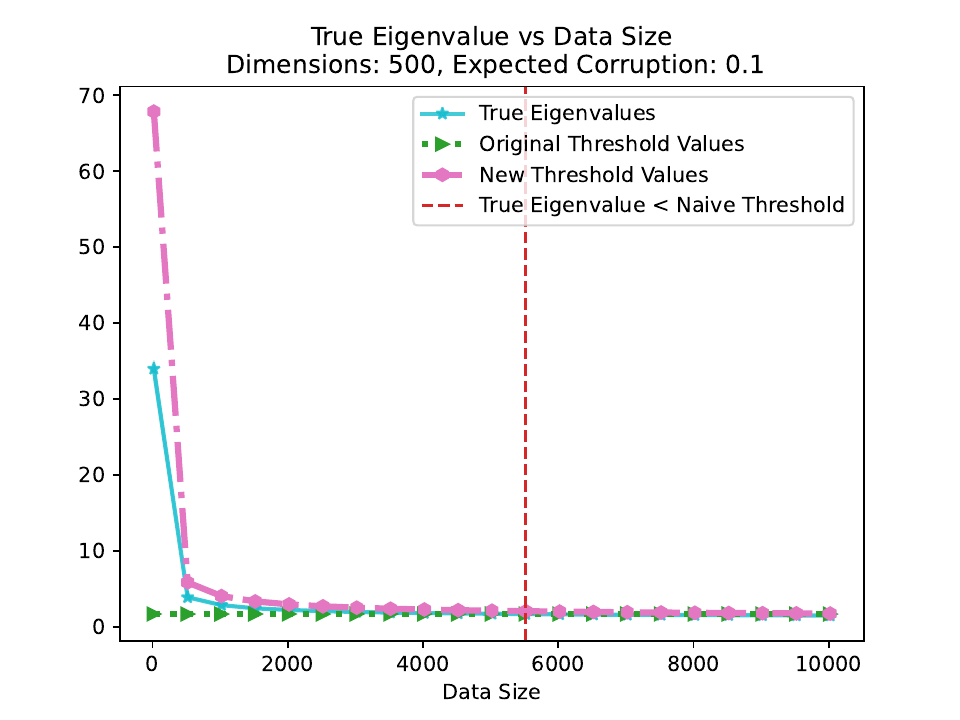}
    \caption{Uncorrupted Identity Covariance}
    \label{subfig:ev_vs_thresholds}
\end{subfigure}
\caption{Eigenvalue Based Filtering Comparison}
\label{fig:comparison}
\end{figure}

The failure of \ev and \que occurs in the corruption detection step. This corruption detection threshold on the top eigenvalue is initialized as $1+ 3 \tau \log(1/\tau)$ in \ev and \que. This constant threshold uses the fact that with large enough data size, the top eigenvalue of an identity covariance matrix approaches 1 and the $3 \tau \log(1/\tau)$ term can account for tolerable noise. However, this does not account for corruption due to low data size, in which the top eigenvalue of the uncorrupted data will necessarily have a larger expectation as data size decreases. Therefore, \ev and \que can never be expected to work since even without any corruption, the top eigenvalue will exceed the threshold. As a result, we find that \ev and \que keep on pruning until there are only very few data points left, resulting in the catastrophic error exhibited. \evln and \queln remedy this problem by simply incorporating our new result (Corollary \ref{cor:prune-2t-main}) as a threshold on the top eigenvalue of the covariance matrix in the threshold. This is empirically shown in Figure \ref{subfig:ev_vs_thresholds}. The threshold in \ev and \que does not become a true upper bound on the top eigenvalue of the uncorrupted data until the vertical red line, which roughly corresponds to the point that \evln begins to perform better. \que begins to perform better with much less data size than \update{\ev}, but this is unsurprising given the rapid convergence of the inlier top eigenvalue to $1$. Meanwhile, our new threshold used in \evln and \queln is always an upper bound on the top eigenvalue, and is near-optimal in practice.

\paragraph{Lee and Valiant variants.}

\begin{figure}[h]
\centering
\includegraphics[width=\linewidth]{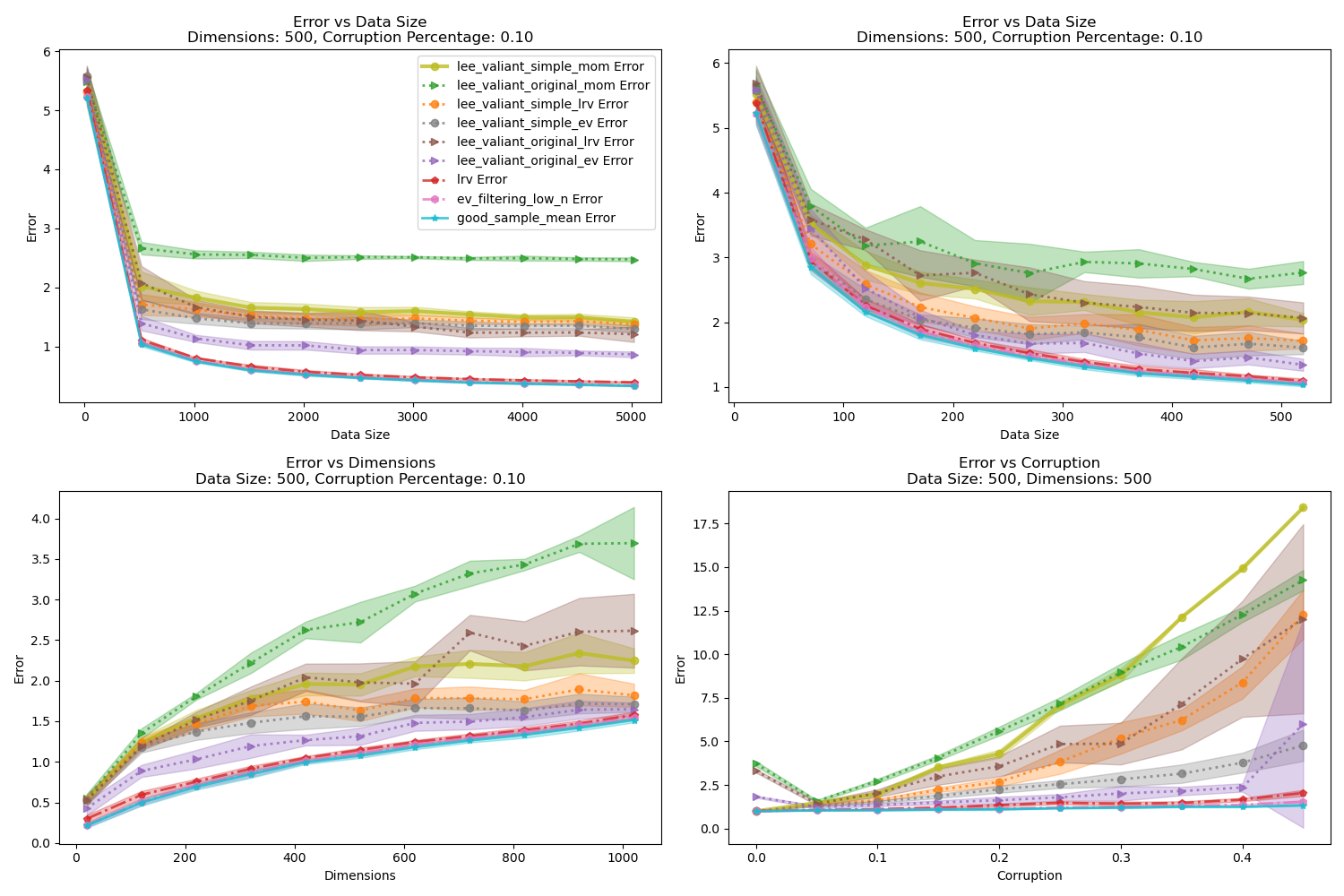} 
\caption{Lee Valiant Variants: Identity Covariance - Additive Variance Shell Noise}
\label{fig:lv_variants}
\end{figure}

Here we compare different variants of the Lee and Valiant algorithm. We observe that \lvsim performs a bit better than \lvog, and \medmean is an illustrative choice for initial estimator.  
We recreate the experiment over Gaussian data with identity covariance and additive variance shell noise from Section \ref{sec:corrid} across different variants of the Lee and Valiant algorithm. We test \lvsim and \lvog using \medmean, \lrv, and \evln as initial mean estimators. We additionally plot \lrv and \evln as baselines. This is shown in Figure \ref{fig:lv_variants}.

\lvsim differs from \lvog in two ways: 
 (1) it removes outliers completely instead of downweighting them and 
 (2) it does not use initial mean estimate in the final result.
We see that \lvsim performs better than \lvog in practice, especially with larger $n$ or $d$. Both \lvsim and \lvog see benefit in the use of improved initial estimators, with this improvement being more significant in \lvog. The difference in the relative improvement in performance between the algorithms is explained by the fact that \lvog additively incorporates the initial estimate directly into its final estimate. However, there is no benefit gained from combining \lvog or \lvsim with an improved initial estimator compared to using the initial estimator alone. As a result of these findings, we only evaluate \lvsim in our experiments.

\section{Conclusion}
\label{sec:conclusion}

We perform the first wide-scale experimental study of robust mean estimation techniques in high dimensions and relatively-low data size. We showed that under Gaussian data with bounded covariances, robust mean estimation techniques can significantly outperform sample mean, nearly matching the optimal error obtainable, regardless of data size, dimensionality, or corruption level. We provide an updated eigenvalue filtering bound that is useful in this high-dimensional setting, and use it to devise a small but novel and meaningful modification to two existing robust mean estimation algorithms; eigenvalue pruning from \cite{diakonikolas2019robust} and quantum entropy scoring from \cite{dong2019quantumentropyscoring}. This enables these methods to almost exactly match optimal error regardless of data size -- that is almost matching the error of the so-called \emph{good sample mean}, which is the mean of the inliers.   \update{It seems \queln works so well because it can identify all ways input distributions deviate from Gaussianity, whereas other methods may require more iterations, which may ultimately prune too many points in the $n \leq d$ setting before it is able to filter outliers in each direction that has them. }

However, all methods perform significantly worse than the mean of all inliers under \emph{subtractive corruption} where an $\eta$-fraction of data points can be removed adversarially.  This suggests that in this Gaussian modeled data regime, practical improvements may be possible in considering the effect of subtractive corruption.  

We also provide a novel evaluation on realistic settings based on the embeddings generated from large language models, deep pretrained image models, and word embedding models.  These are representative of real world settings where the data size $n$ may be smaller or not much larger than the dimension $d$.  
In these settings, quantum entropy scoring with early halting 
tends to perform near optimally, suggesting its potential application to real world data distributions regardless of data size. However, other robust mean estimation algorithms do not work as well as when the inlier data is not Gaussian, as perhaps foreshadowed by theoretical results leveraging this assumption. This suggests that further valuable results may be derived by moving away from the assumption that inliers are precisely Gaussian.  
\update{Our initial explorations for corrupted data with non-Gaussian inliers shows the same techniques mostly perform well, but which method performs the best can vary based on the inlier distribution.  Notably, some methods can even have less error than \gsample when the sample mean is not the MLE.}

Overall, our work demonstrates that there is value in applying robust mean estimation techniques to data, even with insufficient data size for the classic theoretical bounds. We hope that our work inspires researchers to further consider, both experimentally and theoretically, the crucial case of high-dimensional robust statistics under low data size.

\subsubsection*{Acknowledgments}
The authors thank Meysam Alishahi for early discussions, especially towards the analysis in Theorem 2.  
We also thank the NSF REU project 224492, under which CA was a participant and initiated this work.  JP also thanks funding from NSF 2115677 and 2421782, and Simons Foundation  MPS-AI-00010515.

\bibliographystyle{tmlr}

\begin{thebibliography}{74}
\providecommand{\natexlab}[1]{#1}
\providecommand{\url}[1]{\texttt{#1}}
\expandafter\ifx\csname urlstyle\endcsname\relax
  \providecommand{\doi}[1]{doi: #1}\else
  \providecommand{\doi}{doi: \begingroup \urlstyle{rm}\Url}\fi

\bibitem[Aboagye et~al.(2023)Aboagye, Zheng, Shunn, Yeh, Wang, Zhuang, Chen, Wang, Zhang, and Phillips]{aboagye2023interpretable}
Prince~Osei Aboagye, Yan Zheng, Jack Shunn, Chin-Chia~Michael Yeh, Junpeng Wang, Zhongfang Zhuang, Huiyuan Chen, Liang Wang, Wei Zhang, and Jeff Phillips.
\newblock Interpretable debiasing of vectorized language representations with iterative orthogonalization.
\newblock In \emph{The Eleventh International Conference on Learning Representations}, 2023.
\newblock URL \url{https://openreview.net/forum?id=TkQ1sxd9P4}.

\bibitem[Agrawal et~al.(2018)Agrawal, Verschueren, Diamond, and Boyd]{agrawal2018rewriting}
Akshay Agrawal, Robin Verschueren, Steven Diamond, and Stephen Boyd.
\newblock A rewriting system for convex optimization problems.
\newblock \emph{Journal of Control and Decision}, 5\penalty0 (1):\penalty0 42--60, 2018.

\bibitem[Asi et~al.(2023)Asi, Ullman, and Zakynthinou]{asi2023robustness}
Hilal Asi, Jonathan Ullman, and Lydia Zakynthinou.
\newblock From robustness to privacy and back.
\newblock In \emph{International Conference on Machine Learning}, pp.\  1121--1146. PMLR, 2023.

\bibitem[Bakshi et~al.(2022)Bakshi, Diakonikolas, Jia, Kane, Kothari, and Vempala]{bakshi2022robustgausmix}
Ainesh Bakshi, Ilias Diakonikolas, He~Jia, Daniel~M. Kane, Pravesh~K. Kothari, and Santosh~S. Vempala.
\newblock Robustly learning mixtures of k arbitrary gaussians.
\newblock In \emph{Proceedings of the 54th Annual ACM SIGACT Symposium on Theory of Computing}, STOC 2022, pp.\  1234–1247, New York, NY, USA, 2022. Association for Computing Machinery.
\newblock ISBN 9781450392648.
\newblock \doi{10.1145/3519935.3519953}.
\newblock URL \url{https://doi.org/10.1145/3519935.3519953}.

\bibitem[Balakrishnan et~al.(2017)Balakrishnan, Du, Li, and Singh]{balakrishnan2017robustsparse}
Sivaraman Balakrishnan, Simon~S. Du, Jerry Li, and Aarti Singh.
\newblock Computationally efficient robust sparse estimation in high dimensions.
\newblock In Satyen Kale and Ohad Shamir (eds.), \emph{Proceedings of the 2017 Conference on Learning Theory}, volume~65 of \emph{Proceedings of Machine Learning Research}, pp.\  169--212. PMLR, 07--10 Jul 2017.
\newblock URL \url{https://proceedings.mlr.press/v65/balakrishnan17a.html}.

\bibitem[Blei \& Lafferty(2009)Blei and Lafferty]{blei2009topic}
David~M Blei and John~D Lafferty.
\newblock Topic models.
\newblock In \emph{Text mining}, pp.\  101--124. Chapman and Hall/CRC, 2009.

\bibitem[Bolukbasi et~al.(2016)Bolukbasi, Chang, Zou, Saligrama, and Kalai]{bolukbasi2016man}
Tolga Bolukbasi, Kai-Wei Chang, James~Y Zou, Venkatesh Saligrama, and Adam~T Kalai.
\newblock Man is to computer programmer as woman is to homemaker? debiasing word embeddings.
\newblock \emph{Advances in neural information processing systems}, 29, 2016.

\bibitem[Catoni(2011)]{catoni2011challengingempiricalmeanempirical}
Olivier Catoni.
\newblock Challenging the empirical mean and empirical variance: a deviation study, 2011.
\newblock URL \url{https://arxiv.org/abs/1009.2048}.

\bibitem[Charikar et~al.(2017)Charikar, Steinhardt, and Valiant]{charikar2017learninguntrusteddata}
Moses Charikar, Jacob Steinhardt, and Gregory Valiant.
\newblock Learning from untrusted data, 2017.
\newblock URL \url{https://arxiv.org/abs/1611.02315}.

\bibitem[Chen et~al.(2015)Chen, Gao, and Ren]{chen2015robustcovariance}
Mengjie Chen, Chao Gao, and Zhao Ren.
\newblock Robust covariance matrix estimation via matrix depth.
\newblock \emph{arXiv preprint arXiv:1506.00691}, 2015.

\bibitem[Chen et~al.(2017)Chen, Gao, and Ren]{chen2017robustcovariance}
Mengjie Chen, Chao Gao, and Zhao Ren.
\newblock Robust covariance and scatter matrix estimation under huber's contamination model, 2017.
\newblock URL \url{https://arxiv.org/abs/1506.00691}.

\bibitem[Cheng \& Lin(2021)Cheng and Lin]{cheng2021robustlearningfixedstructurebayesian}
Yu~Cheng and Honghao Lin.
\newblock Robust learning of fixed-structure bayesian networks in nearly-linear time, 2021.
\newblock URL \url{https://arxiv.org/abs/2105.05555}.

\bibitem[Cheng et~al.(2019{\natexlab{a}})Cheng, Diakonikolas, and Ge]{cheng2019fast}
Yu~Cheng, Ilias Diakonikolas, and Rong Ge.
\newblock High-dimensional robust mean estimation in nearly-linear time.
\newblock In \emph{Proceedings of the thirtieth annual ACM-SIAM symposium on discrete algorithms}, pp.\  2755--2771. SIAM, 2019{\natexlab{a}}.

\bibitem[Cheng et~al.(2019{\natexlab{b}})Cheng, Diakonikolas, Ge, and Woodruff]{cheng2019fastrobustcovariance}
Yu~Cheng, Ilias Diakonikolas, Rong Ge, and David Woodruff.
\newblock Faster algorithms for high-dimensional robust covariance estimation, 2019{\natexlab{b}}.
\newblock URL \url{https://arxiv.org/abs/1906.04661}.

\bibitem[Cheng et~al.(2020)Cheng, Diakonikolas, Ge, and Soltanolkotabi]{cheng2020graddescent}
Yu~Cheng, Ilias Diakonikolas, Rong Ge, and Mahdi Soltanolkotabi.
\newblock High-dimensional robust mean estimation via gradient descent, 2020.
\newblock URL \url{https://arxiv.org/abs/2005.01378}.

\bibitem[Cheng et~al.(2022)Cheng, Diakonikolas, Ge, Gupta, Kane, and Soltanolkotabi]{cheng2022outlierrobustsparseestimationnonconvex}
Yu~Cheng, Ilias Diakonikolas, Rong Ge, Shivam Gupta, Daniel~M. Kane, and Mahdi Soltanolkotabi.
\newblock Outlier-robust sparse estimation via non-convex optimization, 2022.
\newblock URL \url{https://arxiv.org/abs/2109.11515}.

\bibitem[Depersin \& Lecué(2019)Depersin and Lecué]{depersin2019nearlylinear}
Jules Depersin and Guillaume Lecué.
\newblock Robust subgaussian estimation of a mean vector in nearly linear time, 2019.
\newblock URL \url{https://arxiv.org/abs/1906.03058}.

\bibitem[Deshmukh et~al.(2022)Deshmukh, Liu, and Veeravalli]{deshmukh2022robustmean}
Aditya Deshmukh, Jing Liu, and Venugopal~V. Veeravalli.
\newblock Robust mean estimation in high dimensions: An outlier fraction agnostic and efficient algorithm, 2022.
\newblock URL \url{https://arxiv.org/abs/2102.08573}.

\bibitem[Dev \& Phillips(2019)Dev and Phillips]{dev2019attenuating}
Sunipa Dev and Jeff Phillips.
\newblock Attenuating bias in word vectors.
\newblock In \emph{The 22nd international conference on artificial intelligence and statistics}, pp.\  879--887. PMLR, 2019.

\bibitem[Devlin et~al.(2019)Devlin, Chang, Lee, and Toutanova]{devlin2019bert}
Jacob Devlin, Ming-Wei Chang, Kenton Lee, and Kristina Toutanova.
\newblock Bert: Pre-training of deep bidirectional transformers for language understanding, 2019.
\newblock URL \url{https://arxiv.org/abs/1810.04805}.

\bibitem[Devroye et~al.(2015)Devroye, Lerasle, Lugosi, and Oliveira]{devroye2015subgaussianmeanestimators}
Luc Devroye, Matthieu Lerasle, Gabor Lugosi, and Roberto~I. Oliveira.
\newblock Sub-gaussian mean estimators, 2015.
\newblock URL \url{https://arxiv.org/abs/1509.05845}.

\bibitem[Diakonikolas \& Kane(2023)Diakonikolas and Kane]{diakonikolas2023algorithmic}
Ilias Diakonikolas and Daniel~M Kane.
\newblock \emph{Algorithmic high-dimensional robust statistics}.
\newblock Cambridge university press, 2023.

\bibitem[Diakonikolas et~al.(2017{\natexlab{a}})Diakonikolas, Kamath, Kane, Li, Moitra, and Stewart]{diakonikolas2017being}
Ilias Diakonikolas, Gautam Kamath, Daniel~M Kane, Jerry Li, Ankur Moitra, and Alistair Stewart.
\newblock Being robust (in high dimensions) can be practical.
\newblock In \emph{International Conference on Machine Learning}, pp.\  999--1008. PMLR, 2017{\natexlab{a}}.

\bibitem[Diakonikolas et~al.(2017{\natexlab{b}})Diakonikolas, Kane, and Stewart]{diakonikolas2017listdecodable}
Ilias Diakonikolas, Daniel~M. Kane, and Alistair Stewart.
\newblock List-decodable robust mean estimation and learning mixtures of spherical gaussians, 2017{\natexlab{b}}.
\newblock URL \url{https://arxiv.org/abs/1711.07211}.

\bibitem[Diakonikolas et~al.(2017{\natexlab{c}})Diakonikolas, Kane, and Stewart]{diakonikolas2017statisticalquerylowerbounds}
Ilias Diakonikolas, Daniel~M. Kane, and Alistair Stewart.
\newblock Statistical query lower bounds for robust estimation of high-dimensional gaussians and gaussian mixtures, 2017{\natexlab{c}}.
\newblock URL \url{https://arxiv.org/abs/1611.03473}.

\bibitem[Diakonikolas et~al.(2018)Diakonikolas, Kong, and Stewart]{diakonikolas2018robustregression}
Ilias Diakonikolas, Weihao Kong, and Alistair Stewart.
\newblock Efficient algorithms and lower bounds for robust linear regression, 2018.
\newblock URL \url{https://arxiv.org/abs/1806.00040}.

\bibitem[Diakonikolas et~al.(2019{\natexlab{a}})Diakonikolas, Kamath, Kane, Li, Moitra, and Stewart]{diakonikolas2019robust}
Ilias Diakonikolas, Gautam Kamath, Daniel Kane, Jerry Li, Ankur Moitra, and Alistair Stewart.
\newblock Robust estimators in high-dimensions without the computational intractability.
\newblock \emph{SIAM Journal on Computing}, 48\penalty0 (2):\penalty0 742--864, 2019{\natexlab{a}}.

\bibitem[Diakonikolas et~al.(2019{\natexlab{b}})Diakonikolas, Kamath, Kane, Li, Steinhardt, and Stewart]{diakonikolas2019sever}
Ilias Diakonikolas, Gautam Kamath, Daniel Kane, Jerry Li, Jacob Steinhardt, and Alistair Stewart.
\newblock Sever: A robust meta-algorithm for stochastic optimization.
\newblock In \emph{International Conference on Machine Learning}, pp.\  1596--1606. PMLR, 2019{\natexlab{b}}.

\bibitem[Diakonikolas et~al.(2019{\natexlab{c}})Diakonikolas, Karmalkar, Kane, Price, and Stewart]{diakonikolas2019sparseestimation}
Ilias Diakonikolas, Sushrut Karmalkar, Daniel Kane, Eric Price, and Alistair Stewart.
\newblock Outlier-robust high-dimensional sparse estimation via iterative filtering, 2019{\natexlab{c}}.
\newblock URL \url{https://arxiv.org/abs/1911.08085}.

\bibitem[Diakonikolas et~al.(2022)Diakonikolas, Kane, Karmalkar, Pensia, and Pittas]{diakonikolas2022sparseestimation}
Ilias Diakonikolas, Daniel~M. Kane, Sushrut Karmalkar, Ankit Pensia, and Thanasis Pittas.
\newblock Robust sparse mean estimation via sum of squares.
\newblock In Po-Ling Loh and Maxim Raginsky (eds.), \emph{Proceedings of Thirty Fifth Conference on Learning Theory}, volume 178 of \emph{Proceedings of Machine Learning Research}, pp.\  4703--4763. PMLR, 02--05 Jul 2022.
\newblock URL \url{https://proceedings.mlr.press/v178/diakonikolas22e.html}.

\bibitem[Diakonikolas et~al.(2024)Diakonikolas, Kane, Karmalkar, Pensia, and Pittas]{diakonikolas2024robustsparse}
Ilias Diakonikolas, Daniel~M. Kane, Sushrut Karmalkar, Ankit Pensia, and Thanasis Pittas.
\newblock Robust sparse estimation for gaussians with optimal error under huber contamination, 2024.
\newblock URL \url{https://arxiv.org/abs/2403.10416}.

\bibitem[Diamond \& Boyd(2016)Diamond and Boyd]{diamond2016cvxpy}
Steven Diamond and Stephen Boyd.
\newblock {CVXPY}: {A} {P}ython-embedded modeling language for convex optimization.
\newblock \emph{Journal of Machine Learning Research}, 17\penalty0 (83):\penalty0 1--5, 2016.

\bibitem[Dong et~al.(2019)Dong, Hopkins, and Li]{dong2019quantumentropyscoring}
Yihe Dong, Samuel~B. Hopkins, and Jerry Li.
\newblock Quantum entropy scoring for fast robust mean estimation and improved outlier detection.
\newblock In \emph{NeurIPS}, 2019.
\newblock URL \url{https://arxiv.org/abs/1906.11366}.

\bibitem[Dwork \& Lei(2009)Dwork and Lei]{dwork2009differential}
Cynthia Dwork and Jing Lei.
\newblock Differential privacy and robust statistics.
\newblock In \emph{Proceedings of the forty-first annual ACM symposium on Theory of computing}, pp.\  371--380, 2009.

\bibitem[Griffiths \& Steyvers(2004)Griffiths and Steyvers]{griffiths2004finding}
Thomas~L Griffiths and Mark Steyvers.
\newblock Finding scientific topics.
\newblock \emph{Proceedings of the National academy of Sciences}, 101\penalty0 (suppl\_1):\penalty0 5228--5235, 2004.

\bibitem[Gupta et~al.(2023)Gupta, Lee, and Price]{gupta2023finite}
Shivam Gupta, Jasper~CH Lee, and Eric Price.
\newblock Finite-sample symmetric mean estimation with fisher information rate.
\newblock In \emph{The Thirty Sixth Annual Conference on Learning Theory}, pp.\  4777--4830. PMLR, 2023.

\bibitem[Gupta et~al.(2024)Gupta, Hopkins, and Price]{gupta2024catoni}
Shivam Gupta, Samuel~B. Hopkins, and Eric Price.
\newblock Beyond catoni: Sharper rates for heavy-tailed and robust mean estimation, 2024.
\newblock URL \url{https://arxiv.org/abs/2311.13010}.

\bibitem[He et~al.(2015)He, Zhang, Ren, and Sun]{he2015deepresiduallearningimage}
Kaiming He, Xiangyu Zhang, Shaoqing Ren, and Jian Sun.
\newblock Deep residual learning for image recognition, 2015.
\newblock URL \url{https://arxiv.org/abs/1512.03385}.

\bibitem[Hopkins \& Li(2019)Hopkins and Li]{hopkins2019hardrobustmeanestimation}
Samuel~B. Hopkins and Jerry Li.
\newblock How hard is robust mean estimation?, 2019.
\newblock URL \url{https://arxiv.org/abs/1903.07870}.

\bibitem[Hopkins et~al.(2023)Hopkins, Kamath, Majid, and Narayanan]{hopkins2023robustnessprivacy}
Samuel~B. Hopkins, Gautam Kamath, Mahbod Majid, and Shyam Narayanan.
\newblock Robustness implies privacy in statistical estimation.
\newblock In \emph{Proceedings of the 55th Annual ACM Symposium on Theory of Computing}, STOC 2023, pp.\  497–506, New York, NY, USA, 2023. Association for Computing Machinery.
\newblock ISBN 9781450399135.
\newblock \doi{10.1145/3564246.3585115}.
\newblock URL \url{https://doi.org/10.1145/3564246.3585115}.

\bibitem[Howard et~al.(2019)Howard, Sandler, Chu, Chen, Chen, Tan, Wang, Zhu, Pang, Vasudevan, Le, and Adam]{howard2019searchingmobilenetv3}
Andrew Howard, Mark Sandler, Grace Chu, Liang-Chieh Chen, Bo~Chen, Mingxing Tan, Weijun Wang, Yukun Zhu, Ruoming Pang, Vijay Vasudevan, Quoc~V. Le, and Hartwig Adam.
\newblock Searching for mobilenetv3, 2019.
\newblock URL \url{https://arxiv.org/abs/1905.02244}.

\bibitem[Hu et~al.(2016)Hu, Zhang, and Zheng]{hu-etal-2016-different}
Wenpeng Hu, Jiajun Zhang, and Nan Zheng.
\newblock Different contexts lead to different word embeddings.
\newblock In Yuji Matsumoto and Rashmi Prasad (eds.), \emph{Proceedings of {COLING} 2016, the 26th International Conference on Computational Linguistics: Technical Papers}, pp.\  762--771, Osaka, Japan, December 2016. The COLING 2016 Organizing Committee.
\newblock URL \url{https://aclanthology.org/C16-1073}.

\bibitem[Huber(1964)]{huber1964robust}
PJ~Huber.
\newblock Robust estimation of a location parameter.
\newblock \emph{Ann. Math. Statist.}, 35:\penalty0 73--101, 1964.

\bibitem[Jain et~al.(2022)Jain, Orlitsky, and Ravindrakumar]{jain2022robustestimationalgorithmsdont}
Ayush Jain, Alon Orlitsky, and Vaishakh Ravindrakumar.
\newblock Robust estimation algorithms don't need to know the corruption level, 2022.
\newblock URL \url{https://arxiv.org/abs/2202.05453}.

\bibitem[Klivans et~al.(2020)Klivans, Kothari, and Meka]{klivans2020robustregression}
Adam Klivans, Pravesh~K. Kothari, and Raghu Meka.
\newblock Efficient algorithms for outlier-robust regression, 2020.
\newblock URL \url{https://arxiv.org/abs/1803.03241}.

\bibitem[Krizhevsky(2009)]{Krizhevsky2009LearningML}
Alex Krizhevsky.
\newblock Learning multiple layers of features from tiny images.
\newblock 2009.
\newblock URL \url{https://api.semanticscholar.org/CorpusID:18268744}.

\bibitem[Lai et~al.(2016)Lai, Rao, and Vempala]{lai2016agnostic}
Kevin~A Lai, Anup~B Rao, and Santosh Vempala.
\newblock Agnostic estimation of mean and covariance.
\newblock In \emph{2016 IEEE 57th Annual Symposium on Foundations of Computer Science (FOCS)}, pp.\  665--674. IEEE, 2016.

\bibitem[Lan et~al.(2020)Lan, Chen, Goodman, Gimpel, Sharma, and Soricut]{lan2020albert}
Zhenzhong Lan, Mingda Chen, Sebastian Goodman, Kevin Gimpel, Piyush Sharma, and Radu Soricut.
\newblock Albert: A lite bert for self-supervised learning of language representations, 2020.
\newblock URL \url{https://arxiv.org/abs/1909.11942}.

\bibitem[Laurent \& Massart(2000)Laurent and Massart]{laurent2000adaptive}
Beatrice Laurent and Pascal Massart.
\newblock Adaptive estimation of a quadratic functional by model selection.
\newblock \emph{Annals of statistics}, pp.\  1302--1338, 2000.

\bibitem[Lee \& Valiant(2022)Lee and Valiant]{lee2022optimal}
Jasper~CH Lee and Paul Valiant.
\newblock Optimal sub-gaussian mean estimation in very high dimensions.
\newblock In \emph{13th Innovations in Theoretical Computer Science Conference (ITCS 2022)}, 2022.

\bibitem[Liu et~al.(2021{\natexlab{a}})Liu, Kong, Kakade, and Oh]{liu2021robustdifferentiallyprivate}
Xiyang Liu, Weihao Kong, Sham Kakade, and Sewoong Oh.
\newblock Robust and differentially private mean estimation, 2021{\natexlab{a}}.
\newblock URL \url{https://arxiv.org/abs/2102.09159}.

\bibitem[Liu et~al.(2021{\natexlab{b}})Liu, Park, Rekatsinas, and Tzamos]{liu2021coordcorr}
Zifan Liu, Jong~Ho Park, Theodoros Rekatsinas, and Christos Tzamos.
\newblock On robust mean estimation under coordinate-level corruption.
\newblock In Marina Meila and Tong Zhang (eds.), \emph{Proceedings of the 38th International Conference on Machine Learning}, volume 139 of \emph{Proceedings of Machine Learning Research}, pp.\  6914--6924. PMLR, 18--24 Jul 2021{\natexlab{b}}.
\newblock URL \url{https://proceedings.mlr.press/v139/liu21r.html}.

\bibitem[Lugosi(2022)]{lugosi2022mean}
G{\'a}bor Lugosi.
\newblock Mean estimation in high dimension.
\newblock In \emph{Proc. Int. Cong. Math}, volume~7, pp.\  5500--5514, 2022.

\bibitem[Lugosi \& Mendelson(2019)Lugosi and Mendelson]{lugosi2019heavytailsurvey}
Gabor Lugosi and Shahar Mendelson.
\newblock Mean estimation and regression under heavy-tailed distributions--a survey, 2019.
\newblock URL \url{https://arxiv.org/abs/1906.04280}.

\bibitem[Lugosi \& Mendelson(2021)Lugosi and Mendelson]{lugosi2021robust}
G{\'a}bor Lugosi and Shahar Mendelson.
\newblock Robust multivariate mean estimation: The optimality of trimmed mean.
\newblock \emph{The Annals}, 49\penalty0 (1):\penalty0 393--410, 2021.

\bibitem[Lugosi \& Mendelson(2017)Lugosi and Mendelson]{lugosi2017subgaussianestimatorsmeanrandom}
Gábor Lugosi and Shahar Mendelson.
\newblock Sub-gaussian estimators of the mean of a random vector, 2017.
\newblock URL \url{https://arxiv.org/abs/1702.00482}.

\bibitem[Mikolov et~al.(2013)Mikolov, Sutskever, Chen, Corrado, and Dean]{mikolov2013distributed}
Tomas Mikolov, Ilya Sutskever, Kai Chen, Greg~S Corrado, and Jeff Dean.
\newblock Distributed representations of words and phrases and their compositionality.
\newblock \emph{Advances in neural information processing systems}, 26, 2013.

\bibitem[Minsker(2023{\natexlab{a}})]{minsker2023efficientmedianmeansestimator}
Stanislav Minsker.
\newblock Efficient median of means estimator, 2023{\natexlab{a}}.
\newblock URL \url{https://arxiv.org/abs/2305.18681}.

\bibitem[Minsker(2023{\natexlab{b}})]{minsker2023ustatisticsgrowingordersubgaussian}
Stanislav Minsker.
\newblock U-statistics of growing order and sub-gaussian mean estimators with sharp constants, 2023{\natexlab{b}}.
\newblock URL \url{https://arxiv.org/abs/2202.11842}.

\bibitem[Pennington et~al.(2014)Pennington, Socher, and Manning]{pennington-etal-2014-glove}
Jeffrey Pennington, Richard Socher, and Christopher Manning.
\newblock {G}lo{V}e: Global vectors for word representation.
\newblock In Alessandro Moschitti, Bo~Pang, and Walter Daelemans (eds.), \emph{Proceedings of the 2014 Conference on Empirical Methods in Natural Language Processing ({EMNLP})}, pp.\  1532--1543, Doha, Qatar, October 2014. Association for Computational Linguistics.
\newblock \doi{10.3115/v1/D14-1162}.
\newblock URL \url{https://aclanthology.org/D14-1162}.

\bibitem[Prasad et~al.(2018)Prasad, Suggala, Balakrishnan, and Ravikumar]{prasad2018robustgradient}
Adarsh Prasad, Arun~Sai Suggala, Sivaraman Balakrishnan, and Pradeep Ravikumar.
\newblock Robust estimation via robust gradient estimation, 2018.
\newblock URL \url{https://arxiv.org/abs/1802.06485}.

\bibitem[Prasad et~al.(2019)Prasad, Balakrishnan, and Ravikumar]{prasad2019unifiedrobustheavy}
Adarsh Prasad, Sivaraman Balakrishnan, and Pradeep Ravikumar.
\newblock A unified approach to robust mean estimation, 2019.
\newblock URL \url{https://arxiv.org/abs/1907.00927}.

\bibitem[Raffel et~al.(2023)Raffel, Shazeer, Roberts, Lee, Narang, Matena, Zhou, Li, and Liu]{raffel2023t5}
Colin Raffel, Noam Shazeer, Adam Roberts, Katherine Lee, Sharan Narang, Michael Matena, Yanqi Zhou, Wei Li, and Peter~J. Liu.
\newblock Exploring the limits of transfer learning with a unified text-to-text transformer, 2023.
\newblock URL \url{https://arxiv.org/abs/1910.10683}.

\bibitem[Robertson et~al.(2009)Robertson, Zaragoza, et~al.]{robertson2009probabilistic}
Stephen Robertson, Hugo Zaragoza, et~al.
\newblock The probabilistic relevance framework: Bm25 and beyond.
\newblock \emph{Foundations and Trends{\textregistered} in Information Retrieval}, 3\penalty0 (4):\penalty0 333--389, 2009.

\bibitem[Small(1990)]{small1990survey}
Christopher~G Small.
\newblock A survey of multidimensional medians.
\newblock \emph{International Statistical Review/Revue Internationale de Statistique}, pp.\  263--277, 1990.

\bibitem[Tan \& Le(2020)Tan and Le]{tan2020efficientnetrethinkingmodelscaling}
Mingxing Tan and Quoc~V. Le.
\newblock Efficientnet: Rethinking model scaling for convolutional neural networks, 2020.
\newblock URL \url{https://arxiv.org/abs/1905.11946}.

\bibitem[Tran et~al.(2018)Tran, Li, and Madry]{tran2018backdoorattacks}
Brandon Tran, Jerry Li, and Aleksander Madry.
\newblock Spectral signatures in backdoor attacks, 2018.
\newblock URL \url{https://arxiv.org/abs/1811.00636}.

\bibitem[Tukey(1975)]{tukey1975mathematics}
John~W Tukey.
\newblock Mathematics and the picturing of data.
\newblock In \emph{Proceedings of the International Congress of Mathematicians, Vancouver, 1975}, volume~2, pp.\  523--531, 1975.

\bibitem[Vardi \& Zhang(2001)Vardi and Zhang]{vardi2001modified}
Yehuda Vardi and Cun-Hui Zhang.
\newblock A modified weiszfeld algorithm for the fermat-weber location problem.
\newblock \emph{Mathematical Programming}, 90:\penalty0 559--566, 2001.

\bibitem[Vershynin(2011)]{vershynin2011randommatrices}
Roman Vershynin.
\newblock Introduction to the non-asymptotic analysis of random matrices, 2011.
\newblock URL \url{https://arxiv.org/abs/1011.3027}.

\bibitem[Wang et~al.(2020)Wang, Wei, Dong, Bao, Yang, and Zhou]{wang2020minilm}
Wenhui Wang, Furu Wei, Li~Dong, Hangbo Bao, Nan Yang, and Ming Zhou.
\newblock Minilm: Deep self-attention distillation for task-agnostic compression of pre-trained transformers, 2020.
\newblock URL \url{https://arxiv.org/abs/2002.10957}.

\bibitem[Zhu et~al.(2020{\natexlab{a}})Zhu, Jiao, and Steinhardt]{zhu2020does}
Banghua Zhu, Jiantao Jiao, and Jacob Steinhardt.
\newblock When does the tukey median work?
\newblock In \emph{2020 IEEE International Symposium on Information Theory (ISIT)}, pp.\  1201--1206. IEEE, 2020{\natexlab{a}}.

\bibitem[Zhu et~al.(2020{\natexlab{b}})Zhu, Jiao, and Steinhardt]{zhu2020generalizedquasigradients}
Banghua Zhu, Jiantao Jiao, and Jacob Steinhardt.
\newblock Robust estimation via generalized quasi-gradients, 2020{\natexlab{b}}.
\newblock URL \url{https://arxiv.org/abs/2005.14073}.

\bibitem[Zhu et~al.(2020{\natexlab{c}})Zhu, Jiao, and Steinhardt]{zhu2020resilience}
Banghua Zhu, Jiantao Jiao, and Jacob Steinhardt.
\newblock Generalized resilience and robust statistics, 2020{\natexlab{c}}.
\newblock URL \url{https://arxiv.org/abs/1909.08755}.

\end{thebibliography}

\appendix
\section{Appendix}

\subsection{Updated Eigenvalue Threshold}
\label{app:ev_theory}

Here we leverage a theorem in \cite{vershynin2011randommatrices} to bound the complexity of aggregated high-dimensional Gaussian random variables.  We will use it in a couple ways.

\begin{theorem}[\citet{vershynin2011randommatrices} Thm. 5.35]
Let $A$ be a $n \times d$ matrix whose entries are independent standard normal random variables. Let $\|A\|_2$ denote the spectral norm of $A$. Then for every $t \geq 0$, with probability of at least $1 - 2 \exp(-t^2/2)$, one has
\[
\sqrt{n} - \sqrt{d} - t \leq s_{\min}(A) \leq \|A\|_2 \leq \sqrt{n} + \sqrt{d} + t.
\]
Where $s_{\min}(A)$ is the smallest singular value of $A$.  The lower bound assumes $n > d$, if not the roles are reversed.  
\label{thm:verGnd}
\end{theorem}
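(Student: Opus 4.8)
The plan is to prove the two-sided bound by the standard ``mean plus concentration'' decomposition: first I would bound the expectations $\mathbb{E}\,s_{\min}(A)$ and $\mathbb{E}\,\|A\|_2$, and then add sharp Gaussian concentration around those means. The terms $\sqrt{n}\pm\sqrt{d}$ will come entirely from a Gaussian comparison (Gordon) argument for the expectations, whereas the deviation $t$ and the factor $2$ in the failure probability will come from Lipschitz concentration together with a union bound over the two one-sided events.

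First I would record the variational formulas $\|A\|_2 = s_{\max}(A) = \max_{u\in S^{d-1},\,v\in S^{n-1}} v^\top A u$ and, for $n\ge d$, $s_{\min}(A) = \min_{u\in S^{d-1}}\max_{v\in S^{n-1}} v^\top A u$. Both maps $A\mapsto s_{\max}(A)$ and $A\mapsto s_{\min}(A)$ are $1$-Lipschitz in the Frobenius norm, since Weyl's perturbation inequality gives $|s_i(A)-s_i(B)|\le \|A-B\|_2\le \|A-B\|_F$. Viewing $A$ as a standard Gaussian vector in $\mathbb{R}^{nd}$, Gaussian isoperimetric concentration (Borell--Tsirelson--Ibragimov--Sudakov) then yields, for every $t\ge 0$, the one-sided tails $\Pr[\,\|A\|_2 > \mathbb{E}\|A\|_2 + t\,]\le \exp(-t^2/2)$ and $\Pr[\,s_{\min}(A) < \mathbb{E}\,s_{\min}(A) - t\,]\le \exp(-t^2/2)$.

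Next I would bound the means by Gaussian comparison. Introduce the centered Gaussian process $X_{u,v} = v^\top A u$ on $T = S^{n-1}\times S^{d-1}$ and the comparison process $Y_{u,v} = \langle g, v\rangle + \langle h, u\rangle$ with independent $g\sim\mathcal{N}(0,I_n)$ and $h\sim\mathcal{N}(0,I_d)$. A direct computation gives $\mathbb{E}[(X_{u,v}-X_{u',v'})^2] = 2 - 2\langle u,u'\rangle\langle v,v'\rangle$ and $\mathbb{E}[(Y_{u,v}-Y_{u',v'})^2] = \|u-u'\|^2 + \|v-v'\|^2$, whose difference is the nonnegative quantity $2(1-\langle u,u'\rangle)(1-\langle v,v'\rangle)$; this algebraic identity is the crux. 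Since $X$ thus has uniformly smaller increments than $Y$, Sudakov--Fernique applied to the maximum gives $\mathbb{E}\|A\|_2 = \mathbb{E}\max_T X \le \mathbb{E}\max_T Y = \mathbb{E}\|g\|_2 + \mathbb{E}\|h\|_2 \le \sqrt{n}+\sqrt{d}$ (using $\mathbb{E}\|g\|_2\le (\mathbb{E}\|g\|_2^2)^{1/2}=\sqrt{n}$), while the same increment comparison fed into Gordon's min--max inequality gives $\mathbb{E}\,s_{\min}(A) = \mathbb{E}\min_u\max_v X \ge \mathbb{E}\min_u\max_v Y = \mathbb{E}\|g\|_2 - \mathbb{E}\|h\|_2$.

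Finally I would assemble the pieces. The lower bound on the mean needs $\mathbb{E}\|g\|_2 - \mathbb{E}\|h\|_2 \ge \sqrt{n}-\sqrt{d}$, equivalently that the defect $m\mapsto \sqrt{m}-\mathbb{E}\|g_m\|_2$ is nonincreasing in $m$ (checkable from the closed form $\mathbb{E}\|g_m\|_2=\sqrt{2}\,\Gamma(\tfrac{m+1}{2})/\Gamma(\tfrac{m}{2})$), so that for $n\ge d$ the $n$-dimensional defect does not exceed the $d$-dimensional one. Then on the complement of the two failure events---of total probability at most $2\exp(-t^2/2)$ by a union bound---I would have simultaneously $\|A\|_2 \le \mathbb{E}\|A\|_2 + t \le \sqrt{n}+\sqrt{d}+t$ and $s_{\min}(A)\ge \mathbb{E}\,s_{\min}(A) - t \ge \sqrt{n}-\sqrt{d}-t$, which is exactly the claim. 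I expect the main obstacle to be the comparison step: setting up the two Gaussian processes and invoking Gordon's inequality in its \emph{min--max} form, since Sudakov--Fernique alone controls only $s_{\max}$, and this is where the precise $\sqrt{n}\pm\sqrt{d}$ geometry is generated; by contrast the concentration step is routine once the $1$-Lipschitz property is in hand.
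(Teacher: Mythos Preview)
The paper does not prove this theorem; it quotes it verbatim from \cite{vershynin2011randommatrices} and uses it as a black box to derive Theorem~\ref{thm:Sigma2-bound}. Your proposal is correct and is, in fact, precisely Vershynin's own argument: Gordon/Sudakov--Fernique comparison of the process $v^\top A u$ against $\langle g,v\rangle+\langle h,u\rangle$ to get the $\sqrt{n}\pm\sqrt{d}$ means, combined with the $1$-Lipschitz property of singular values and Gaussian concentration for the $\pm t$ deviations and the $2\exp(-t^2/2)$ failure probability.

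One minor remark: the monotonicity claim you invoke for the defect $m\mapsto\sqrt{m}-\mathbb{E}\|g_m\|_2$ is correct (it behaves like $1/(2\sqrt{m})$ asymptotically and is genuinely nonincreasing), but Vershynin sidesteps this by stating the lower bound in terms of $\mathbb{E}\|g\|_2$ directly rather than $\sqrt{n}$; the version quoted in the paper with the clean $\sqrt{n}-\sqrt{d}$ does require exactly the monotonicity you identified, so you are right to flag it.
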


We prove the following implication.

\begin{theorem}[restatement of Theorem \ref{thm:Sigma2-bound-main}]
\label{thm:Sigma2-bound}
Let $X$ be a $n \times d$ matrix whose entries are independently drawn from $\mathcal{N}(\mu, I)$. Let $\Sigma = \frac{1}{n}(X-\bar{\mu})^T(X-\bar{\mu})$ be the sample covariance matrix of $X$, where $\bar{\mu} = \frac{1}{n} \sum_i X_i$ and $X_i$ is the $i$th row of $X$. Then for every $t > 0$, with probability of at least $1 - 3 \exp(-t^2/2)$, one has 
\[
\  \|\Sigma\|_2 \leq \left(1 + \sqrt{d/n} + t/\sqrt{n} + \frac{\sqrt{d + \sqrt{2d}t + t^2}}{n} \right)^2.  
\]
\end{theorem}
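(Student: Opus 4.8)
The plan is to strip off the unknown mean, reduce the problem to the spectral norm of a centered standard Gaussian matrix, and then combine the Vershynin estimate (Theorem~\ref{thm:verGnd}) with a sharp $\chi^2$ upper tail that accounts for the empirical-mean subtraction. First I would remove $\mu$. Writing the $i$-th row as $X_i = \mu + g_i$ with the $g_i$ i.i.d.\ $\mathcal{N}(0,I)$, and stacking the $g_i$ into an $n\times d$ matrix $G$ with i.i.d.\ standard normal entries, the empirical mean satisfies $\bar{\mu} = \mu + \bar{g}$ where $\bar{g} = \frac{1}{n}\sum_i g_i$. Hence $X - \mathbf{1}\bar{\mu}^{T} = G - \mathbf{1}\bar{g}^{T}$, so $\mu$ cancels and $\|\Sigma\|_2 = \frac{1}{n}\,\|G - \mathbf{1}\bar{g}^{T}\|_2^2$. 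This reduces everything to bounding the spectral norm of the centered matrix $G_c := G - \mathbf{1}\bar{g}^{T}$.

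Next I would split the centering term with the triangle inequality, $\|G_c\|_2 \le \|G\|_2 + \|\mathbf{1}\bar{g}^{T}\|_2$. The first term is controlled directly by Theorem~\ref{thm:verGnd}: with probability at least $1 - 2\exp(-t^2/2)$ one has $\|G\|_2 \le \sqrt{n} + \sqrt{d} + t$ (only the upper bound is used, so no $n>d$ hypothesis is needed). The second term is rank one, so $\|\mathbf{1}\bar{g}^{T}\|_2 = \|\mathbf{1}\|_2\,\|\bar{g}\|_2 = \sqrt{n}\,\|\bar{g}\|_2$, and since $\sqrt{n}\,\bar{g} \sim \mathcal{N}(0,I_d)$ we have $n\|\bar{g}\|_2^2 \sim \chi^2_d$; thus this term is distributed as $\sqrt{\chi^2_d}$.

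The quantitative heart is a sharp upper tail for $\chi^2_d$. Using the Laurent--Massart inequality~\citep{laurent2000adaptive}, $\Pr[\chi^2_d \ge d + 2\sqrt{dx} + 2x] \le e^{-x}$, and setting $x = t^2/2$ gives $\chi^2_d \le d + \sqrt{2d}\,t + t^2$ with probability at least $1 - \exp(-t^2/2)$ --- which is exactly the radical appearing in the statement. A union bound over the two failure events then gives total probability at least $1 - 3\exp(-t^2/2)$. On this event I substitute both estimates into $\|\Sigma\|_2 = \frac{1}{n}\|G_c\|_2^2 \le \frac{1}{n}\bigl(\|G\|_2 + \|\mathbf{1}\bar{g}^{T}\|_2\bigr)^2$ and simplify to obtain the stated bound.

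The only nontrivial part is handling the empirical-mean subtraction, since the sample covariance uses centered data whereas Theorem~\ref{thm:verGnd} applies to an uncentered Gaussian matrix; isolating the centering as the rank-one term $\mathbf{1}\bar{g}^{T}$ and recognizing it as a scaled $\chi^2_d$ variable is the key move, and pinning down the precise form $d + \sqrt{2d}\,t + t^2$ relies on the specific Laurent--Massart substitution. One must also keep in mind that $G$ and $\bar{g}$ are dependent, so the two tail bounds are combined by a union bound rather than by independence, which is what produces the factor $3$ (rather than $2$) in the failure probability. I note in passing that since $G_c = \bigl(I - \tfrac{1}{n}\mathbf{1}\mathbf{1}^{T}\bigr)G$ is an orthogonal projection of $G$, one even has $\|G_c\|_2 \le \|G\|_2$ outright, which removes the fourth term entirely; I keep the explicit $\chi^2$ term because it is precisely the quantity that is absorbed into the failure probability in Corollary~\ref{cor:prune-2t-main}.
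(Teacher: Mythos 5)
Your reduction and toolkit are the same as the paper's: write $Z = X - \mathbf{1}\bar{\mu}^{T} = G - \mathbf{1}\bar{g}^{T}$, apply the triangle inequality, control $\|G\|_2$ via Theorem~\ref{thm:verGnd}, control the centering term via the Laurent--Massart $\chi^2$ tail with $x=t^2/2$, and union bound to get $1-3\exp(-t^2/2)$. But your main route does not actually land on the stated inequality. You correctly compute $\|\mathbf{1}\bar{g}^{T}\|_2 = \sqrt{n}\,\|\bar{g}\|_2$ with $n\|\bar{g}\|_2^2 \sim \chi^2_d$, so on your good event $\|G_c\|_2 \le (\sqrt{n}+\sqrt{d}+t) + \sqrt{d+\sqrt{2d}\,t+t^2}$; dividing by $\sqrt{n}$ and squaring then produces the fourth term $\sqrt{d+\sqrt{2d}\,t+t^2}\big/\sqrt{n}$, which is weaker by a factor of $\sqrt{n}$ than the $\sqrt{d+\sqrt{2d}\,t+t^2}\big/n$ in the statement. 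So ``substitute and simplify to obtain the stated bound'' fails at the last step. It is worth knowing that the paper's own proof reaches the $/n$ term only by writing $\|Z\|_2 \le \|\bar{X}\|_2 + \|\mu-\bar{\mu}\|_2$, i.e.\ by charging the broadcast rank-one matrix $\mathbf{1}(\mu-\bar{\mu})^{T}$ at its \emph{vector} norm rather than its spectral norm $\sqrt{n}\,\|\mu-\bar{\mu}\|_2$; your bookkeeping is the correct one, and it exposes that the paper's triangle-inequality step is off by exactly the $\sqrt{n}$ you were careful to keep.

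The good news is that your closing aside is not an aside --- it is the proof. Since $G_c = \bigl(I - \tfrac{1}{n}\mathbf{1}\mathbf{1}^{T}\bigr)G$ and $I - \tfrac{1}{n}\mathbf{1}\mathbf{1}^{T}$ is an orthogonal projection, $\|G_c\|_2 \le \|G\|_2 \le \sqrt{n}+\sqrt{d}+t$ with probability at least $1-2\exp(-t^2/2)$, hence $\|\Sigma\|_2 \le \bigl(1+\sqrt{d/n}+t/\sqrt{n}\bigr)^2$ on that event. This implies the theorem as stated (the fourth term is nonnegative slack, and $1-2e^{-t^2/2} \ge 1-3e^{-t^2/2}$), and it in fact yields Corollary~\ref{cor:prune-2t-main} directly, without the hypotheses $d/n \le 16$, $n \ge 16$, $t \ge 5$ and with a sharper failure probability. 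So you should promote the projection observation to the main argument and present the $\chi^2$ computation, if at all, only as the (correct, $/\sqrt{n}$) price of the triangle-inequality route; as written, the route you foreground proves a strictly weaker inequality than the one claimed.
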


\begin{proof}
Let $\bar{X} = X - \mu$ be the centered matrix, equivalent to each entry being drawn from $\mathcal{N}(0, I)$. Let $Z = X - \bar{\mu}$ be the matrix centered by the sample mean. Then $\|Z\|_2 = \|\bar{X} + [\mu - \bar{\mu}]\|_2 \leq \|\bar{X}\|_2 + \|\mu - \bar{\mu}\|_2$ by triangle inequality.

First, by Theorem \ref{thm:verGnd}, we have that $\|\bar X\|_2 \leq \sqrt{n} + \sqrt{d} + t$, with probability at least $1-2\exp(-t^2)$.  

Second, to bound $\|\mu - \bar{\mu}\|_2$ we first decompose by coordinate $\|\mu - \bar{\mu}\|_2^2 = \sum_{j=1}^d (\mu_j - \bar{\mu}_j)^2$.  Now consider $d$ random variables $B_j = \mu_j - \bar{\mu_j}$ for $j = 1 \ldots d$, and further write $B_j = \frac{1}{n} \sum_{i=1}^n F_i$ where $F_i \sim \mathcal{N}(0,1)$.  As a result $B_j \sim \mathcal{N}(0, 1/n) = \frac{1}{\sqrt{n}} \mathcal{N}(0,1)$, since the average of $n$ normals is still normal with variance reduced by factor $n$. 
As a result $B_j$ is a squared normal distribution, and $B = n \|\mu - \bar{\mu}\|^2 = n \sum_{j=1}^d B_j^2$ is a chi-squared distribution $\chi^2(d)$.  

Hence we have~\citep{laurent2000adaptive}
\[
\mathsf{Pr}[B \geq d + 2\sqrt{d} t + 2s^2] 
\leq 
\exp(-s^2).  
\]
Inside the probability expression, using $\sqrt{B/n} = \|\mu -  \bar\mu\|$, and letting $t = \sqrt{2}s$, we can rewrite this as
\[
\mathsf{Pr}\left[\|\mu - \bar{\mu}\| < \sqrt{(d + \sqrt{2} \sqrt{d} t + t^2)/n}\right]  
\geq 
1-\exp(-t^2/2).
\]

So now if both of these events hold, which by union bound occurs with probability at least $1-3\exp(-t^2/2)$, we have that
\[
\|Z\|_2 
  \leq 
\| \bar X\|_2 + \|\mu - \bar \mu\| 
  \leq 
(\sqrt{n} + \sqrt{d} + t) + \sqrt{\frac{d+ \sqrt{2 d} t + t^2}{n}}
\]

Notice that $\|\Sigma\|_2 = \|\frac{1}{n}Z^TZ\|_2 = \frac{\|Z\|_2^2}{n}$.
Thus we have 
\[
\| \Sigma \|_2 = \frac{\|Z\|_2^2}{n} \leq \left( 1 + \sqrt{d}/\sqrt{n} + t/\sqrt{n} + \frac{\sqrt{d + \sqrt{2d}t + t^2}}{n} \right)^2
\]
\end{proof}

Note that the fourth term in this bound, coming from the error in $\|\mu - \bar  \mu\|$, is a lower order effect.  This is captured in the following corollary.  

\begin{corollary}[restatement of Corollary \ref{cor:prune-2t-main}]
Let $X$ be a $n \times d$ matrix whose entries are independently drawn from $\mathcal{N}(\mu, I)$. Let $\Sigma = \frac{1}{n}(X-\bar{\mu})^T(X-\bar{\mu})$ be the sample covariance matrix of $X$, where $\bar{\mu} = \frac{1}{n} \sum_i X_i$ and $X_i$ is the $i$th row of $X$. If one assumes $d/n \leq 16, n \geq 16, t \geq 5$, then with probability of at least $1 - 3 \exp(-t^2/8)$, one has 
\[
\  \|\Sigma\|_2 \leq \left(1 +  \sqrt{d/n} + t/\sqrt{n} \right)^2.  
\]
\label{cor:prune-2t}
\end{corollary}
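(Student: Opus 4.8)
The plan is to reduce the corollary to the theorem by a single rescaling of the deviation parameter. Concretely, I would apply Theorem~\ref{thm:Sigma2-bound} with $t/2$ substituted for its parameter. This choice is dictated by the target probability: the theorem fails with probability at most $3\exp(-(t/2)^2/2) = 3\exp(-t^2/8)$, which matches the corollary exactly. The price paid for the cleaner right-hand side is therefore entirely accounted for in the probability of failure, consistent with the remark preceding the statement. With this substitution the theorem yields
\[
\|\Sigma\|_2 \le \left(1 + \sqrt{d/n} + \frac{t}{2\sqrt{n}} + \frac{\sqrt{d + \sqrt{2d}\,(t/2) + (t/2)^2}}{n}\right)^2 ,
\]
so since both bases are positive it suffices to prove the scalar inequality
\[
\frac{\sqrt{d + \sqrt{2d}\,(t/2) + (t/2)^2}}{n} \le \frac{t}{2\sqrt{n}} ,
\]
because adding $\tfrac{t}{2\sqrt{n}}$ to both sides then bounds the displayed base by $1 + \sqrt{d/n} + \tfrac{t}{\sqrt{n}}$, and squaring preserves the bound.

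The next step is to simplify the radicand so that the inequality becomes an explicit comparison of elementary quantities. Since $\sqrt{2d}\,(t/2) = \tfrac{1}{\sqrt2}\sqrt{d}\,t \le \sqrt{d}\,t$, I would bound the radicand by a perfect square,
\[
d + \sqrt{2d}\,(t/2) + (t/2)^2 \le d + \sqrt{d}\,t + (t/2)^2 = \left(\sqrt{d} + \tfrac{t}{2}\right)^2 ,
\]
so the numerator is at most $\sqrt{d} + t/2$. It then remains to show $\sqrt{d} + \tfrac{t}{2} \le \tfrac{t\sqrt{n}}{2}$, equivalently $\sqrt{d} \le \tfrac{t}{2}\big(\sqrt{n}-1\big)$. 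At this point I would feed in the three standing hypotheses: $d/n \le 16$ gives $\sqrt{d} \le 4\sqrt{n}$, while $n \ge 16$ makes $\sqrt{n}-1$ a definite fraction of $\sqrt{n}$, and $t \ge 5$ supplies the multiplicative room on the right. The claim thus collapses to a comparison of absolute constants, after which the $(t/2)^2$ and cross terms I discarded are verified to be strictly dominated and play no role.

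The main obstacle is precisely this final constant-chasing, and it is more delicate than it looks. The fourth term contributes a piece of order $\sqrt{d}/n$, which relative to the genuine $\sqrt{d/n}$ term in the bound is smaller only by a factor $1/\sqrt{n}$; crucially it cannot be charged against $\sqrt{d/n}$ (whose coefficient in the target is exactly $1$) and must instead be absorbed into the freed-up budget $\tfrac{t}{2\sqrt{n}}$. Whether this charge succeeds is exactly where the thresholds $d/n \le 16$, $n \ge 16$, $t \ge 5$ are consumed, and the analysis is tightest in the regime $d$ close to $16n$, where $\sqrt{d}\approx 4\sqrt{n}$ leaves the least slack in $\tfrac{t}{2}(\sqrt{n}-1)$; I would check this extreme case explicitly, since the inequality is comfortable once $d \lesssim n$ but only marginal as $d/n$ approaches its allowed maximum. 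Everywhere else the lower-order terms are handled by crude bounds, so the whole proof hinges on certifying that these particular numerical constants leave enough room at the boundary of the hypotheses.
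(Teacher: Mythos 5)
Your overall route is the same one the paper takes: invoke Theorem~\ref{thm:Sigma2-bound} at parameter $t/2$ (equivalently, first prove a bound of the form $\left(1+\sqrt{d/n}+2t/\sqrt{n}\right)^2$ and rescale), then absorb the fourth term into the freed-up budget $t/(2\sqrt{n})$. Your probability bookkeeping $3\exp(-(t/2)^2/2)=3\exp(-t^2/8)$ is correct, and your perfect-square bound $d+\sqrt{2d}\,(t/2)+(t/2)^2\le\left(\sqrt{d}+t/2\right)^2$ is a clean variant of the paper's manipulation, which instead factors $t/\sqrt{n}$ out of the radical and bounds the resulting ratio by $1$.

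The gap is that the scalar inequality you reduce everything to is \emph{false} under the stated hypotheses, precisely at the boundary you promised to check. After the perfect-square step you need $\sqrt{d}\le\frac{t}{2}(\sqrt{n}-1)$, while the hypotheses supply only $\sqrt{d}\le4\sqrt{n}$, $\sqrt{n}\ge4$, $t\ge5$: at $n=16$, $d=256$, $t=5$ this reads $16\le7.5$. Nor is this an artifact of your relaxation of the radicand: at parameter $t/2$ the fourth term of the theorem is already at least $\sqrt{d}/n=\sqrt{d/n}/\sqrt{n}$, which equals $4/\sqrt{n}$ at the boundary and exceeds $t/(2\sqrt{n})=2.5/\sqrt{n}$, so no sharper bound on the square root can rescue the reduction; making it work requires $t\ge8\sqrt{n}/(\sqrt{n}-1)$, i.e.\ $t\ge32/3\approx10.7$ at $n=16$ and never below $8$. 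It is worth noting that the paper's own proof has the same factor-of-two tension: its arithmetic ($16/t^2\le16/25$, $\sqrt{32}/(5\sqrt{n})\le\sqrt{32}/20$, etc.) is carried out with the theorem's parameter satisfying $t\ge5$ \emph{before} the final ``adjusting $t$ to $2t$'' step, so what that proof literally establishes is the corollary's conclusion for $t\ge10$. Your substitution makes this slip visible rather than introducing a new idea that fails — but your assertion that the ``particular numerical constants leave enough room at the boundary'' is exactly the step that does not hold. To complete your argument you must either strengthen the hypothesis on $t$ (e.g.\ $t\ge11$, or $t\ge8\sqrt{n}/(\sqrt{n}-1)$), or settle for the weaker conclusion $\|\Sigma\|_2\le\left(1+\sqrt{d/n}+2t/\sqrt{n}\right)^2$ at failure probability $3\exp(-t^2/2)$.
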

\begin{proof}
Starting with the bound in Theorem \ref{thm:Sigma2-bound} we have
\begin{align*}
\|\Sigma\|_2 
  &\leq 
\left( 1 + \sqrt{d/n} + t/\sqrt{n} + \frac{\sqrt{d + \sqrt{2d}t + t^2}}{n} \right)^2
 \\ & =
\left( 1 + \sqrt{d/n} + t/\sqrt{n} + \frac{t}{\sqrt{n}} \cdot \frac{\sqrt{d/t^2 + \sqrt{2d}/t + 1}}{\sqrt{n}} \right)^2
 \\ & =
\left( 1 + \sqrt{d/n} + \frac{t}{\sqrt{n}}\left( 1+  \sqrt{(d/n)/t^2 + \sqrt{2}\sqrt{d/n}/t/\sqrt{n} + 1/n} \right) \right)^2
\\ & \leq
\left( 1 + \sqrt{d/n} + \frac{t}{\sqrt{n}}\left( 1+  \sqrt{(16)/t^2 + \sqrt{2}\sqrt{16}/t/\sqrt{n} + 1/n} \right) \right)^2
\\ & \leq
\left( 1 + \sqrt{d/n} + \frac{t}{\sqrt{n}}\left( 1+  \sqrt{16/25 + \sqrt{32}/(5 \sqrt{n}) + 1/n} \right) \right)^2
\\ & \leq
\left( 1 + \sqrt{d/n} + \frac{t}{\sqrt{n}}\left( 1+  \sqrt{16/25 + \sqrt{32}/(5 \cdot 4) + 1/16} \right) \right)^2
\\ & <
\left( 1 + \sqrt{d/n} + 2t/\sqrt{n} \right)^2
\end{align*}
Adjusting $t$ to $2t$ in the probability of failure, so it is $3\exp(-t^2/8)$ instead of $3 \exp(-t^2/2)$, completes the proof.  
\end{proof}

\subsection{Corrupted Gaussian Data Identity Covariance: Additional Noise Schemes}
\label{app:idcov_morenoise}

We examine the performance of robust mean estimators across additional corruption schemes. We still draw $X \sim (1-\eta) P + \eta Q$ where $P = \mathcal{N}_d(\mu,I)$ and $Q$ describes the corrupted data distribution, where $\mu$ is the all-fives vector. We utilize the following additional corruption schemes:

\paragraph{Two Gaussian clusters shifted to variance shell. }
Consider corrupted data distribution $Q = 0.7\mathcal{N}_d(\mu^0, \frac{1}{10} I) \cup 0.3\mathcal{N}_d(\mu^1, \frac{1}{10} I)$ where $\|\mu - \mu^0\| = \sqrt{d}$, $\|\mu - \mu^1\| = \sqrt{d}$, and $\theta = 75^\circ$ where $\theta$ is the angle between $\mu^0$ and $\mu^1$.  The location of $\mu^0$ is determined by a random rotation matrix to prevent any coordinate-axis specific biases. Results over this noise distribution are shown in Figure \ref{fig:id_cov_gaus_two}.

\paragraph{In Distribution Noise. }
Consider corrupted data distribution, $Q$, where for each corrupted data point $q_i \in Q$, each coordinate $j$ of $q_i$ is drawn from $\mathsf{Uniform}(\mu_j, \mu_j+2)$. Here $\mu_j$ represents the $j$th coordinate of the true mean $\mu$. Results over this noise distribution are shown in Figure \ref{fig:id_cov_unif_top}.

\paragraph{Large Outlier Noise. }
Consider corrupted data distribution $Q = 0.7\mathcal{N}_d(\mu^0, \frac{1}{10} I) \cup 0.3\mathcal{N}_d(\mu^1, \frac{1}{10} I)$ where $\|\mu - \mu^0\| = 10\sqrt{d}$, $\|\mu - \mu^1\| = 20\sqrt{d}$, and $\theta = 75^\circ$ where $\theta$ is the angle between $\mu^0$ and $\mu^1$. The location of $\mu^0$ is determined by a random rotation matrix to prevent any coordinate-axis specific biases. Results over this noise distribution are shown in Figure \ref{fig:id_cov_obvious}.

\paragraph{Large Outlier Noise Mixes. } 
Consider corrupted data distribution $Q = 0.5L \cup 0.5 Q'$ where $L$ is the large outlier corruption scheme previously described and $Q'$ is a subtle corruption scheme. We examine two settings for $Q'$: additive variance shell corruption with one cluster, shown in Figure \ref{fig:id_cov_large_obvious_gaus}, and DKK corruption, shown in Figure \ref{fig:id_cov_large_obvious_dkk}.

\begin{figure}[h]
    \centering
    \includegraphics[width=0.85\linewidth]{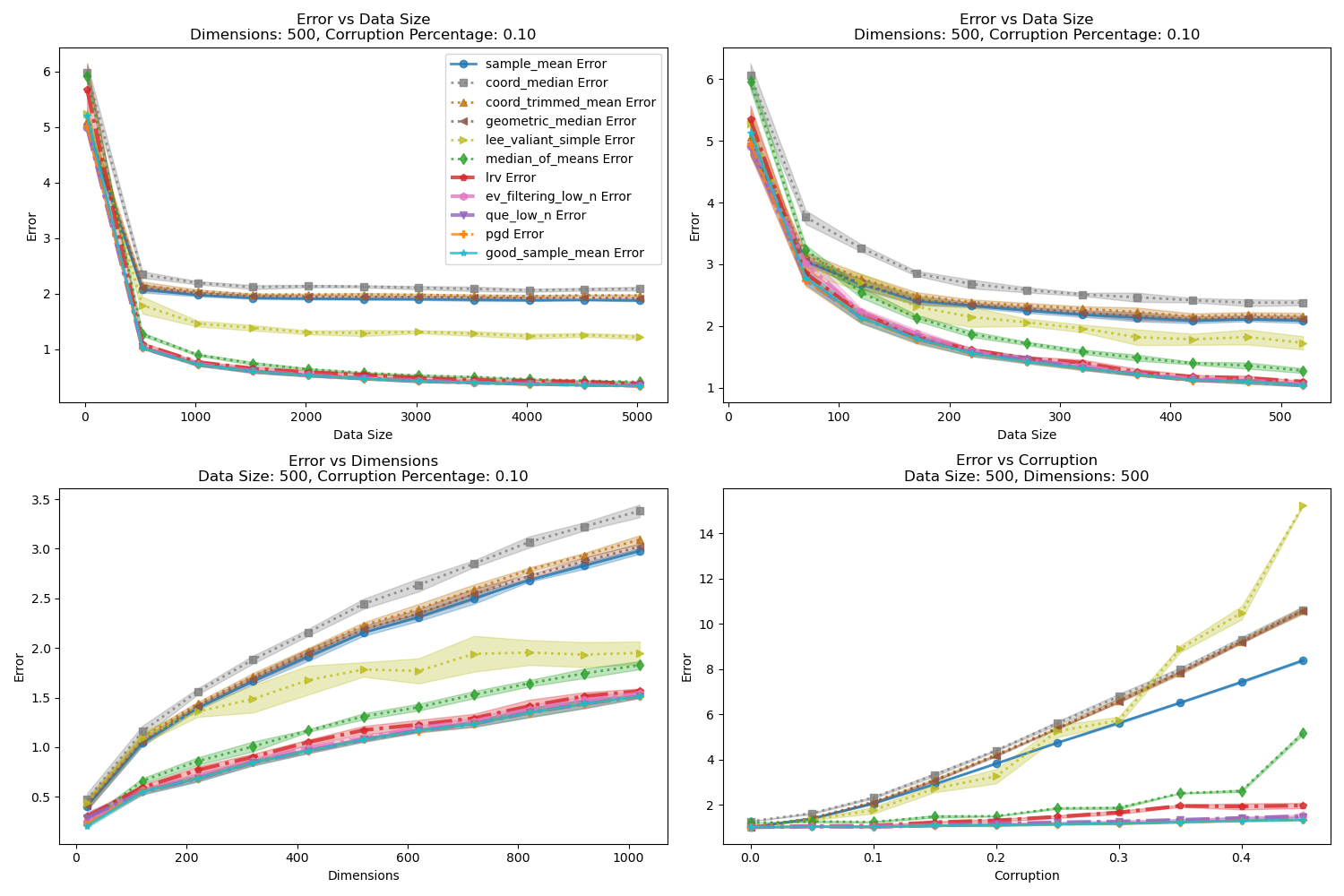}
    \caption{Corrupted Gaussian Identity Covariance: Two Variance Shell Clusters}
    \label{fig:id_cov_gaus_two}
\end{figure}

\begin{figure}[h]
    \centering
    \includegraphics[width=0.85\linewidth]{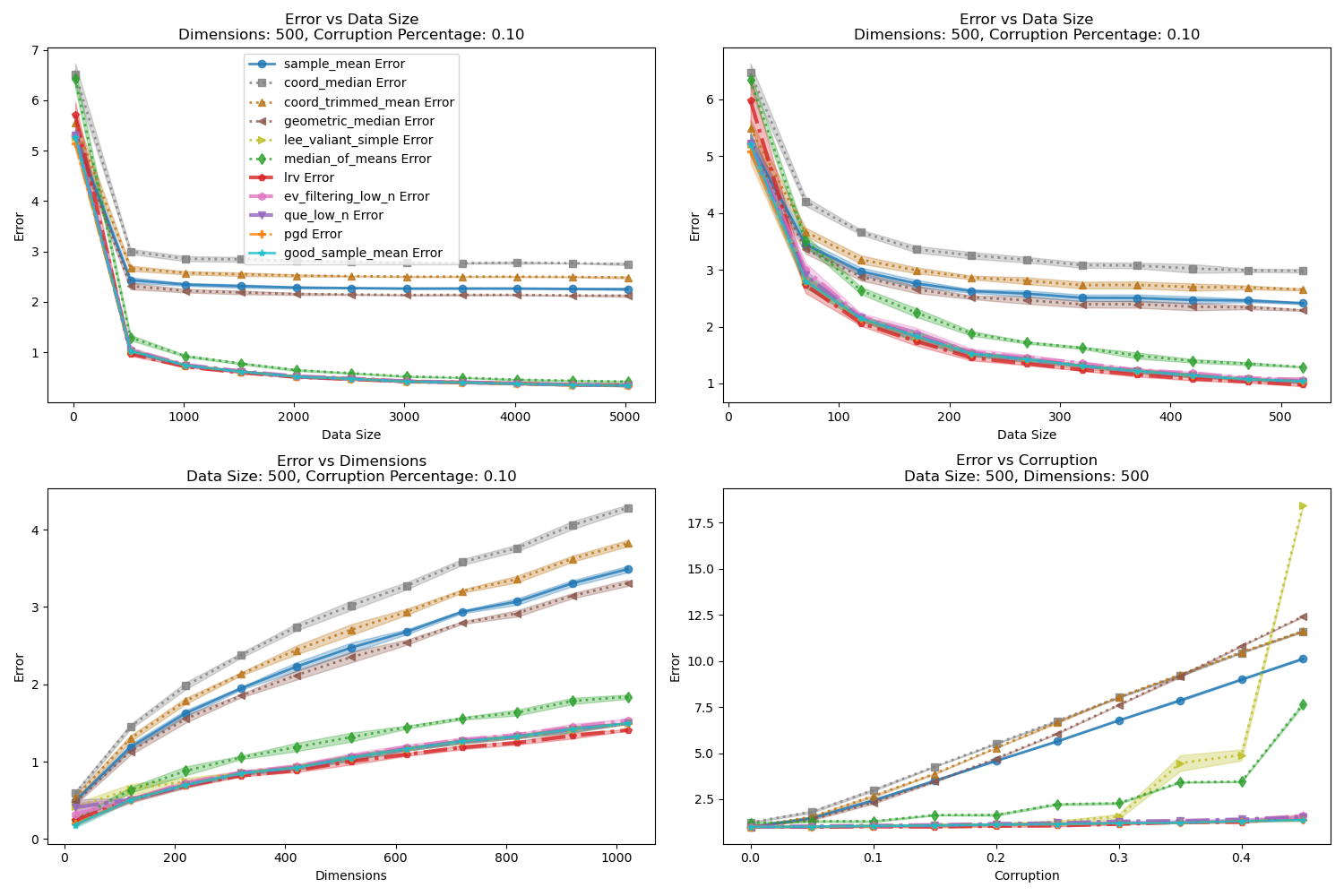}
    \caption{Corrupted Gaussian Identity Covariance: In Distribution Noise}
    \label{fig:id_cov_unif_top}
\end{figure}

\begin{figure}[h]
    \centering
    \includegraphics[width=0.85\linewidth]{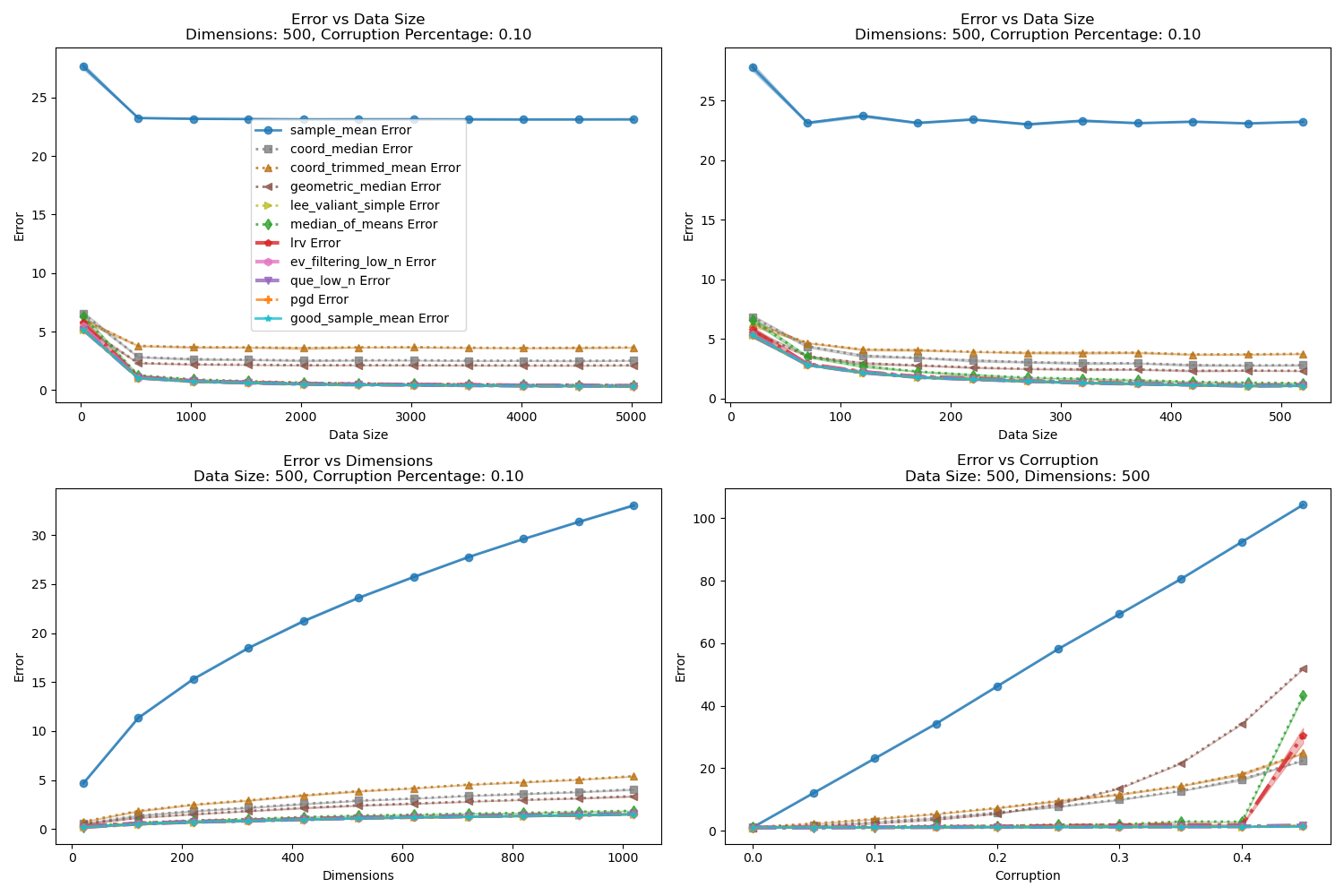}
    \caption{Corrupted Gaussian Identity Covariance: Large Outliers}
    \label{fig:id_cov_obvious}
\end{figure}

\begin{figure}[h]
    \centering
    \includegraphics[width=0.85\linewidth]{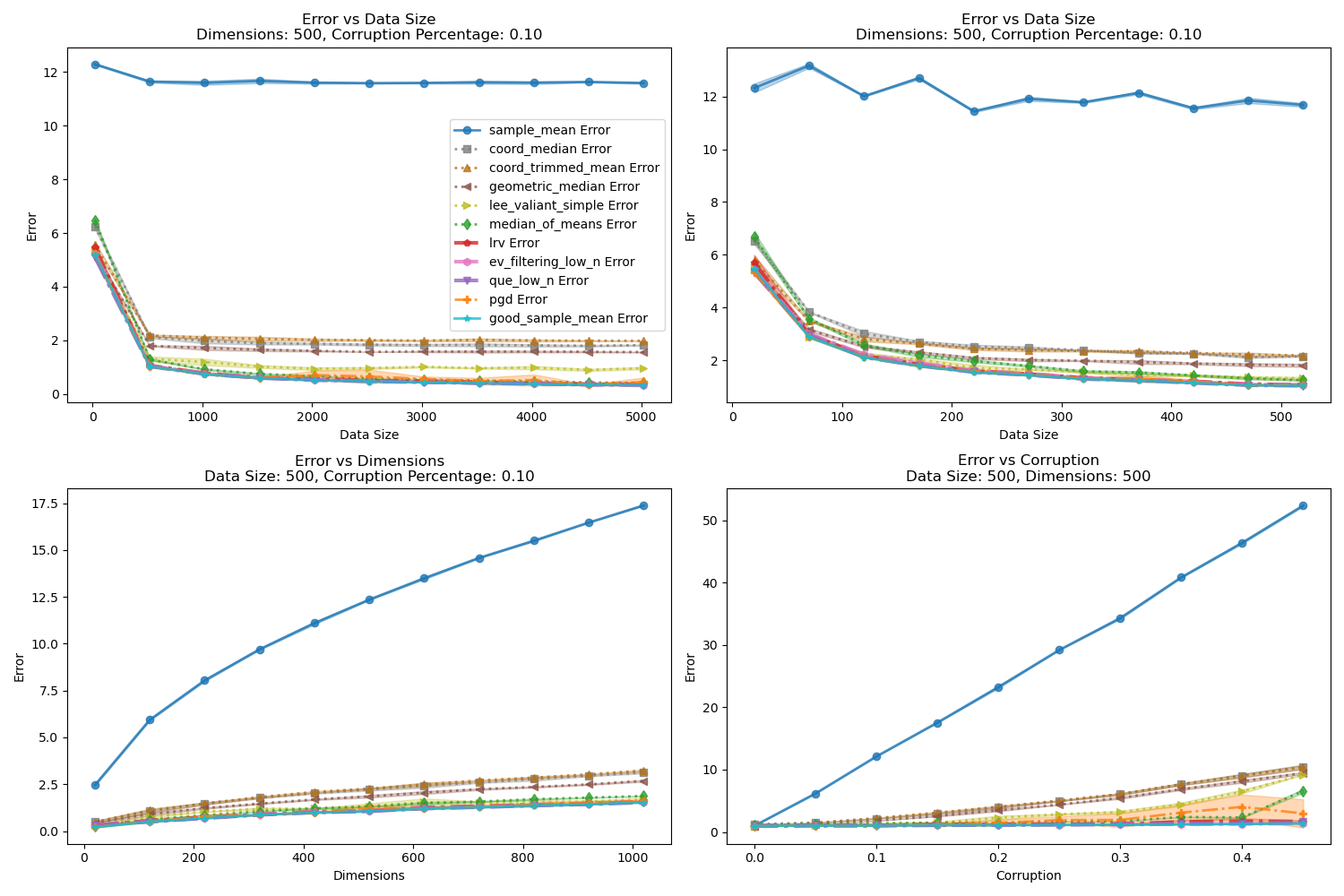}
    \caption{Corrupted Gaussian Identity Covariance: Large Outliers w/ Additive Variance Shell Noise}
    \label{fig:id_cov_large_obvious_gaus}
\end{figure}

\begin{figure}[h]
    \centering
    \includegraphics[width=0.85\linewidth]{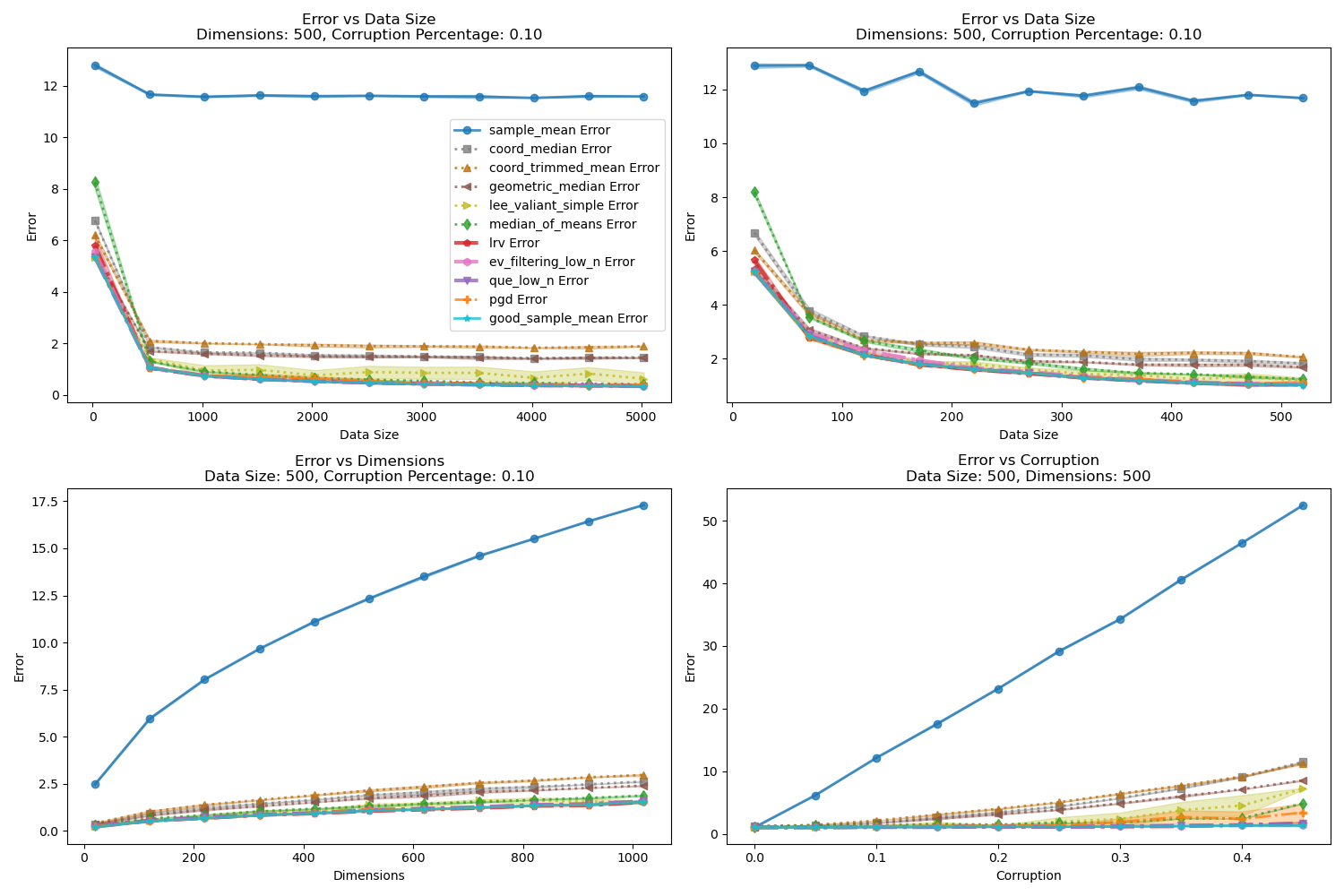}
    \caption{Corrupted Gaussian Identity Covariance: Large Outliers w/ DKK Noise}
    \label{fig:id_cov_large_obvious_dkk}
\end{figure}

Across all of these distributions, including those with large outliers, \evln, \queln, \pgd, and \lrv perform the best, suggesting that their performance is not overly sensitive to the noise distribution. However, we note that across schemes with large outliers, \pgd sees areas of higher variance and slightly worse performance, and \lrv degrades worse as $\eta$ increases with large outliers. This downgrade in performance can be remedied by first preprocessing data by removing large outliers through a naive pruning method, but this step doesn't appear necessary for other methods.  We remark that \lrv requires outlier weights to be clipped to avoid numerical instability issues under large outlier schemes. Otherwise, it will degrade poorly over large outliers and large $\eta$ as predicted outliers will be assigned near-zero weights. We also see again that \medmean significantly outperforms other simple estimators, especially under large data size, although its performance degrades poorly under certain conditions, such as with larger $\eta$. With large outliers, \lvsim nearly matches \gsample error across conditions, achieving much better performance than it does across subtle noise distributions. As it additionally outperforms \coordprune, this suggests that \lvsim may operate as a more effective naive pruning method, as seen in the LLM experiments.

\paragraph{Dependence on true mean}
We additionally verify that performance does not depend on the choice of true mean, $\mu$. We recreate experiments over Additive Variance Shell Noise and DKK Noise over different choices of $\mu$. We replicate the same experimental setup as before, but draw every coordinate of $\mu$ from $\mathcal{N}(0, 50)$ at every iteration in the experiment rather than fixing $\mu$ as the all-fives vector. As a reminder, this occurs for every choice of the independent variable over every run. If performance depended on $\mu$, we would expect to achieve high variance results. Instead, we find nearly identical results to the original experiments across both distributions. These results are shown in Figure \ref{fig:mean_dependence_gaus_one} and Figure \ref{fig:mean_dependence_dkk}.

\begin{figure}[h]
    \centering
    \includegraphics[width=0.85\linewidth]{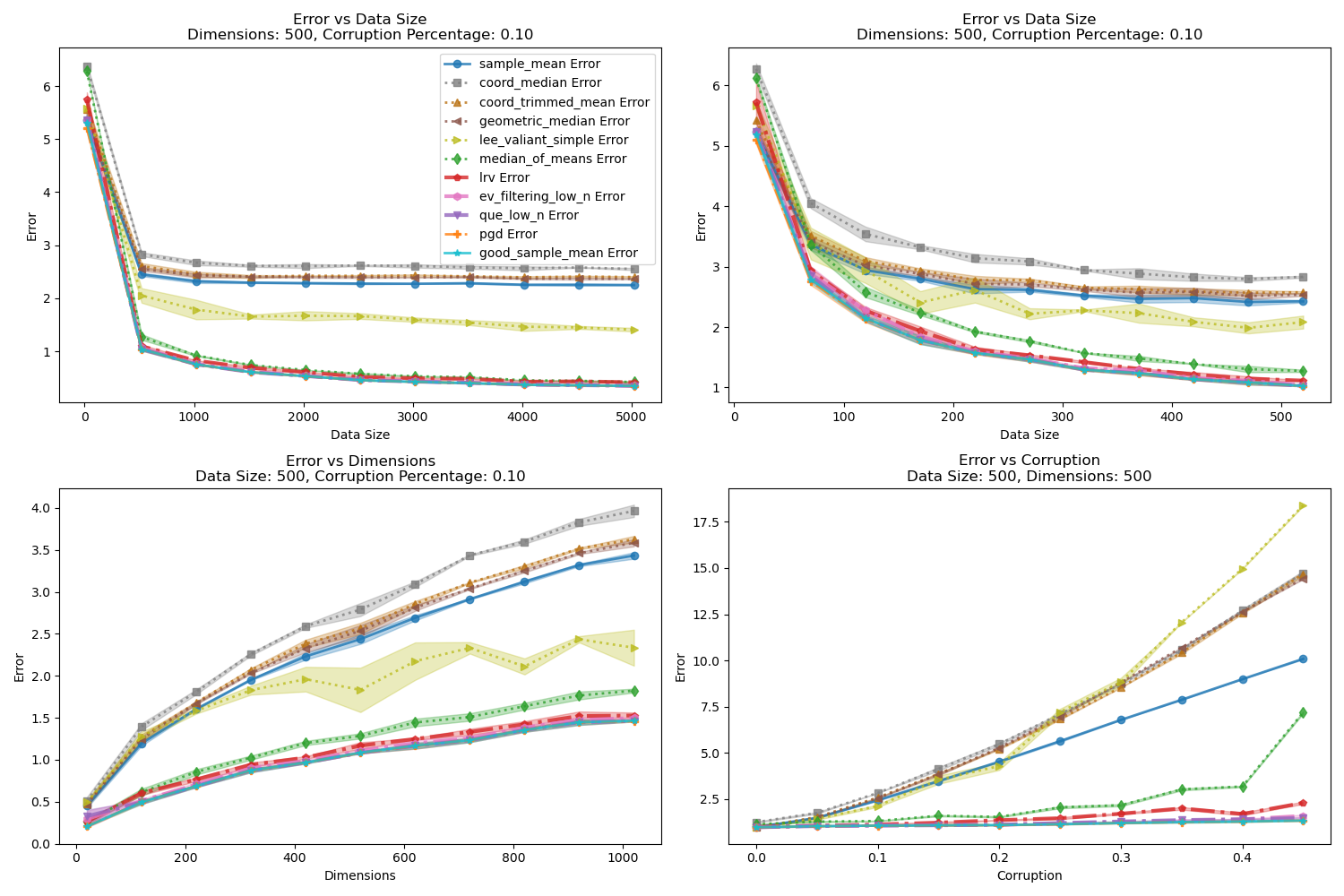}
    \caption{Dependence On True Mean: Identity Covariance, Additive Variance Shell Noise}
    \label{fig:mean_dependence_gaus_one}
\end{figure}

\begin{figure}[h]
    \centering
    \includegraphics[width=0.85\linewidth]{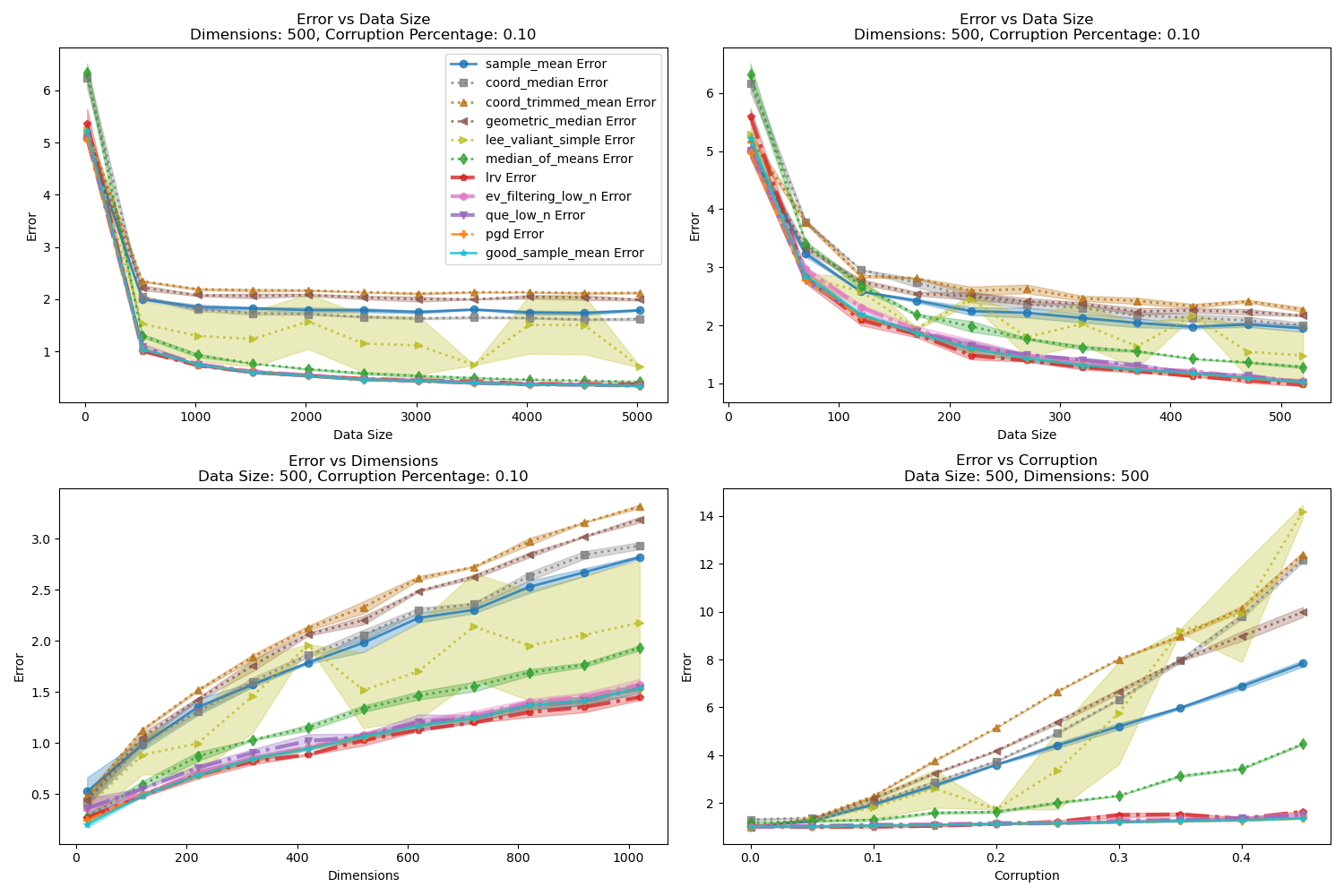}
    \caption{Dependence On True Mean: Identity Covariance, DKK Noise}
    \label{fig:mean_dependence_dkk}
\end{figure}

\clearpage

\subsection{Corrupted Gaussian Data Unknown Spherical Covariance: Additional Corruption Schemes}
\label{app:unknownsp_morenoise}

We examine the unknown spherical covariance case across additional corruption schemes. We utilize the same uncorrupted distribution as before, $P = \mathcal{N}_d(\mu, \sigma^2 I)$ where $\mu$ is the all-fives vector and $\sigma = 5$. We find similar performance across the distributions we test.

\paragraph{Adapting noise distributions to spherical covariance}

As in the Gaussian noise shifted to variance shell case, we utilize the well know Gaussian concentration inequality that for data $X \sim \mathcal{N}_d(\mu, \sigma I)$, $\E_{x \sim X} [\|x - \mu\|^2] = \sigma^2 d$. This observation is used to adapt noise distributions in the identity covariance case to this case. For two additive variance shell clusters, each cluster has mean $\mu^i$ for $i \in [0, 1]$ where $\|\mu - \mu^i\| = \sigma \sqrt{d}$, with other conditions remaining the same; results are shown in Figure \ref{fig:large_sp_gaus_two}. For DKK noise, half the noise is drawn over the hypercube where every coordinate is $-\sigma$ or 0 away from the true mean at that coordinate with equal probability. The other half is drawn from the product distribution where the first coordinate is either $11\sigma$ or $-\sigma$ away from the true mean at that coordinate with equal probability, the second coordinate is $-3\sigma$ or $-\sigma$ away from the corresponding true mean coordinate with equal probability, and all remaining
coordinates are $-\sigma$ away from the true mean. Results are shown in Figure \ref{fig:large_sp_dkk}. For in distribution corruption, we draw each coordinate $j$ of a corrupted data point from $\mathsf{Uniform}(\mu_j, \mu_j + 2\sigma)$; results are shown in Figure \ref{fig:large_sp_unif_top}. We also perform subtractive corruption, using the same scheme as in the identity covariance case; results are shown in Figure \ref{fig:large_sp_subtractive_corruption}. 

\begin{figure}[h]
    \centering
    \includegraphics[width=0.85\linewidth]{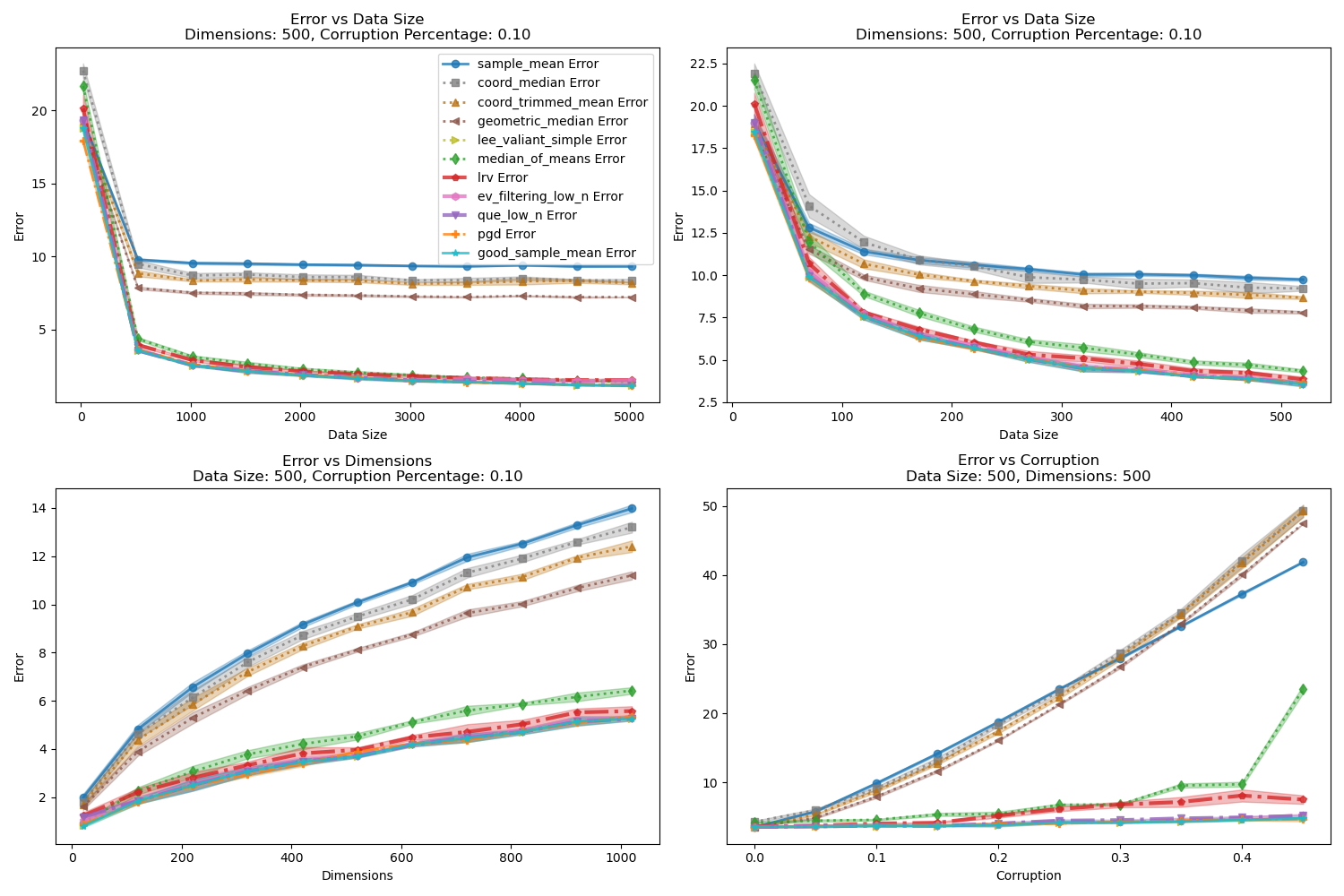}
    \caption{Corrupted Gaussian Large Spherical Covariance: Two Variance Shell Clusters}
    \label{fig:large_sp_gaus_two}
\end{figure}

\begin{figure}[h]
    \centering
    \includegraphics[width=0.85\linewidth]{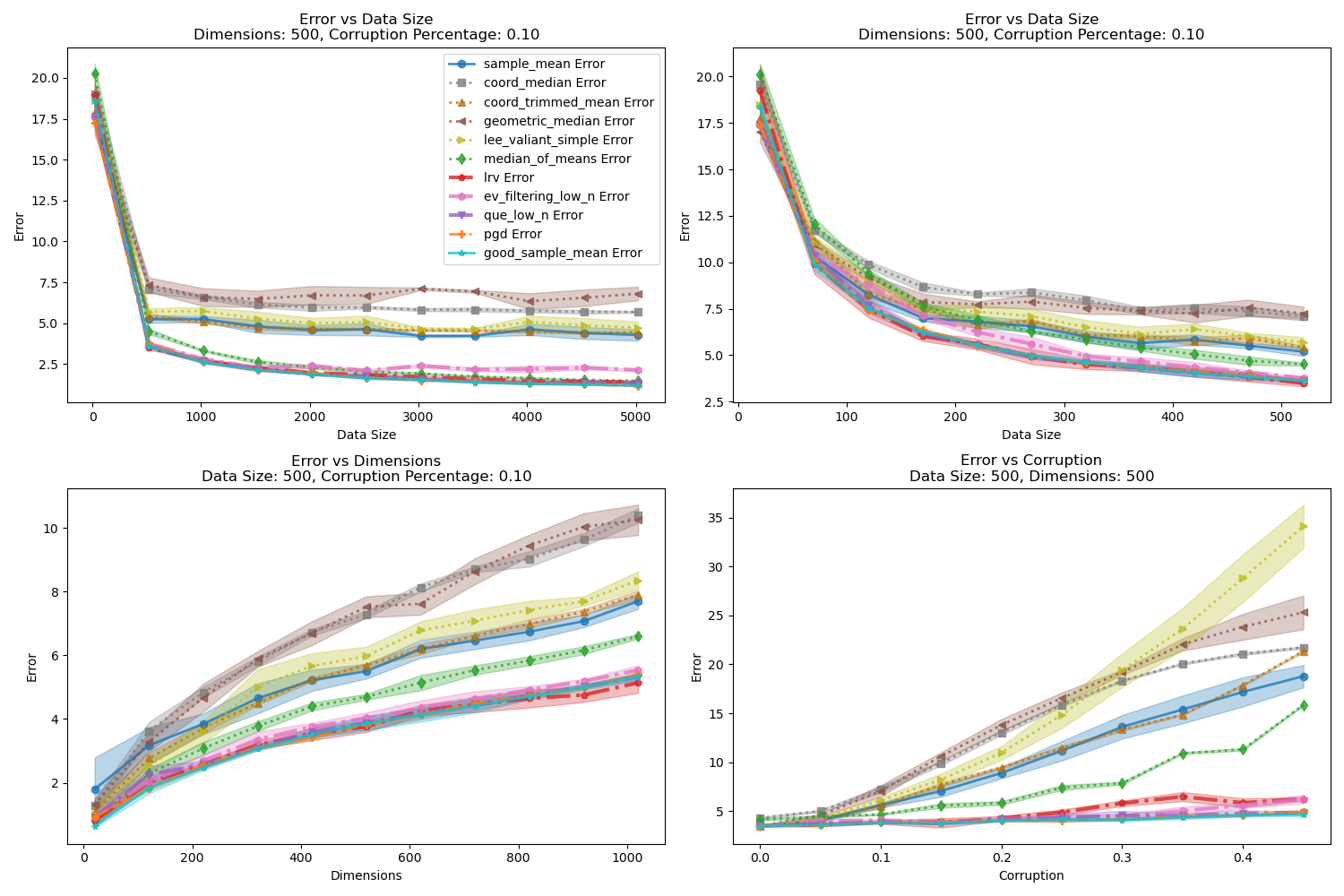}
    \caption{Corrupted Gaussian Large Spherical Covariance: DKK Noise}
    \label{fig:large_sp_dkk}
\end{figure}

\begin{figure}[h]
    \centering
    \includegraphics[width=0.85\linewidth]{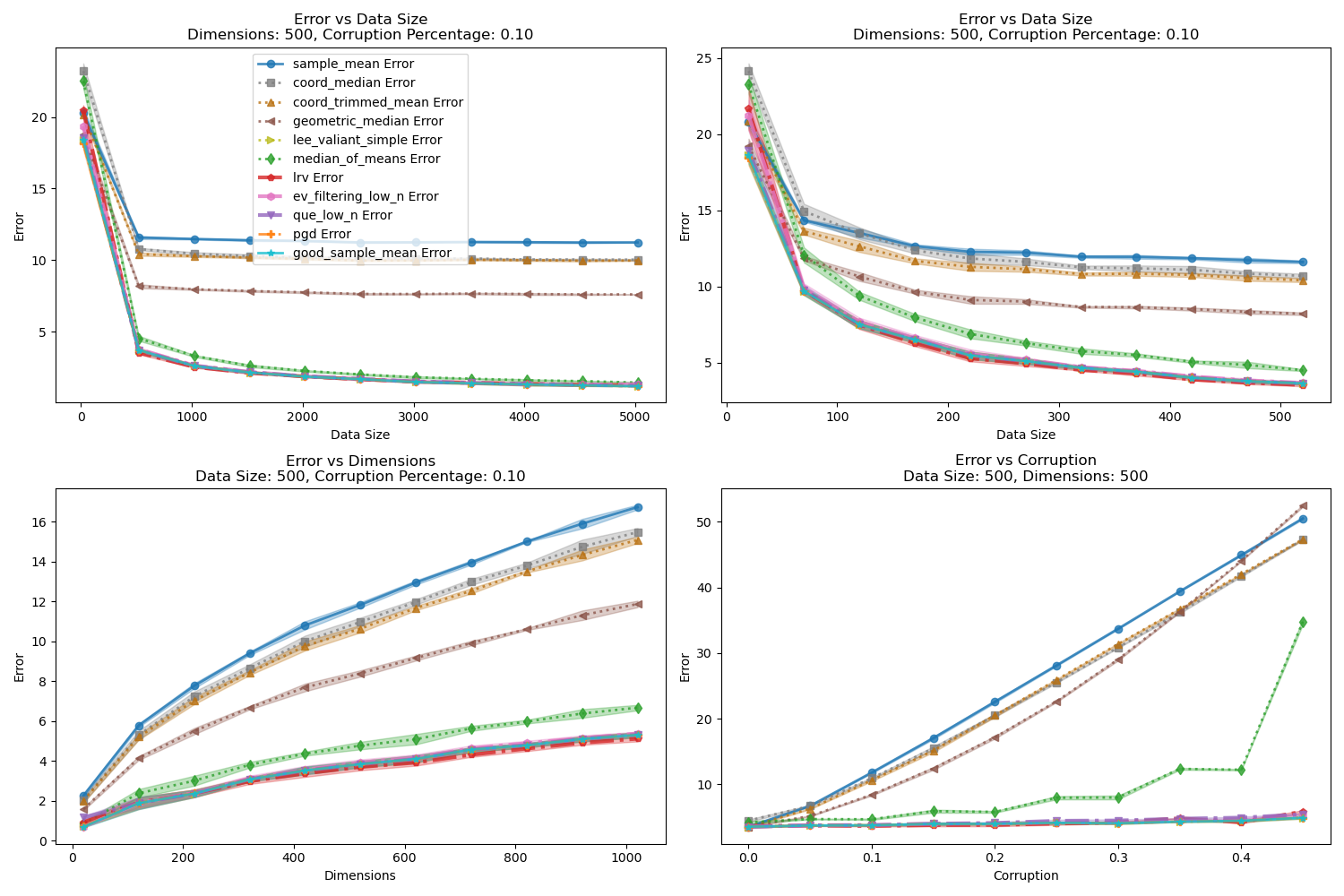}
    \caption{Corrupted Gaussian Large Spherical Covariance: In Distribution Noise}
    \label{fig:large_sp_unif_top}
\end{figure}

\begin{figure}[h]
    \centering
    \includegraphics[width=0.85\linewidth]{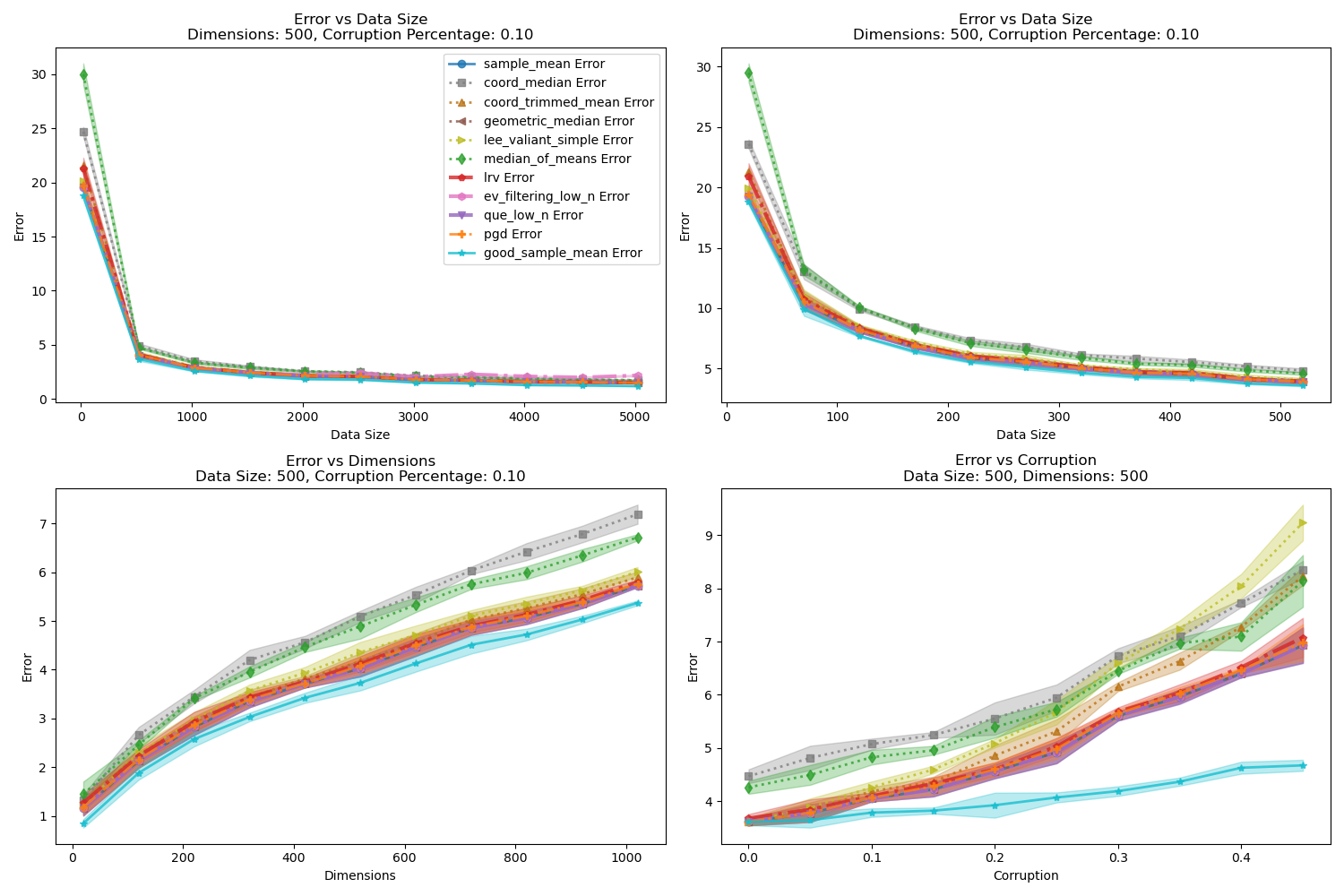}
    \caption{Corrupted Gaussian Large Spherical Covariance: Subtractive Noise}
    \label{fig:large_sp_subtractive_corruption}
\end{figure}

We find similar results to the identity covariance case across the best estimators, again observing the near optimal performance of \queln and \pgd, along with the slightly worse but still near optimal performance of \lrv. As a result of using the scaling data heuristic, \evln degrades slightly. This is especially noticeable with DKK noise, where it does not converge to \gsample error as data size increases. \queln appears to be less sensitive to the trace scaling heuristic, retaining its performance in these experiments. There is some variance among other estimators. Notably, \lvsim performs near optimally across two variance shell corruption and in distribution noise with spherical covariance (its performance in these plots is hidden amongst the best estimators which approximately match \gsample error), whereas it performs comparably  worse across analogous noise distributions for identity covariance data. Still, \lvsim does not generally perform better in the spherical covariance case compared to the (known) identity covariance case; performing consistently worse than \sample compared to outperforming \sample except with large $\eta$ under identity covariance.

\paragraph{Varying $\sigma$}
We rerun several experiments as we vary $\sigma$ from $\sigma=0.1$ to $\sigma=200$ -- the coordinate wise standard deviation of the true covariance matrix -- and fix other variables as their default values. In particular, we examine Additive Variance Shell Noise, DKK Noise, In Distribution Uniform Noise, and Two Variance Shell Clusters Noise. Results are shown in Figure \ref{fig:vary_var}. As expected, error tends to increase linearly with $\sigma$. Generally, relative performance of the algorithms remains the same, with \evln, \queln, and \pgd nearly identically matching \gsample error throughout. Surprisingly, \lrv error does not grow linearly with $\sigma$, consistently outperforming even \gsample with large enough choices of $\sigma$. A similar trend is also seen for \coordmed, but only across DKK Noise, in which it is noticeably the best estimator with larger values of $\sigma$. 

\begin{figure}[h]
    \begin{subfigure}{0.5\linewidth}
        \centering
        \includegraphics[width=\linewidth]{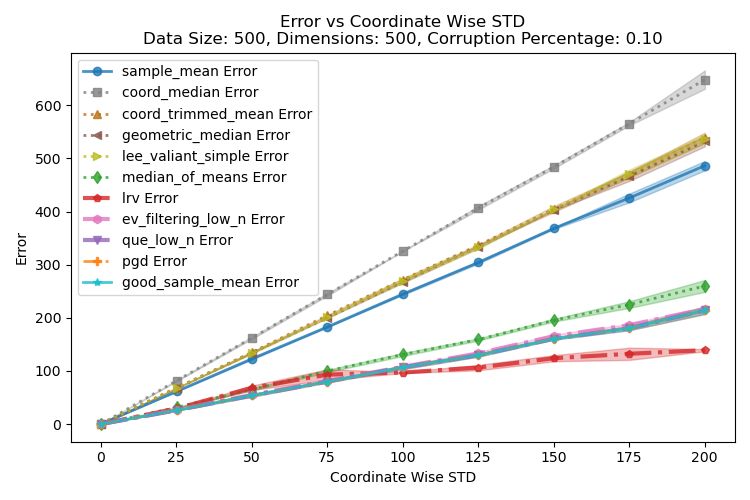}
        \caption{Additive Variance Shell Noise}
    \end{subfigure}
        \begin{subfigure}{0.5\linewidth}
        \centering
        \includegraphics[width=\linewidth]{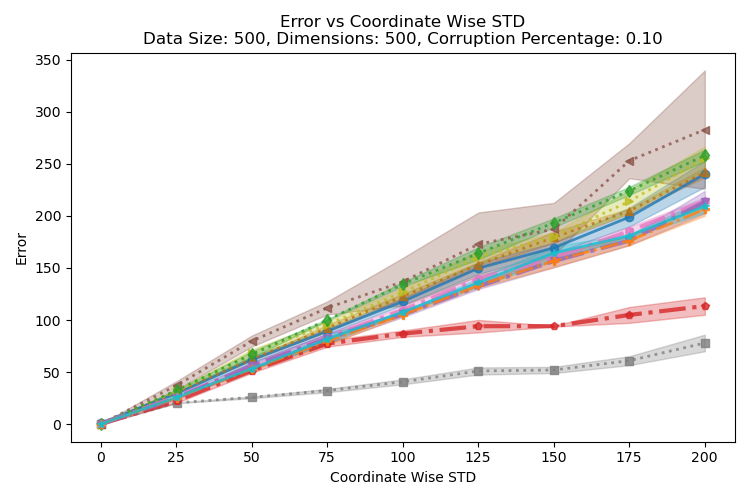}
        \caption{DKK Noise}
    \end{subfigure}
    \\
    \begin{subfigure}{0.5\linewidth}
        \centering
        \includegraphics[width=\linewidth]{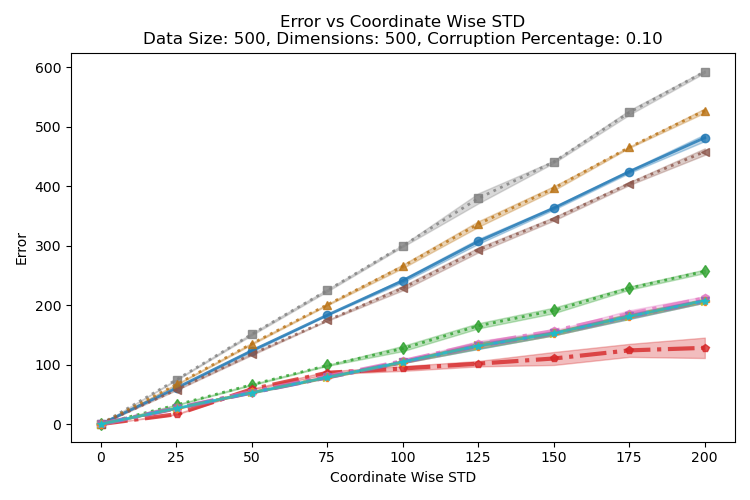}
        \caption{In Distribution Noise}
    \end{subfigure}
    \begin{subfigure}{0.5\linewidth}
        \centering
        \includegraphics[width=\linewidth]{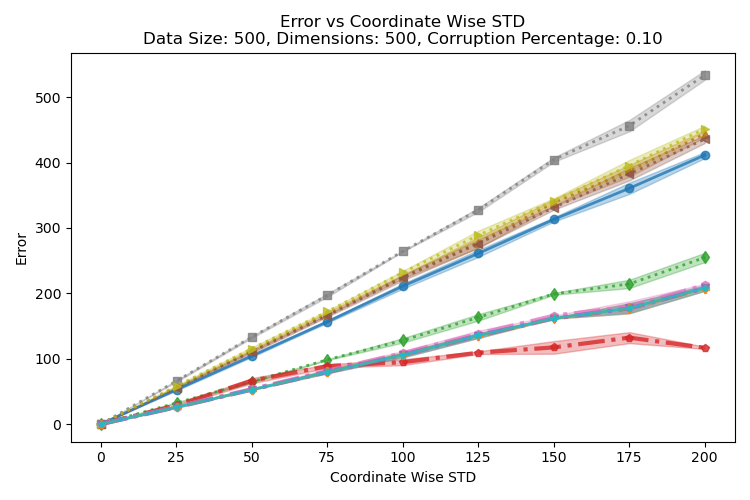}
        \caption{Two Variance Shell Clusters}
    \end{subfigure}
    \caption{Corrupted Gaussian Large Spherical Covariance: Varying Coordinate-Wise Standard Deviation $\sigma$}
    \label{fig:vary_var}
\end{figure}

\clearpage

\subsection{Corrupted Gaussian Data Unknown Non Spherical Covariance}
\label{app:unknown_non_sp}

\subsubsection{Unknown Diagonal Covariance}

Here we consider the performance of mean estimators on corrupted Gaussian data with unknown diagonal non-spherical covariance. We draw uncorrupted data from $\mathcal{N}_d(\mu, \Sigma)$ where $\mu$ is the all fives-vector and $\Sigma$ has large diminishing covariance. In particular, the diagonal elements uniformly decrease from 25 to 0.1.

 \paragraph{Noise Distributions}

 We adapt the variance shell additive noise distribution to cluster outliers to be a standard deviation away from the true mean along every coordinate axis. That is, consider corrupted data distribution 
 $Q = \mathcal{N}_d(\mu', \frac{1}{10}I)$ with $|\mu'_j - \mu_j| = \Sigma_{j}$, 
  where $\Sigma_j$ is the $j$th diagonal element in $\Sigma$, $\mu'_j$ is the $j$th coordinate of $\mu'$, and $\mu_j$ is the $j$th coordinate of the true mean $\mu$; results are shown in Figure \ref{fig:large_dim_diag_gaus_one}. We adapt in distribution uniform noise to draw each coordinate $j$ of a corrupted data point from 
  $\mathsf{Uniform}(\mu_j, \mu_j + \Sigma_{j})$; 
  results are shown in Figure \ref{fig:large_dim_diag_unif_top}. For large outlier noise we weight the distance of clusters from $\mu$ by $\sqrt{\frac{\Tr(\Sigma)}{d}}$. That is, consider corrupted data distribution $Q = 0.7\mathcal{N}_d(\mu^0, \frac{1}{10} I) \cup 0.3\mathcal{N}_d(\mu^1, \frac{1}{10} I)$ where $\|\mu - \mu^0\| = 10\sqrt{\frac{\Tr(\Sigma)}{d}} \sqrt{d}$, $\|\mu - \mu^1\| = 20\sqrt{\frac{\Tr(\Sigma)}{d}} \sqrt{d}$, and $\theta = 75^\circ$ where $\theta$ is the angle between $\mu^0$ and $\mu^1$.; 
  results are shown in Figure \ref{fig:large_dim_diag_obvious}. We also utilize subtractive noise which already works in this case; results are shown in Figure \ref{fig:large_dim_diag_subtractive_corruption}. 
 
\begin{figure}[h]
    \centering
    \includegraphics[width=0.85\linewidth]{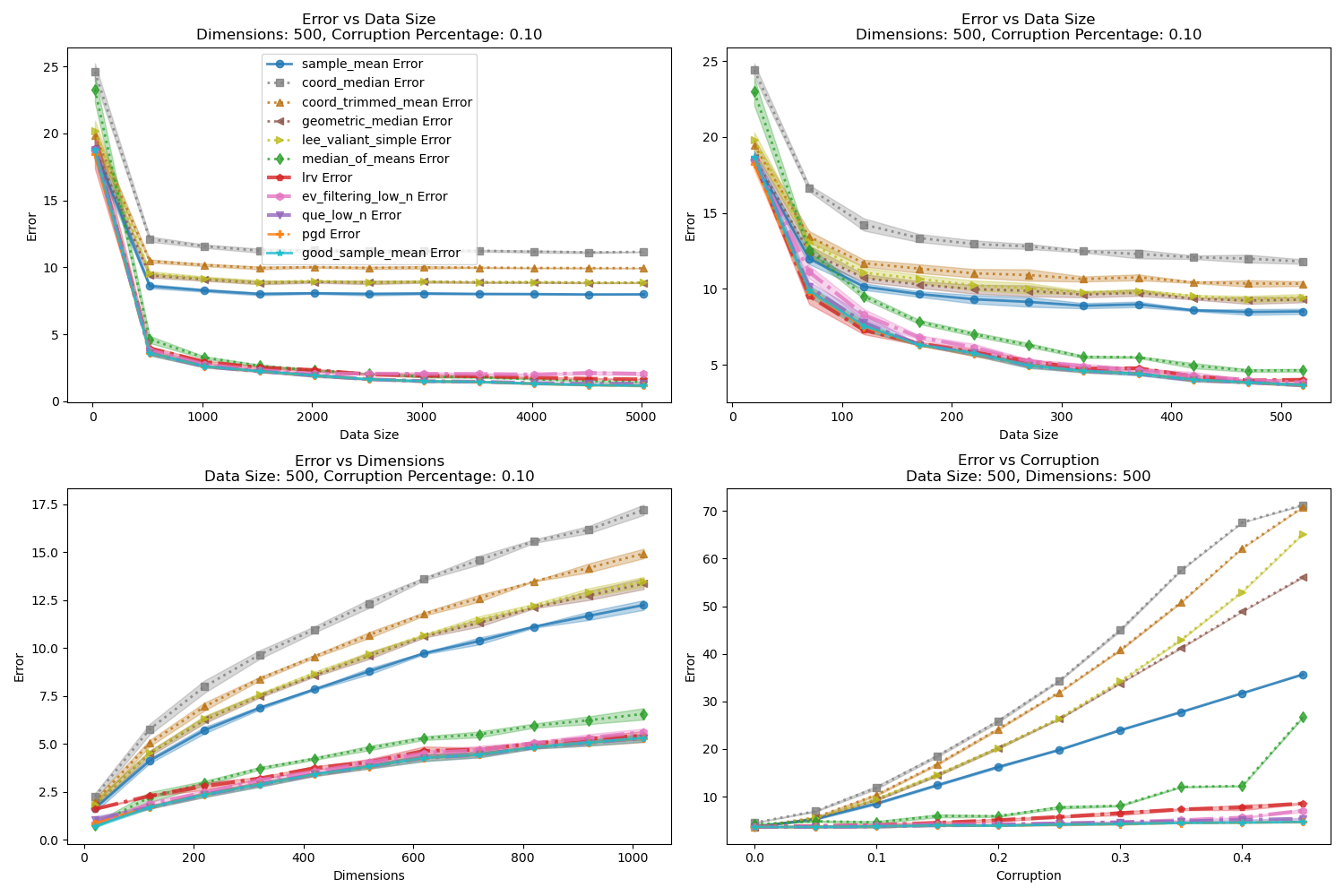}
    \caption{Corrupted Gaussian Large Diminishing Diagonal Covariance: Additive Variance Shell Noise}
    \label{fig:large_dim_diag_gaus_one}
\end{figure}

\begin{figure}[h]
    \centering
    \includegraphics[width=0.85\linewidth]{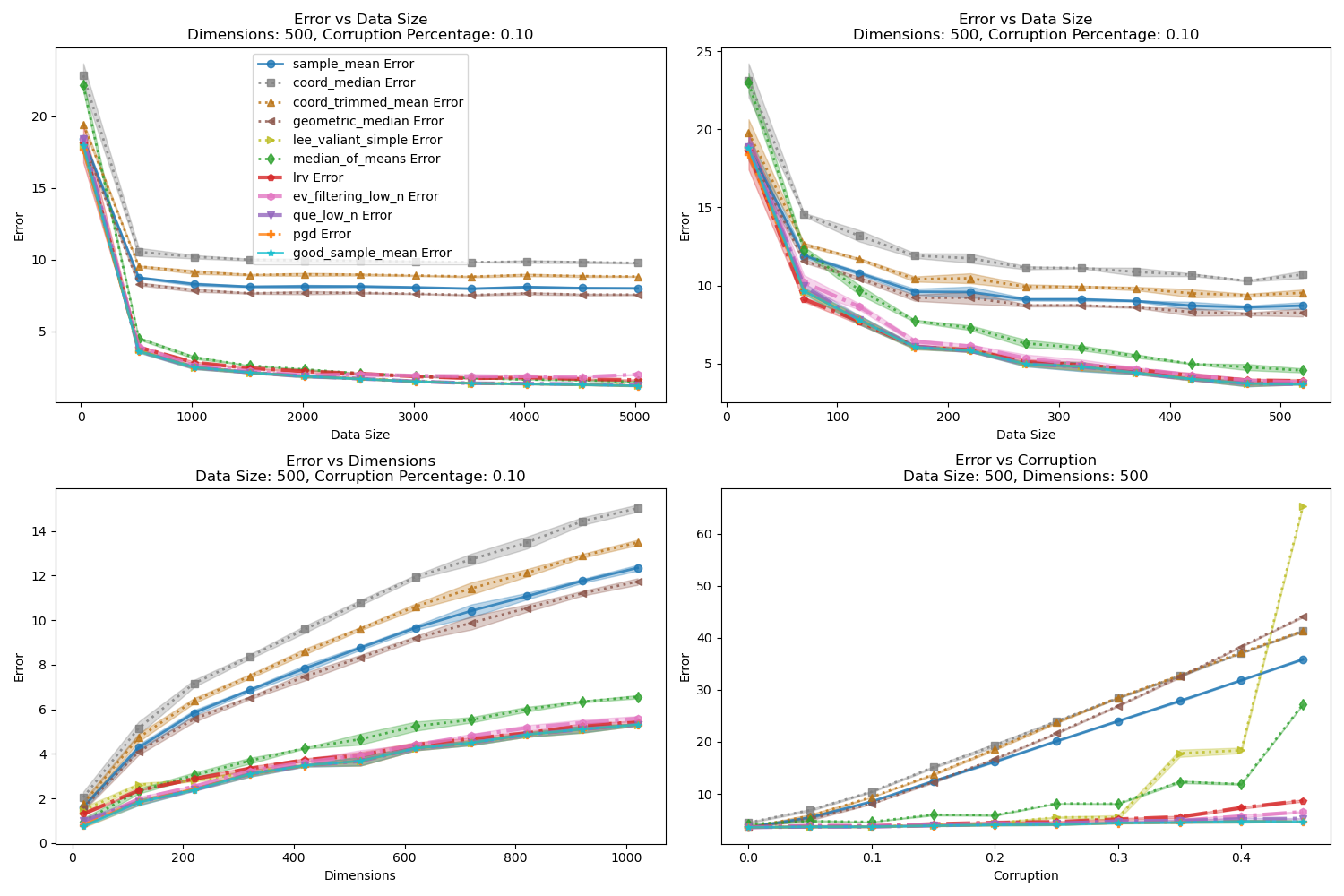}
    \caption{Corrupted Gaussian Large Diminishing Diagonal Covariance: In Distribution Noise}
    \label{fig:large_dim_diag_unif_top}
\end{figure}

\begin{figure}[h]
    \centering
    \includegraphics[width=0.85\linewidth]{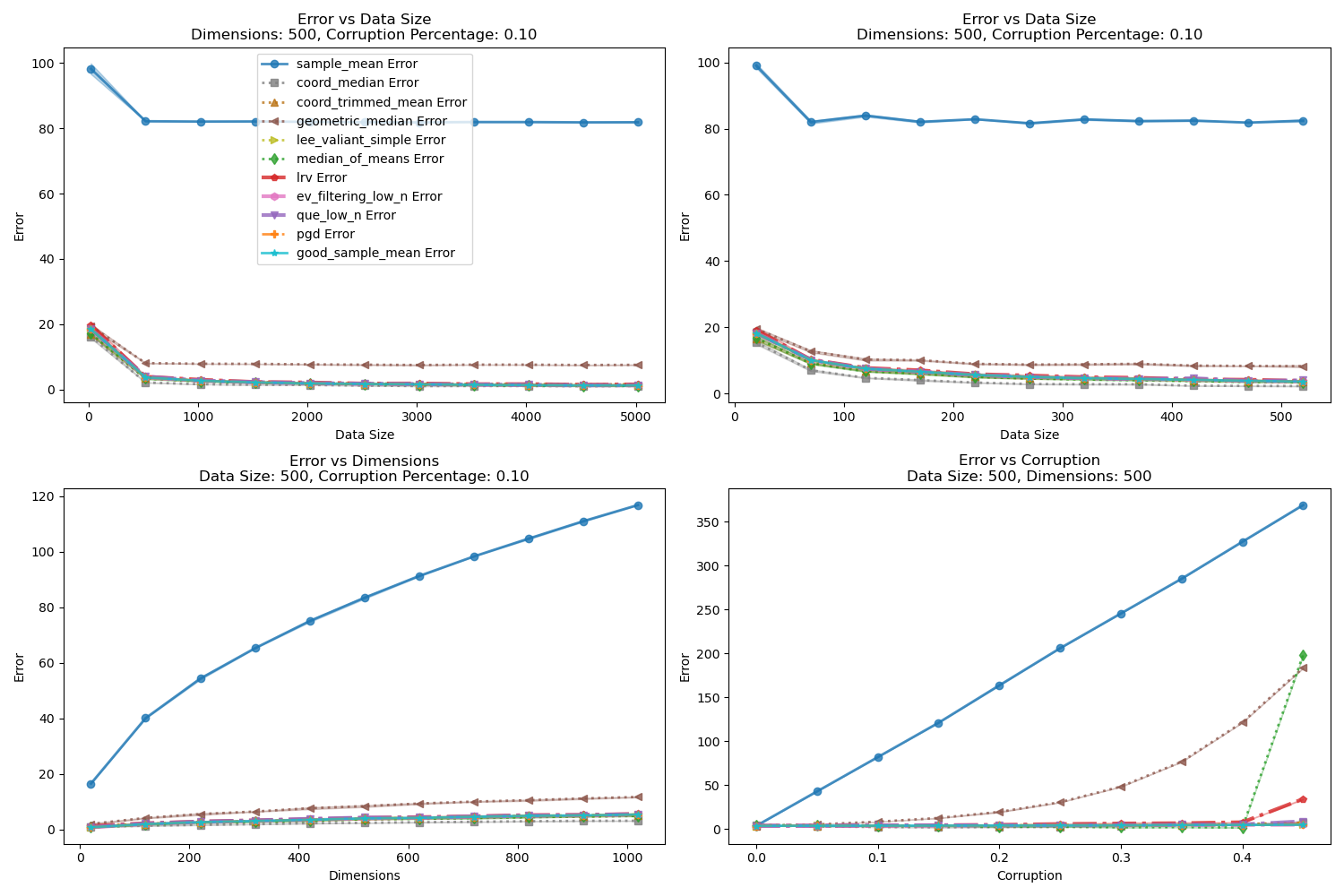}
    \caption{Corrupted Gaussian Large Diminishing Diagonal Covariance: Large Outliers}
    \label{fig:large_dim_diag_obvious}
\end{figure}

\begin{figure}[h]
    \centering
    \includegraphics[width=0.85\linewidth]{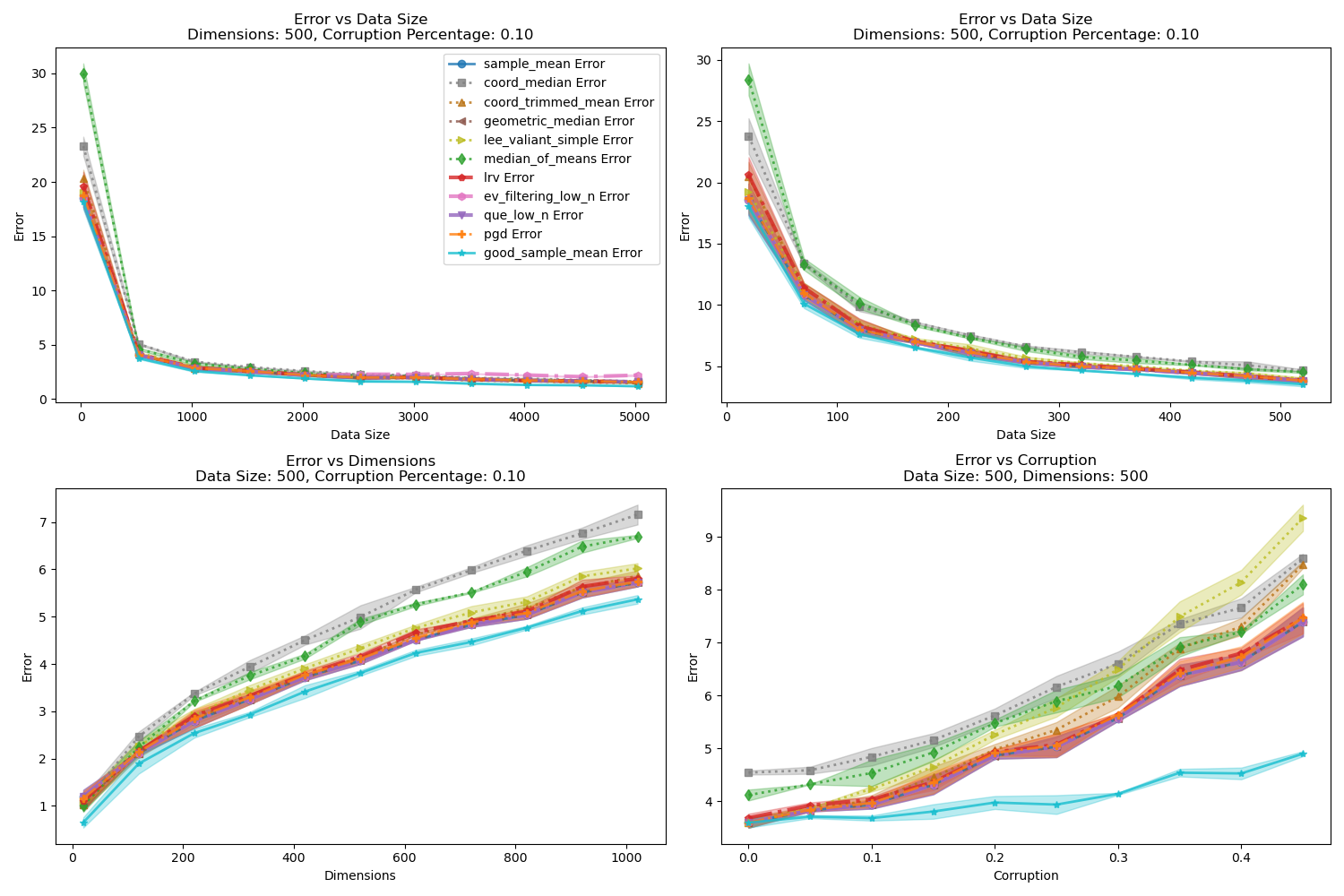}
    \caption{Corrupted Gaussian Large Diminishing Diagonal Covariance: Subtractive Noise}
    \label{fig:large_dim_diag_subtractive_corruption}
\end{figure}

Again, we find that \queln and \pgd nearly match \gsample error across distributions, with 
\lrv doing slightly worse but still significantly outperforming \sample. \evln still does among the best here, but, as in the large spherical covariance case, sees slight degradation as a result of the scaling data heuristic. In particular, error does not clearly converge to \gsample error as $n$ increases over Additive Variance Shell Noise and In Distribution Noise. \queln does not encounter this issue despite employing the same heuristic to generalize to non-identity covariance data, suggesting that it is more robust to distributional assumptions. Additionally, \medmean performs best among simpler estimators, outperforming the \sample with $n \approx d$ and performing similarly to \gsample with sufficiently large $n$.

\paragraph{Varying top Eigenvalue}

We rerun Additive Variance Shell Noise and In Distribution Noise as we vary the squareroot of the top eigenvalue of the true covariance matrix, labeled as $\sigma$, from $\sigma=0.1$ to $\sigma=200$. In particular, the diagonal of the covariance will uniformly decrease from $\sigma^2$ to $0.1$. For every choice of $\sigma$, the noise is scaled as described previously.  These results are shown in Figure \ref{fig:vary_var_non_sp}. Like in the spherical case, we find that the relative performance of algorithms remains nearly identical throughout choices of $\sigma$.

\begin{figure}[h]
\centering
\begin{subfigure}[t]{0.48\linewidth}
    \centering
    \includegraphics[width=\linewidth]{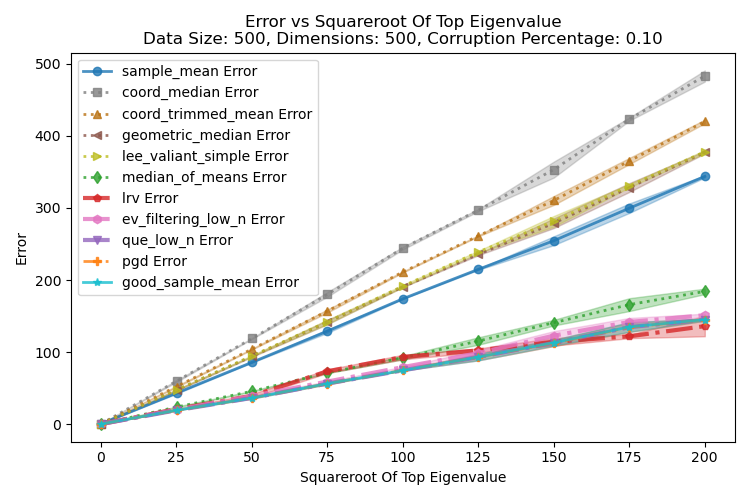} 
    \caption{Additive Variance Shell Noise}
\end{subfigure}
\hfill
\begin{subfigure}[t]{0.48\linewidth}
    \centering
    \includegraphics[width=\linewidth]{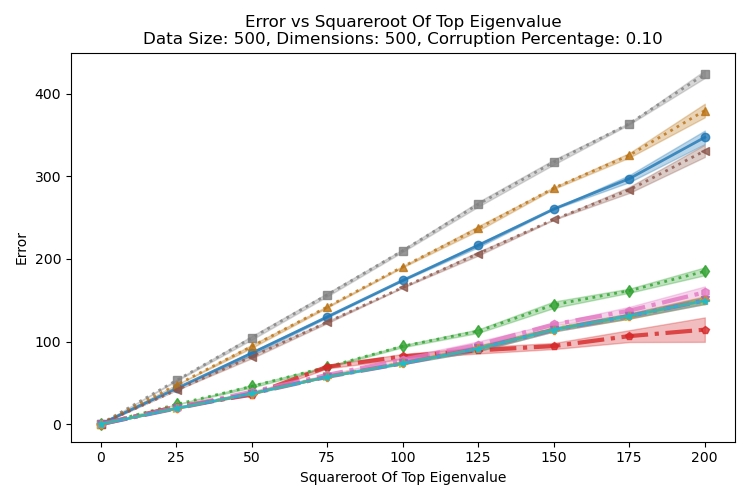} 
    \caption{In Distribution Noise}
\end{subfigure}
\caption{Corrupted Gaussian large diminishing diagonal covariance: Varying the square root of the top eigenvalue: $\sigma$}
\label{fig:vary_var_non_sp}
\end{figure}

\clearpage

\subsubsection{Unconstrained Covariance}
\label{sec:corrsph}

So far, we have only examined inlier data with diagonal covariance matrices. However, in line with the intuition that there is nothing inherently special about the standard orthonormal basis, we hope for a robust estimator to work well regardless of the choice of coordinate axis. Since the covariance matrix is always symmetric, it is also diagonalizable by taking the eigenvectors as the orthonormal basis. Then, any possible data distribution over unconstrained covariance can be framed as a data distribution over a diagonal matrix by using these eigenvectors as the orthonormal basis. As a result, any robust estimator that does not leverage the standard orthonormal basis should perform equally well on unconstrained covariance. However, this does not necessarily hold for the estimators that we examine. We employ a trace estimate to adapt \evln and \queln to the unknown covariance case. \coordmed, \coordprune, and \medmean all directly utilize coordinate wise calculations. \lrv utilizes a trace estimate when downweighting points. In this section, we evaluate the performance of robust mean estimators over data with non-diagonal covariance matrices.

\paragraph{Rotated Data Noise}

Because the covariance matrix is always symmetric, it is diagonalizable, and experiments over unconstrained covariance can be framed as an ablation on noise distributions over inliers with diagonal covariances. We reuse data and noise distributions, but randomly rotate everything before estimation, resulting in unconstrained true covariances and appropriately difficult noise distributions. Random rotation is implemented by generating a standard normal matrix and utilizing its QR decomposition. 

We examine the performance on Rotated Identity Covariance with DKK Noise in Figure \ref{fig:rotate_id_dkk}; Rotated Identity Covariance with Subtractive Noise in Figure \ref{fig:rotate_id_sub}; Rotated Large Spherical Covariance with Additive Variance Shell Noise (with coordinate-wise standard deviation $\sigma=5$) in Figure \ref{fig:rotate_sp_gaus_one}; and Rotated Large Diminishing Covariance with Additive Variance Shell Noise (with squareroot of the top eigenvalue $\sigma=5$) in Figure \ref{fig:rotate_nonsp_gaus_one}. As in the original experiments, we set the true mean, $\mu$, to be the all-fives vector. 

\begin{figure}[h]
    \centering
    \includegraphics[width=0.85\linewidth]{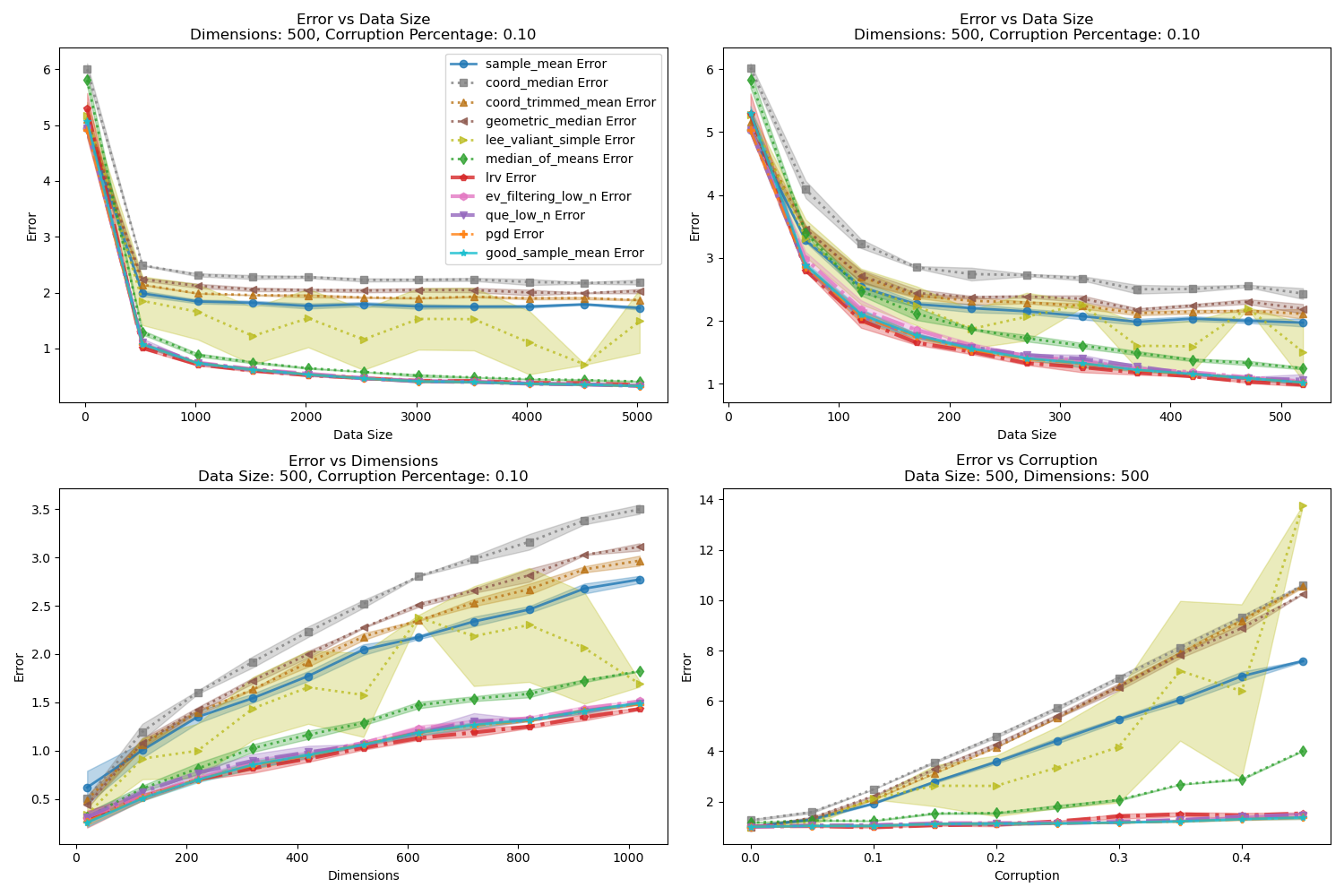}
    \caption{Corrupted Rotated Identity Covariance - DKK Noise}
    \label{fig:rotate_id_dkk}
\end{figure}

\begin{figure}[h]
    \centering
    \includegraphics[width=0.85\linewidth]{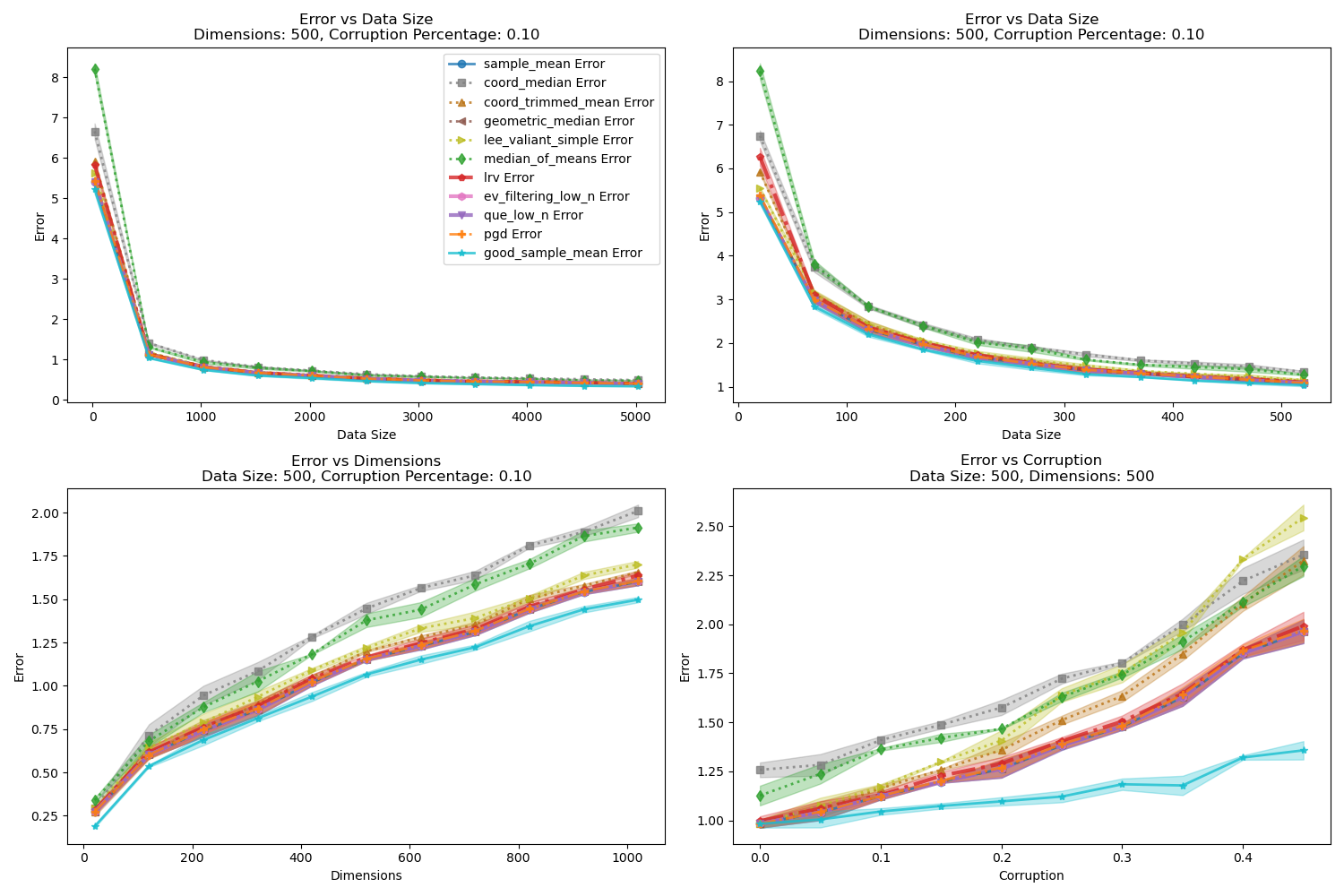}
    \caption{Corrupted Rotated Identity Covariance - Subtractive Noise}
    \label{fig:rotate_id_sub}
\end{figure}

\begin{figure}[h]
    \centering
    \includegraphics[width=0.85\linewidth]{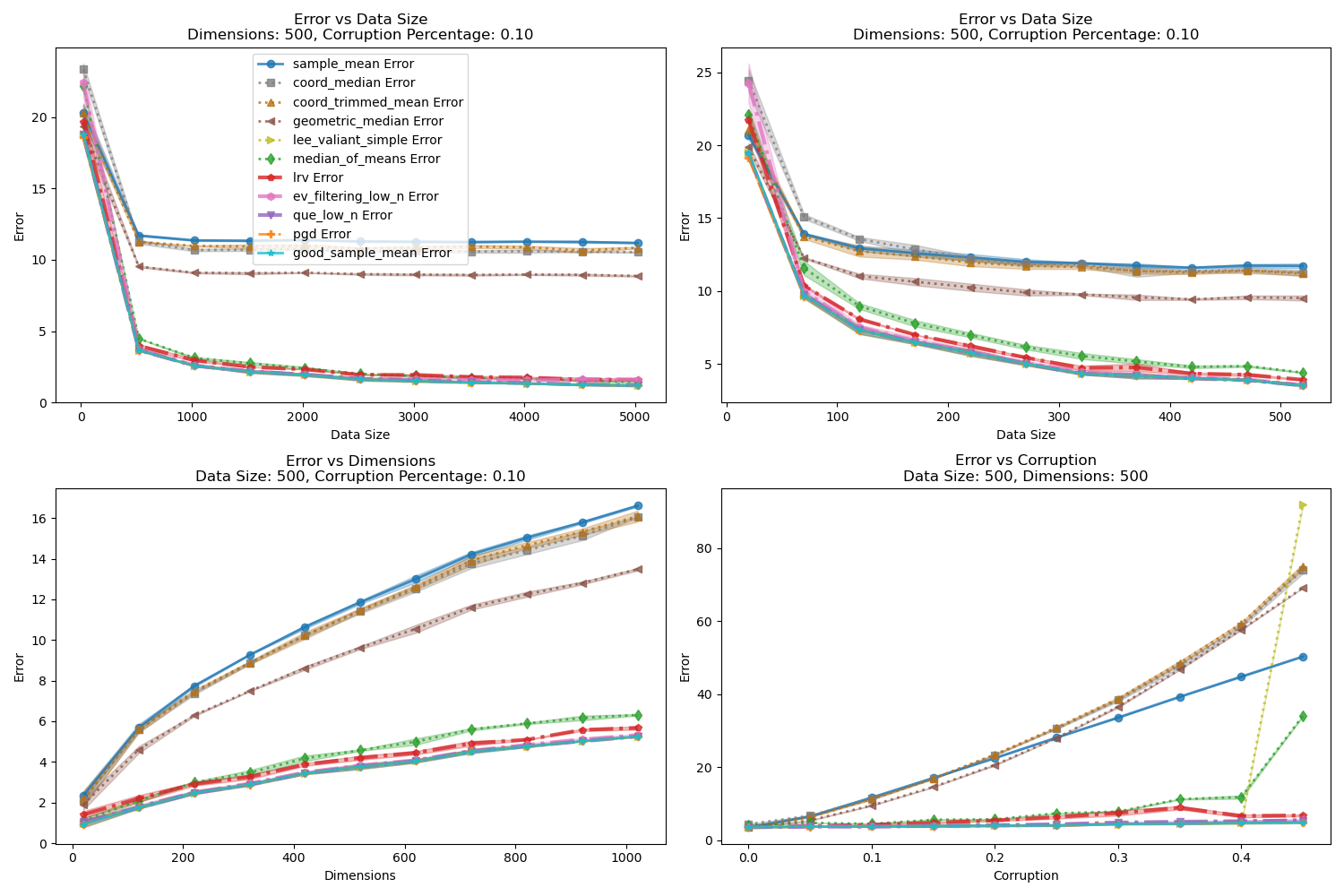}
    \caption{Corrupted Rotated Large Spherical Covariance - Additive Variance Shell Noise}
    \label{fig:rotate_sp_gaus_one}
\end{figure}

\begin{figure}[h]
    \centering
    \includegraphics[width=0.85\linewidth]{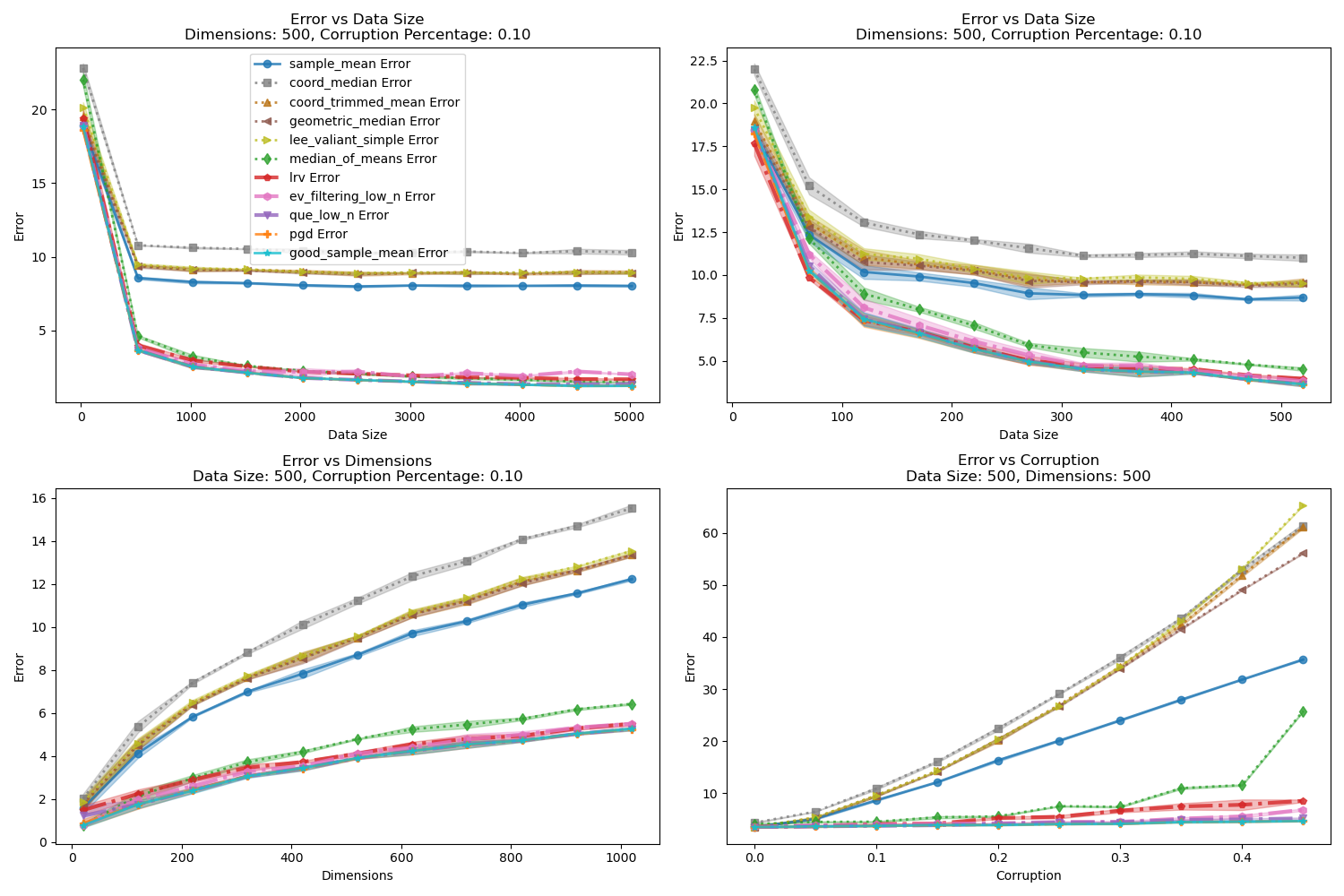}
    \caption{Corrupted Rotated Large Diminishing Covariance - Additive Variance Shell Noise}
    \label{fig:rotate_nonsp_gaus_one}
\end{figure}

We find nearly identical results among the best estimators to the corresponding non-rotated data experiment. While many of these algorithms induce a bias to the coordinate axis, they are not enough to significantly skew results in the distributions that we examine. There is some variation between \coordmed and \coordprune with the corresponding non-rotated data experiments, but no major changes in their trends. There is no such variation for \queln, \evln, or \medmean among the settings that we test.

\clearpage

\subsection{Hyperparameter Tuning}
\label{app:hp_tuning}
In this section we tune the hyperparameters of some of the most interesting algorithms, \medmean, \lrv, \evln, and \pgd. We utilize the best hyperparameters in this section across all other experiments. In general, we find that none of these algorithms are overly sensitive to choices of hyperparamaters, as long as they lay within a reasonable range.

We evaluate performance over a subset of 4 corrupted data distributions previously discussed: Identity Covariance with DKK Noise; Identity Covariance with In Distribution Noise; Large Spherical Covariance with Variance Shell Additive Noise; Large Diminishing Covariance with Variance Shell Additive Noise. We manually pick the hyperparameters that achieve the best performance across these distributions, or when similar use default ones from the corresponding paper.  

\paragraph{Median of how many means?}

Here we explore the parameter $k$ in \medmean algorithm. This parameter controls the number of chunks that we split the data into; then we take the median of $k$ means determined by these chunks. We vary $k$ in the set $[3, 5, 10, 15, 20, 30]$. For the case where $n < k$, we simply set $k=n$. These results are shown in Figures \ref{fig:med_mean_dkk}, \ref{fig:med_mean_unif_top}, \ref{fig:med_mean_sp_gaus_one}, \ref{fig:med_mean_nonsp_gaus_one}. We find that although there is not always an obvious choice for $k$, that $k=10$ tends to perform well throughout most settings. However, we find that this and larger choices of $k$ are more prone to error as $\eta$ increases than smaller choices of $k$. Approximately when $\eta>0.15$, $k=3$ becomes the best choice of $k$. However, with smaller corruption, such as $\eta=0.10$ which we generally test, $k=3$ performs notably worse, making $k=10$ a better choice. Since we utilize $\eta=0.1$ as a default value, we set $k=10$ throughout our experiments. 

\begin{figure}[h]
    \centering
    \includegraphics[width=0.85\linewidth]{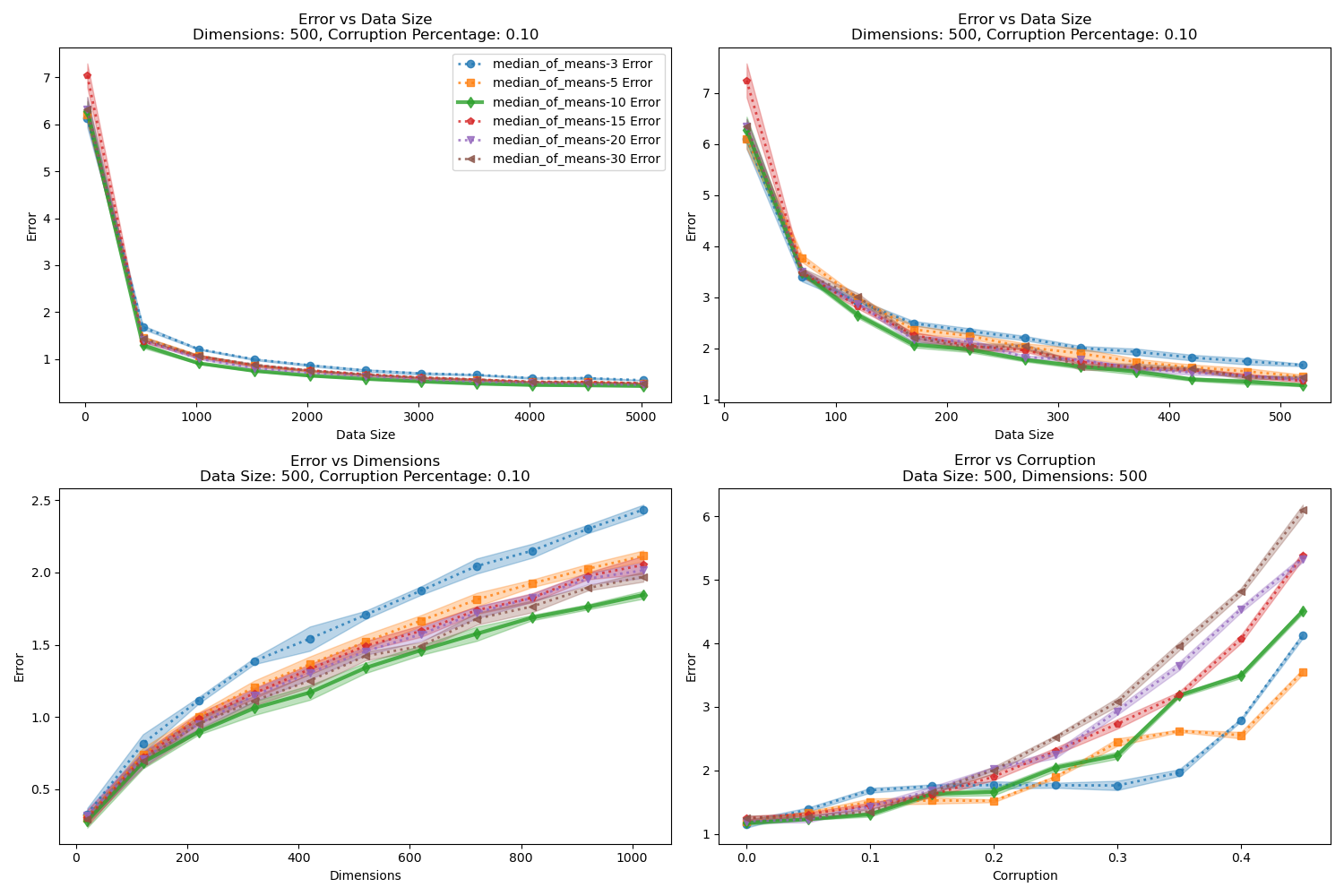}
    \caption{Median Of Means - Number of Chunks $k$: Identity Covariance, DKK Noise}
    \label{fig:med_mean_dkk}
\end{figure}

\begin{figure}[h]
    \centering
    \includegraphics[width=0.85\linewidth]{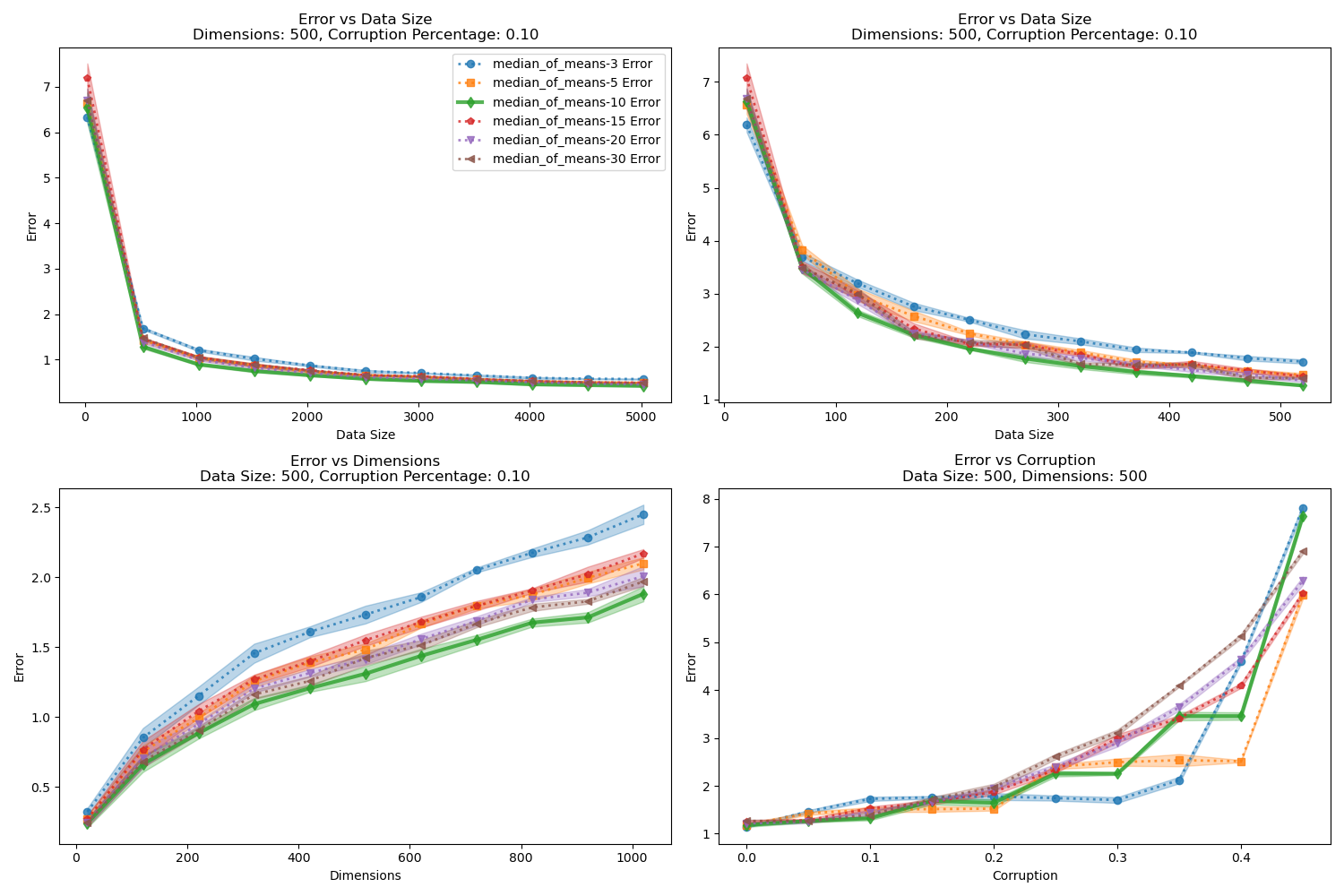}
    \caption{Median Of Means - Number of Chunks $k$: Identity Covariance, In Distribution Noise}
    \label{fig:med_mean_unif_top}
\end{figure}

\begin{figure}[h]
    \centering
    \includegraphics[width=0.85\linewidth]{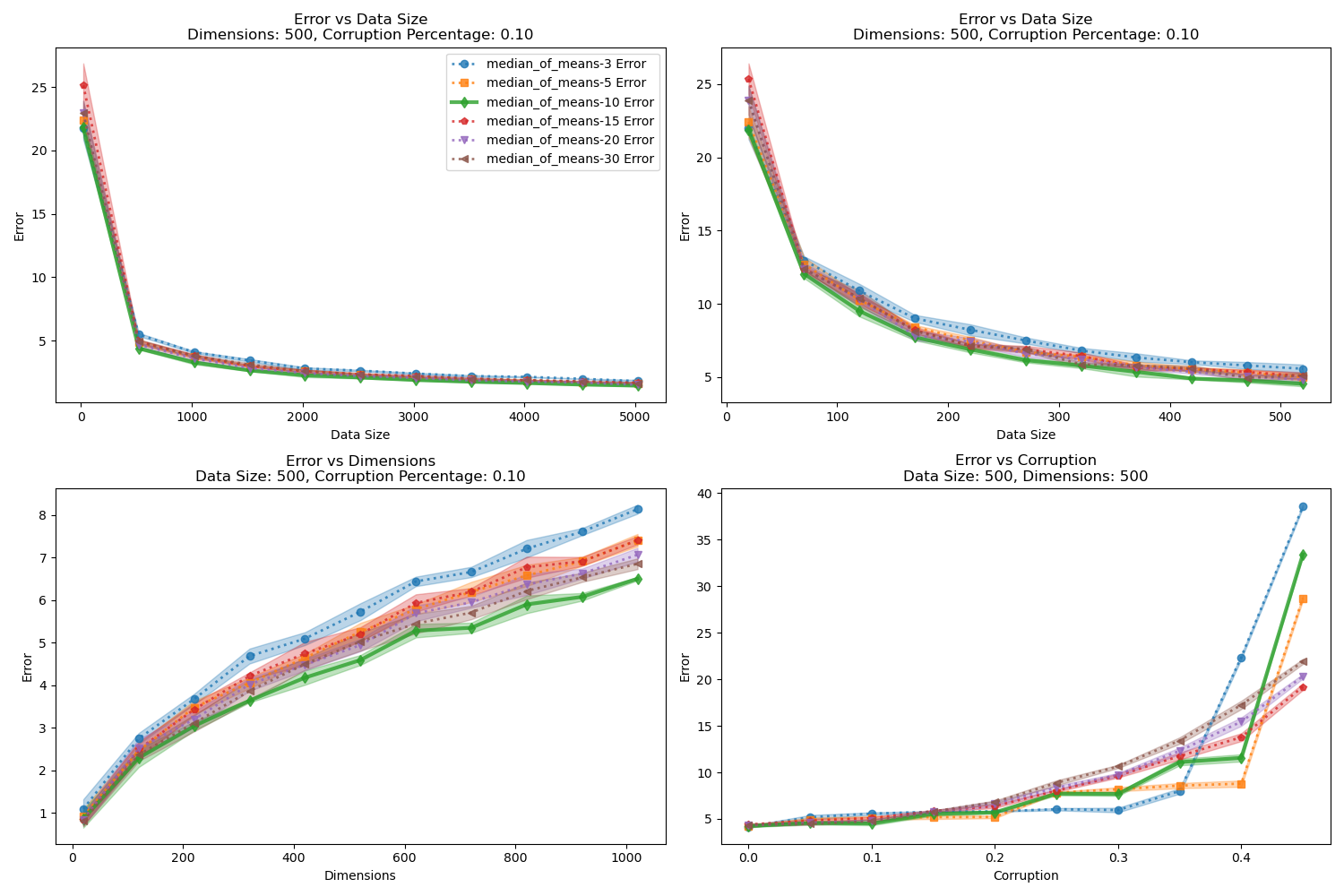}
    \caption{Median Of Means - Number of Chunks $k$: Large Spherical Covariance, Additive Variance Shell Noise}
    \label{fig:med_mean_sp_gaus_one}
\end{figure}

\begin{figure}[h]
    \centering
    \includegraphics[width=0.85\linewidth]{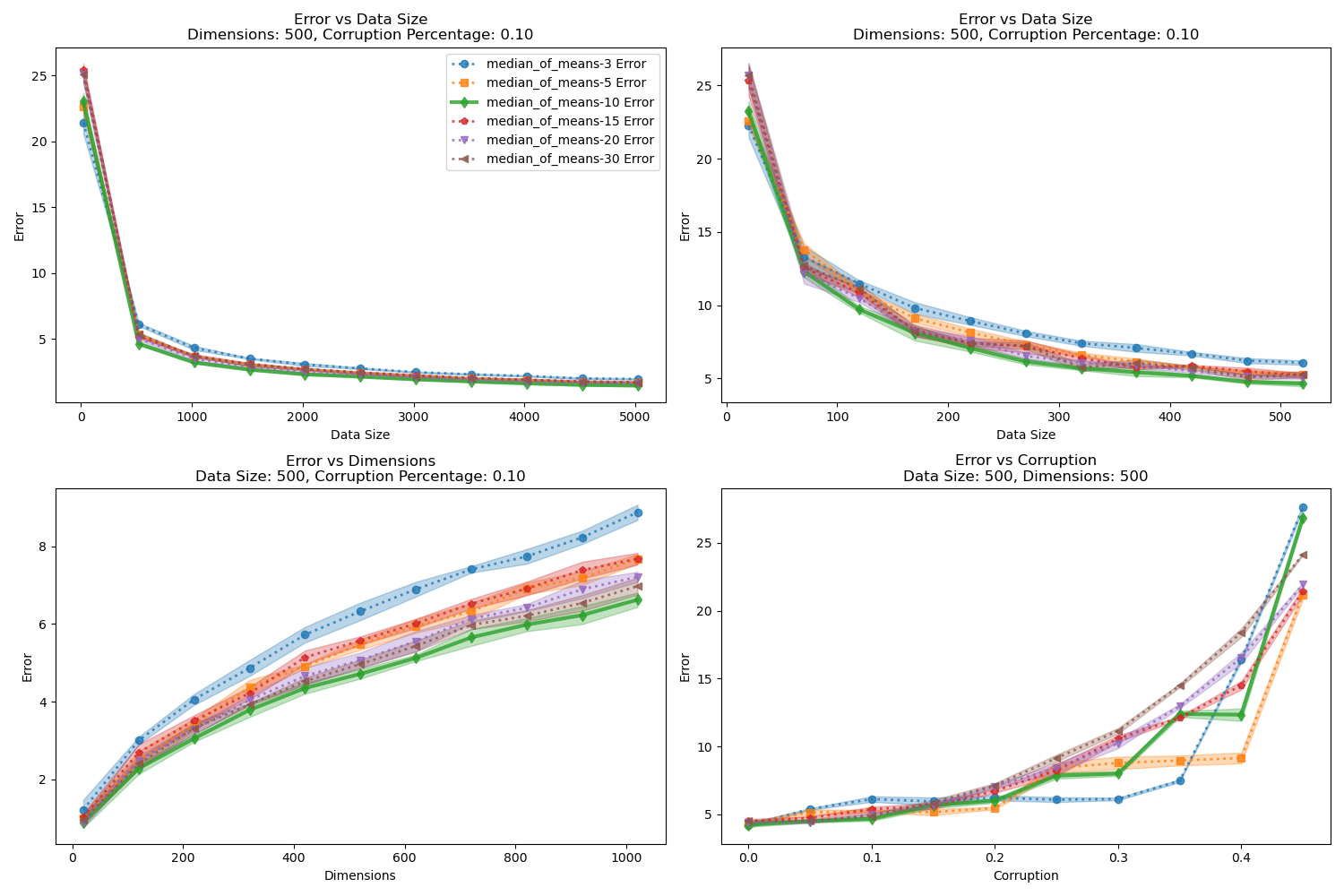}
    \caption{Median Of Means - Number of Chunks $k$: Large Diminishing Covariance, Additive Variance Shell Noise}
    \label{fig:med_mean_nonsp_gaus_one}
\end{figure}

\clearpage

\paragraph{LRV Weighting Procedure}

 Here we explore the weighting procedure in \lrv. First, we examine the parameter $C$ in the weighting procedure for \lrv. This parameter is used when we calculate weights for each point, $x_i$ as $w_i = \exp(-\|x_i - a\|^2/(C * s^2))$. 

We vary $C$ in the set $[0.1, 0.5, 1, 5, 10, 20, 50]$. Results are shown in Figures \ref{fig:lrvc_id_dkk}, \ref{fig:lrvc_id_unif_top}, \ref{fig:lrvc_sp_gaus_one}, \ref{fig:lrvc_nonsp_gaus_one}. We notice that performance may degrade with choices of $C$ that are too high or too low, such as with $C=0.5$ and $50$. We also notice that smaller choices of $C$ tend to degrade worse with greater corruption. To strike a balance, we select $C=1$ throughout our experiments, which consistently performs among the best throughout the hyperparameter trials that we test, and is the default value used the original author's implementation of \lrv. Although there are cases where larger choices of $C$ noticeably outperform $C=1$, this is not robust as such choices may perform meaningfully worse over different noise distributions. For example, $C=20$ noticeably outperforms $C=1$ over Large Diminishing Covariance with Additive Variance Shell Noise, especially with larger $\eta$, but performs much worse over Identity Covariance with DKK Noise and Large Spherical Covariance with Additive Variance Shell Noise.

\begin{figure}[h]
    \centering
    \includegraphics[width=0.85\linewidth]{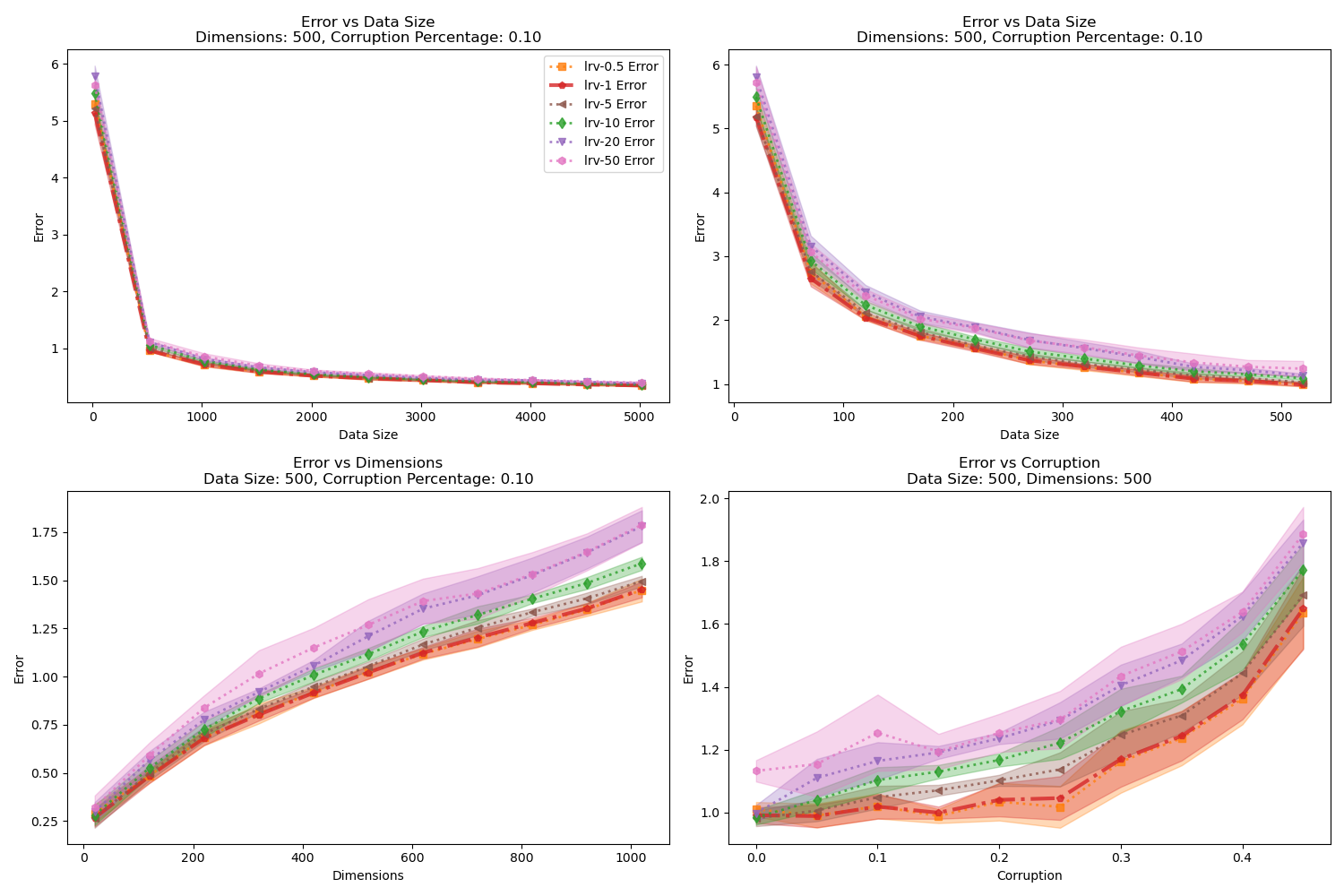}
    \caption{LRV - Choice Of $C$: Identity Covariance, DKK Noise}
    \label{fig:lrvc_id_dkk}
\end{figure}

\begin{figure}[h]
    \centering
    \includegraphics[width=0.85\linewidth]{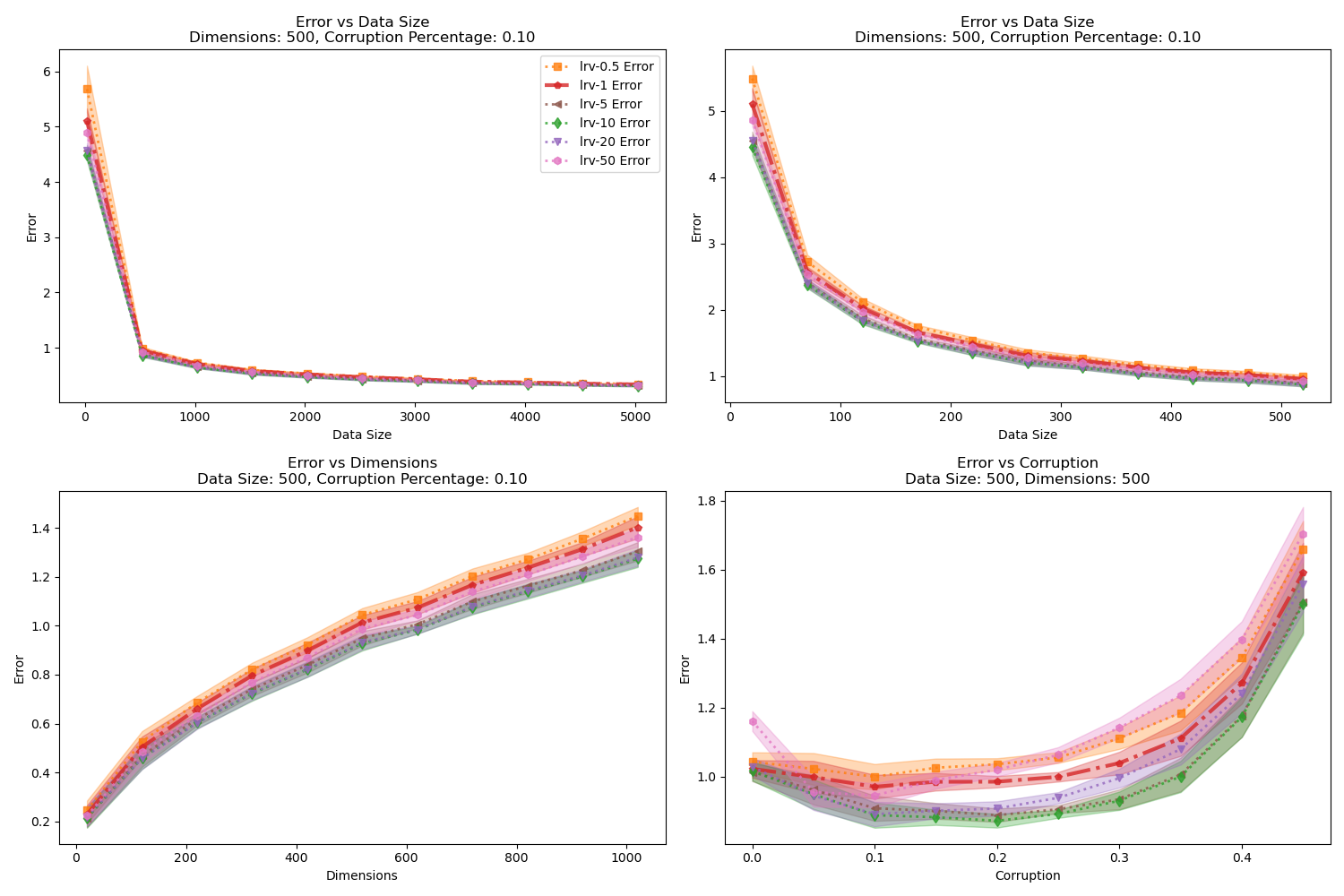}
    \caption{LRV - Choice Of $C$: Identity Covariance, In Distribution Noise}
    \label{fig:lrvc_id_unif_top}
\end{figure}

\begin{figure}[h]
    \centering
    \includegraphics[width=0.85\linewidth]{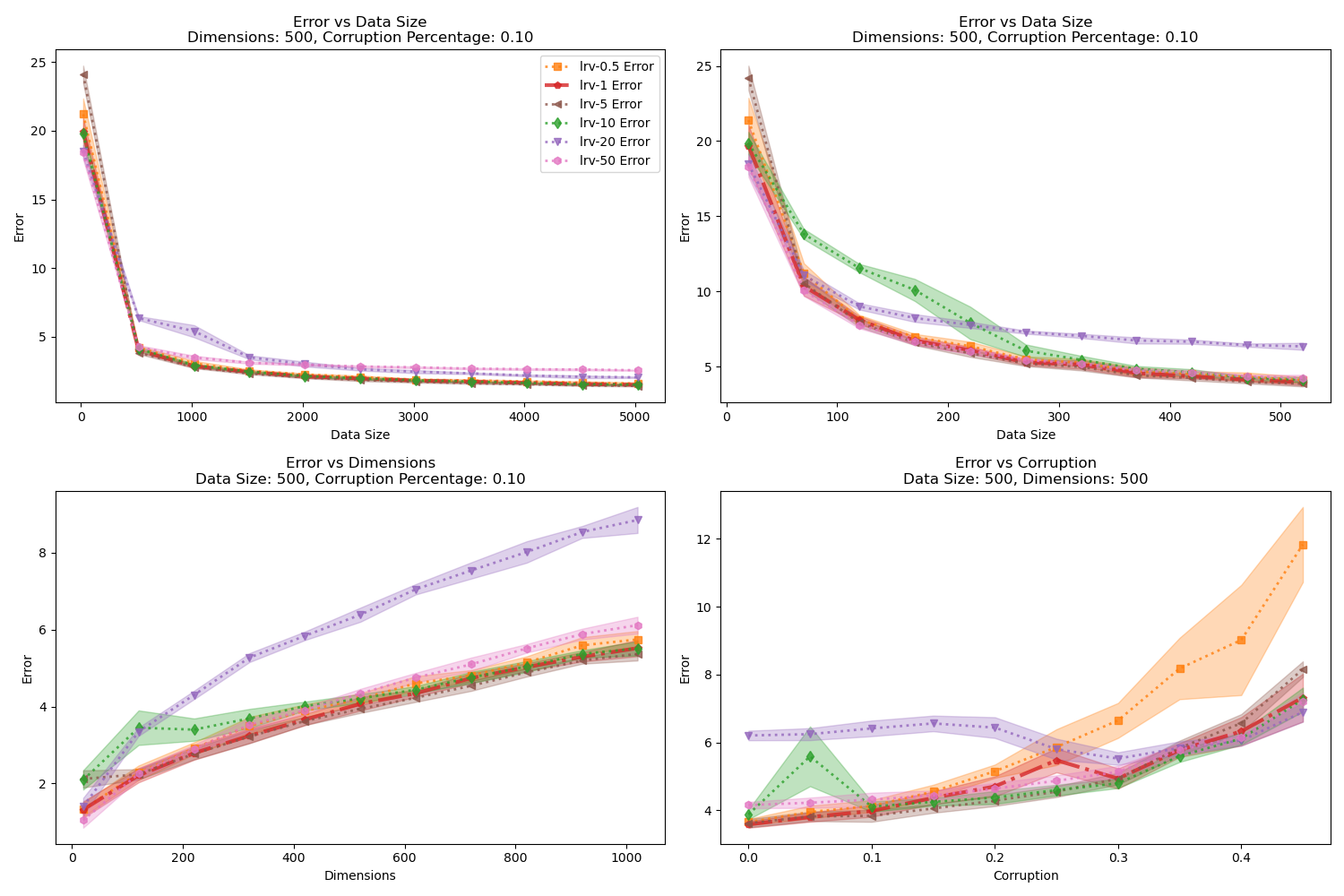}
    \caption{LRV - Choice Of $C$: Large Spherical Covariance, Additive Variance Shell Noise}
    \label{fig:lrvc_sp_gaus_one}
\end{figure}

\begin{figure}[h]
    \centering
    \includegraphics[width=0.85\linewidth]{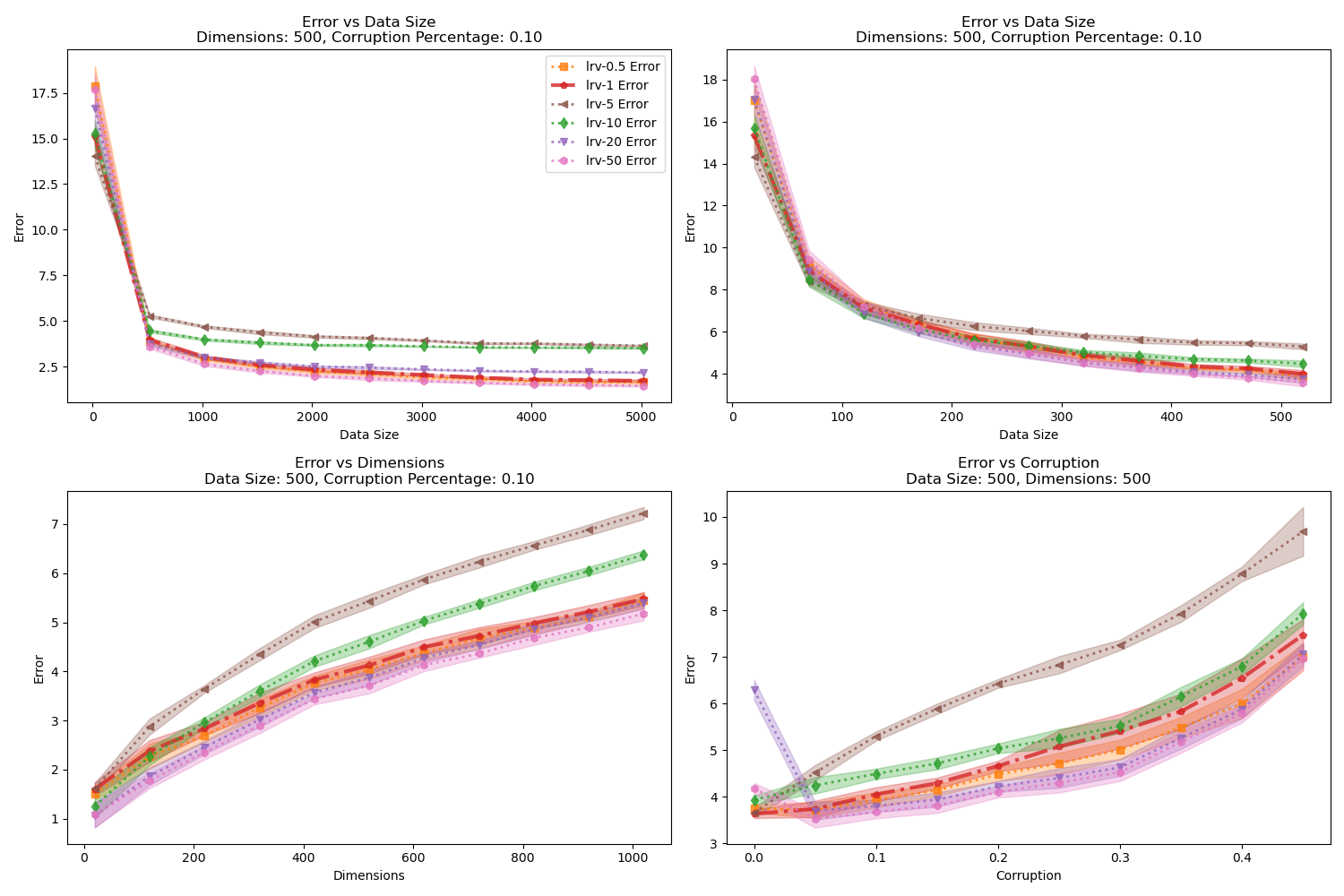}
    \caption{LRV - Choice Of $C$: Large Diminishing Covariance, Additive Variance Shell Noise}
    \label{fig:lrvc_nonsp_gaus_one}
\end{figure}

\clearpage

We additionally compare the weighting procedure of \lrv that we consider with one meant for more general distributions discussed by \cite{lai2016agnostic}. Rather than downweighting outliers, this alternate procedure completely prunes outliers by calculating a point, $\mu'$, analogous to the coordinate wise median, finding a ball centered at $\mu'$ that contains $1-\tau$ percentage of data points, and throwing away all points outside of this ball. Results are shown in Figures \ref{fig:lrvg_dkk}, \ref{fig:lrvg_unif_top}, \ref{fig:lrvg_sp_gaus_one}, \ref{fig:lrvg_nonsp_gaus_one}. This general weighting procedure performs meaningfully worse than the Gaussian weighting procedure and degrades significantly worse with larger corruption across all of the distributions considered. However, it achieves similar results as data size increases. Since we focus on the low data size regime and synthetic data with Gaussian inliers, we only evaluate \lrv with Gaussian-based outlier downweighting.

\begin{figure}[h]
    \centering
    \includegraphics[width=0.85\linewidth]{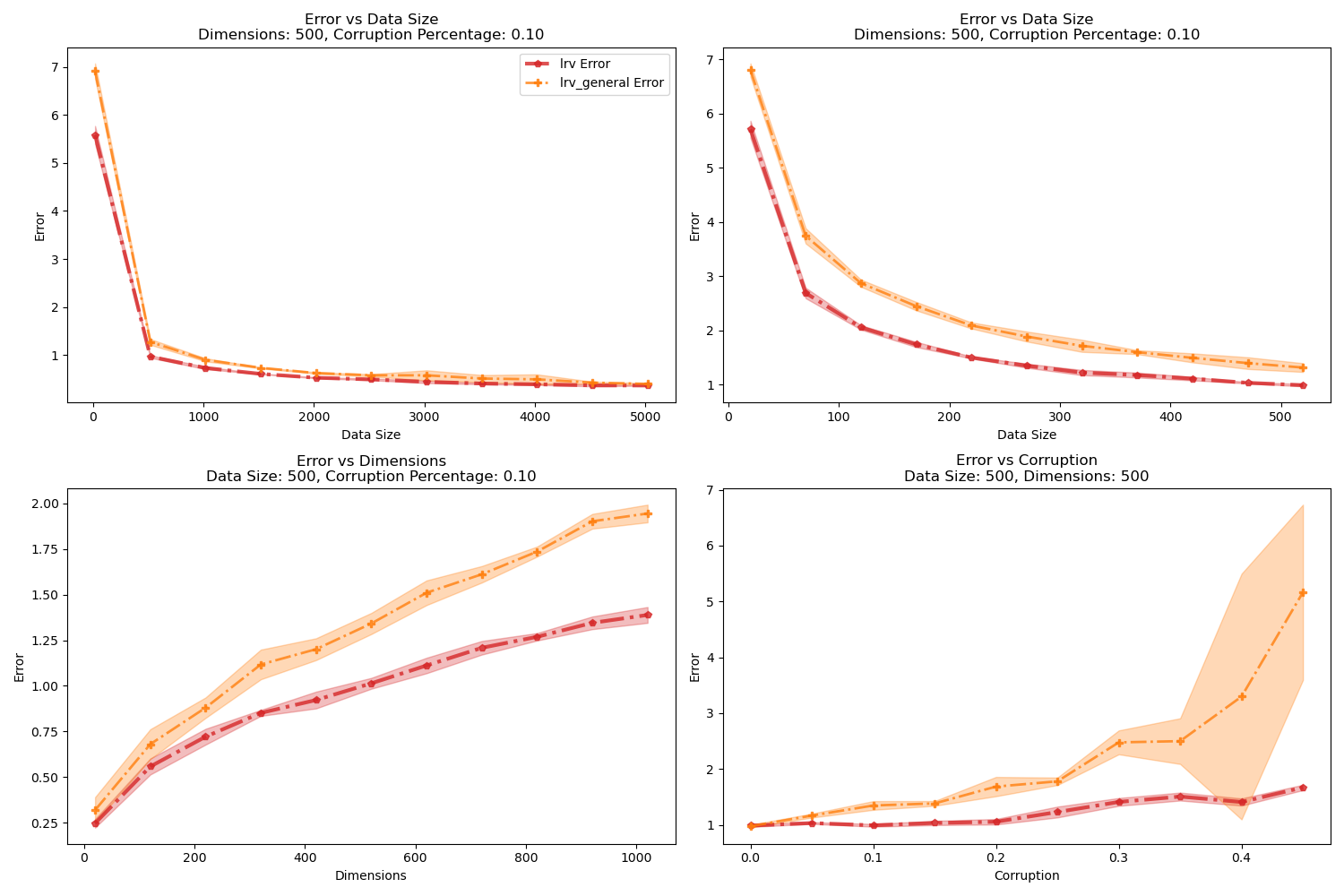}
    \caption{LRV - Gaussian Vs General Weighting: Identity Covariance, DKK Noise}
    \label{fig:lrvg_dkk}
\end{figure}

\begin{figure}[h]
    \centering
    \includegraphics[width=0.85\linewidth]{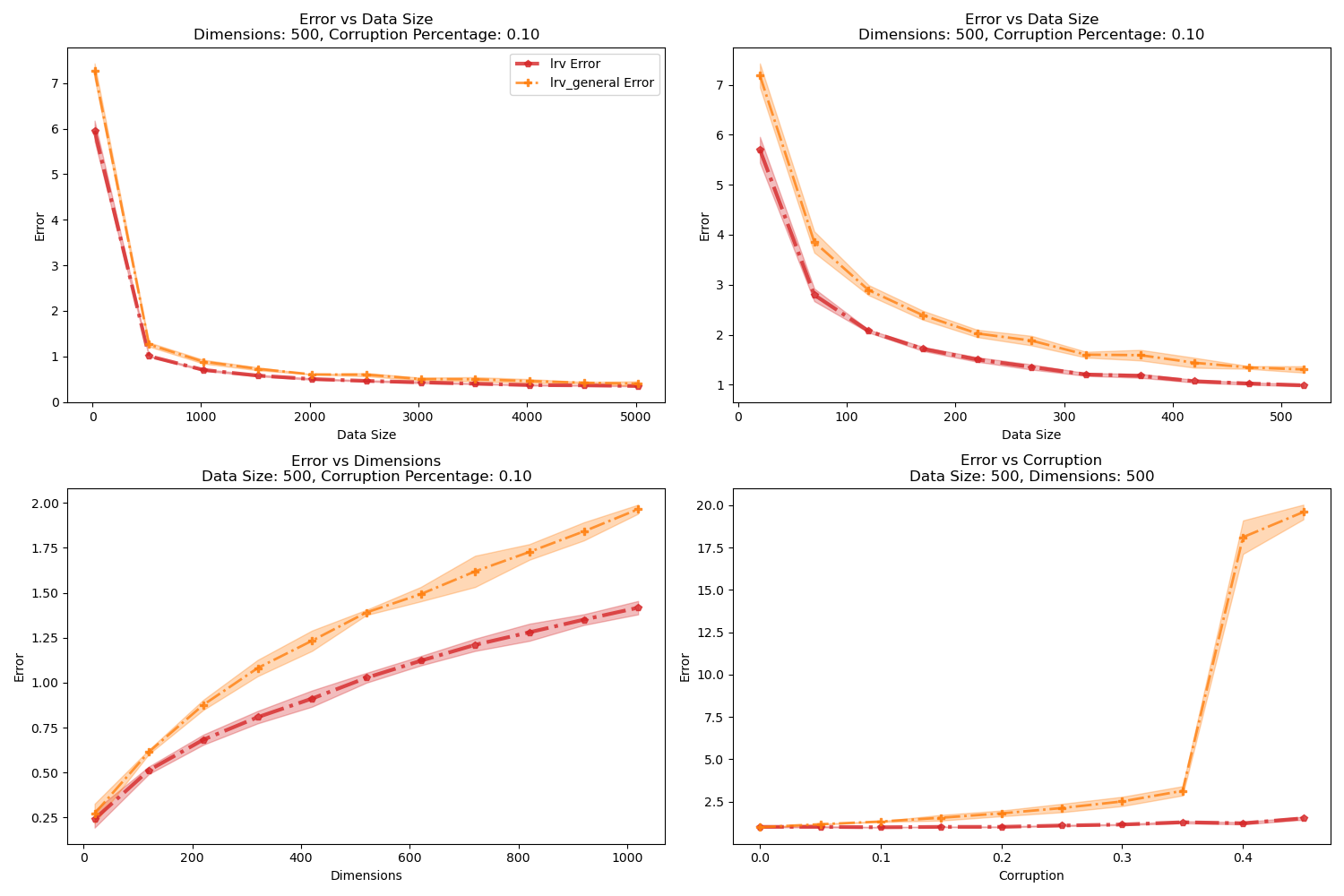}
    \caption{LRV - Gaussian Vs General Weighting: Identity Covariance, In Distribution Noise}
    \label{fig:lrvg_unif_top}
\end{figure}

\begin{figure}[h]
    \centering
    \includegraphics[width=0.85\linewidth]{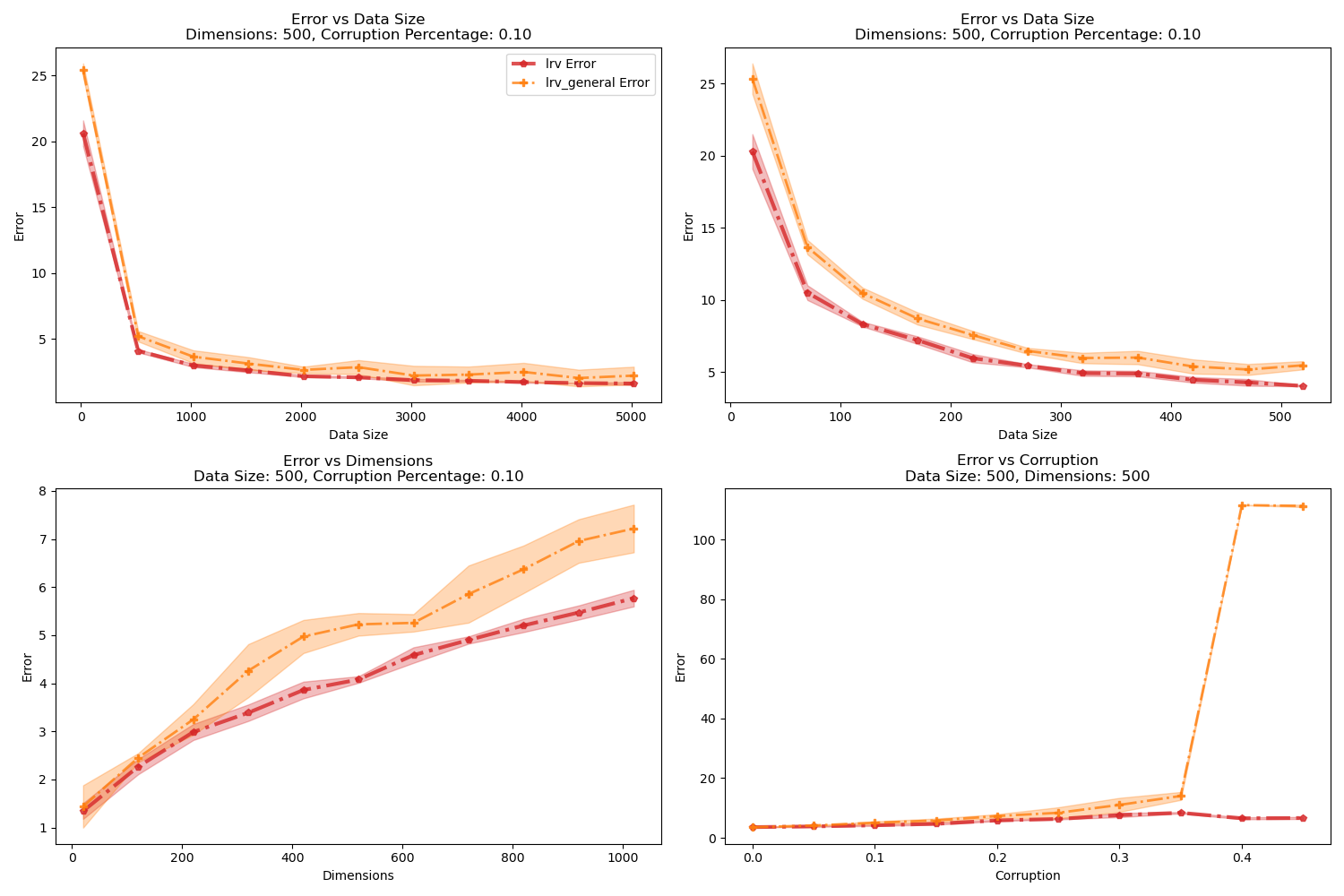}
    \caption{LRV - Gaussian Vs General Weighting: Large Spherical Covariance, Additive Variance Shell Noise}
    \label{fig:lrvg_sp_gaus_one}
\end{figure}

\begin{figure}[h]
    \centering
    \includegraphics[width=0.85\linewidth]{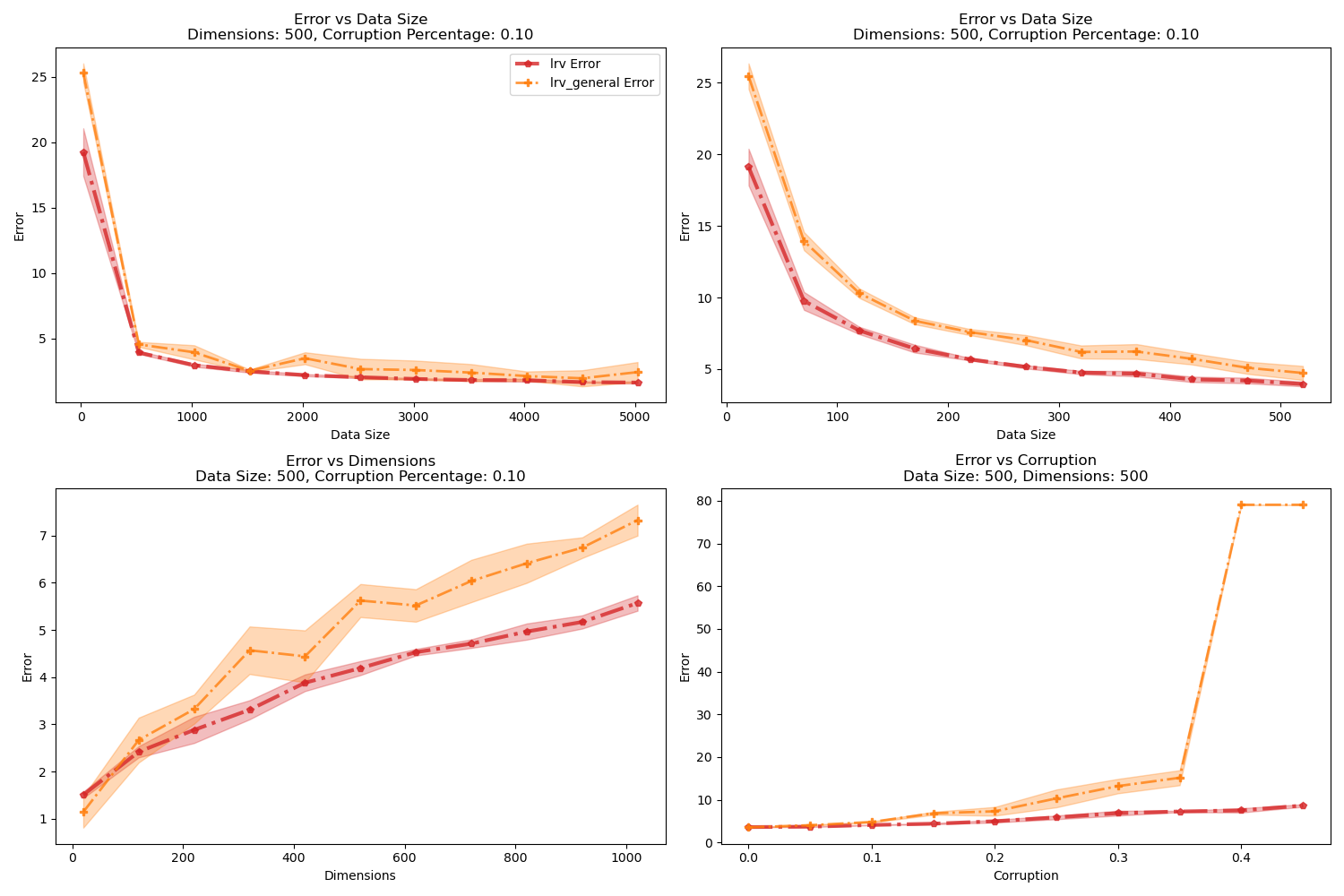}
    \caption{LRV - Gaussian Vs General Weighting: Large Diminishing Covariance, Additive Variance Shell Noise}
    \label{fig:lrvg_nonsp_gaus_one}
\end{figure}

\clearpage

\paragraph{Eigenvalue Pruning Tail Threshold}

Here we explore the pruning routine in \evln. First, we examine the parameter $\gamma$ in the pruning threshold for \evln. $\gamma$ weights the expectation that the Gaussian concentration inequality gives for how many points will surpass a certain value; larger values correspond to less aggressive pruning. We vary $\gamma$ in the set $[0.5, 1, 2.5, 5, 10, 20, 50]$. Results are shown in Figures \ref{fig:ev_id_dkk}, \ref{fig:ev_id_unif_top}, \ref{fig:ev_sp_gaus_one}, \ref{fig:ev_nonsp_gaus_one}. We find that using values of $\gamma$ that are too small result in significantly worse error. Setting $\gamma=0.5$ or $\gamma=1$ both achieve performance identical to \sample because the pruning threshold is too sensitive, performing significantly worse than all other choices of $\gamma$, as it determines all data to be outliers. We note that when all data is determined to be outliers, we simply return \sample. However, using reasonably sized $\gamma$ results in mostly similar performance across distributions. Notably, larger values of $\gamma$ tend to perform better over large diminishing covariance with additive variance shell noise, especially with larger $n$. We select $\gamma=5$ throughout our experiments.

\begin{figure}[h]
    \centering    \includegraphics[width=0.85\linewidth]{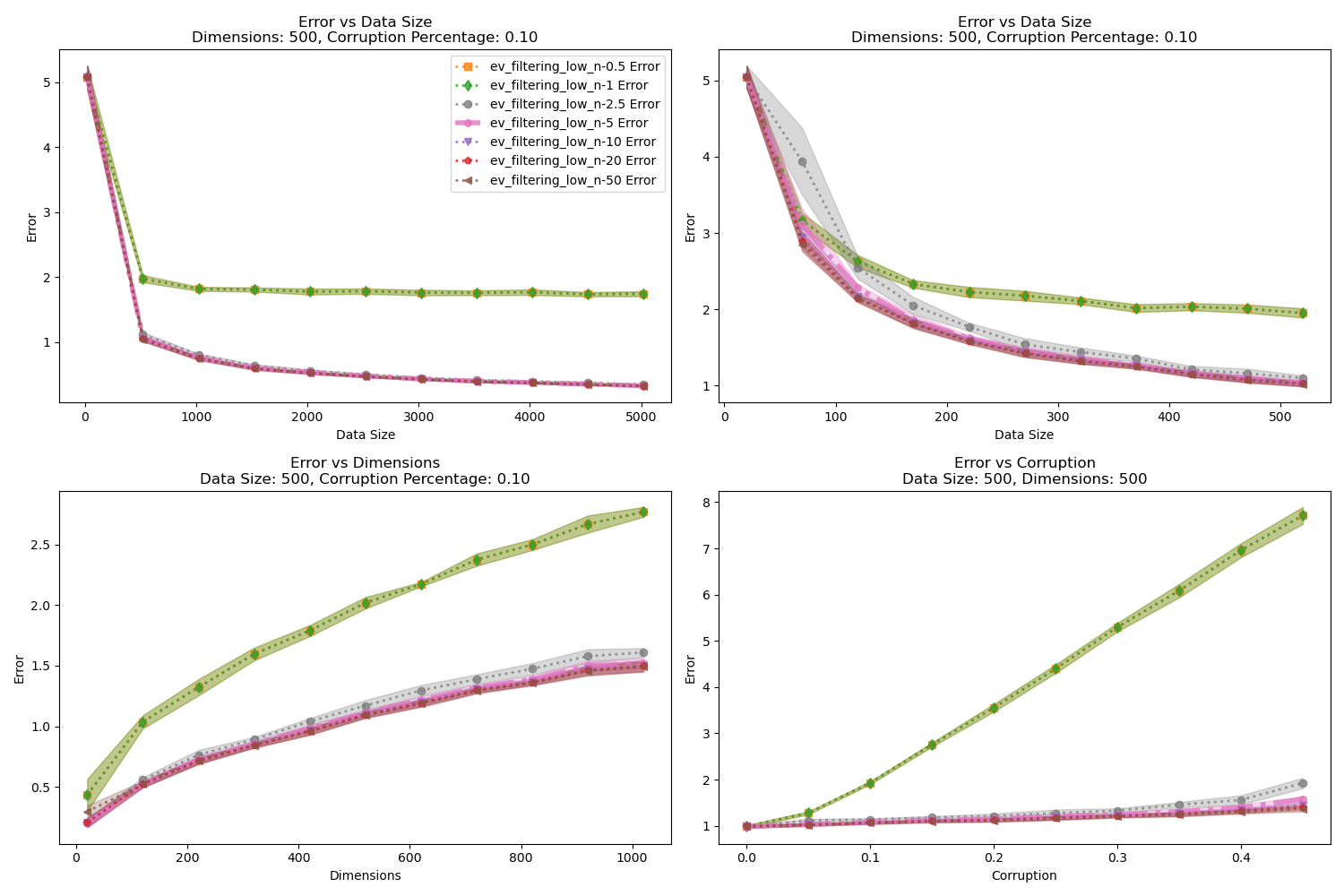}
    \caption{Eigenvalue Pruning - Choice Of $\gamma$: Identity Covariance, DKK Noise}
    \label{fig:ev_id_dkk}
\end{figure}

\begin{figure}[h]
    \centering
    \includegraphics[width=0.85\linewidth]{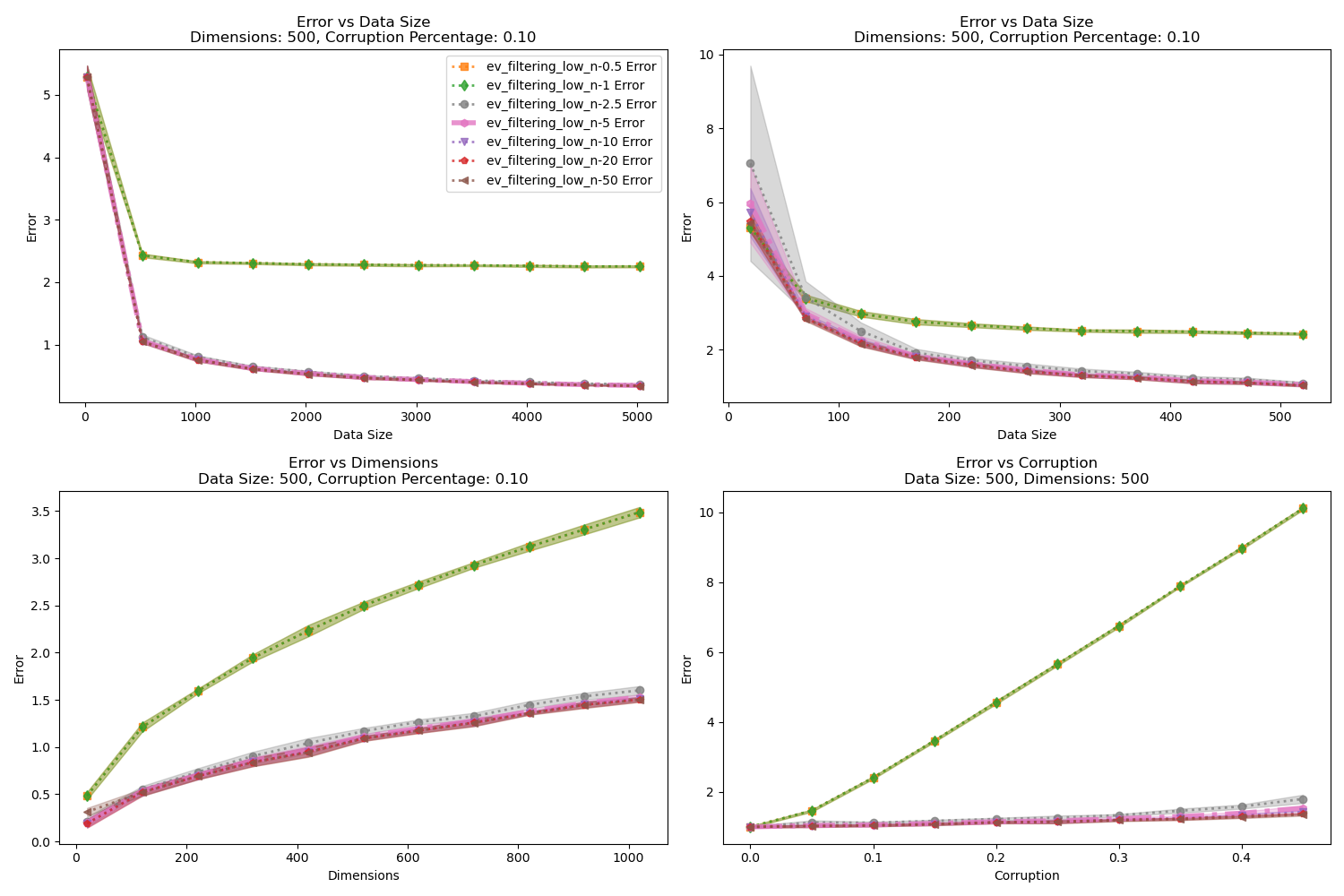}
    \caption{Eigenvalue Pruning - Choice Of $\gamma$: Identity Covariance, In Distribution Noise}
    \label{fig:ev_id_unif_top}
\end{figure}

\begin{figure}[h]
    \centering
    \includegraphics[width=0.85\linewidth]{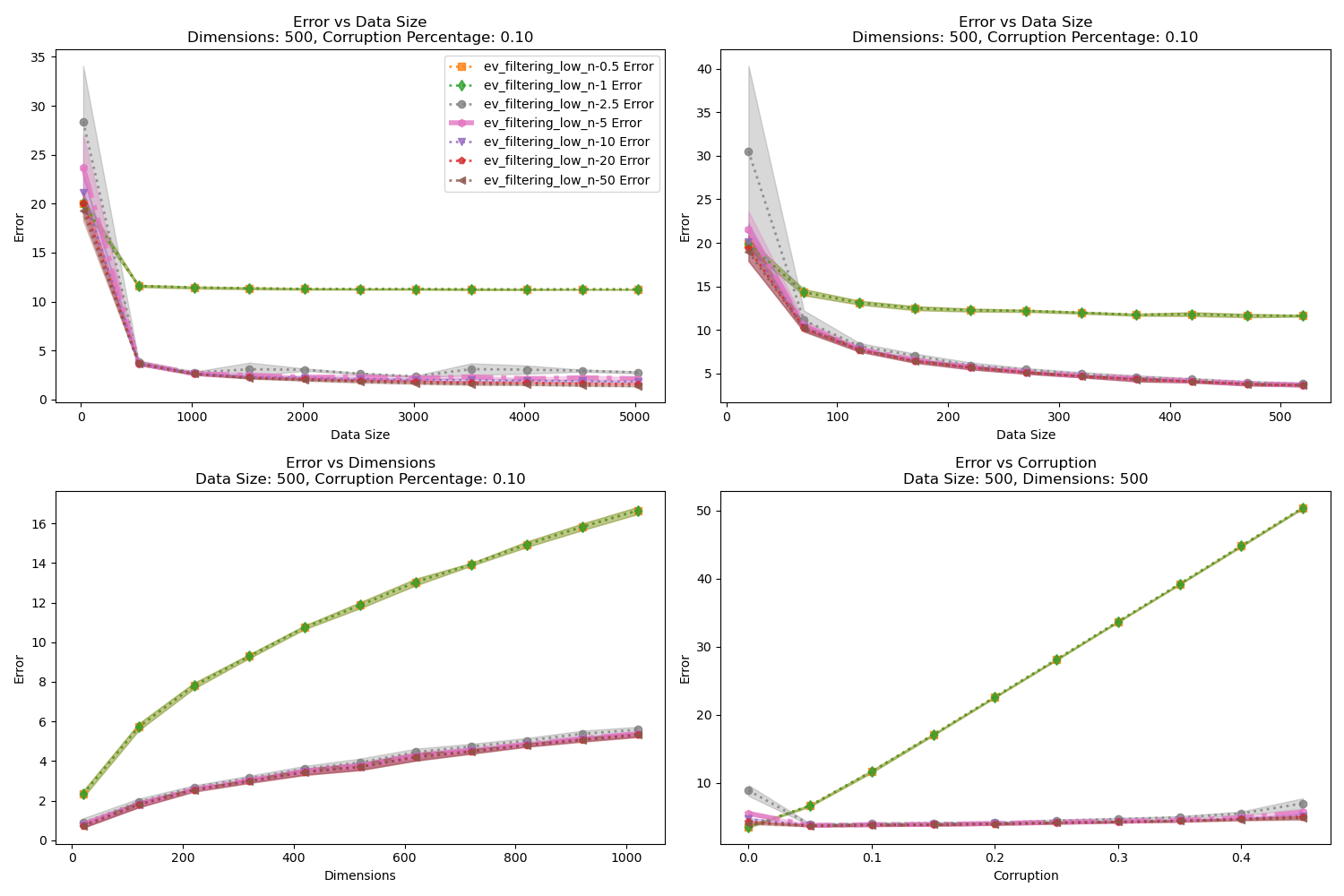}
    \caption{Eigenvalue Pruning - Choice Of $\gamma$: Large Spherical Covariance, Additive Variance Shell Noise}
    \label{fig:ev_sp_gaus_one}
\end{figure}

\begin{figure}[h]
    \centering
    \includegraphics[width=0.85\linewidth]{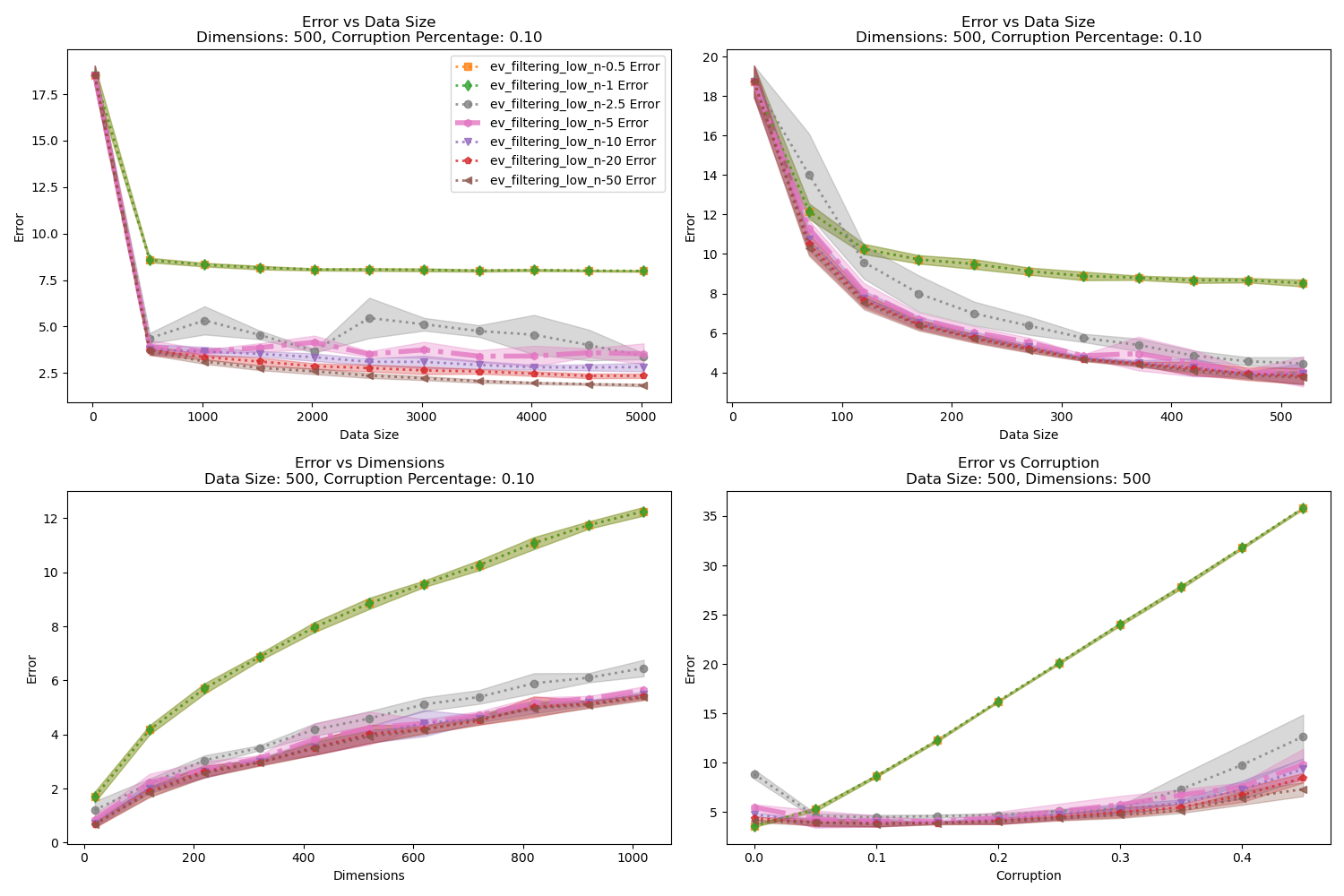}
    \caption{Eigenvalue Pruning - Choice Of $\gamma$: Large Diminishing Covariance, Additive Variance Shell Noise}
    \label{fig:ev_nonsp_gaus_one}
\end{figure}

\clearpage

We also explore \evln using two alternate pruning methods not explicitly based on the Gaussian assumption as the current one is: randomized pruning and fixed pruning,. Randomized Pruning removes points based on a random scaling of the largest deviation in the dataset. Define $T$ as the largest deviation of a point projected onto the top eigenvector from the median of the projected points. Draw $Z$ from the distribution on $[0, 1]$ with probability density function $2x$. Then, prune all points whose projected distance onto the top eigenvector is at least $TZ$. This randomized pruning method is derived from the mean estimation algorithm for unknown covariance distributions by \cite{diakonikolas2017being}.
Fixed Pruning simply prunes the $0.5\tau$ percentage of points whose projection onto the top eigenvector is furthest from the median of the projected points at every iteration. This is identical to the pruning method in \queln, with projected deviations being used as "outlier scores", instead of the quantum entropy scores used in \queln. Results are shown in Figures \ref{fig:ev_pruning_id_dkk}, \ref{fig:ev_pruning_id_unif_top}, \ref{fig:ev_pruning_sp_gaus_one}, \ref{fig:ev_pruning_nonsp_gaus_one}. We find that both randomized and fixed pruning are able to match or slightly outperform the standard Gaussian pruning method. However, we note that unlike in \queln, fixed pruning could potentially result in catastrophic error. In particular, if corruption is uniformly distributed across $O(d)$ orthogonal clusters, then \evln may take $O(d)$ runs to return an outlier, since it can only prune in one direction at once. But with fixed pruning, each iteration will prune too many outliers in each direction. We only evaluate Gaussian pruning to follow the conventions of \cite{diakonikolas2017being}. 

\begin{figure}[h]
    \centering
    \includegraphics[width=0.85\linewidth]{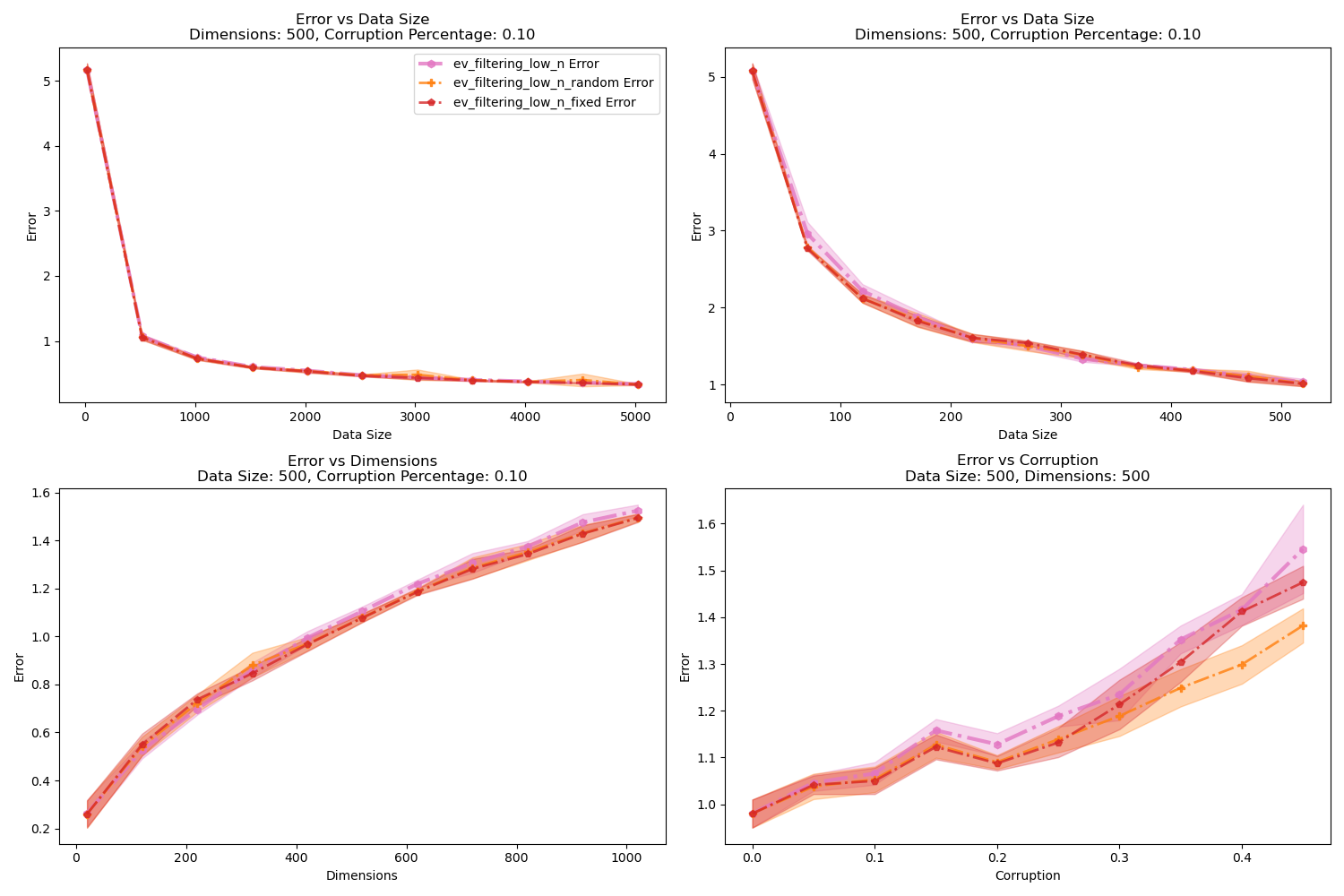}
    \caption{Eigenvalue Pruning - Pruning Method: Identity Covariance, DKK Noise}
    \label{fig:ev_pruning_id_dkk}
\end{figure}

\begin{figure}[h]
    \centering
    \includegraphics[width=0.85\linewidth]{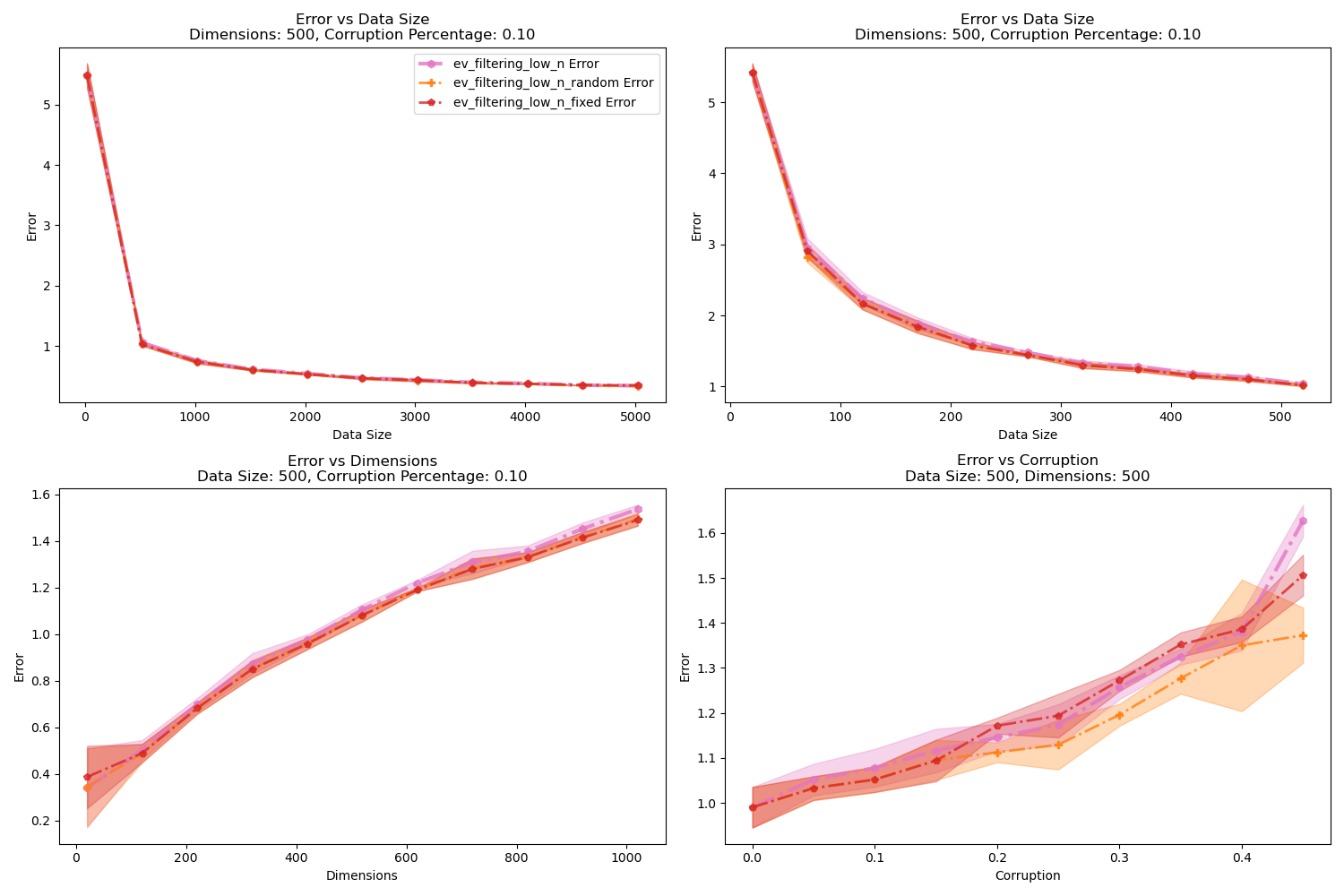}
    \caption{Eigenvalue Pruning - Pruning Method: Identity Covariance, In Distribution Noise}
    \label{fig:ev_pruning_id_unif_top}
\end{figure}

\begin{figure}[h]
    \centering
    \includegraphics[width=0.85\linewidth]{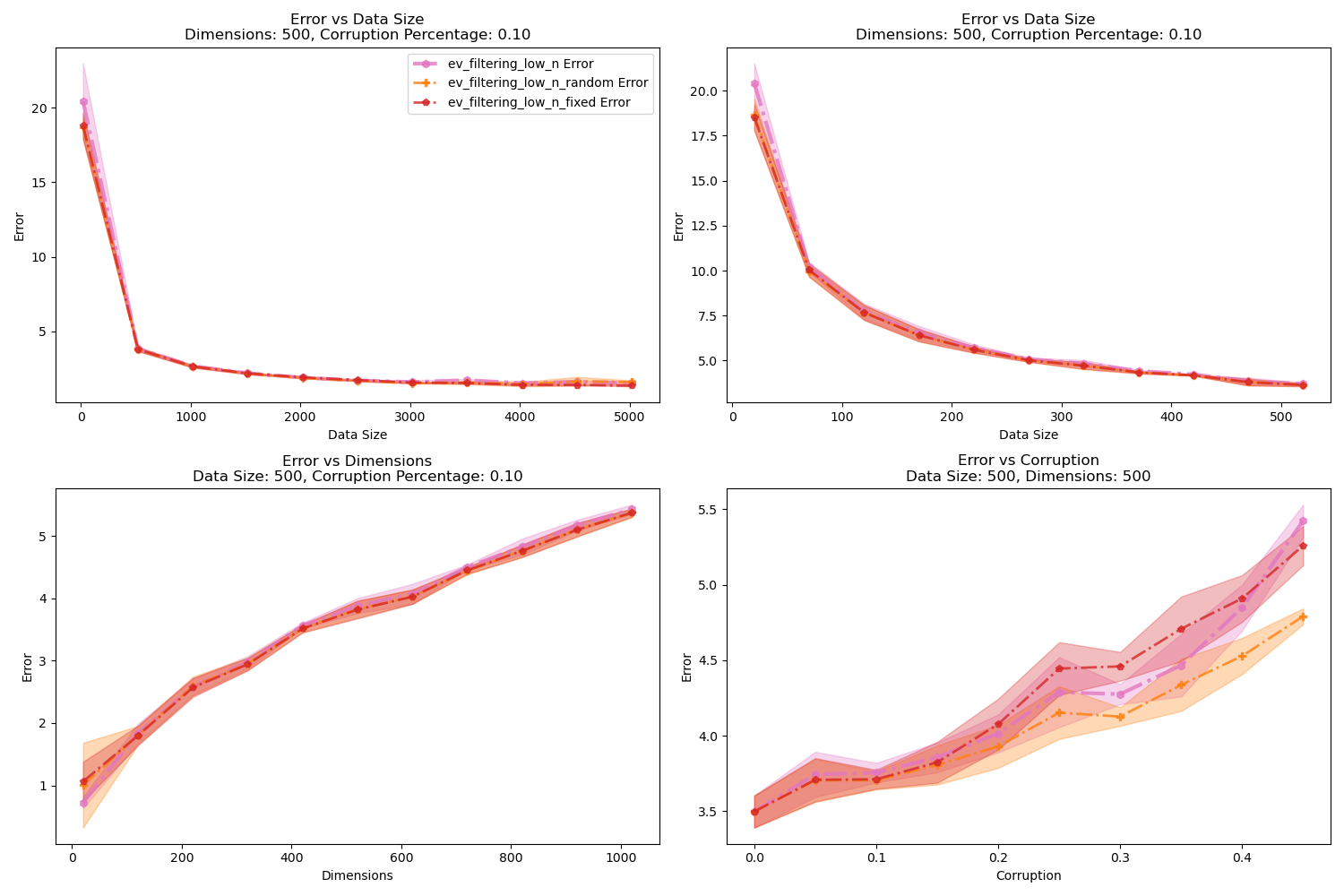}
    \caption{Eigenvalue Pruning - Pruning Method: Large Spherical Covariance, Additive Variance Shell Noise}
    \label{fig:ev_pruning_sp_gaus_one}
\end{figure}

\begin{figure}[h]
    \centering
    \includegraphics[width=0.85\linewidth]{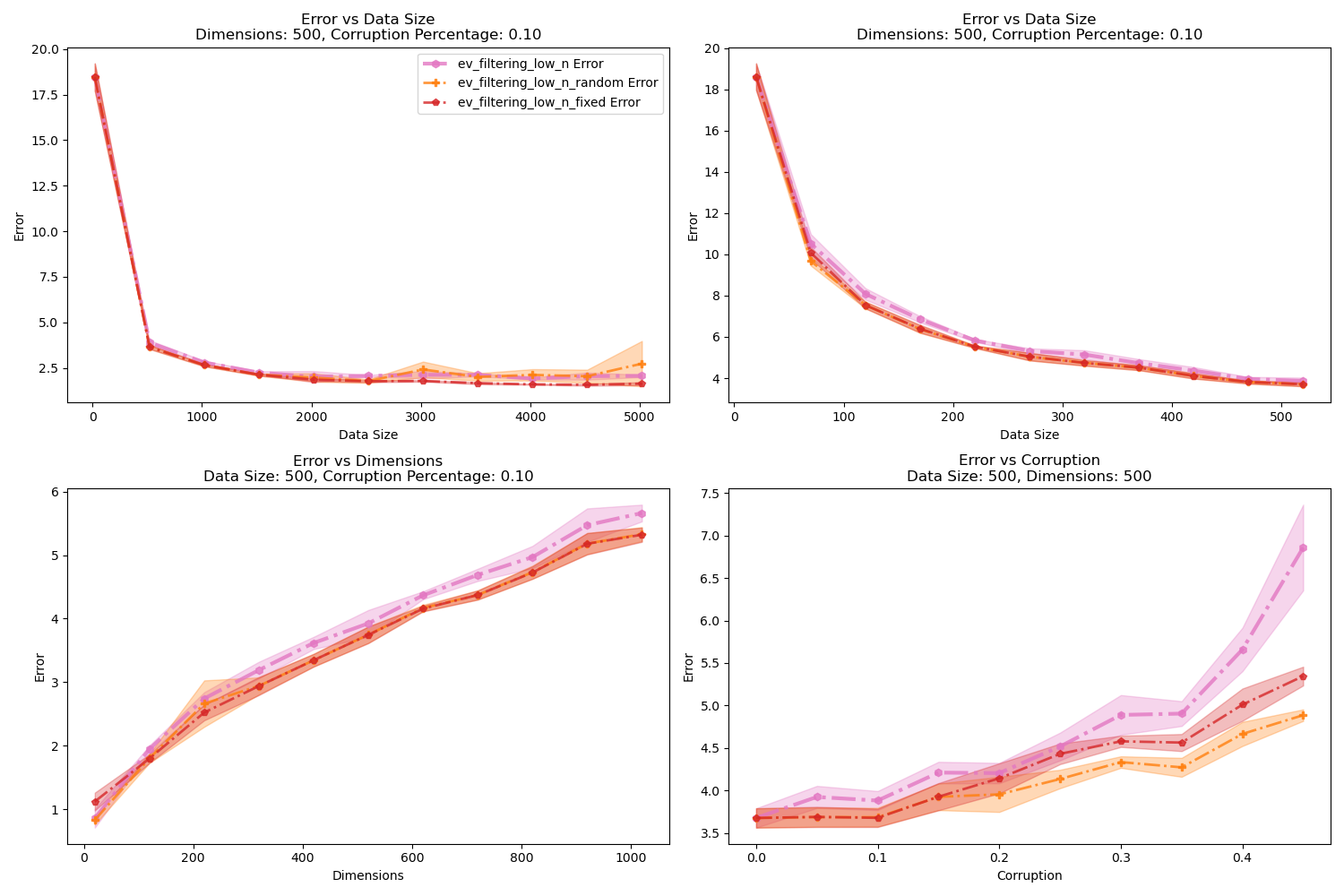}
    \caption{Eigenvalue Pruning - Pruning Method: Large Diminishing Covariance, Additive Variance Shell Noise}
    \label{fig:ev_pruning_nonsp_gaus_one}
\end{figure}

\clearpage

\paragraph{Projected Gradient Descent Iterations}

 Here we explore the number of iterations parameter, $\gamma$ for \pgd. We choose values for $\gamma$ in the set $[1, 5, 10, 15, 20]$. Results are shown in Figures \ref{fig:pgd_id_dkk}, \ref{fig:pgd_id_unif_top}, \ref{fig:pgd_sp_gaus_one}, \ref{fig:pgd_nonsp_gaus_one}. We find that low choices of $\gamma$ result in significantly worse performance, while higher choices perform roughly equally. Notably, when $\gamma$ is set equal to $10$, \pgd performs much worse over Identity Covariance with DKK Noise, especially under large $n$, despite this choice of $\gamma$ performing among the best across other distributions. This suggests that larger choices of $\gamma$ may be necessary for \pgd to be robust across different corruption schemes. We note that the runtime of \pgd increases approximately linearly with respect to $\gamma$, so there is a meaningful tradeoff when using larger values of $\gamma$. We set $\gamma=15$ across our experiments because it is the lowest $\gamma$ that performs among the best across the distributions tested.

\begin{figure}[h]
    \centering
    \includegraphics[width=0.85\linewidth]{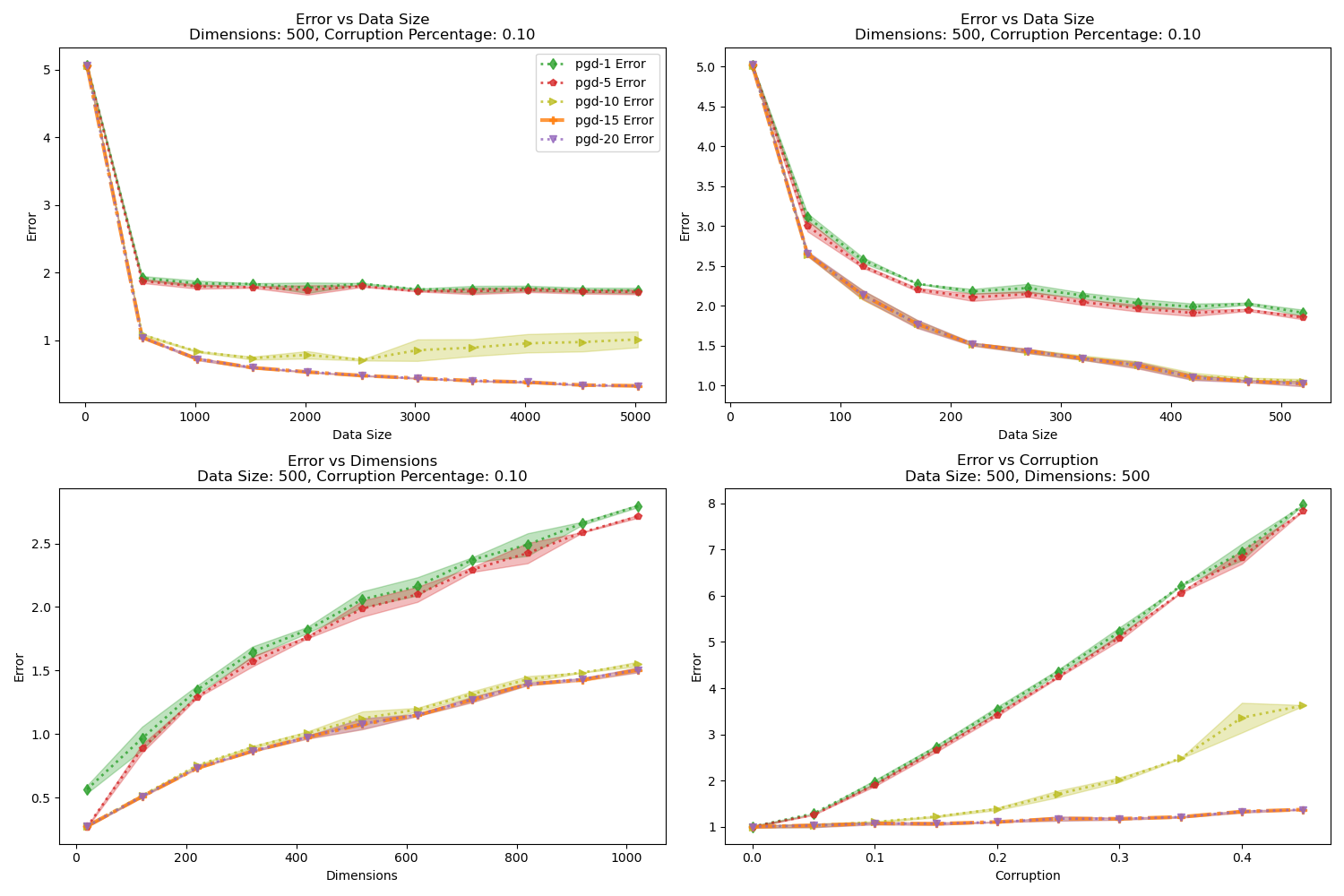}
    \caption{Projected Gradient Descent - Number Of Iterations $\gamma$: Identity Covariance, DKK Noise}
    \label{fig:pgd_id_dkk}
\end{figure}

\begin{figure}[h]
    \centering
    \includegraphics[width=0.85\linewidth]{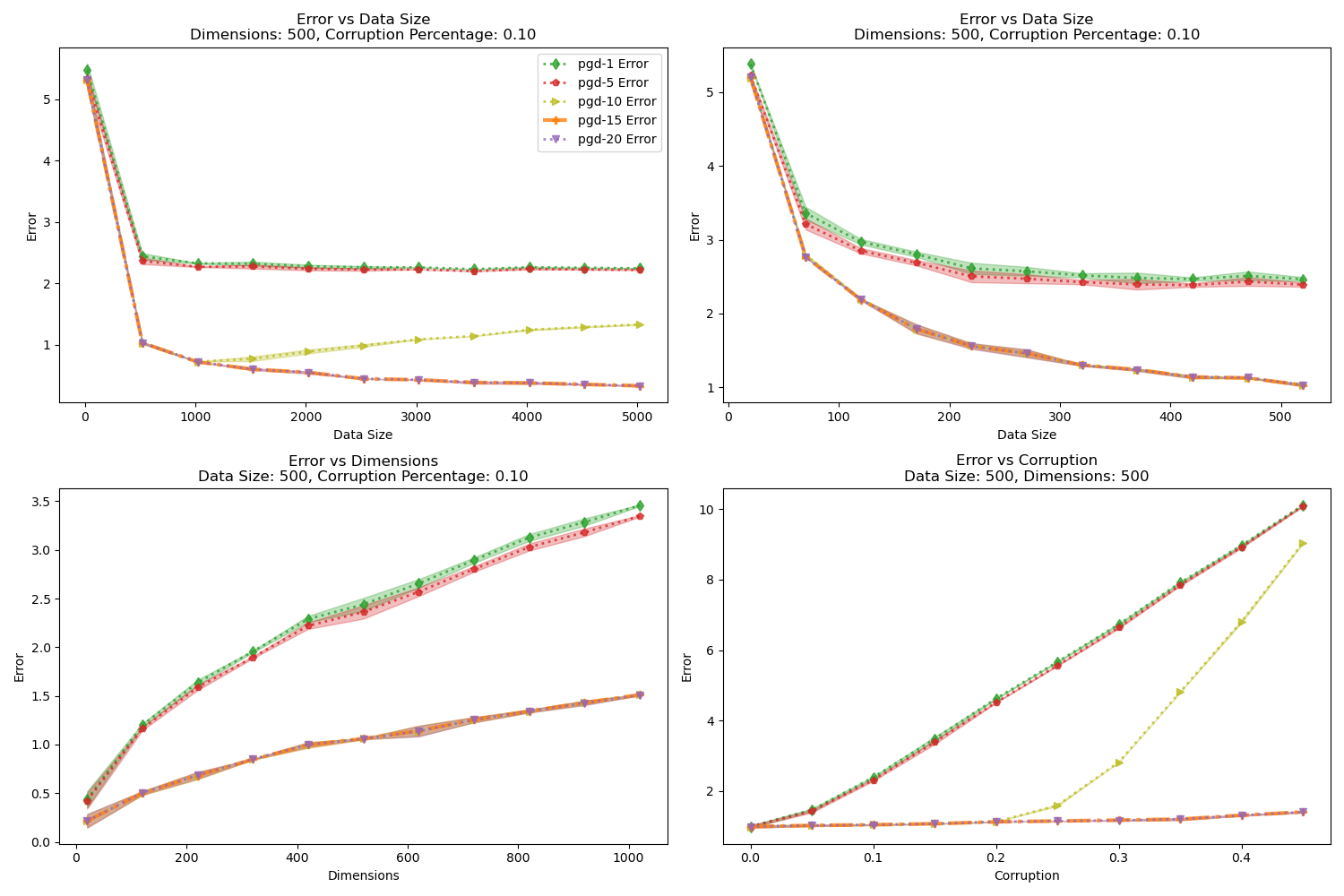}
    \caption{Projected Gradient Descent - Number Of Iterations $\gamma$: Identity Covariance, In Distribution Noise}
    \label{fig:pgd_id_unif_top}
\end{figure}

\begin{figure}[h]
    \centering
    \includegraphics[width=0.85\linewidth]{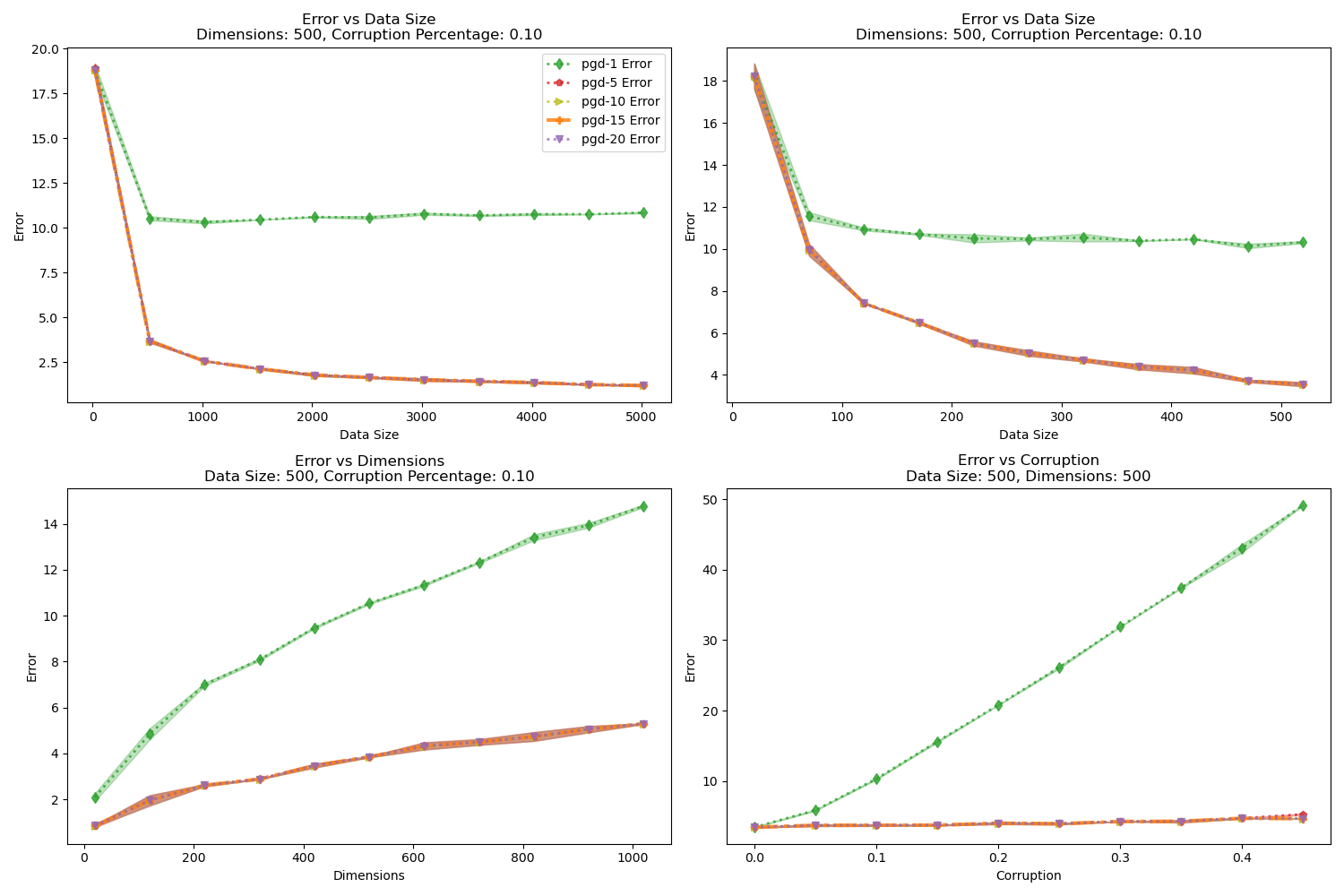}
    \caption{Projected Gradient Descent - Number Of Iterations $\gamma$: Large Spherical Covariance, Additive Variance Shell Noise}
    \label{fig:pgd_sp_gaus_one}
\end{figure}

\begin{figure}[h]
    \centering
    \includegraphics[width=0.85\linewidth]{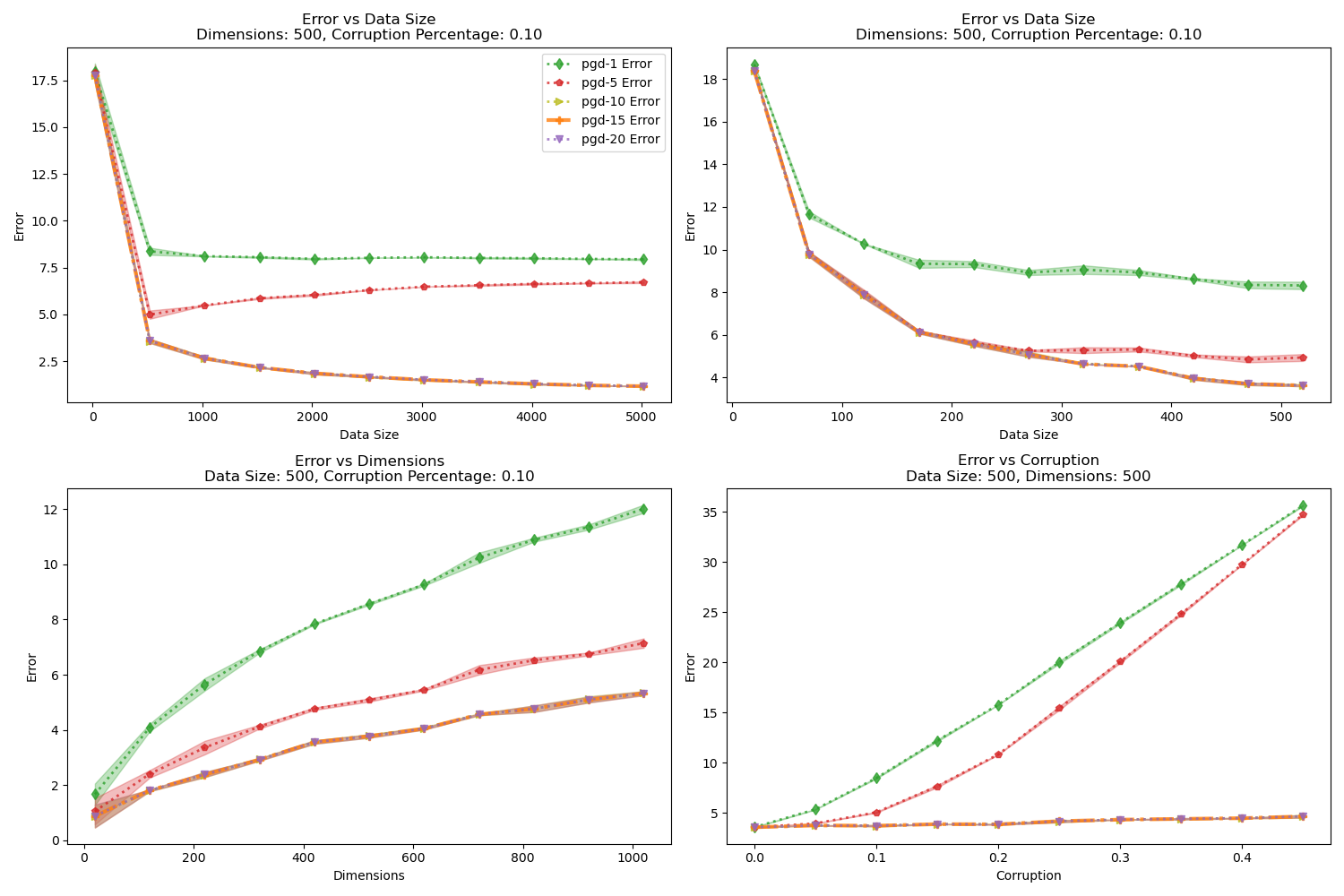}
    \caption{Projected Gradient Descent - Number Of Iterations $\gamma$: Large Diminishing Covariance, Additive Variance Shell Noise}
    \label{fig:pgd_nonsp_gaus_one}
\end{figure}

\clearpage

\subsection{Robustness To Expected Corruption}
\label{app:expected_corruption}

\begin{figure}[h!]
    \centering
    \begin{subfigure}{0.45\linewidth}
        \centering
        \includegraphics[width=\linewidth]{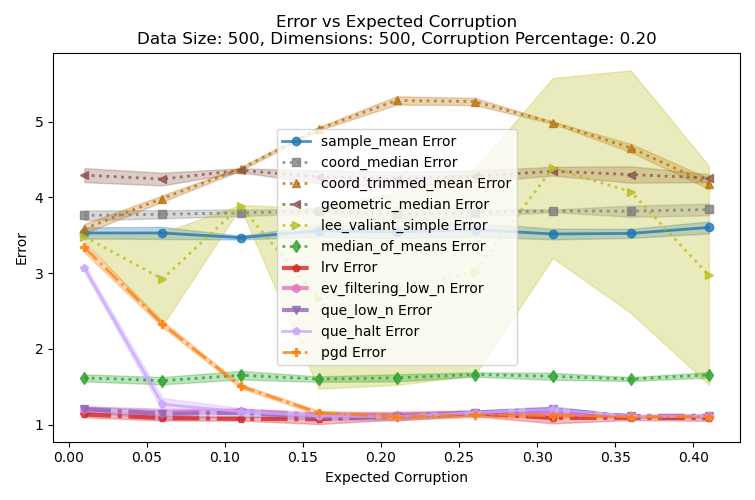}
        \caption{Identity Covariance - DKK Noise}
    \end{subfigure}
    \hfill
    \begin{subfigure}{0.45\linewidth}
        \centering
        \includegraphics[width=\linewidth]{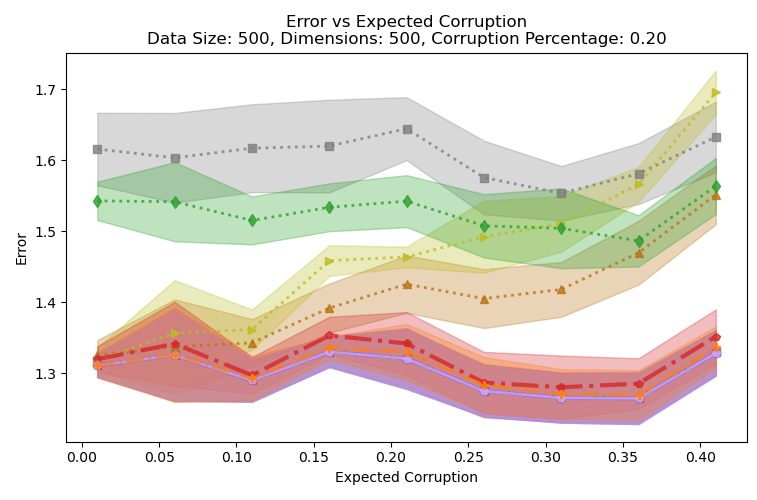}
        \caption{Identity Covariance - Subtractive Noise}
    \end{subfigure}
    \bigskip
    
    \begin{subfigure}{0.45\linewidth}
        \centering
        \includegraphics[width=\linewidth]{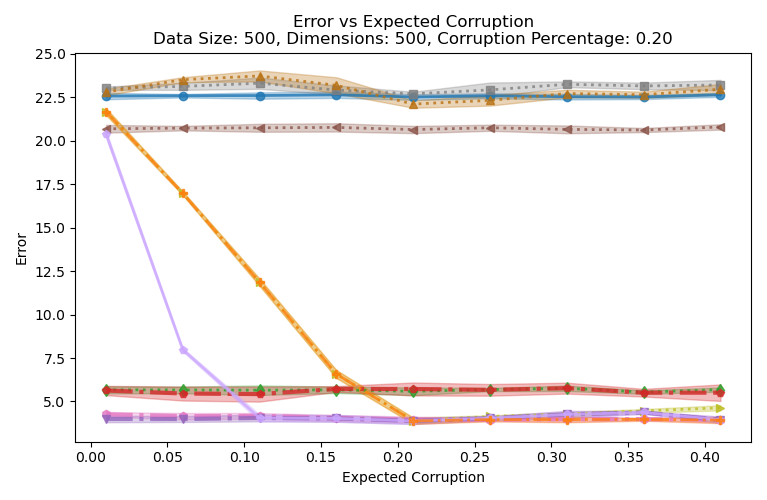}
        \caption{Large Spherical Covariance - Additive Variance Shell Noise}
    \end{subfigure}
    \hfill
    \begin{subfigure}{0.45\linewidth}
        \centering
        \includegraphics[width=\linewidth]{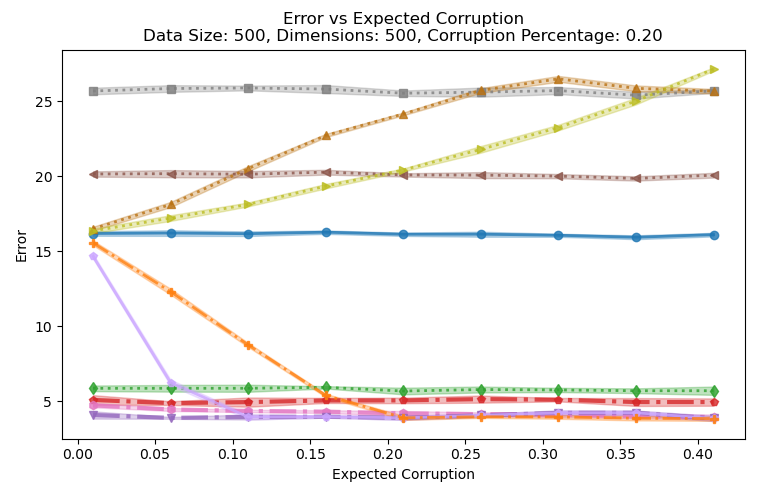}
        \caption{Large Diminishing Covariance - Additive Variance Shell Noise}
    \end{subfigure}
    \caption{Robustness To Expected Corruption: Error vs Expected Corruption $\tau$}
    \label{fig:corr_robust}
\end{figure}

We examine robustness to expected corruption, $\tau$.~This is a hyperparameter for \evln, \queln, \pgd, \lvsim, and \coordprune. In \evln, $\tau$ only plays a soft role as a slack term in the filtering step. In \queln, $\tau$ controls the number of points that are pruned in every iteration of the algorithm, but the number of iterations is unbounded.  In \lvsim, and \coordprune, $\tau$ explicitly controls the amount of data that is pruned in total.  In \pgd, $\tau$ controls the space of feasible outlier weights. 

We evaluate error as expected corruption, $\tau$, varies from $\tau=0.01$ to $\tau=0.46$ with true corruption fixed as $\eta=0.20$. Otherwise, the experiment setup remains the same as seen previously. As in Appendix \ref{app:hp_tuning}, we replicate experiments over Identity Covariance with DKK Noise and with Subtractive Noise; Large Spherical Covariance with Additive Variance Shell Noise; and Large Diminishing Covariance with Additive Variance Shell Noise. These results are shown in Figure \ref{fig:corr_robust}, with all estimators included for reference. We include \queln with and without early halting.

We find that most estimators perform nearly identically regardless of the choice of $\tau$, except for \pgd, which performs nearly identically when $\tau$ is an upper bound on true corruption $\eta$ but degrades with underestimates of $\eta$.    \queln without early halting performs well throughout choices of $\tau$. With smaller choices of $\tau$, it will prune significantly less points at each iteration, but will run for more iterations until the corruption detection threshold is passed, while for larger choices of $\tau$, it will prune more points at each iteration, but will run for less iterations until the corruption detection threshold is passed. \queln with early halting, which is used throughout the real world experiments, sees degradation with underestimates of $\tau$, but identical performance with overestimates. 

\evln also performs nearly identically regardless of the choice of $\tau$, as expected by the soft dependency of the pruning threshold on $\tau$. \lvsim and \coordprune both degrade noticeably as $\tau$ increases over Identity Covariance data with Subtractive Noise and Large Diminishing Covariance data with Additive Variance Shell Noise; in both cases yielding error worse than \sample the more points they prune. Surprisingly, over Large Spherical Covariance with Additive Variance Shell Noise, \lvsim nearly exactly matches the performance of \pgd, except with slightly worse degradation with large overestimates of $\tau$.

\subsection{Image Embedding Experiments}
\label{app:image_experiments}

We evaluate algorithms on the problem of estimating the mean of embeddings of images generated by deep pretrained image models. As in the LLM experiment, we first examine the problem of mean estimation of image embeddings belonging to the same category, reporting LOOCV error. We then examine a corrupted distribution where images belonging to one category are considered inliers and those belonging to another are considered outliers. We utilize a set of images of cats and dogs from the CIFAR10 dataset \citep{Krizhevsky2009LearningML}. We embed these images using 4 deep pretrained image models of varying embedding dimensions: ResNet-18, ResNet-50 \citep{he2015deepresiduallearningimage}, MobileNet V3 \citep{howard2019searchingmobilenetv3}, and EffecientNet B0 \citep{tan2020efficientnetrethinkingmodelscaling}. ResNet-18 has an embedding dimension of 512, MobileNet V3 has one of 960, EfficientNet B0 has one of 1280, and ResNet-50 has one of 2048. 

\paragraph{Common Category Images}

Here we examine LOOCV error vs data size on embeddings of images of cats. We vary data size from $n=10$ to $n=1000$. Otherwise, experiments are run identically to the LLM experiment, fixing expected corruption $\eta=0.1$, employing the trace scaling heuristic on \evln and \queln, the halting heuristic on \queln, and averaging results over 5 runs. We note that, as in the LLM experiments, employing the halting heuristic on \evln does not improve performance. Results are shown in Figure \ref{fig:loocv_cat_images}.

\begin{figure}[t]
    \begin{subfigure}{0.5\linewidth}
        \centering
        \includegraphics[width=\linewidth]{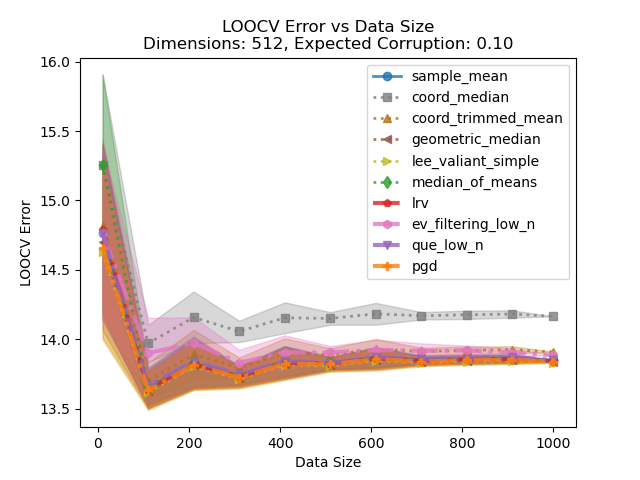}
        \caption{ResNet-18 Embeddings}
    \end{subfigure}
        \begin{subfigure}{0.5\linewidth}
        \centering
        \includegraphics[width=\linewidth]{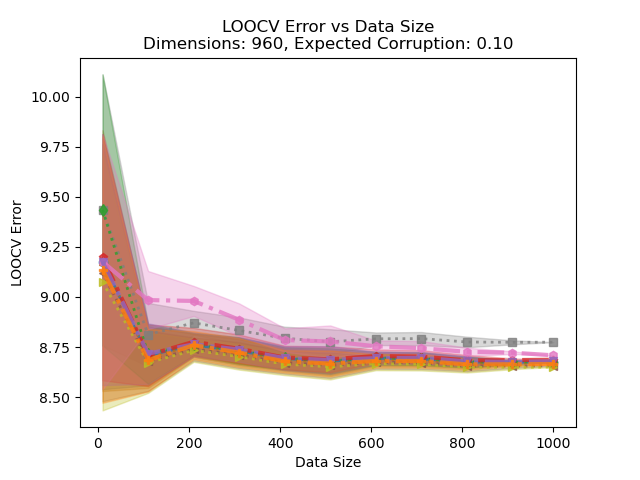}
        \caption{MobileNet V3 Embeddings}
    \end{subfigure}
    \\
    \begin{subfigure}{0.5\linewidth}
        \centering
        \includegraphics[width=\linewidth]{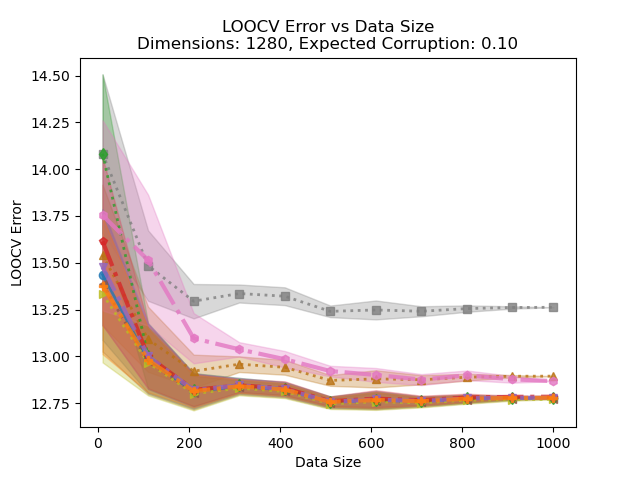}
        \caption{EfficientNet B0 Embeddings}
    \end{subfigure}
    \begin{subfigure}{0.5\linewidth}
        \centering
        \includegraphics[width=\linewidth]{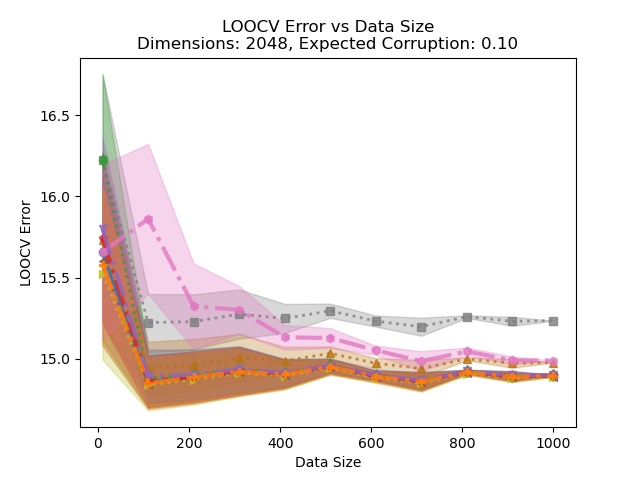}
        \caption{ResNet-50 Embeddings}
    \end{subfigure}
    \caption{LOOCV Error on Cat Image Embeddings}
    \label{fig:loocv_cat_images}
\end{figure}

As in the LLM experiment, we observe that no algorithm significantly outperforms \sample, despite the nontrivial LOOCV error in each setting. As in the LLM experiment, \evln tends to perform worse than other algorithms, which is unsurprising given its sensitivity to knowledge of the true covariance. Other robust mean estimation algorithms, including \queln, perform near identically to \sample.

\paragraph{Corrupted Images}

For the corrupted case, we draw data $X \sim (1 - \eta) P + \eta Q$, where the inlier distribution, $P$, consists of embeddings of images of cats, and the outlier distribution, $Q$, consists of embeddings of images of dogs. We fix data size $n=1000$ to focus on the $n \approx d$ and $n < d$ regime. Otherwise, the experimental setup is identical to in the LLM experiments. Results are shown in Figure \ref{fig:image_corruption}.

\begin{figure}[t]
    \begin{subfigure}{0.5\linewidth}
        \centering
        \includegraphics[width=\linewidth]{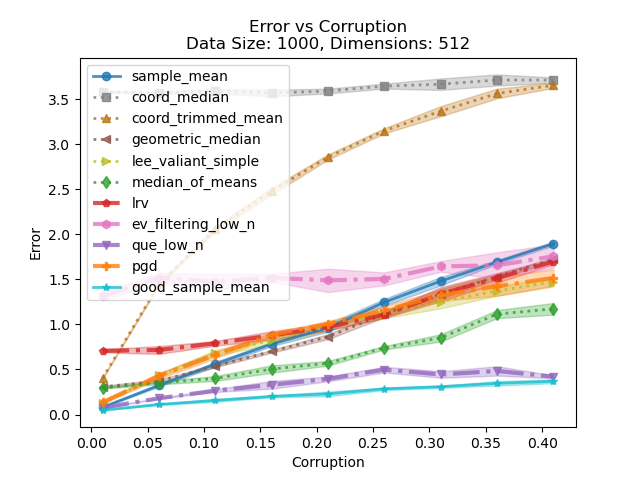}
        \caption{ResNet-18 Embeddings}
    \end{subfigure}
        \begin{subfigure}{0.5\linewidth}
        \centering
        \includegraphics[width=\linewidth]{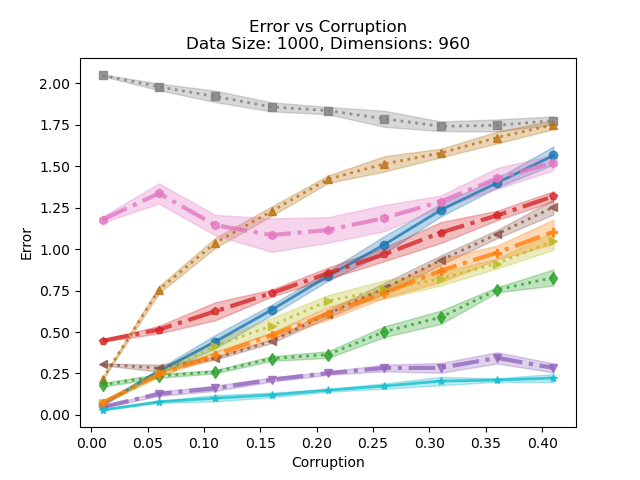}
        \caption{MobileNet V3 Embeddings}
    \end{subfigure}
    \\
    \begin{subfigure}{0.5\linewidth}
        \centering
        \includegraphics[width=\linewidth]{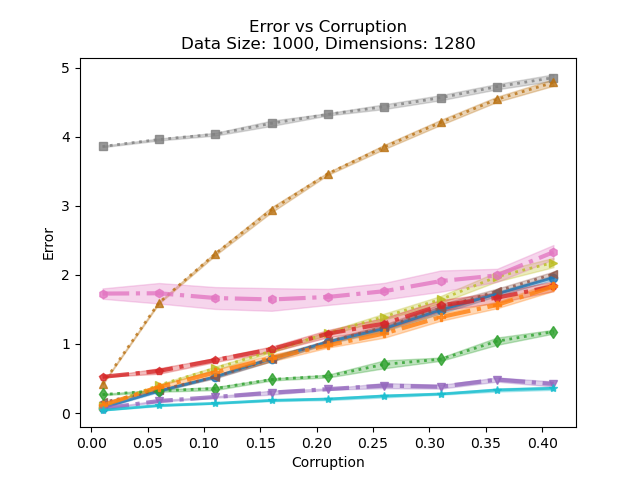}
        \caption{EfficientNet B0 Embeddings}
    \end{subfigure}
    \begin{subfigure}{0.5\linewidth}
        \centering
        \includegraphics[width=\linewidth]{UpdatedFigures/CorruptionImage/ResNet2048Corruption.png}
        \caption{ResNet-50 Embeddings}
    \end{subfigure}
    \caption{Error on Cat Image Embeddings Corrupted with Dog Image Embeddings}
    \label{fig:image_corruption}
\end{figure}

These results demonstrate similar trends to the LLM experiment. One key difference is that \coordprune performs much worse than \sample here, compared to the LLM experiment where it tends to slightly outperform \sample. This suggests that naive pruning does not work well in this setting as outliers are not obvious, reinforcing that this is a difficult setting for robust mean estimation. Nonetheless, several robust mean estimators are able to perform well in this case. In particular, \queln is again the strongest performer, noticeably outperforming all other estimators and nearly matching \gsample error across settings. Notably, this strong performance occurs even with $n$ much less than $d$, with relative performance remaining the same even with $n=1000$ and $d=2048$ in the ResNet-50 embedding case. Among the best estimators in the synthetic data case, \pgd tends to perform similarly to \sample, except with large enough corruption across MobileNet V3 embeddings where it outperforms \sample; \lrv tends to perform similarly to \sample except under low corruption, where it noticably degrades; and \evln fails catastrophically throughout. As in the LLM experiments, \medmean outperforms robust estimators that tend to perform better in the synthetic data cases. However, \lvsim no longer performs near optimally, suggesting its sensitivity to distributional assumptions, as expected due to its general poor performance over synthetic data experiments.

\paragraph{Corruption vs Data Size}

We repeat experiments over the same corrupted data scheme but examine error vs data size. We fix true corruption $\eta=0.1$, set expected corruption $\tau=\eta$, and vary data size $n$. We examine the performance of all estimators with data size ranging from $n=100$ to $n=5000$. We additionally provide a zoomed in plot, examining the performance of estimators excluding \evln, \coordmed, and \coordprune -- which all fail catastrophically -- with data size from $n=100$ to $n=1000$. Results are shown in Figure \ref{fig:image_corr_vs_n}.

We find that the relative performance of algorithms remains similar across data sizes. Particularly, even with very large $n$, such as $n=5000$ and $d=512$ under ResNet-18 Embeddings, only \queln and \medmean consistently outperform \sample. \pgd, \lrv, and \lvsim tend to perform slightly worse than \sample. \evln fails catastrophically regardless of data size, though the error stabilizes with larger $n$. These results suggest that the weakness of robust mean estimators over real world data distributions is not just confined to the low data size regime. Yet again, we find that \queln is the best performer, outperforming all other estimators and achieving near optimal performance throughout settings. Additionally, \medmean does not show this same sensitivity to distributional assumptions as other estimators, and as in the synthetic data experiments, tends to perform near optimally with large enough $n$.

\begin{figure}[h]
    \centering

    \begin{subfigure}[b]{\textwidth}
        \begin{subfigure}[c]{0.45\textwidth}
            \centering
            \includegraphics[width=\textwidth]{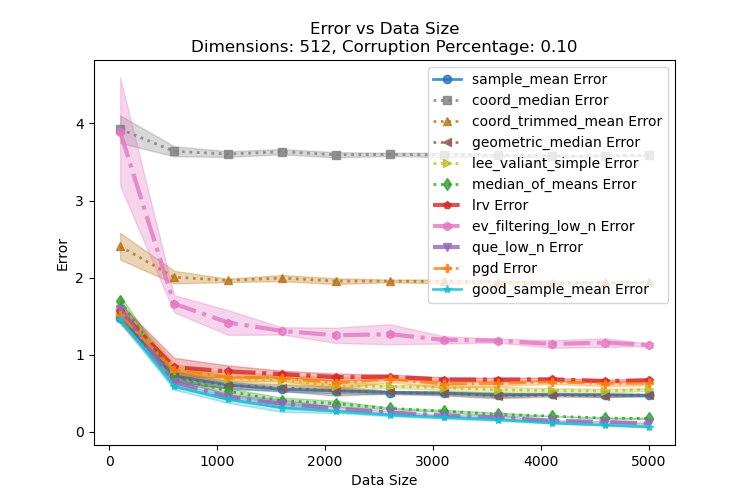}
        \end{subfigure}
        \hfill
        \begin{subfigure}[c]{0.45\textwidth}
            \centering
            \includegraphics[width=\textwidth]{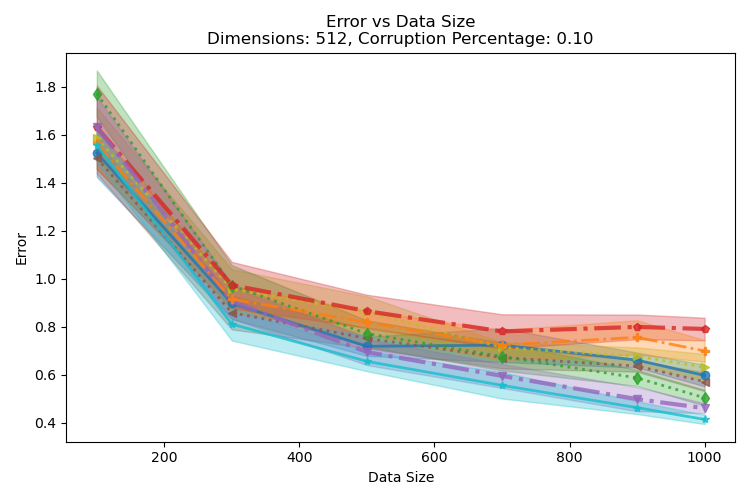}
        \end{subfigure}
        \caption{ResNet-18 Embeddings}
    \end{subfigure}

    \vskip\baselineskip 
\end{figure}

\begin{figure}
\ContinuedFloat
    
    \begin{subfigure}[b]{\textwidth}
        \begin{subfigure}[c]{0.45\textwidth}
            \centering
            \includegraphics[width=\textwidth]{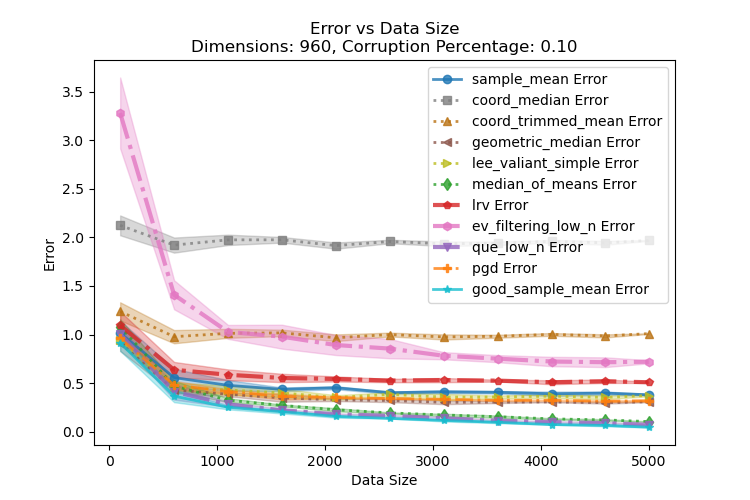}
        \end{subfigure}
        \hfill
        \begin{subfigure}[c]{0.45\textwidth}
            \centering
            \includegraphics[width=\textwidth]{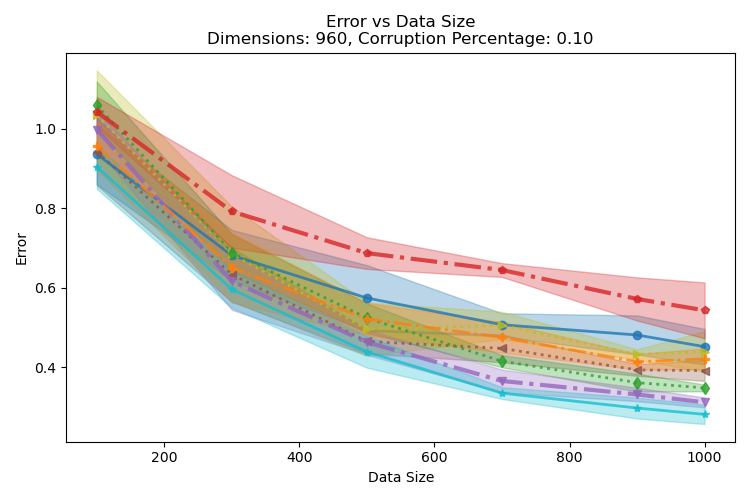}
        \end{subfigure}
        \caption{MobileNet V3 Embeddings}
    \end{subfigure}

    \begin{subfigure}[b]{\textwidth}
        \begin{subfigure}[c]{0.45\textwidth}
            \centering
            \includegraphics[width=\textwidth]{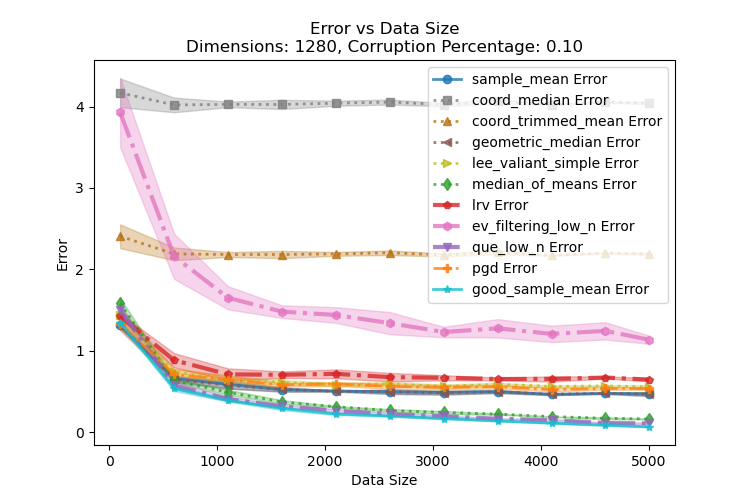}
        \end{subfigure}
        \hfill
        \begin{subfigure}[c]{0.45\textwidth}
            \centering
            \includegraphics[width=\textwidth]{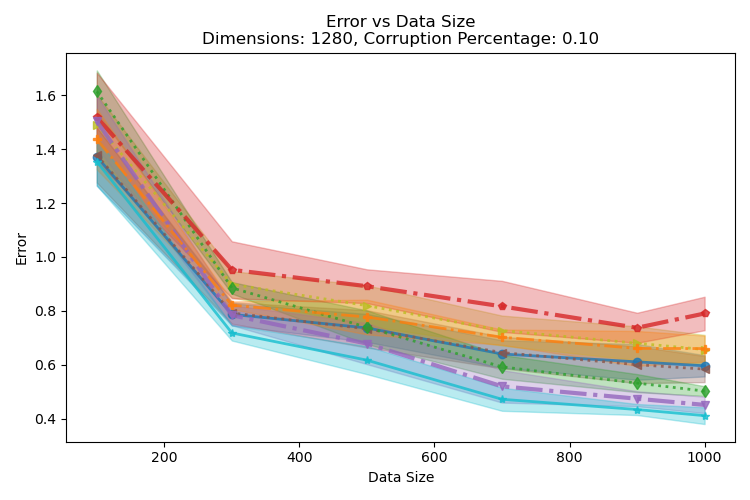}
        \end{subfigure}
         \caption{EfficientNet B0 Embeddings}
    \end{subfigure}
    \vskip\baselineskip

    \begin{subfigure}[b]{\textwidth}
        \begin{subfigure}[c]{0.45\textwidth}
            \centering
            \includegraphics[width=\textwidth]{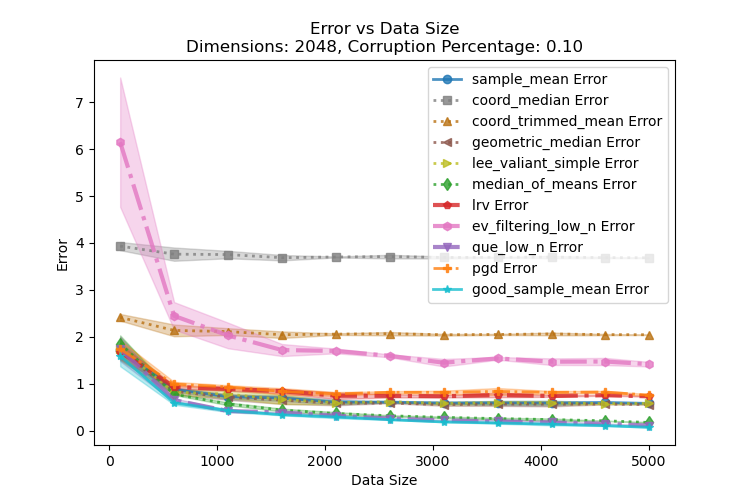}
        \end{subfigure}
        \hfill
        \begin{subfigure}[c]{0.45\textwidth}
            \centering
            \includegraphics[width=\textwidth]{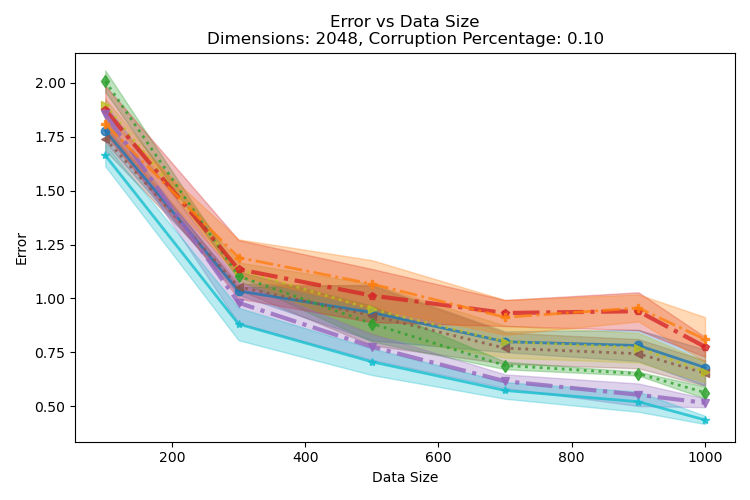}
        \end{subfigure}
        \caption{ResNet-50 Embeddings}
    \end{subfigure}
    
    \caption{Error Vs Data Size on Corrupted Image Data}
    \label{fig:image_corr_vs_n}
\end{figure}

\clearpage

\subsection{Word Embedding Experiments}
\label{app:word_experiments}

We further evaluate algorithms on the problem of estimating the mean of non attention based embeddings of words. As in the LLM experiment, we first examine the problem of mean estimation over words belonging in the same category, reporting LOOCV error. We then examine a corrupted distribution where words belonging to one category are considered inliers and those belonging to another are considered outliers. We examine four different pretrained GloVe \citep{pennington-etal-2014-glove} models from GluonNLP\footnote{\url{https://github.com/dmlc/gluon-nlp/}} generating 50, 100, 200, and 300 dimensional embeddings. We utilize datasets of 100 pleasant words and 100 unpleasant words from \cite{aboagye2023interpretable}. The very limited data size available under this setting provides a valuable real world test for robust estimators under low data size.

\paragraph{Common Category Words}

Here we examine LOOCV error vs data size on embeddings of "pleasant" words. Experiments are run identically to the LLM experiment, employing the trace scaling heuristic on \evln and \queln, the halting heuristic on \queln, and averaging results over 5 runs. Results are shown in Figure \ref{fig:loocv_pleasant}.

\begin{figure}[t]
    \begin{subfigure}{0.5\linewidth}
        \centering
        \includegraphics[width=\linewidth]{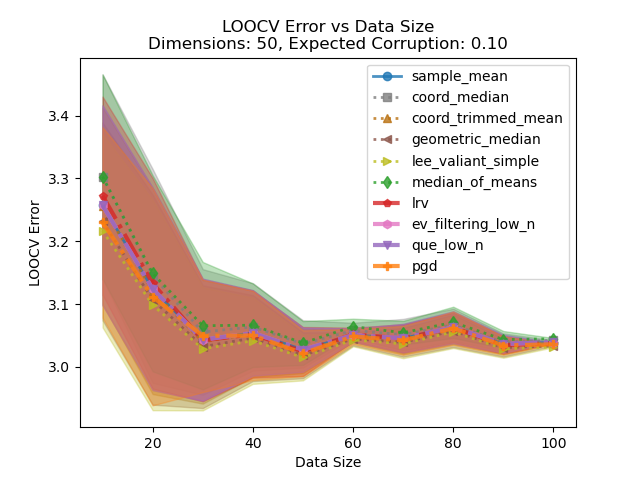}
        \caption{50 Dimensional Embeddings}
    \end{subfigure}
        \begin{subfigure}{0.5\linewidth}
        \centering
        \includegraphics[width=\linewidth]{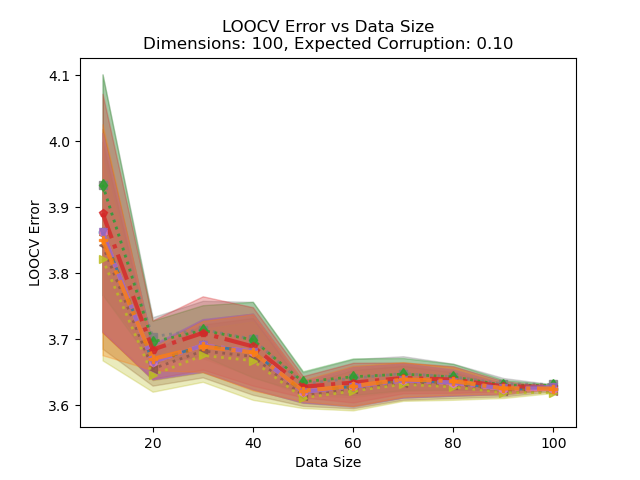}
        \caption{100 Dimensional Embeddings}
    \end{subfigure}
    \\
    \begin{subfigure}{0.5\linewidth}
        \centering
        \includegraphics[width=\linewidth]{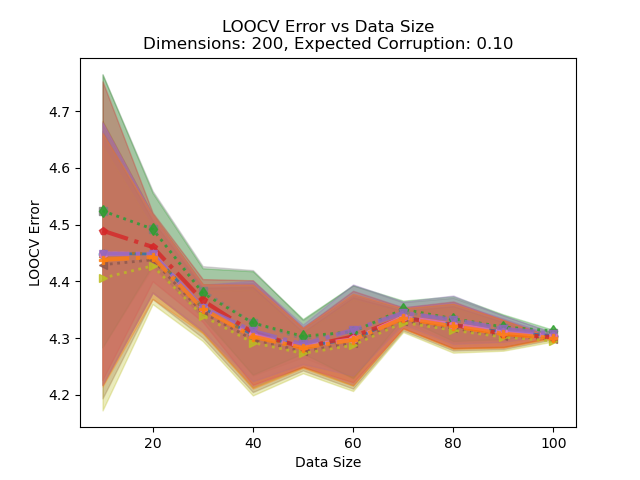}
        \caption{200 Dimensional Embeddings}
    \end{subfigure}
    \begin{subfigure}{0.5\linewidth}
        \centering
        \includegraphics[width=\linewidth]{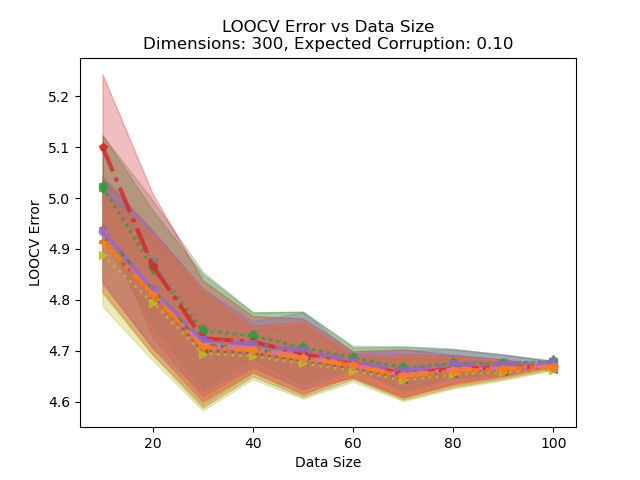}
        \caption{300 Dimensional Embeddings}
    \end{subfigure}
    \caption{LOOCV Error on "Pleasant" GloVe Embeddings}
    \label{fig:loocv_pleasant}
\end{figure}

As in the LLM experiment, we observe that no algorithm significantly outperforms \sample, despite the nontrivial LOOCV error in each setting.  Moreover, we observe that \medmean consistently achieves error slightly worse than \sample, which is not seen in the LLM experiments, suggesting the algorithm's sensitivity to distributional assumptions. However, unlike in the LLM experiment \evln does not fail catastrophically here, instead nearly matching \sample. 
This is not unexpected given \evln, and the trace estimate techniques sensitivity to distributional assumptions will sometime work -- including this case. \lrv also tends to perform worse than other algorithms, though this gap is not as large as in the LLM experiment.

\paragraph{Corrupted Words}

For the corrupted case, we draw data $X \sim (1 - \eta) P + \eta Q$, where the inlier distribution, $P$, consists of embeddings of "pleasant" words, and the outlier distribution, $Q$, consists of embeddings of "unpleasant" words. This models a more extreme version of the case where ill-defined words may be placed in a category, inducing bias. This is a notable problem for word vectors, which do not take context into account~\citep{hu-etal-2016-different}. The experimental setup is identical to in the LLM experiments. Results are shown in Figure \ref{fig:corrupted_wordvecs}.

\begin{figure}[t]
    \begin{subfigure}{0.5\linewidth}
        \centering
        \includegraphics[width=\linewidth]{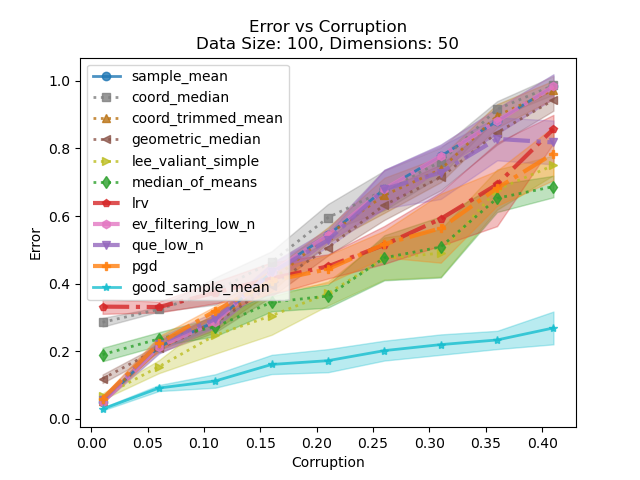}
        \caption{50 Dimensional Embeddings}
    \end{subfigure}
        \begin{subfigure}{0.5\linewidth}
        \centering
        \includegraphics[width=\linewidth]{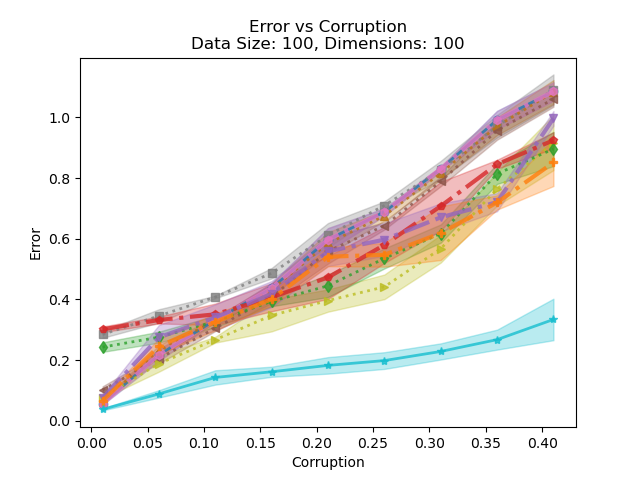}
        \caption{100 Dimensional Embeddings}
    \end{subfigure}
    \\
    \begin{subfigure}{0.5\linewidth}
        \centering
        \includegraphics[width=\linewidth]{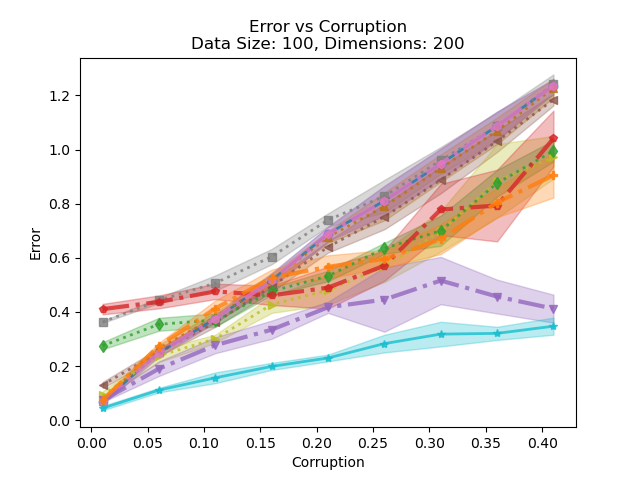}
        \caption{200 Dimensional Embeddings}
    \end{subfigure}
    \begin{subfigure}{0.5\linewidth}
        \centering
        \includegraphics[width=\linewidth]{UpdatedFigures/CorruptionGloVe/300.png}
        \caption{300 Dimensional Embeddings}
    \end{subfigure}
    \caption{Error on "Pleasant" Embeddings Corrupted with "Unpleasant" Embeddings}
    \label{fig:corrupted_wordvecs}
\end{figure}

While these results are different from the LLM experiment, they demonstrate similar trends. In particular, \queln is again the strongest performer, noticeably outperforming all other estimators across the 200 and 300 dimensional cases, and never performing worse than \sample in the 50 and 100 dimensional cases. Notably, this strong performance occurs even with $n$ much less than $d$. Unlike the LLM experiments, here \queln never approaches \gsample, and is beat by other estimators in the 50 and 100 dimensional cases. \lvsim, which tended to perform similarly to \queln and nearly match \gsample in the LLM experiments, does not perform as well in this case. It always beats \sample but does not come close to matching \gsample and performs similarly to other estimators. Likewise, \medmean does not perform as strongly here as in the LLM experiments and even performs worse than \sample over very low corruption. Supported by synthetic data results, this suggests the sensitivity of \medmean and \lvsim to distributional assumptions. As in the LLM experiments, \lrv tends to perform much worse than \sample under low corruption and outperform \sample slightly with higher corruption; \pgd tends to outperform \sample slightly; and \coordmed, \coordprune, and \geomed tend to perform similarly or slightly worse than \sample.  As in the LOOCV experiments, \evln simply matches \sample here. 

\paragraph{Additional Experiments}
We perform additional experiments, swapping the roles of "pleasant" and "unpleasant" embeddings. We report LOOCV error vs data size on embeddings of "unpleasant" words in Figure \ref{fig:loocv_unpleasant}. We report corrupted error vs data size on embeddings of "unpleasant" words corrupted with "pleasant" words in Figure \ref{fig:corrupted_wordvecs_inv}. We observe the same trends as in the previous word embedding experiments. 

\begin{figure}[t]
    \begin{subfigure}{0.5\linewidth}
        \centering
        \includegraphics[width=\linewidth]{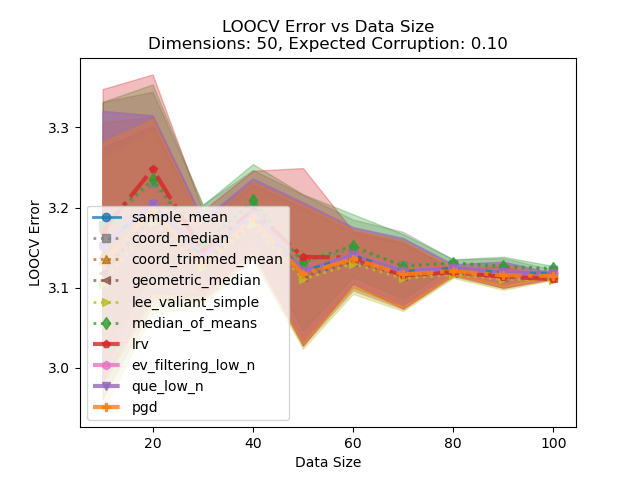}
        \caption{50 Dimensional Embeddings}
    \end{subfigure}
        \begin{subfigure}{0.5\linewidth}
        \centering
        \includegraphics[width=\linewidth]{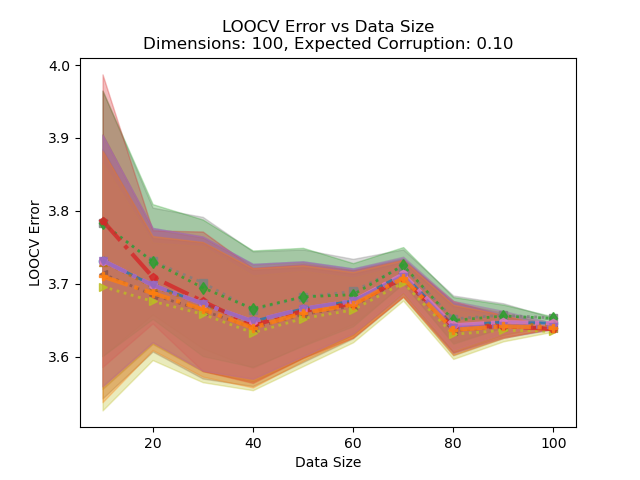}
        \caption{100 Dimensional Embeddings}
    \end{subfigure}
    \\
    \begin{subfigure}{0.5\linewidth}
        \centering
        \includegraphics[width=\linewidth]{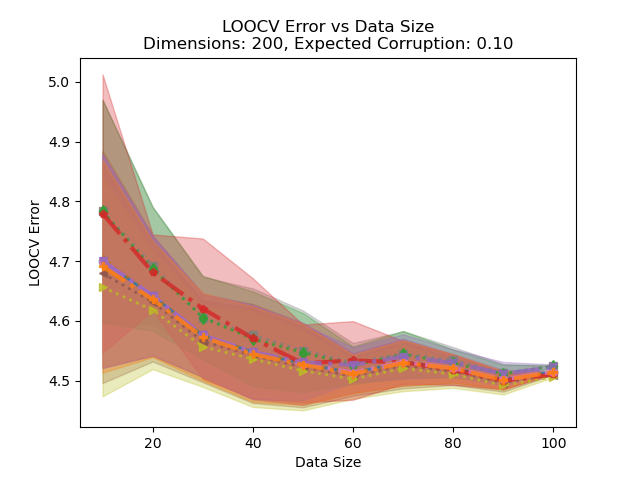}
        \caption{200 Dimensional Embeddings}
    \end{subfigure}
    \begin{subfigure}{0.5\linewidth}
        \centering
        \includegraphics[width=\linewidth]{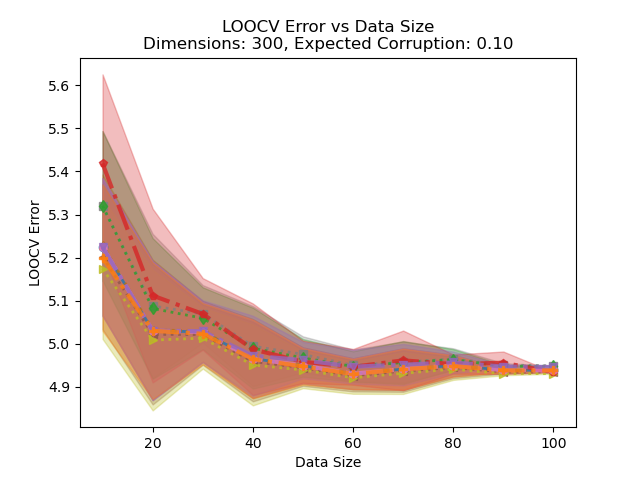}
        \caption{300 Dimensional Embeddings}
    \end{subfigure}
    \caption{LOOCV Error on "Unpleasant" GloVe Embeddings}
    \label{fig:loocv_unpleasant}
\end{figure}

\begin{figure}[t]
    \begin{subfigure}{0.5\linewidth}
        \centering
        \includegraphics[width=\linewidth]{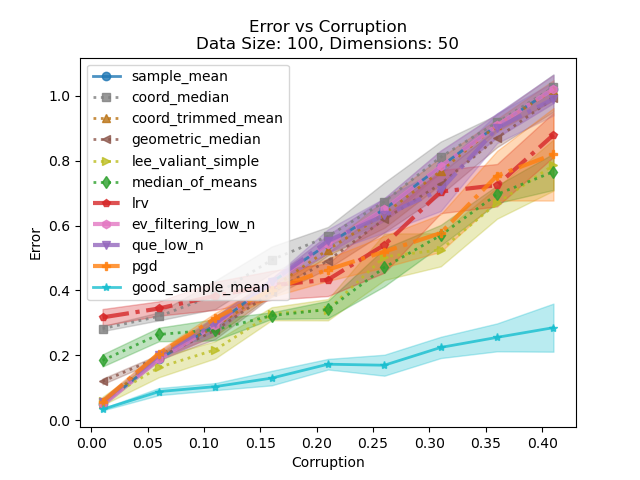}
        \caption{50 Dimensional Embeddings}
    \end{subfigure}
        \begin{subfigure}{0.5\linewidth}
        \centering
        \includegraphics[width=\linewidth]{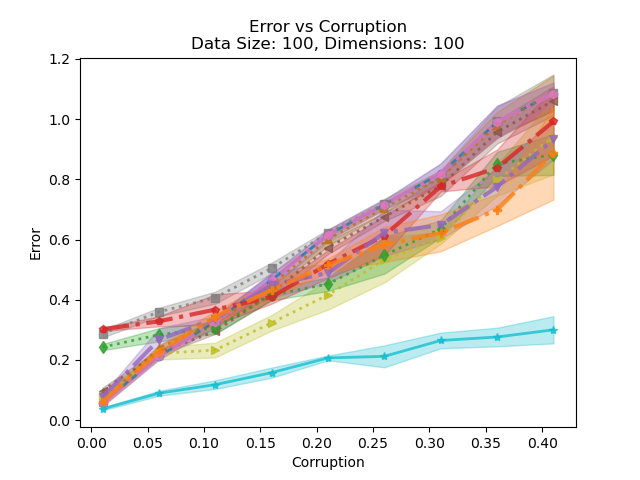}
        \caption{100 Dimensional Embeddings}
    \end{subfigure}
    \\
    \begin{subfigure}{0.5\linewidth}
        \centering
        \includegraphics[width=\linewidth]{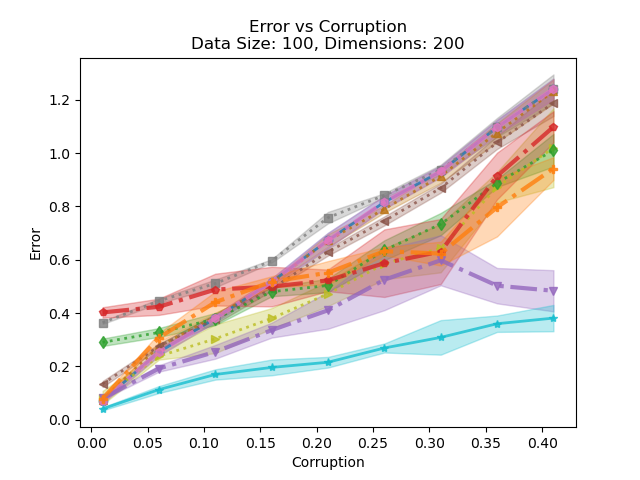}
        \caption{200 Dimensional Embeddings}
    \end{subfigure}
    \begin{subfigure}{0.5\linewidth}
        \centering
        \includegraphics[width=\linewidth]{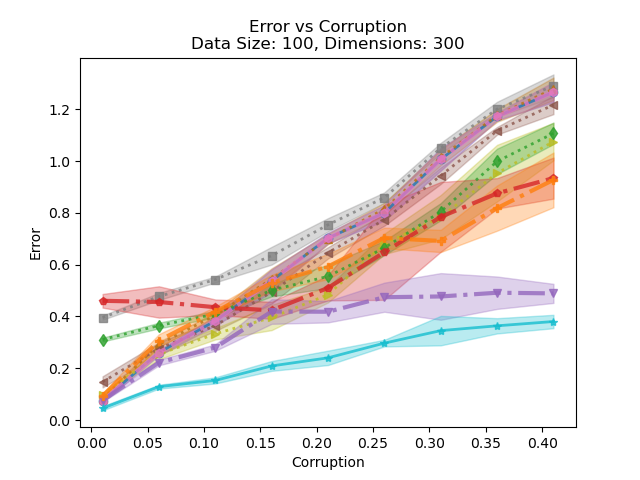}
        \caption{300 Dimensional Embeddings}
    \end{subfigure}
    \caption{Error on "Unpleasant" Embeddings Corrupted with "Pleasant" Embeddings}
    \label{fig:corrupted_wordvecs_inv}
\end{figure}

\clearpage

\subsection{LLM Experiment Ablations}
\label{app:llm_ablations}

\paragraph{Eigenvalue Pruning Method Comparison}

We compare the performance of different pruning subroutines for \evln over a selection of LLM experiments: LOOCV and Corruption Error over MiniLM and BERT embeddings. We evaluate Gaussian pruning, used throughout this paper, along with randomized pruning and fixed pruning, described in Appendix \ref{app:hp_tuning}. We retain the same conditions as in the original experiments, first scaling data utilizing the sample trace. We also include \sample and \queln in our plots for the sake of comparison, noting that \queln and \evln with fixed pruning only differ in their method of scoring outliers. These results are shown in Figure \ref{fig:llm_ev_pruning}. We notice that both randomized and fixed pruning methods do indeed perform better than the Gaussian pruning method. In particular, fixed pruning has the best LOOCV error over MiniLM and matches the error of \sample over BERT, whereas Gaussian pruning fails dramatically. However, this performance does not translate into the corrupted case, where all three pruning routines lead to significant error compared to even \sample, except with large $\eta$ where it \sample's error approaches that of these methods.  Additionally, as discussed in Appendix \ref{app:hp_tuning}, \evln with fixed pruning is not robust to noise distributions that require several runs of the algorithm to prune i.e. cases where noise lays in multiple orthogonal clusters. Notably, \queln outperforms all variations of \evln in the corrupted data case, reinforcing the observation that the outlier detection method of \queln is more robust to distributional assumptions than that of \evln.

\begin{figure}[h!]
    \centering
    \begin{subfigure}{0.45\linewidth}
        \centering
        \includegraphics[width=\linewidth]{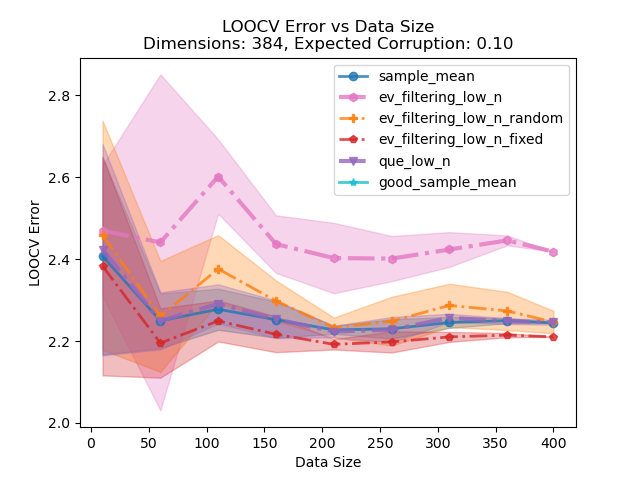}
        \caption{LOOCV Error - MiniLM}
    \end{subfigure}
    \hfill
    \begin{subfigure}{0.45\linewidth}
        \centering
        \includegraphics[width=\linewidth]{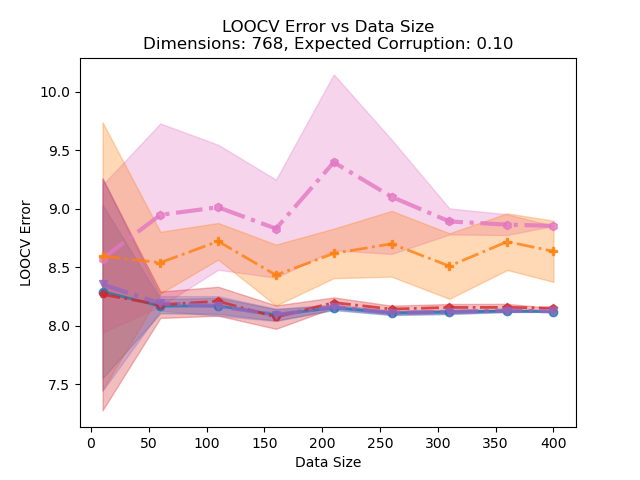}
        \caption{LOOCV Error - BERT}
    \end{subfigure}
    \bigskip
    
    \begin{subfigure}{0.45\linewidth}
        \centering
        \includegraphics[width=\linewidth]{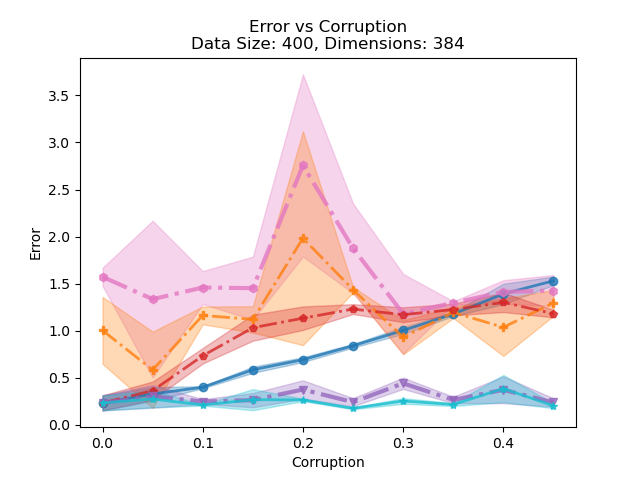}
        \caption{Corrupted Error - MiniLM}
    \end{subfigure}
    \hfill
    \begin{subfigure}{0.45\linewidth}
        \centering
        \includegraphics[width=\linewidth]{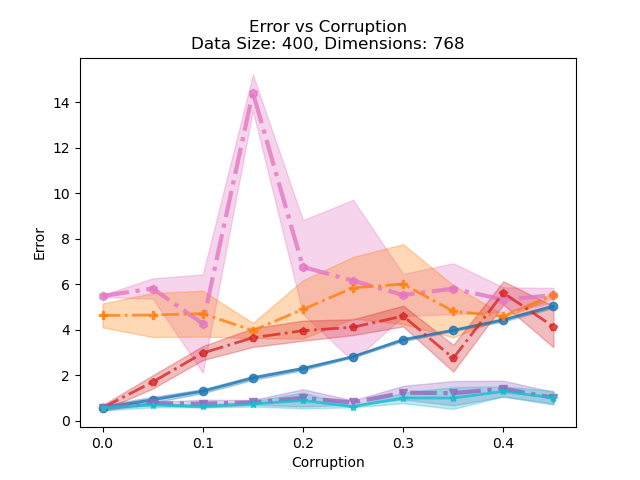}
        \caption{Corrupted Error - BERT}
    \end{subfigure}
    \caption{Eigenvalue Pruning - Pruning Method: LLM Comparison}
    \label{fig:llm_ev_pruning}
\end{figure}

\paragraph{LRV Weighting Procedure}

Here we compare the two different weighting procedures for LRV described in Appendix \ref{app:hp_tuning}: Gaussian weighting, based on downweighting outliers, and general (non-Gaussian) weighting, based on completely pruning outliers. We evaluate these two methods over the same subselection of LLM experiments: LOOCV and Corruption Error over MiniLM and BERT embeddings. These results are shown in Figure \ref{fig:llm_lrv_weighting}. We notice that general weighting outperforms Gaussian weighting in LOOCV error, with this difference being especially noticeable across BERT embeddings. However, this performance increase is not seen in either corrupted case, where Gaussian weighting notably outperforms general weighting, except with small $\eta$. This suggests that, at least under low data size, \lrv is not robust to general distributions, even using a general outlier weighting procedure.

\begin{figure}[h!]
    \centering
    \begin{subfigure}{0.45\linewidth}
        \centering
        \includegraphics[width=\linewidth]{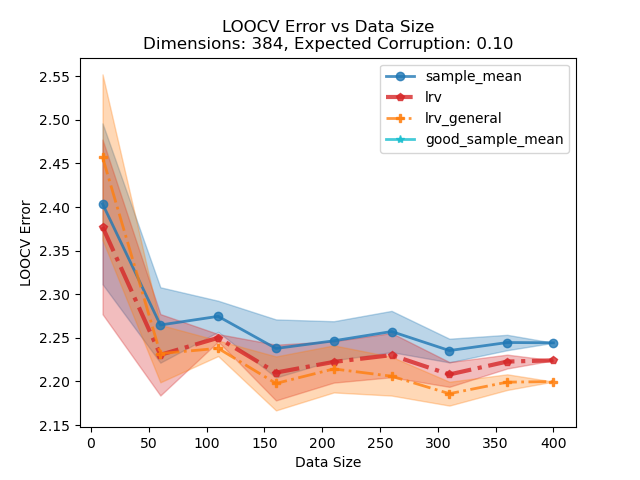}
        \caption{LOOCV Error - MiniLM}
    \end{subfigure}
    \hfill
    \begin{subfigure}{0.45\linewidth}
        \centering
        \includegraphics[width=\linewidth]{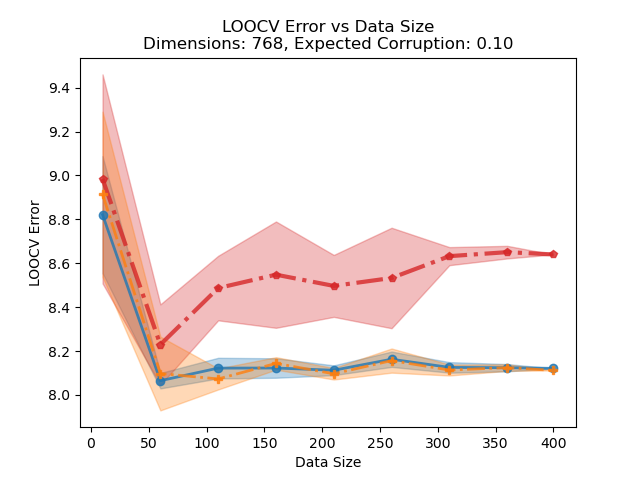}
        \caption{LOOCV Error - BERT}
    \end{subfigure}
    \bigskip
    
    \begin{subfigure}{0.45\linewidth}
        \centering
        \includegraphics[width=\linewidth]{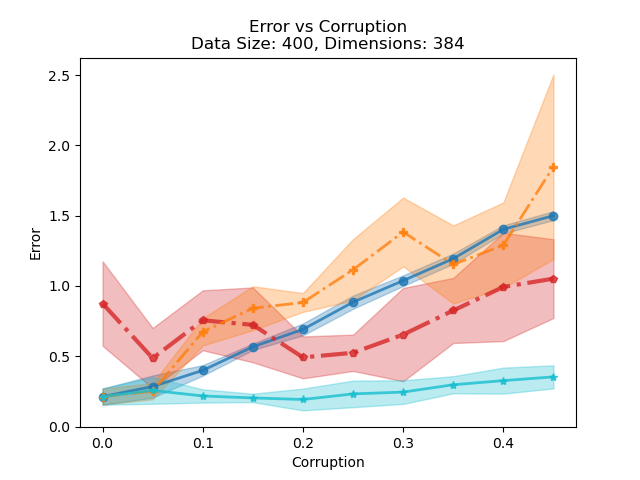}
        \caption{Corrupted Error - MiniLM}
    \end{subfigure}
    \hfill
    \begin{subfigure}{0.45\linewidth}
        \centering
        \includegraphics[width=\linewidth]{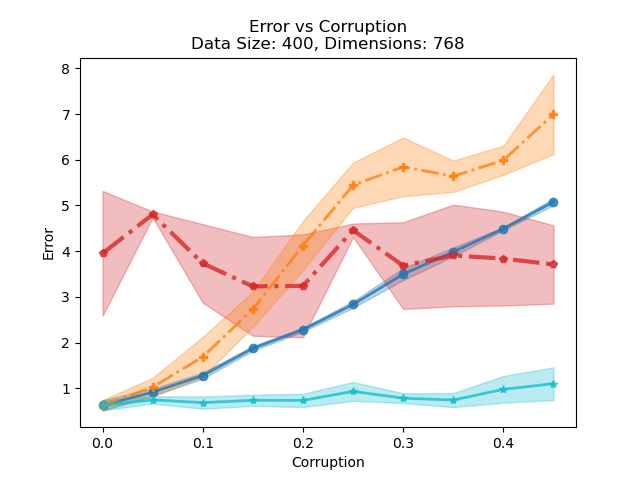}
        \caption{Corrupted Error - BERT}
    \end{subfigure}
    \caption{LRV - Gaussian Vs General Weighting: LLM Comparison}
    \label{fig:llm_lrv_weighting}
\end{figure}

\clearpage

\paragraph{Additional Experiments}

We recreate the experiments in Section \ref{sec:realworld} over two different settings. First, we examine LOOCV Error over embeddings of the word field that correspond to the "field of study" definition rather than to the "field of land" definition. These results are shown in Figure \ref{fig:loocv_field_study}. Second, we examine corrupted embeddings $X \sim (1 - \eta)P + \eta Q$, where inlier data, $P$, consists of embeddings of the word "field" corresponding to the "field of study" definition and outlier data, $Q$, consists of embeddings of the word "field" corresponding to the "field of land" definition; inverting the inlier and outlier data originally examined. These results are shown in Figure \ref{fig:loocv_corrupted_inv}. While the LOOCV error plots are not identical to the original experiment, corresponding to the expected differences in structure between the distributions of $P$ and $Q$, we find the same overall trends across the 4 plots. We additionally observe the same overall trends for corrupted data compared to the original experiment. However, \lvsim, which was consistently the best algorithm alongside \queln for corrupted data originally, breaks down for MiniLM here; always performing notably worse than \gsample. Supported by the general poor performance of \lvsim over synthetic data experiments, this reinforces the unpredictable sensitivity of \lvsim to distributional assumptions. \queln does not see any such degradation, performing near optimally across all cases, as it does in the original LLM experiment.

\begin{figure}[h!]
    \begin{subfigure}{0.5\linewidth}
        \centering
        \includegraphics[width=\linewidth]{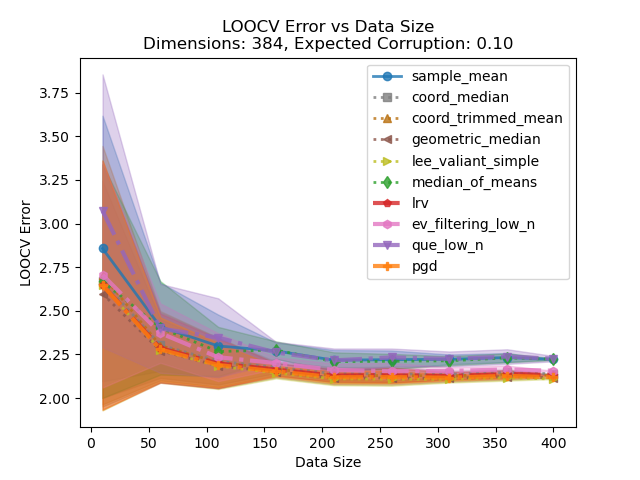}
        \caption{MiniLM}
    \end{subfigure}
        \begin{subfigure}{0.5\linewidth}
        \centering
        \includegraphics[width=\linewidth]{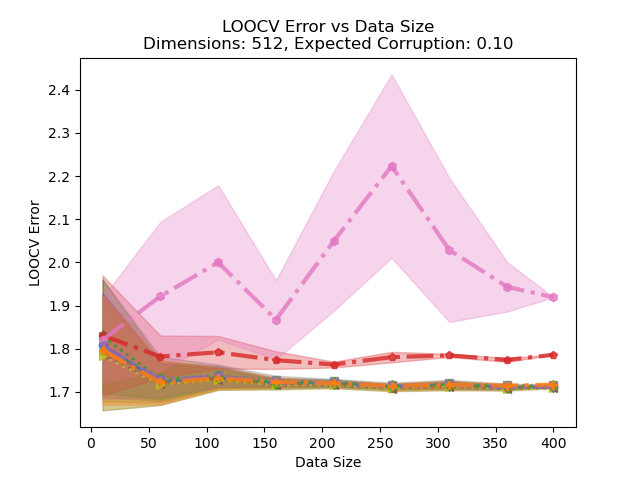}
        \caption{T5}
    \end{subfigure}
    \\
    \begin{subfigure}{0.5\linewidth}
        \centering
        \includegraphics[width=\linewidth]{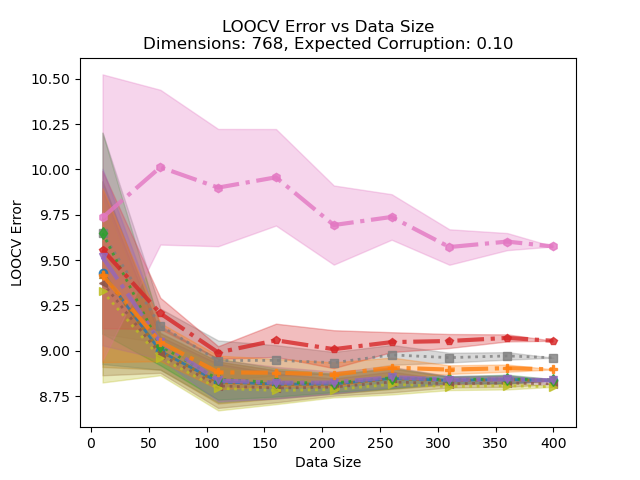}
        \caption{BERT}
    \end{subfigure}
    \begin{subfigure}{0.5\linewidth}
        \centering
        \includegraphics[width=\linewidth]{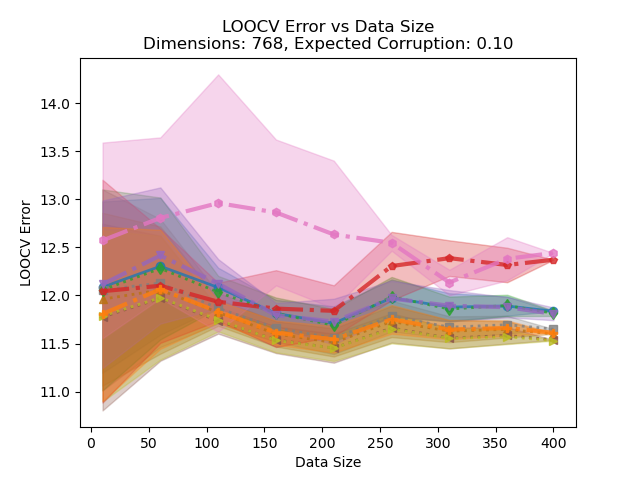}
        \caption{ALBERT}
    \end{subfigure}
    \caption{LOOCV Error on "Field Of Study" Embeddings}
    \label{fig:loocv_field_study}
\end{figure}

\begin{figure}[h!]
    \begin{subfigure}{0.5\linewidth}
        \centering
        \includegraphics[width=\linewidth]{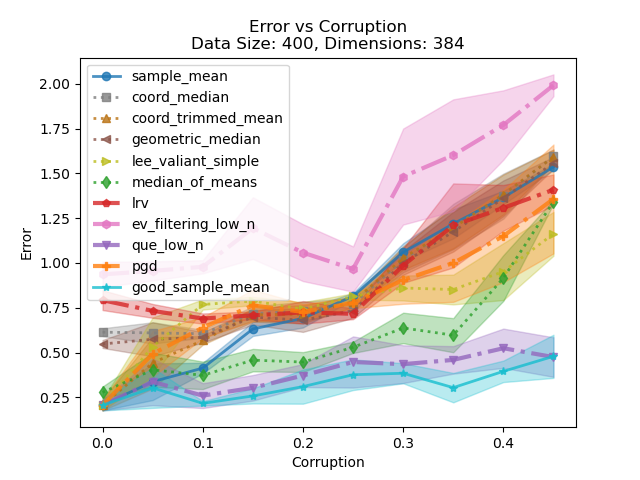}
        \caption{MiniLM}
    \end{subfigure}
        \begin{subfigure}{0.5\linewidth}
        \centering
        \includegraphics[width=\linewidth]{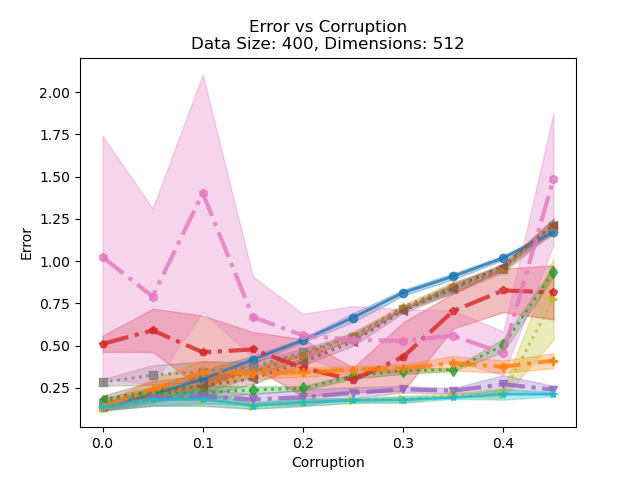}
        \caption{T5}
    \end{subfigure}
    \\
    \begin{subfigure}{0.5\linewidth}
        \centering
        \includegraphics[width=\linewidth]{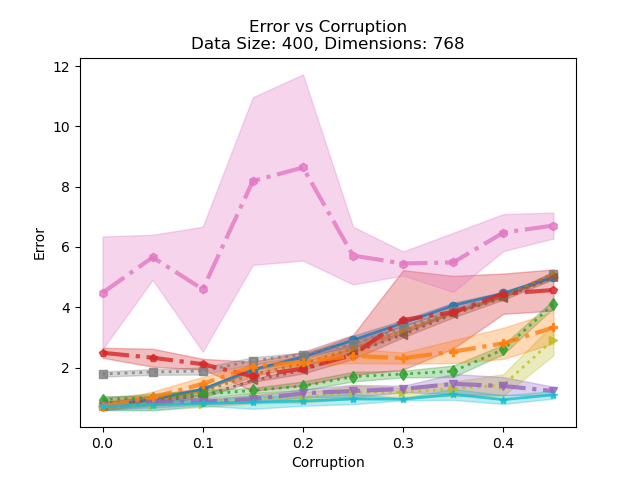}
        \caption{BERT}
    \end{subfigure}
    \begin{subfigure}{0.5\linewidth}
        \centering
        \includegraphics[width=\linewidth]{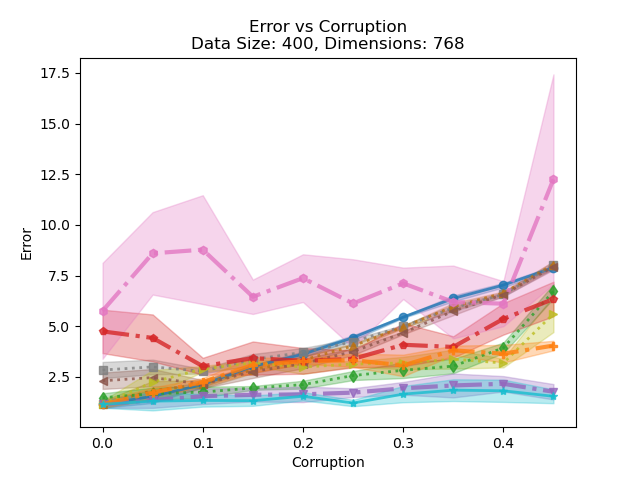}
        \caption{ALBERT}
    \end{subfigure}
    \caption{Error on "Field of Study" Embeddings Corrupted with "Field of Land" Embeddings}
    \label{fig:loocv_corrupted_inv}
\end{figure}

\clearpage

\subsection{Dataset Generation}
\label{app:llm_dataset}

We generate a dataset of 400 sentences for each definition of the word \textit{field} using ChatGPT-4o, accessed in June 2024. Attention based embeddings for the word \textit{field} are extracted from these sentences for use in our LLM experiments. We used the following two prompts to obtain the sentences:

\paragraph{Field of Study}

\begin{quote}
    \textit{I am running an experiment where I examine embeddings of the word "field" with two different contexts. Please generate 400 unique sentences using the word "field" in context with the following definition: "a particular branch of study or sphere of activity or interest." Please return these sentences in the format of a JSON file.}
\end{quote}

\paragraph{Field of Land}

\begin{quote}
    \textit{I am running an experiment where I examine embeddings of the word "field" with two different contexts. Please generate 400 unique sentences using the word "field" in context with the following definition: "an area of open land, especially one planted with crops or pasture, typically bounded by hedges or fences." Please return these sentences in the format of a JSON file.}
\end{quote}

\paragraph{Additional Prompts}

ChatGPT-4o did not produce the full 400 sentences in one go. To address this, we used the following additional prompts until we had generated the required number of sentences, and then manually combined the generated outputs. The prompt for "field of study" sentences is slightly different, as we originally observed that ChatGPT-4o would reuse the same field of study across numerous sentences.

For \textit{Field of Study}:

\begin{quote}
    \textit{Please generate 100 more sentences. Do not repeat similar sentences or use "field" to refer to the same field of study multiple times.}
\end{quote}

For \textit{Field of Land}:

\begin{quote}
    \textit{Please generate 100 more sentences.}
\end{quote}

\paragraph{Tables Of Generated Sentences}

We include the following tables of generated sentences.

\textbf{Field Of Land Sentences}:



\end{document}